\definecolor{mygray}{RGB}{211,211,211}
\definecolor{cvprblue}{RGB}{0.21,0.49,0.74}
\definecolor{mypurple}{RGB}{200,200,235}
	\let\Cref\crtCref
	\let\cref\crtcref
\DeclareMathAlphabet{\mathpzc}{OT1}{pzc}{m}{it}
\newcommand{\cA}{\mathcal{A}}
\newcommand{\cE}{\mathcal{E}}
\newcommand{\cN}{\mathcal{N}}
\newcommand{\cO}{\mathcal{O}}
\newcommand{\bbB}{\mathbb{B}}
\newcommand{\bbE}{\mathbb{E}}
\newcommand{\bbR}{\mathbb{R}}
\newcommand{\bbV}{\mathbb{V}}
\newcommand{\bPi}{\bm{\Pi}}
\newcommand{\bA}{\bm{A}}
\newcommand{\bB}{\bm{B}}
\newcommand{\bD}{\bm{D}}
\newcommand{\bH}{\bm{H}}
\newcommand{\bI}{\bm{I}}
\newcommand{\bJ}{\bm{J}}
\newcommand{\bL}{\bm{L}}
\newcommand{\bO}{\bm{O}}
\newcommand{\bP}{\bm{P}}
\newcommand{\bQ}{\bm{Q}}
\newcommand{\bR}{\bm{R}}
\newcommand{\bS}{\bm{S}}
\newcommand{\bU}{\bm{U}}
\newcommand{\bV}{\bm{V}}
\newcommand{\bW}{\bm{W}}
\newcommand{\bX}{\bm{X}}
\newcommand{\bY}{\bm{Y}}
\newcommand{\bZ}{\bm{Z}}
\newcommand{\bSigma}{\bm{\Sigma}}
\newcommand{\bh}{\bm{h}}
\newcommand{\bs}{\bm{s}}
\newcommand{\bu}{\bm{u}}
\newcommand{\bv}{\bm{v}}
\newcommand{\by}{\bm{y}}
\NewDocumentCommand{\norm}{mG{2}}{\big\|#1\big\|_{#2}}
\DeclareMathOperator{\diag}{diag}
\DeclareMathOperator{\rank}{rank}
\newcommand{\ra}[1]{\renewcommand{\arraystretch}{#1}}
\newcommand{\argmin}{\mathop{\rm argmin}}
\NewDocumentCommand{\seqp}{mG{n}}{{#1}_1-\cdots+ {#1}_{#2}}
\NewDocumentCommand{\seqm}{mG{n}}{{#1}_1-\cdots- {#1}_{#2}}
\DeclarePairedDelimiter\ceil{\lceil}{\rceil}
\newcommand{\myparagraph}[1]{\noindent\textbf{#1.}}
\newcommand{\myparagraphquestion}[1]{\noindent\textbf{#1?}}
\DeclareMathOperator{\trace}{tr}
\DeclareMathOperator{\relu}{relu}
\newtheorem{theorem}{Theorem}
\newtheorem{prop}{Proposition}
\newtheorem{corollary}{Corollary}
\newtheorem{lemma}{Lemma}
\theoremstyle{definition}
\theoremstyle{remark}
\newtheorem{remark}{Remark}
\definecolor{cvprblue}{rgb}{0.21,0.49,0.74}
\definecolor{myred}{rgb}{0.82, 0.1, 0.26}
\newcommand{\colorcov}{\textnormal{\textcolor{myred}{$\bm{\Lambda}$}}}
\newcommand{\colorvar}{\textnormal{\textcolor{myred}{$\nu^2$}}}
\newcommand{\colorgt}{\textnormal{\textcolor{myred}{$\bW^*_t$}}}
\newcommand{\coloreps}{\textnormal{\textcolor{myred}{$\cE_{1:t}$}}}
\newcommand{\colortesteps}{\textnormal{\textcolor{myred}{$\bm{\epsilon}$}}}
\title{LoRanPAC: Low-rank Random Features and Pre-trained Models for Bridging Theory and Practice in Continual Learning}
\author{Liangzu Peng \\ %\thanks{ Use footnote for providing further information about author (webpage, alternative address)---\emph{not} for acknowledging funding agencies.  Funding acknowledgements go at the end of the paper.} \\
University of Pennsylvania \\
\texttt{lpenn@seas.upenn.edu} \\
\And
Juan Elenter Litwin $^1$ \\
Spotify\\
\texttt{juane@spotify.com} \quad \quad \quad \, \,\\
\And
Joshua Agterberg \\
University of Illinois Urbana-Champaign \\
\texttt{jagt@illinois.edu} \\
\And
Alejandro Ribeiro \\
University of Pennsylvania \\
\texttt{aribeiro@seas.upenn.edu} \\
\And
Ren\'e Vidal \\
University of Pennsylvania \\
\texttt{vidalr@seas.upenn.edu} \\
}
\begin{document}

% It is never mentioned that 
%(1) there is a random embedding into a high dimensional space
%(2) this produces overparametrization, which should be good in some ways
%(4) but overparametrization also produces instabilities

%ICL shows that one can avoid catastrophic forgetting by using prior data as constraints
%RanPAC shows that by embedding 

% ICL + RANPAC => simple combination of ICL + RANPAC

\maketitle
\footnotetext[1]{Work done while at the University of Pennsylvania.}

\begin{abstract}
The goal of continual learning (CL) is to train a model that can solve multiple tasks presented sequentially. Recent CL approaches have achieved strong performance by leveraging large pre-trained models that generalize well to downstream tasks. However, such methods lack theoretical guarantees, making them prone to unexpected failures. Conversely, principled CL approaches often fail to achieve competitive performance. In this work, we aim to bridge this gap between theory and practice by designing a simple CL method that is theoretically sound and highly performant. Specifically, we lift pre-trained features into a higher dimensional space and formulate an over-parametrized minimum-norm least-squares problem. We find that the lifted features are highly ill-conditioned, potentially leading to large training errors (numerical instability) and increased generalization errors. We address these challenges by continually truncating the singular value decomposition of the lifted features. Our approach, termed LoRanPAC, is stable with respect to the choice of hyperparameters, can handle hundreds of tasks, and outperforms state-of-the-art CL methods on multiple datasets. Importantly, our method satisfies a recurrence relation throughout its continual learning process, which allows us to prove it maintains small training and test errors by appropriately truncating a fraction of SVD factors. This results in a stable continual learning method with strong empirical performance and theoretical guarantees. Code available: \url{https://github.com/liangzu/loranpac}.

\end{abstract}

% \tableofcontents

\section{Introduction}\label{section:intro}

\textit{Continual learning} (CL) requires training a model that performs well on multiple tasks presented sequentially. A primary challenge in CL is acquiring new knowledge without causing \textit{catastrophic forgetting} (i.e., substantial performance degradation on previously learned tasks). However, the gap that prevails in the CL literature, as we review in \cref{section:related-work-main-paper} and \cref{section:review}, is that theoretically grounded CL methods tend to be impractical \citep{Evron-COLT2022,Pengb-NeurIPS2022,Peng-ICML2023,Cai-arXiv2024}, while highly performant methods involve solving intricate, non-convex training problems, for which deriving informative theoretical guarantees is challenging \citep{Wang-CVPR2022,Wang-ECCV2022,Wang-NeurIPS2022,Smith-CVPR2023,Wang-NeurIPS2023,Jung-ICCV2023,Tang-ICCV2023,Gao-CVPR2024,Roy-CVPR2024,Kim-ICML2024}. In this paper, we aim to bridge the gap by designing a simple CL method that is both theoretically grounded and highly performant.

Towards this goal, we consider learning downstream image classification tasks continually with large pre-trained models, which are widely available nowadays. As their weights are typically frozen, they provide highly generalizable features that significantly boost performance with little computational overhead \citep{Wang-CVPR2022b,Wang-NeurIPS2022,Mcdonnell-NeurIPS2023,Zhou-CVPR2024}. Crucially, pre-trained models simplify network design, as concatenating a pre-trained model with a shallow trainable network often attains competitive performance \citep{Zhou-arXiv2023,Mcdonnell-NeurIPS2023}.

Motivated by the \textit{RanPAC} method of \citet{Mcdonnell-NeurIPS2023}, we use a shallow trainable network that is now commonly known as the \textit{random feature model}. With this network, we lift pre-trained features into a higher dimensional space and then train a linear classifier on the lifted features simply by least-squares fitting. Yet, the lifted features are \textit{double-edged}: while they tend to boost performance by increasing feature separability \citep{Telgarsky-arXiv2022,Min-ICLR2024}, they are also highly ill-conditioned, making it computationally difficult to train the linear classifier. For example, the ill-conditioned features would decelerate the convergence of gradient-based methods such as \textit{Orthogonal Gradient Descent} (OGD)  and \textit{PCA-OGD} \citep{Farajtabar-AISTATS2020,Doan-AISTATS2021}. Also due to ill-conditioning, the implementation of the \textit{Ideal Continual Learner} (ICL) \citep{Peng-ICML2023} based on incremental \textit{Singular Value Decomposition} (SVD) will be numerically unstable. While RanPAC alleviates the numerical instability by using ridge regression, its performance is sensitive to the choice of the regularization parameter, which can make it ill-suited for long task sequences. % Finally, the RanPAC implementation of \citet{Mcdonnell-NeurIPS2023} has cubic complexity in the dimension of the lifted features, which limits its ability to 

% Furthermore, \citet{Mcdonnell-NeurIPS2023} provides limited theoretical guarantees for RanPAC, and a principled understanding of the method is thus imperative to overcome its instability.

We identify that the ill-conditioning and instability arise as more tasks are observed and then the smallest singular values of the lifted features plummet, while the largest singular values remain almost constant. This finding motivates our method, termed \textit{LoRanPAC}, which truncates these smallest singular values prior to least-squares fitting. LoRanPAC bridges the gap between theory and practice by delivering stable and strong performance with theoretical guarantees. Concretely:
\begin{itemize}[wide]
    \item We provide a continual implementation of LoRanPAC to train an over-parameterized linear classifier with highly ill-conditioned features in a numerically stable fashion (\cref{section:ITSVD-implementation}). We show it is more stable, more scalable, and more efficient than RanPAC. 
    \item We derive theoretical guarantees for LoRanPAC, proving that it has small training and test errors when a suitable fraction of SVD factors are truncated (\cref{theorem:training-loss-deterministic,theorem:test-loss-deterministic}, \cref{section:ITSVD-theory}). These results stem from a non-trivial recurrence relation that allows us to capture the continual learning dynamics of LoRanPAC (\cref{lemma:truncation-equality}, \cref{section:aux-lemma}).
    \item We conduct extensive experiments on multiple datasets, showing that LoRanPAC uniformly outperforms prior works and specifically RanPAC (\cref{section:experiments}). Thanks to our stable implementation, LoRanPAC outmatches RanPAC by a significant margin in the CIL setting with one class given at a time (Inc-1), where hundreds of tasks (classes) are sequentially presented (\cref{tab:CIL-ViT-inc1}).
\end{itemize}

\section{Technical Background}\label{section:background}
\myparagraph{Problem Setting} We consider %image
classification tasks in the \textit{class-incremental learning} (CIL) setting, where each incoming task contains only unseen classes. Following conventions \citep{Yan-CVPR2021,Zhou-arXiv2023}, we write B-$q_1$, Inc-$q_2$ to mean that the model is given $q_1$ classes in the first task and then $q_2$ classes in each of the subsequent tasks ($q_1=0$ means all tasks have $q_2$ distinct classes). We use \textit{vision transformers} (ViTs) of \citet{Dosovitskiy-ICLR2021} as pre-trained models. % ViTs we use have $87$M paremeters.

% pre-trained on ImageNet-1K \citep{Deng-CVPR2009,Vaswani-NeurIPS2017,Dosovitskiy-ICLR2021}.

\myparagraph{Pretrained Features and Labels} Given $m_t$ images of task $t$, we feed them to pre-trained ViTs, obtaining the output features $\bX_t\in\bbR^{d\times m_t}$. Here, $d$ is the feature dimension ($d=768$ in the ViTs used). Corresponding to $\bX_t$ is the label matrix $\bY_t\in \bbR^{c_t\times m_t}$. Every column of $\bY_t$ is a \textit{one-hot vector}, that is some standard basis vector in $\bbR^{c_t}$, where $c_t$ is the total number of classes observed so far. We thus have $c_1\leq \cdots \leq c_i\leq \cdots \leq c_t$. Let $M_t:=m_1+\cdots +m_t$. While $\bY_i\in \bbR^{c_i\times m_t}$ might have a different number of rows as $c_i$ varies, one can pad $c_t-c_i$ zero rows to $\bY_i$ when new class information is revealed; so, with a slight abuse of notation, $\bY_i$ is viewed as having $c_t$ rows. We denote by $\bY_{1:t}$ the label matrix of the first $t$ tasks: $\bY_{1:t}=[\bY_1,\dots,\bY_t] \in \bbR^{c_t \times M_t}$.
%e stack all $\bY_i$'s together to get the label matrix $\bY_{1:t}=[\bY_1,\dots,\bY_t] \in \bbR^{c_t \times M_t}$ of the first $t$ tasks.

% While $\bY_t$ has a different number of rows in general, we can pad zeros suitably and stack them together; doing so gives us the label matrix $\bY_{1:t}\in \bbR^{c_t \times M_t}$ for the first $t$ tasks.

% For any $\bW_i\in \bbR^{c_i \times E}$, we append $C_{i+1} - c_i$ rows of zeros to $\bW_i$ and write the resulting matrix as $\underline{\bW}_i$, that is $\underline{\bW}_i:= [\bW_i^\top, \ \bm{0}_{E\times (C_{i+1} - c_i)} ]^\top\in \bbR^{C_{i+1} \times E}$.

\myparagraph{Random ReLU Features} Let $\relu: \xi \mapsto \max\{0, \xi\}$ be a ReLU layer and $\bP\in \bbR^{E\times d}$ denote a random Gaussian matrix with i.i.d. $\cN(0,1)$ entries; here we assume $E > d$. These allow us to embed $\bX_t$ into a higher dimensional space and get \textit{random ReLU features} $\bH_t\in \bbR^{E \times m_t}$ via
\begin{equation}\label{eq:def-H}
    \bH_t := \relu ( \bP \bX_t ),\quad \quad \bH_{1:t}:= [\bH_1, \ \dots, \bH_t ]\in \bbR^{E\times M_t}.
\end{equation}
Note that $\relu$ is a pointwise non-linearity, applied to $ \bP \bX_t$ entry-wise. The goal is to learn a linear classifier $\bW\in \bbR^{c_t\times E}$ continually, using features $\bH_t$ and labels $\bY_t$ of task $t$. 

An alternative way to view these setups is through a two-layer neural network % (NN), in which pre-trained features $\bX$ are fed to a NN 
$$\bX \mapsto \bW\cdot  \relu ( \bP \bX ),$$ where $\bP$ are randomly generated, then fixed, and $\bW$ are trainable weights. %On a historical note, 
Networks of this form %, with the idea of fixing $\bP$ and training $\bW$, 
predate the early work of \citet{Schmidt-ICPR1992,Pao-Computer1992} and are later known as \textit{extreme learning machines} and \textit{random feature models}. In \cref{subsection:review-RFM} we review these lines of research. In principle, the randomness of $\bP$ would have some effects on learning $\bW$, but algorithmically, this randomness is irrelevant as $\bP$ is fixed after random generation. In the paper we adopt this algorithmic viewpoint, treating $\bH_{1:t}= \relu ( \bP \bX_{1:t} )$ as fixed, and largely ignore the randomness of $\bP$.

\cref{tab:ablation-RFM} of our appendix shows random ReLU features $\bH_{1:t}$ boosts the performance (compared to the original pre-trained features $\bX_{1:t}$, ReLU features $\relu (\bX_{t} )$, or randomly embedded features $\bP \bX_{1:t}$. \cref{fig:acc-inc5-E} in the appendix furthermore shows that the performance improves as the embedding dimension $E$ increases. These experiments justify the use of random ReLU features in high embedding dimensions (we use $E=10^5$ unless otherwise specified).

\myparagraph{Minimum Norm Solution and Its Instability} 
If $E\gg M_t$, there are infinitely many $\bW$ satisfying $\bW \bH_{i} = \bY_i$ ($\forall i$). 
Among them, a common choice is the \textit{minimum-norm }solution:
\begin{align}\label{eq:1layer-min-norm}
    \min_{\bW\in\bbR^{c_t\times E} }  \| \bW \|_{\textnormal{F}}^2 \quad \quad \textnormal{s.t.} \quad \quad \bW \bH_{i}    = \bY_i  , \quad i=1,\dots, t, \tag{Min-Norm}
\end{align}
where $\| \cdot \|_{\textnormal{F}}$ denotes the Frobenius norm. While \ref{eq:1layer-min-norm} is an \textit{offline formulation} that assumes the availability of all seen data $\{(\bH_i,\bY_i)\}_{i=1}^t$, it can be implemented in a CL fashion, e.g., via \textit{incremental SVD} (\cref{remark:ISVD-min-norm}). But such an intuitive implementation fails. \cref{fig:stability}a plots the eigenvalues of $\bH_{1:t}^\top \bH_{1:t}$, showing that $\bH_{1:t}^\top \bH_{1:t}$ is highly ill-conditioned: it has just a few largest and smallest eigenvalues, respectively of order $10^{11}$ and $10^{-5}$, outnumbered by the eigenvalues in between that decay more slowly. \cref{fig:stability}b plots extreme eigenvalues of $\bH_{1:t}^\top \bH_{1:t}$, revealing that the minimum eigenvalue drastically drops after a certain number of tasks. %, while the maximum eigenvalue grows slowly. 
Comparing \cref{fig:stability}b, c, d, we see that the training MSE loss $\frac{1}{M_t}  \| \bW \bH_{1:t} - \bY_{1:t}\|_{\textnormal{F}}^2$ explodes up, and the test accuracy plummets, exactly when the smallest eigenvalues (of order $10^{-5}$) emerge and begin to invade the spectrum. In summary, the incremental SVD solution to \ref{eq:1layer-min-norm} is unable to handle the highly ill-conditioned features $\bH_{1:t}$, resulting in numerical errors.

\begin{figure}[t]
	\centering   

    \includegraphics[width=0.24\textwidth]{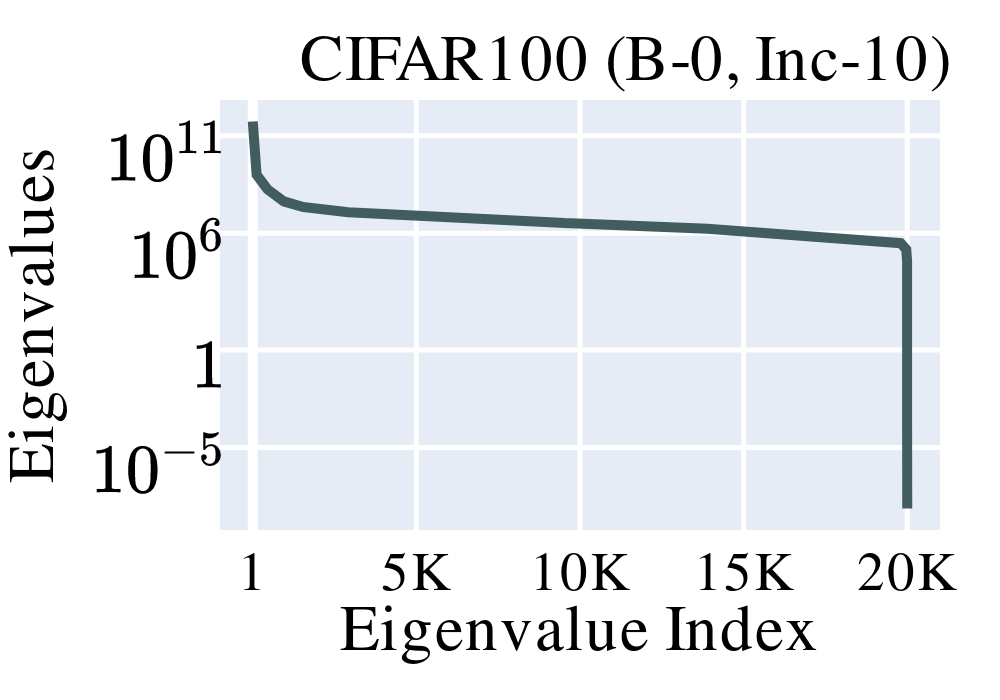}
    \includegraphics[width=0.24\textwidth]{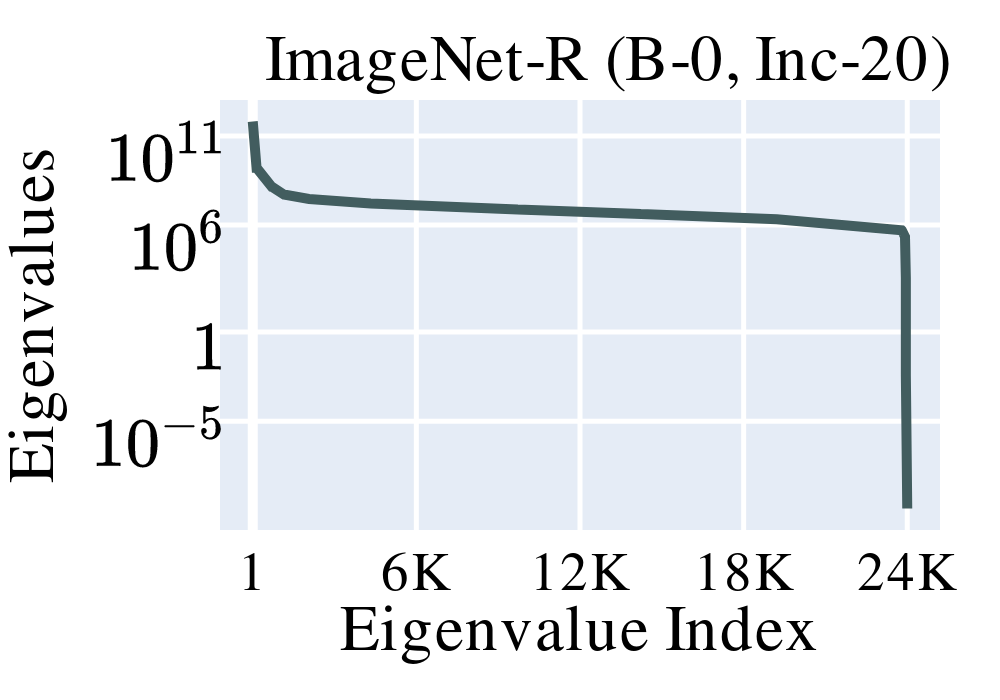}
    \includegraphics[width=0.24\textwidth]{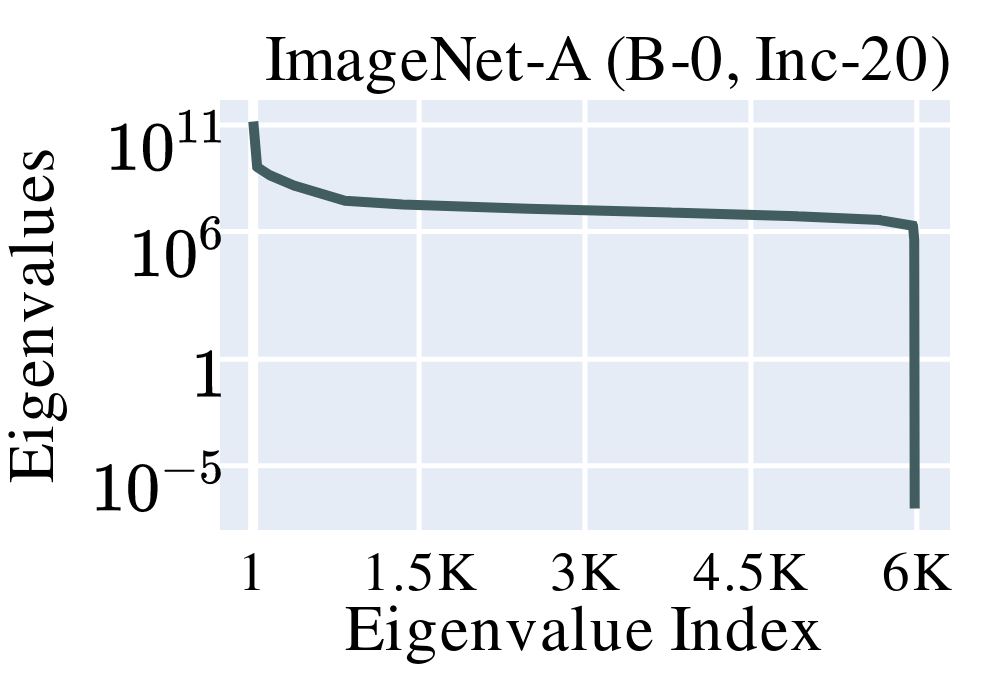}
    \includegraphics[width=0.24\textwidth]{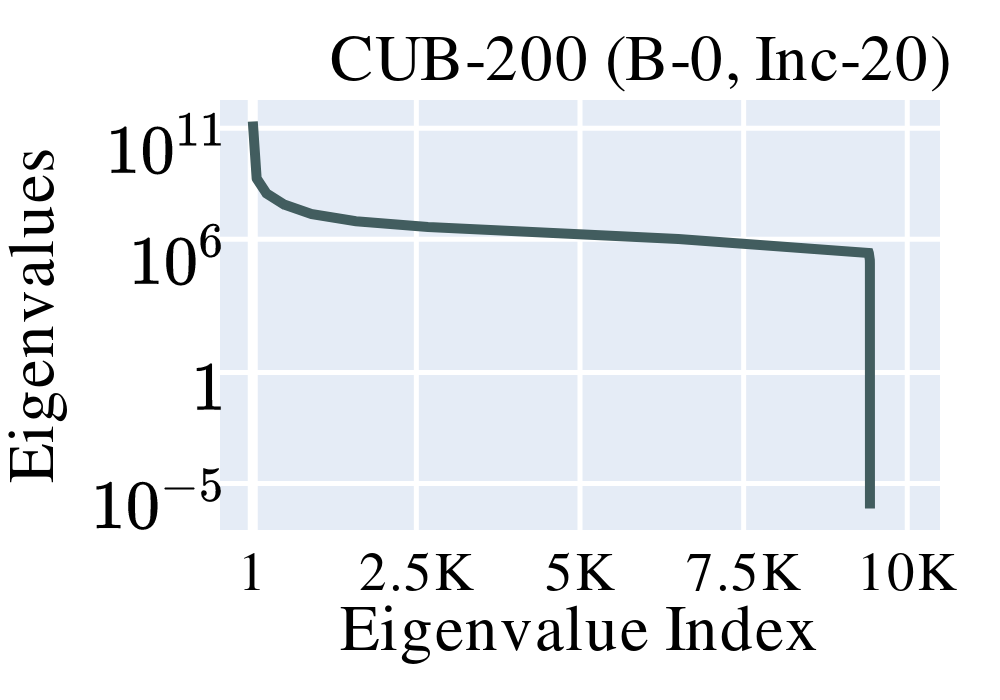}

    \vspace{-0.2em}
    \makebox[\textwidth]{\footnotesize \centering (\ref{fig:stability}a) The eigenvalues of $\bH_{1:t}^\top \bH_{1:t}$ arranged in descending order ($t$ fixed).  }
 
	\includegraphics[width=0.24\textwidth]{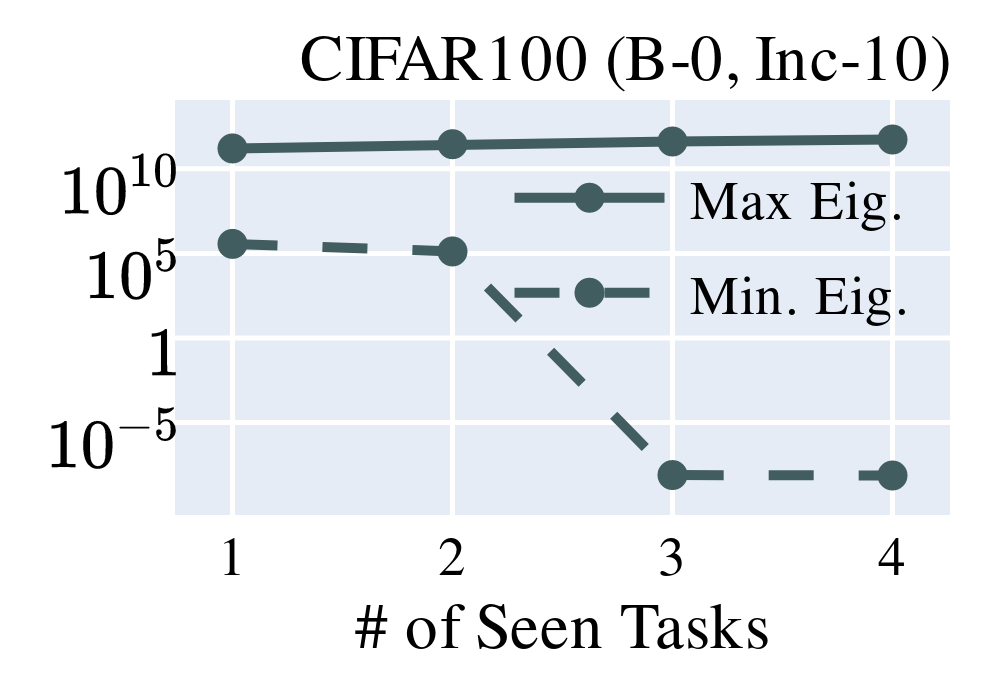}
	\includegraphics[width=0.24\textwidth]{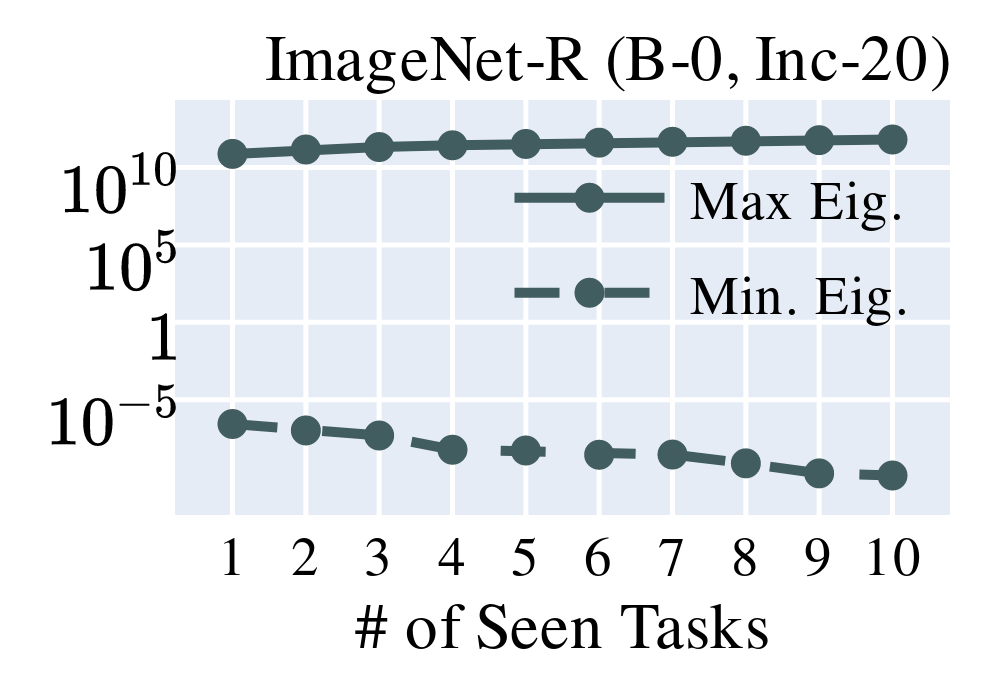}
	\includegraphics[width=0.24\textwidth]{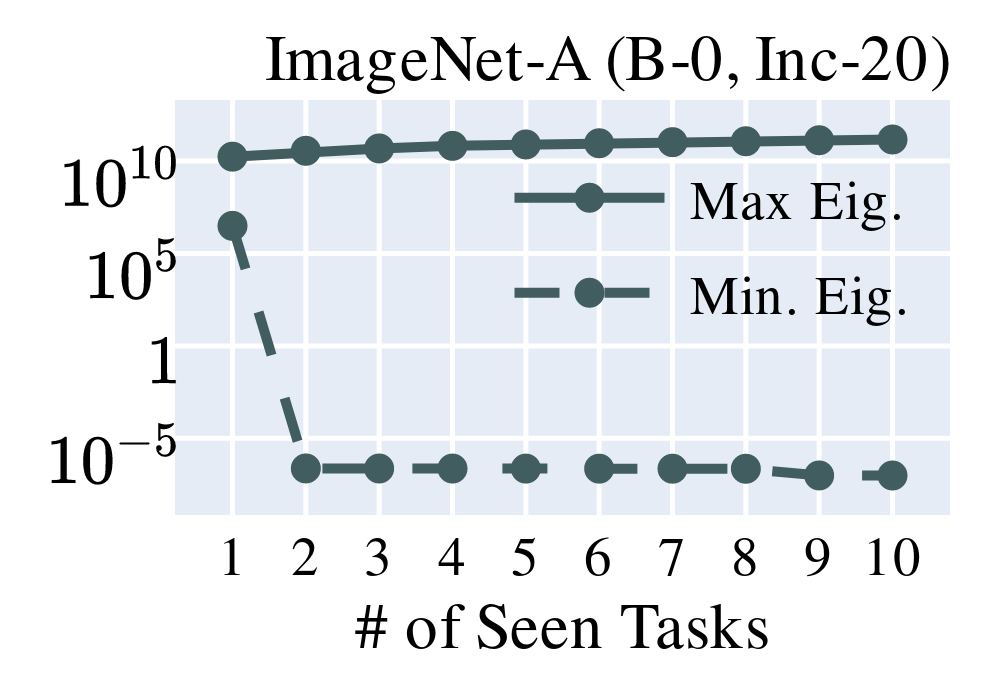}
	\includegraphics[width=0.24\textwidth]{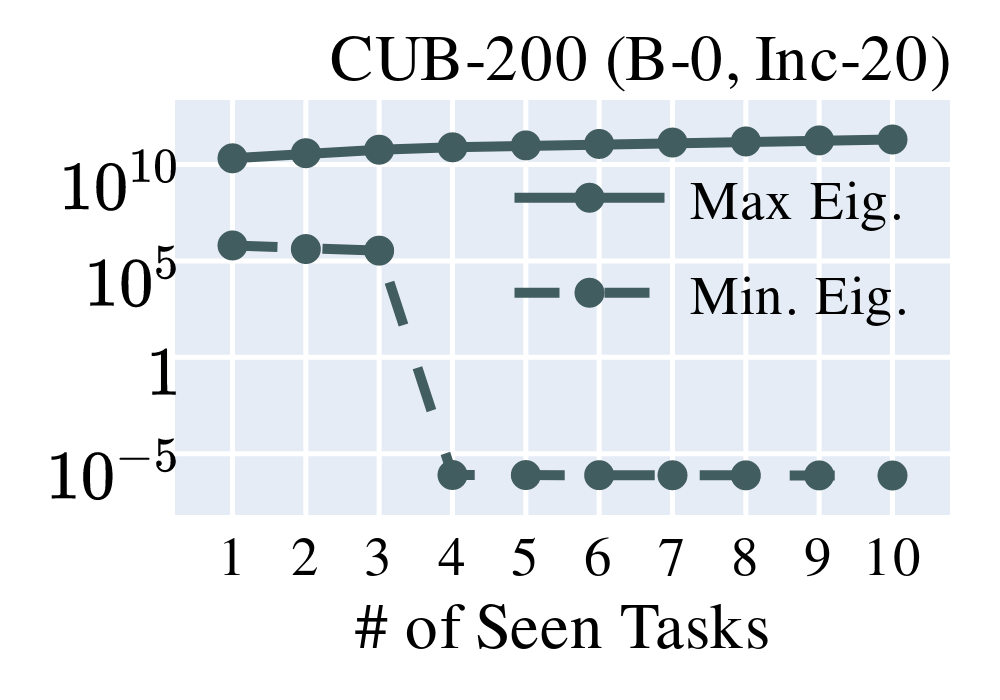}

    \vspace{-0.2em}
	\makebox[\textwidth]{\footnotesize \centering (\ref{fig:stability}b) Maximum and minimum eigenvalues of $\bH_{1:t}^\top \bH_{1:t}$ as the number $t$ of seen tasks increases. }

    \includegraphics[width=0.24\textwidth]{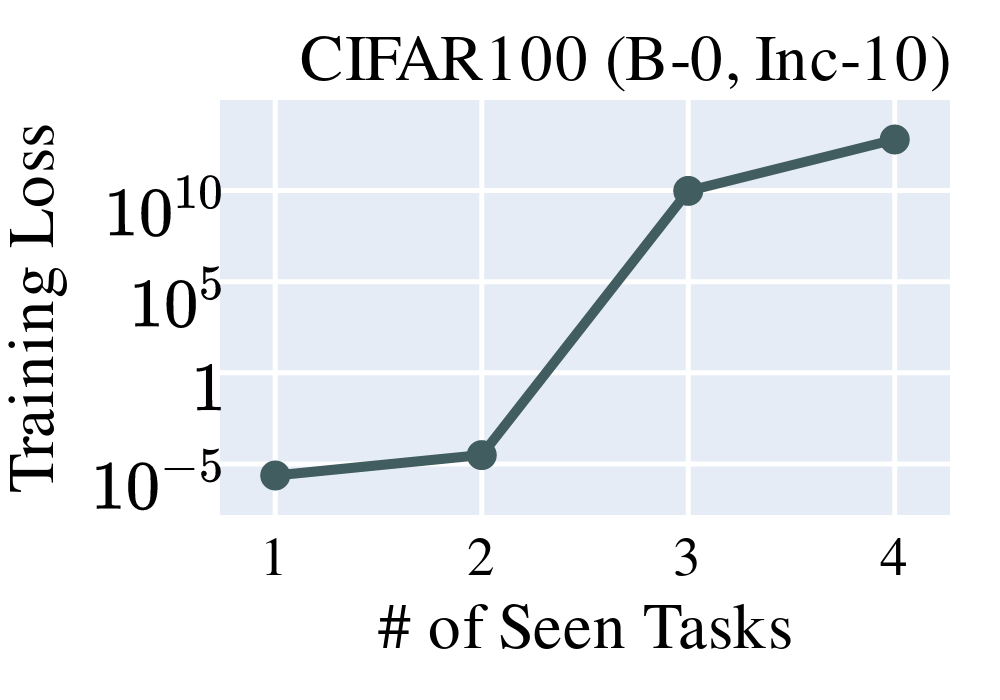}
    \includegraphics[width=0.24\textwidth]{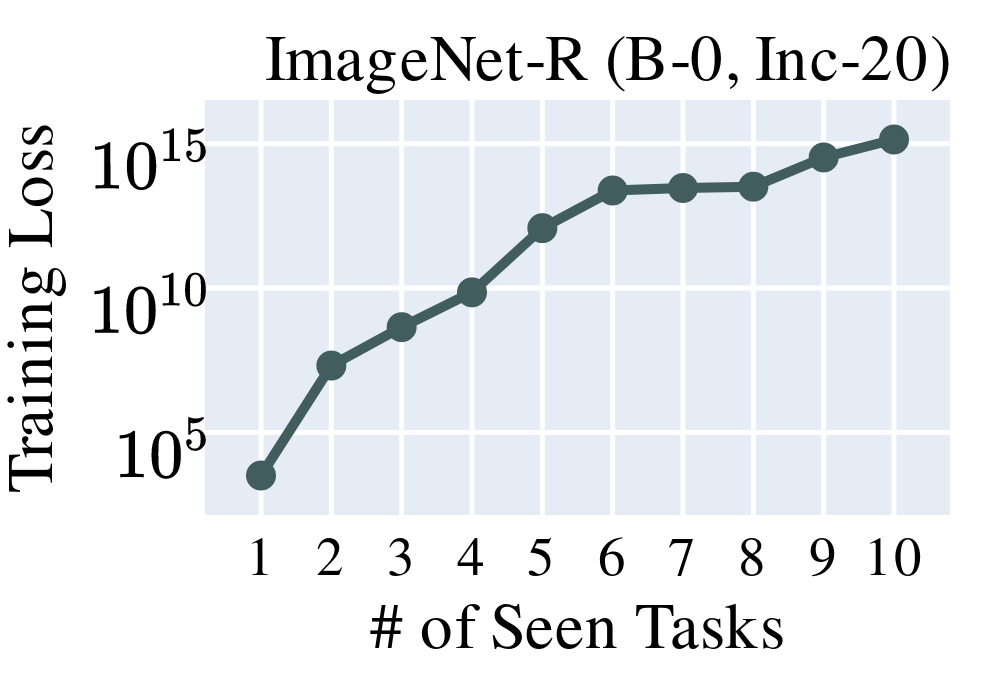}
    \includegraphics[width=0.24\textwidth]{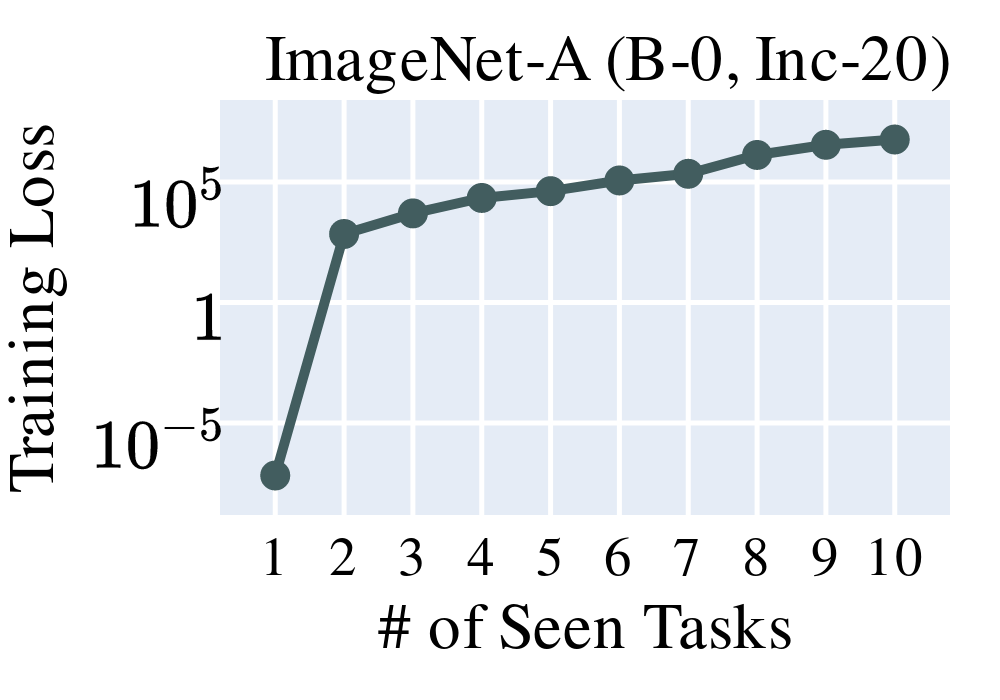}
    \includegraphics[width=0.24\textwidth]{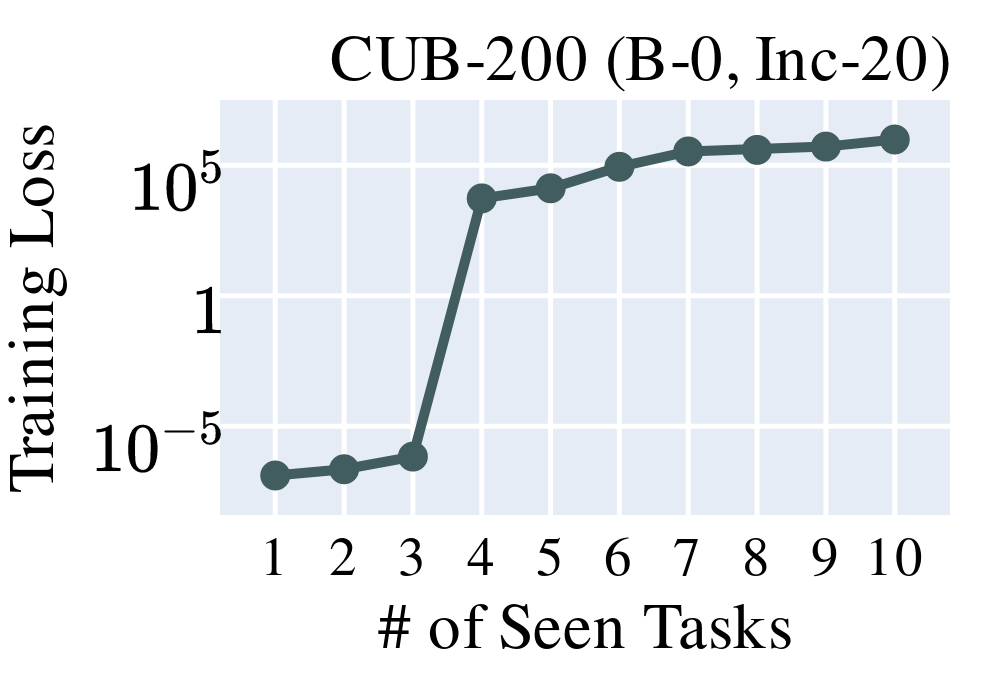}

    \vspace{-0.2em}
	\makebox[\textwidth]{\footnotesize \centering (\ref{fig:stability}c) Training MSE loss $\frac{1}{M_t}  \| \bW \bH_{1:t} - \bY_{1:t}\|_{\textnormal{F}}^2$ of the incremental SVD solution to \ref{eq:1layer-min-norm}. }
	
	\includegraphics[width=0.24\textwidth]{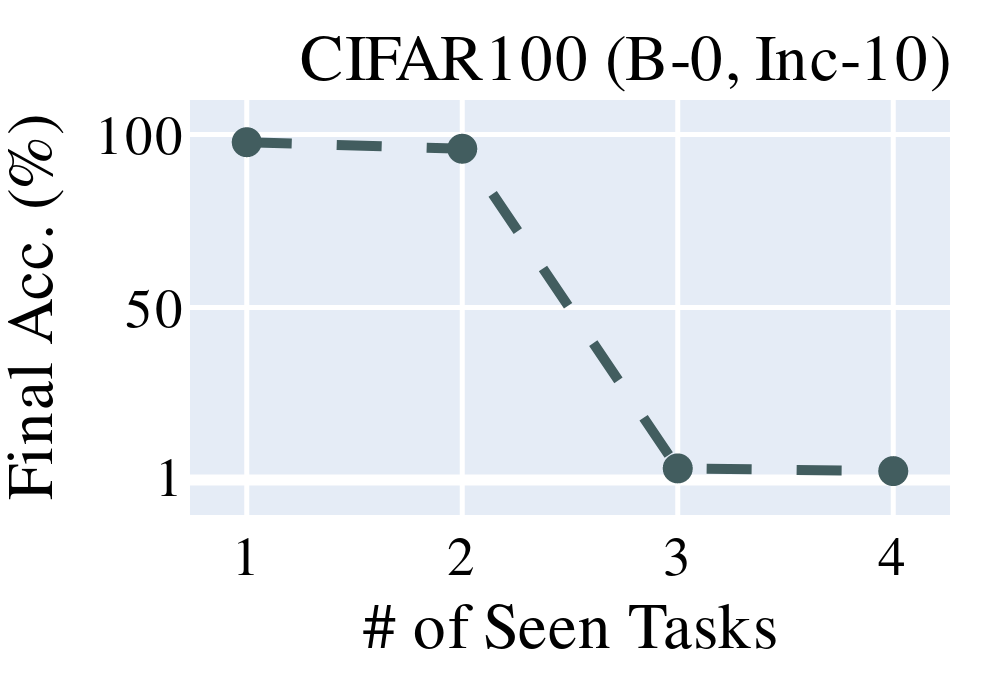}
	\includegraphics[width=0.24\textwidth]{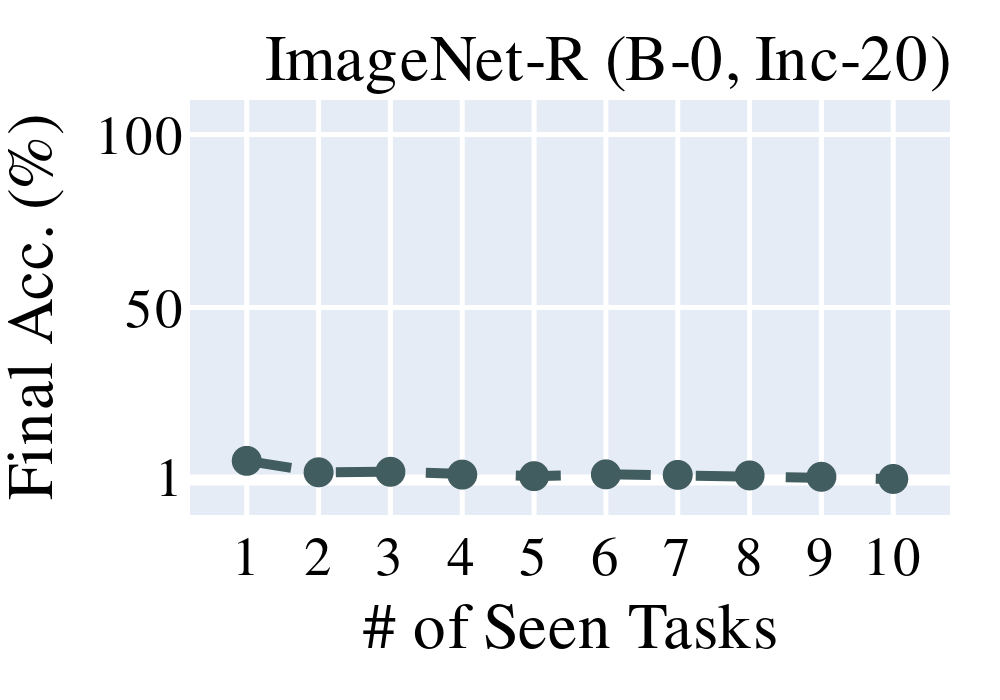}
	\includegraphics[width=0.24\textwidth]{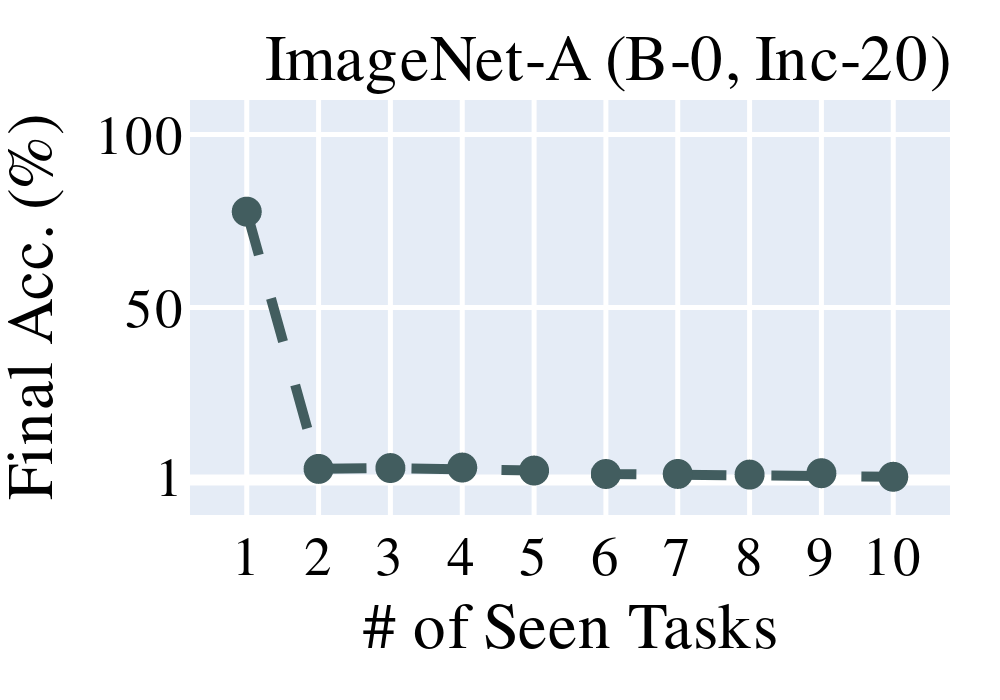}
	\includegraphics[width=0.24\textwidth]{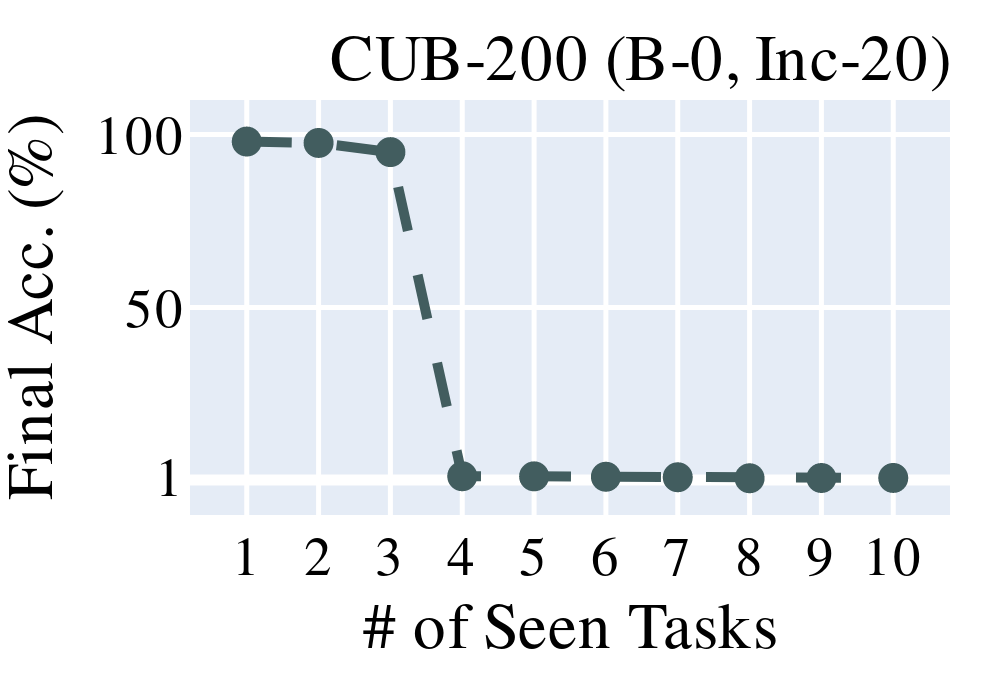}

    \vspace{-0.2em}
    \makebox[\textwidth]{\footnotesize \centering (\ref{fig:stability}d) Final test accuracy of the incremental SVD solution to \ref{eq:1layer-min-norm}. }
	
    \caption{Spectrum of $\bH_{1:t}^\top \bH_{1:t}$ and its impact on training losses \& test accuracy ($E=10^5$); see also \cref{subsection:figures-data-analysis}. The matrix $\bH_{1:t}^\top \bH_{1:t}$ is ill-conditioned (\ref{fig:stability}a); training loss increases (\ref{fig:stability}c) and test accuracy drops (\ref{fig:stability}d), drastically, when small eigenvalues (of order $10^{-5}$) invade the spectrum (\ref{fig:stability}b).}
	\label{fig:stability}
\end{figure}

\section{LoRanPAC: Stable Continual Learning via Low-Rank Random Features}\label{section:ITSVD-implementation}

\myparagraph{Offline Formulation} The numerical evidence collected in \cref{fig:stability} suggests that the instability of \ref{eq:1layer-min-norm} relates to the emergence of very small eigenvalues that make $\bH_{1:t}\in\bbR^{E\times M_t}$ ill-conditioned. This motivates a simple remedy, called \textit{LoRanPAC} (\textit{offline formulation}), which consists of truncating the smallest singular values (vectors) of $\bH_{1:t}$ and then solving \ref{eq:1layer-min-norm} with its truncated version. More concretely, write the SVD of $\bH_{1:t}$ as $\sigma_1 \bu_1 \bv_1^\top+\cdots + \sigma_{M_t} \bu_{M_t} \bv_{M_t}^\top$ with ordered singular values $\sigma_1\geq \cdots \geq \sigma_{M_t}$. 
% \begin{equation*}
%     \bH_{1:t}:=\sum_{i=1}^{M_t}\sigma_i \bu_i \bv_i^\top, \quad \sigma_1\geq \sigma_2\geq \cdots \geq \sigma_{M_t}.
% \end{equation*}
The truncation can then be described with some integer $k_t\in[0, M_t]$ by a function $\tau_{k_t}$ that maps $\bH_{1:t}$ to $ \sigma_1 \bu_1 \bv_1^\top+\cdots + \sigma_{k_t} \bu_{k_t} \bv_{k_t}^\top$, where $k_t$ is the number of top SVD factors preserved. Since $\tau_{k_t}(\cdot)$ preserves the shape of its input, $\tau_{k_t}(\bH_{1:t})$ is of the same size $E\times M_t$ as $\bH_{1:t}$. Applying this idea of truncation to \ref{eq:1layer-min-norm} means solving the following program:
\begin{equation}\label{eq:LoRanPAC}
    \overline{\bW}_t\in \argmin_{\bW\in\bbR^{c_t\times E} }  \| \bW \|_{\textnormal{F}}^2 \quad \quad \textnormal{s.t.} \quad \quad \bW \tau_{k_t}(\bH_{1:t})   = \bY_{1:t}.  \tag{LoRanPAC}
\end{equation}
\ref{eq:LoRanPAC} is thus named, as it draws inspiration from RanPAC \citep{Mcdonnell-NeurIPS2023} and leverages low-rank random features. It is safe to assume $k_t\leq \rank(\bH_{1:t})$, for otherwise the truncation has no effects. For simplicity one might assume $\bH_{1:t}$ has full rank, that is $\rank(\bH_{1:t})=\min\{E, M_t\}$.

% While $\bH_{1:t}$ is ill-conditioned (thus numerically low-rank), empirically it rarely has zero singular values and often has full rank $\min\{E, M_t\}$.

\begin{algorithm}[t]
    \SetAlgoLined
    \DontPrintSemicolon
    \textit{Input (Task t)}: Features $\bH_t\in \bbR^{E\times m_t}$ \cref{eq:def-H}, labels  $\bY_t\in \bbR^{c_t \times m_t} $, truncation percentage $\zeta\in[0,1]$; \\
    For $t\gets 1, 2,\dots$: \\
    \quad $k_t \gets \ceil{(1-\zeta) \min\{ E, M_t \}}$;\label{algo-eq:kt} \tcp*{$M_t:=m_1+\cdots+m_t$ can be updated online}
    
    \quad $\bJ_{t}\gets \bY_{1:t}\bH_{1:t}^\top$; \label{algo-eq:Qt} \tcp*{online update of $\bJ_{t}$ detailed in \cref{algo:label-feature-cov}, \cref{section:ITSVD-algo-details} } 
    
    \quad Form $\bB_t$ as per \cref{eq:define-B-tilde}; \\

    \quad  $(\widetilde{\bU}_{1:t},\widetilde{\bSigma}_{1:t})\gets$ Top-$k_t$ SVD factors of $\bB_t$; \label{algo-eq:Bt} \tcp*{\cref{algo:TSVD} if $t=1$, or \cref{algo:ITSVD} if $t>1$ }

    \quad Compute linear classifier $\widetilde{\bW}_t:= \bJ_{t} \widetilde{\bU}_{1:t} \widetilde{\bSigma}_{1:t}^{-2} \widetilde{\bU}^\top_{1:t} $; \label{algo-eq:Wt} \tcp*{cf. \cref{eq:closed-form-solution-ICL} and \cref{eq:Wt-output-ITSVD} }

    \caption{Continual Solver of \ref{eq:LoRanPAC} (detailed version in \cref{algo:ITSVD-LS}, \cref{section:ITSVD-algo-details}) } 
    \label{algo:ITSVD-LS-concise-main-paper}
\end{algorithm}

\myparagraph{Continual Implementation} To solve \ref{eq:LoRanPAC} continually, we first write down the closed-form expression of $\overline{\bW}_t$. Let $\overline{\bU}_{1:t} \overline{\bSigma}_{1:t}  \overline{\bV}_{1:t}^\top$ be a \textit{compact} SVD of $\tau_{k_t}(\bH_{1:t})$; here, $\overline{\bU}_{1:t}$ is of size $E\times k_t$ and $\overline{\bSigma}_{1:t}$ is invertible of size $k_t\times k_t$. Similarly, let $\bU_{1:t} \bSigma_{1:t}  \bV_{1:t}^\top$ be a compact SVD of $\bH_{1:t}$, where $\bU_{1:t}$ has $\rank(\bH_{1:t})$ columns and contains $\overline{\bU}_{1:t}$ as a submatrix. We can then write $\overline{\bW}_t$ as % Note that $\overline{\bU}_{1:t}$ consist of first $k_t$ columns of $\bU_{1:t}$ (and similarly for $\overline{\bV}_{1:t}$). Then the \textit{offline} solution to \ref{eq:LoRanPAC}  can be written as
\begin{equation}\label{eq:closed-form-solution-ICL}
     \begin{split}
         \overline{\bW}_t& =\bY_{1:t} \overline{\bV}_{1:t} \overline{\bSigma}_{1:t}^{-1}   \overline{\bU}^\top_{1:t} = \bY_{1:t} \overline{\bV}_{1:t} \overline{\bSigma}_{1:t}   \overline{\bU}^\top_{1:t} \left(\overline{\bU}_{1:t} \overline{\bSigma}_{1:t}^{-2}   \overline{\bU}^\top_{1:t}\right) \\
         &\overset{\textnormal{(i)}}{=} \bY_{1:t} \bV_{1:t} \bSigma_{1:t} \bU^\top_{1:t} \left( \overline{\bU}_{1:t} \overline{\bSigma}_{1:t}^{-2} \overline{\bU}^\top_{1:t} \right) = \bY_{1:t} \bH_{1:t}^\top \left(\overline{\bU}_{1:t} \overline{\bSigma}_{1:t}^{-2} \overline{\bU}^\top_{1:t} \right),
     \end{split}
\end{equation}
where  (i) holds as the column vectors of $\bU_{1:t}$ not shown in $\overline{\bU}_{1:t}$ are orthogonal to $\overline{\bU}_{1:t}$. % In order to turn \cref{eq:closed-form-solution-ICL} into a continual implementation, we need to 
Given \cref{eq:closed-form-solution-ICL}, it now suffices to update $\bJ_{t}:=\bY_{1:t} \bH_{1:t}^\top \in \bbR^{c_t \times E}$, $\overline{\bU}_{1:t}\in \bbR^{E\times k_t}$, and $\overline{\bSigma}_{1:t}\in \bbR^{k_t\times k_t}$ in an online fashion. This procedure is described in \cref{algo:ITSVD-LS-concise-main-paper}, where the following points are considered:
\begin{itemize}[wide]
    \item Since the columns of $\bY_{1:t}$ are one-hot vectors, we can compute $\bY_{1:t} \bH_{1:t}^\top$ incrementally by matrix addition rather than (sparse) matrix multiplication. % (cf. \cref{algo:label-feature-cov}, \cref{section:ITSVD-algo-details}).
    \item An exact update of $\overline{\bU}_{1:t}$ and $\overline{\bSigma}_{1:t}$ would require computing the SVD factors of the full data $\bH_{1:t}$. However, past data $\bH_{1:t-1}$ is not available when observing task $t$. Thus, we consider approximating $\overline{\bU}_{1:t}$, $\overline{\bSigma}_{1:t}$ by two respective matrices $\widetilde{\bU}_{1:t}$,  $\widetilde{\bSigma}_{1:t}$ of the same sizes. 
    % for each task $t$, we maintain two matrices $\widetilde{\bU}_{1:t}$,  $\widetilde{\bSigma}_{1:t}$; which capture the information in  $\overline{\bU}_{1:t}$ and $\overline{\bSigma}_{1:t}$.
    %$\widetilde{\bU}_{1:t}\approx \overline{\bU}_{1:t}, \widetilde{\bSigma}_{1:t}\approx \overline{\bSigma}_{1:t}$. 
    Specifically, as shown in \cref{algo-eq:Bt}, we set $\widetilde{\bU}_{1:t}$, $\widetilde{\bSigma}_{1:t}$ to be the top $k_t$ SVD factors of $\bB_{t}$, where $\bB_{t}$ is defined as
    \begin{equation}\label{eq:define-B-tilde}
        \bB_{t}:= \begin{cases}
        \bH_1 & \textnormal{if } t=1; \\
        \big[\widetilde{\bU}_{1:t-1}\widetilde{\bSigma}_{1:t-1}, \ \bH_t \big] & \textnormal{otherwise.}
        \end{cases}
    \end{equation}
    Note that $\bB_1$ is of size $E\times m_1$, while for $t>1$ we have $\bB_t$ of size $E\times (k_{t-1} + m_t)$. The top-$k_t$ singular values of $\bB_{t} $ and $\bH_{1:t}$ are close to each other (cf. \cref{fig:stability}a, \cref{fig:eigenvalues-continual}, \cref{fig:normalized-ev-diff}, and \cref{theorem:eigenvalue-eigenspace-bound}), which indicates the effectiveness of our continual updating strategy. Then, as shown in \cref{algo-eq:Wt} and recalling \cref{eq:closed-form-solution-ICL} and $\bJ_{t}=\bY_{1:t} \bH_{1:t}^\top$, we construct a linear classifier via 
    \begin{equation}\label{eq:Wt-output-ITSVD}
        \widetilde{\bW}_t\gets \bJ_{t} \widetilde{\bU}_{1:t} \widetilde{\bSigma}_{1:t}^{-2} \widetilde{\bU}^\top_{1:t}. % \quad \quad \quad  \bJ_{t}:=\bY_{1:t} \bH_{1:t}^\top
    \end{equation}
    \item In \cref{algo:ITSVD-LS-concise-main-paper}, $\zeta$ denotes the truncation percentage. Given $\zeta$, we set the number $k_t$ of top SVD factors preserved at task $t$ to $k_t= \ceil{(1-\zeta) \min\{ E, M_t \}}$, where $\ceil{\cdot}$ converts an input number to the closest integer no smaller than it. \cref{fig:truncation-effect} visualizes the effect of $\zeta$ on the final test accuracy, highlighting nearly zero accuracy with $\zeta=0$ and stable performance with $\zeta\geq 5\%$.
\end{itemize}
\begin{remark}\label{remark:test-time}
    At test time, given a sample, we make a forward pass and obtain its random ReLU feature $\bh$. The predicted class is then set according to the maximum entry of $\widetilde{\bW}_t\bh$.
\end{remark}

\begin{remark}\label{remark:ISVD-min-norm}
    \cref{algo:ITSVD-LS-concise-main-paper} with $\zeta=0$ is the incremental SVD method that we use to solve \ref{eq:1layer-min-norm}.
\end{remark}
\begin{remark}[Time and Space Complexity]
    The major cost of \cref{algo:ITSVD-LS-concise-main-paper} is to compute the top-$k_t$ SVD factors of the $E\times (k_{t-1}+m_t)$ matrix $\bB_t$, which takes $O(E (k_{t-1}+m_t)^2 )$ time. Futhermore \cref{algo:ITSVD-LS-concise-main-paper} uses $O(2Ec_t+ E k_t + k_t^2)$ memory to store $\bJ_{t}$, $\widetilde{\bU}_{1:t}$, and $\widetilde{\bSigma}_{1:t}$, and $\widetilde{\bW}_t$, and $O(E k_{t-1} + E m_t)$ memory to construct $\bB_t$, and some extra working memory to compute the top-$k_t$ SVD factors of $B_t$. We refer the reader to \cref{section:ITSVD-algo-details} for more details.
\end{remark}

\begin{figure}
	\centering
	\includegraphics[width=0.24\textwidth]{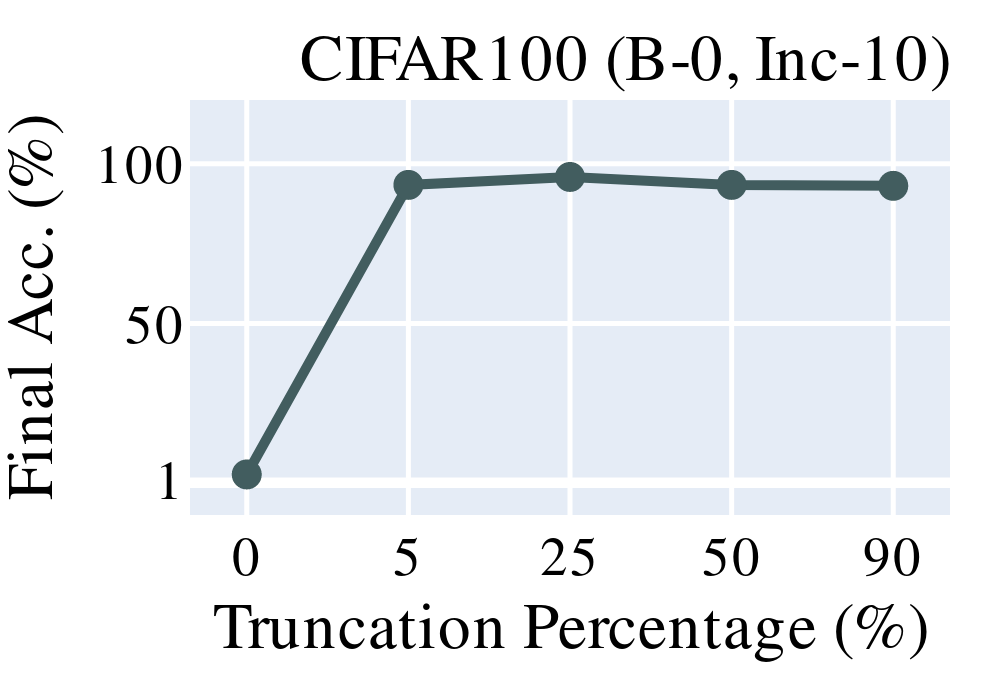}
	\includegraphics[width=0.24\textwidth]{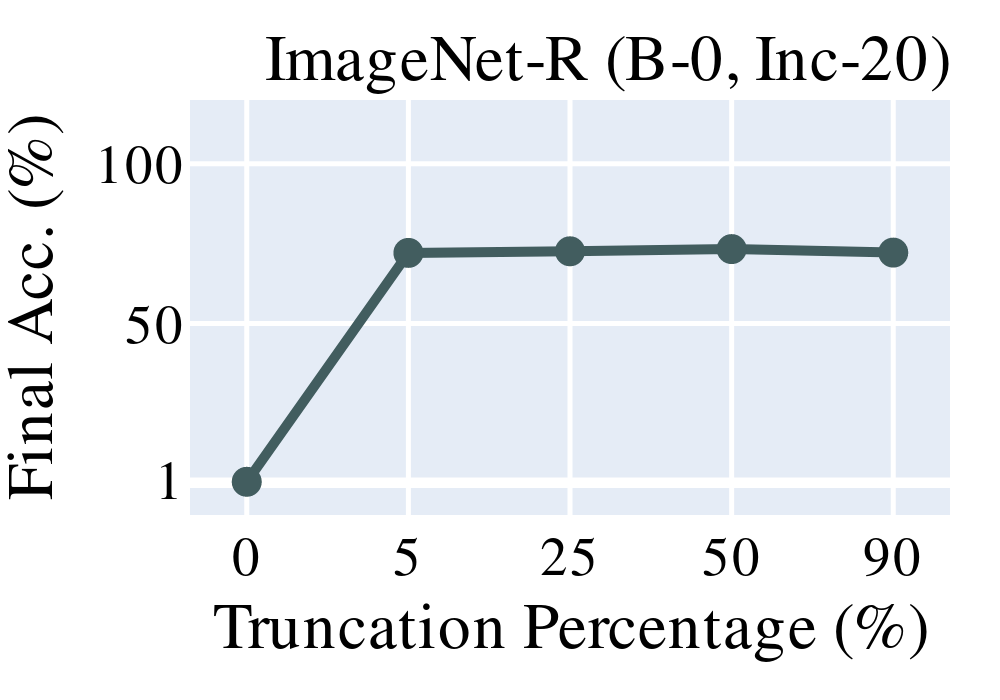}
	\includegraphics[width=0.24\textwidth]{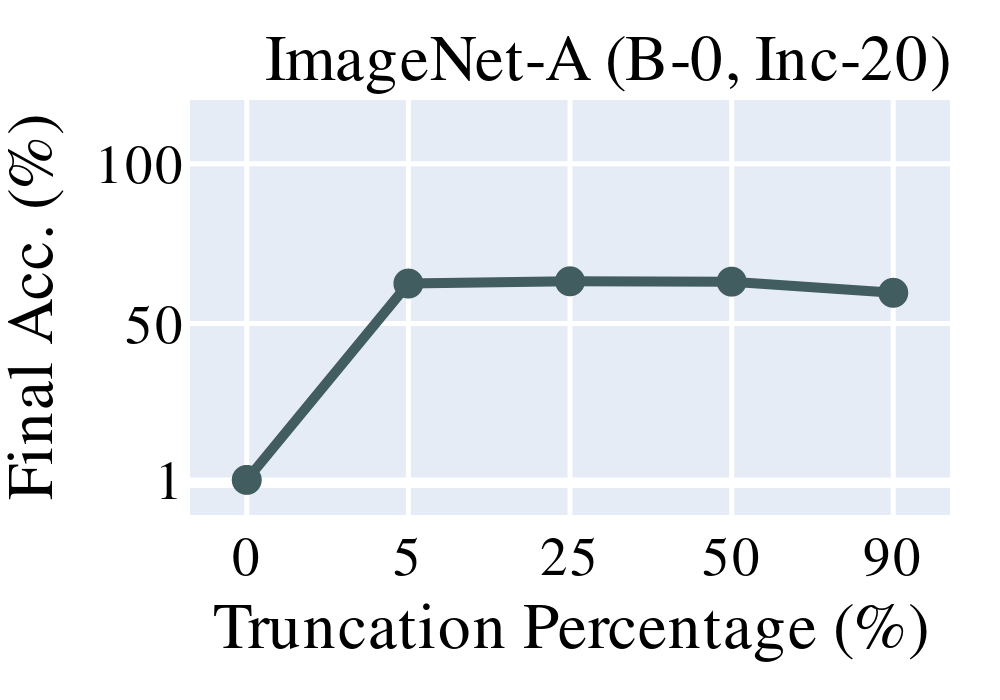}
	\includegraphics[width=0.24\textwidth]{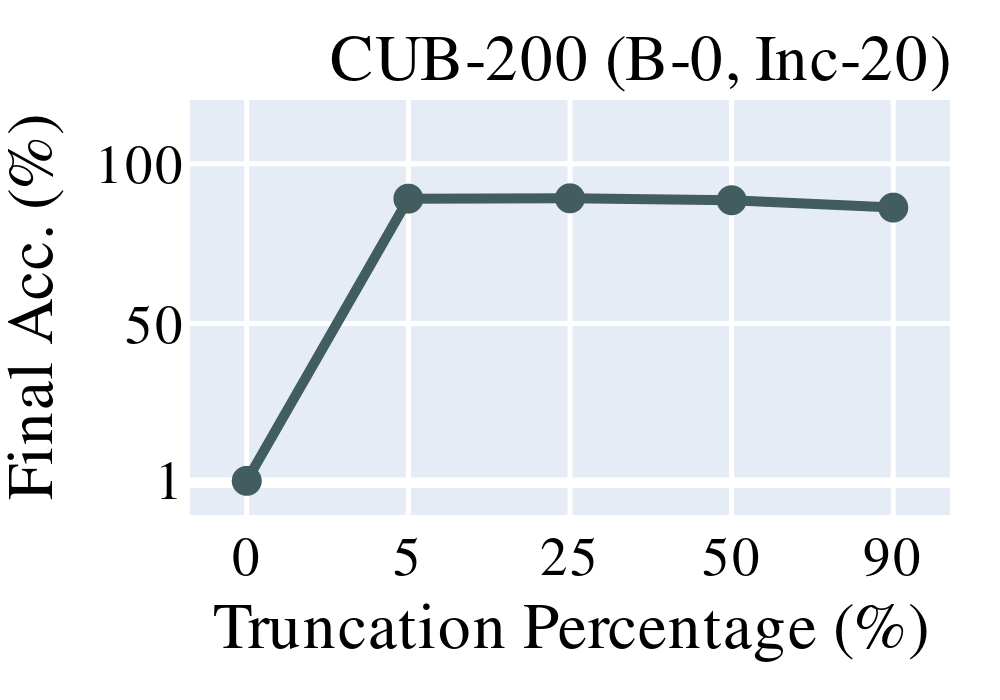}

    \caption{Final test accuracy as the truncation percentage $\zeta$ varies.  \label{fig:truncation-effect}}
\end{figure}

\section{Provably Controlled Training and Test Errors}\label{section:ITSVD-theory}
In this section, we present \cref{theorem:training-loss-deterministic,theorem:test-loss-deterministic}, which bound the training and test error of the output \cref{eq:Wt-output-ITSVD} of our approach (\cref{algo:ITSVD-LS-concise-main-paper}).

\myparagraph{Notations} Denote by $\mu_k(\cdot)$ the $k$-th largest eigenvalue of a symmetric matrix. Define 
\begin{equation}\label{eq:define-gap-gamma}
    \gamma_1:=1, \quad \gamma_t := \frac{ \mu_{k_t} \big( \bB_t \bB_t^\top \big) }{\max_{i=1,\dots,t-1} \left\{ \mu_{k_i+1} \big( \bB_i \bB_i^\top \big) \right \}}, \quad \forall\ t>1.
\end{equation}
The quantity $\gamma_t$ relates to the \emph{stability-plasticity tradeoff}, as it is the ratio between the \textit{minimum preserved eigenvalue} $\mu_{k_t} \big( \bB_t \bB_t^\top \big)$ at task $t$ and the \textit{maximum eigenvalues being truncated in the past}, $\mu_{k_{i}+1} \big( \bB_{i} \bB_{i}^\top \big)$. Clearly $\gamma_t>0$, as we truncate only non-zero eigenvalues. Furthermore, instead of determining the number of preserved SVD factors $k_t$ based on the truncation percentage $\zeta$, we can take a threshold hyperparameter $\delta$ and truncate eigenvalues smaller than $\delta$; this $\delta$ implicitly determines $k_t$'s, and we have $\mu_{k_t} \big( \bB_t \bB_t^\top \big) \geq \delta >  \mu_{k_i+1} \big( \bB_i \bB_i^\top \big)$, which implies $\gamma_t\geq 1$. % we truncate eigenvalues smaller than a given threshold $\delta$, we have $\gamma_t\geq 1$. In this case, the threshold $\delta$ implicitly determines $k_i$'s and we have $\mu_{k_t} \big( \bB_t \bB_t^\top \big) \geq \delta >  \mu_{k_i+1} \big( \bB_i \bB_i^\top \big)$.
Finally, as suggested by \cref{fig:stability}a, $\gamma_t$ can be as large as $10^{10}$: If we set $\delta=10^{-2}$ in the case of  \cref{fig:stability}a, then the maximum truncated eigenvalue is of order $10^{-5}$ and the minimum preserved is of order $10^5$.

Then, the \textit{accumulative error} $a_t$ is defined as
\begin{equation}\label{eq:define-a}
    a_0 := 0, \quad \quad a_{t}:= \sum_{i=1}^{t}  \mu_{k_i+1} \big( \bB_{i} \bB_i^\top \big), \quad \forall t\geq 1. % \Big\|\sum_{i=1}^{t} \left(  \bB_{i} \bB_i^\top - \tau_{k_i} \big( \bB_{i} \bB_i^\top \big)  \right) \Big\|, \quad \forall t\geq 1.
\end{equation}
The term $a_t$ reflects the information ignored by our algorithm, as $\mu_{k_i+1} \big( \bB_{i} \bB_i^\top \big)$ is the maximum eigenvalue truncated at task $i$. Note that, even when observing thousands of tasks (e.g., $t \approx 10^3$), if we truncate the smallest eigenvalues (of order $10^{-5}$), $a_t$ is in the order of $10^{-2}$.

\myparagraph{Model Assumption} We consider a noisy linear regression model. Specifically, we assume there is some \textit{ground-truth} weight matrix $\colorgt\in \bbR^{c_t \times E}$ and noise $\coloreps\in \bbR^{c_t \times M_t}$ satisfying 
\begin{equation}\label{eq:model-assumption}
    \bY_{1:t} = \colorgt \bH_{1:t} + \coloreps. 
\end{equation}
The quantities $\colorgt$ and $\coloreps$ are colored to reflect the fact that they are unknown and not computable. 
%\begin{remark}
The model in \cref{eq:model-assumption} is related to 
\textit{probabilistic principal component analysis} (PPCA); cf. \citet{Tipping-JRSS1999} and Chapter 2.2 of \citet{Vidal-GPCA2016}. The two main differences with PPCA are that we make no probabilistic assumptions on $\bH_{1:t}$ or $\coloreps$ (except in \cref{section:theory-Gaussian-assumption}); and we consider the over-parameterized case with large $E$, while PPCA assumes $\colorgt$ is a tall matrix (i.e., $E<c_t$).
%\end{remark}

% Note that we make no assumptions on $\colorgt$ or $\coloreps$, and we will derive deterministic theoretical guarantees using \cref{eq:model-assumption}. In the appendix, we make more assumptions and prove extra theoretical results. 

% \myparagraph{Model Assumption} We consider a noisy linear regression model. Specifically, we assume there is some \textit{ground-truth} weight matrix $\colorgt\in \bbR^{c_t \times E}$ satisfying $\bY_{1:t} = \colorgt \bH_{1:t} + \coloreps$ for some noise  $\cE\in \bbR^{c_t \times M_t}$, where $\coloreps$ has i.i.d. $\cN(0, \colorvar)$ entries and each column of $\bH_{1:t}$ is i.i.d., sampled from $\cN(0, \colorcov)$. While ReLU features are not centered around zero, one could analyze the non-zero mean case similarly. Our model assumption here is the most simplified, which allows us to capture the theoretical behavior of our proposed method in the most direct way. On the other hand, we provide more theoretical results of similar flavor in \cref{subsection:extra-theory}, without making this linear model assumption. 

\myparagraph{Bound The  Training MSE} In the over-parametrized regime $E\gg M_t$, a solution to \ref{eq:1layer-min-norm} should, in principle, perfectly fit the data and achieve zero training MSE. However, solving \ref{eq:1layer-min-norm} is numerically unstable and empirically entails huge losses (\cref{fig:training-loss}). As a remedy, our approach truncates the data spectrum continually, trading off between perfectly fitting training data and increasing numerical stability. The following theorem, whose proof can be found in \cref{section:proof-main-paper}, connects the eigenvalue ratio $\gamma_t$ and the accumulative error $a_t$ with our method's training loss, showing that the training MSE is provably under control:

% Given the output $\widetilde{\bW}_t$ of our approach (\cref{algo:ITSVD-LS-concise-main-paper}), we bound its expected MSE training loss over the randomness of noise and averaged over the number of samples: 
% Our first theory bounds the expected loss over the randomness of noise: 
\begin{theorem}\label{theorem:training-loss-deterministic}
    Let $\bB_t, \gamma_t, a_t$ be defined as in \cref{eq:define-B-tilde}, \cref{eq:define-gap-gamma}, and \cref{eq:define-a} respectively. If $\bY_{1:t} = \colorgt \bH_{1:t} + \coloreps$ \cref{eq:model-assumption}, then the output $\widetilde{\bW}_t= \bY_{1:t} \bH_{1:t}^\top \widetilde{\bU}_{1:t} \widetilde{\bSigma}_{1:t}^{-2} \widetilde{\bU}^\top_{1:t}$ of our method \cref{eq:Wt-output-ITSVD} satisfies
    \begin{align}\label{eq:bound:theorem:training-loss-deterministic}
        %\begin{split}
            \frac{1}{M_t} \big\| \widetilde{\bW}_{t} \bH_{1:t} - \bY_{1:t} \big\|_{\textnormal{F}}^2 &\leq 4 \cdot \| \colorgt \|_{\textnormal{F}}^2  \left( \frac{a_t}{M_t}  + \frac{ a_{t-1}(t-1)}{ \gamma_t M_t} + \frac{a_{t-1}(t-1)^2}{ \gamma_t^2M_t} \right) \\
        &\quad + 2\cdot \left\| \coloreps \right\|^2 \left( \frac{(M_t-k_t)}{M_t} + \frac{(t-1) \min\left\{  M_{t-1} - k_{t-1}, (t-1)k_t \right\}}{\gamma_t^2 M_t }  \right). \nonumber
        %\end{split}
    \end{align}
\end{theorem}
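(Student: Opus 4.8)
\textit{Proof plan.}
The plan is to put the residual in closed form, use the model $\bY_{1:t}=\bW^\ast_t\bH_{1:t}+\cE_{1:t}$ to split it into a ``signal'' part and a ``noise'' part, and control each part through a recurrence (\cref{lemma:truncation-equality}) for the accumulated truncation error $\bE_t:=\bH_{1:t}\bH_{1:t}^\top-\widetilde{\bU}_{1:t}\widetilde{\bSigma}_{1:t}^2\widetilde{\bU}_{1:t}^\top$ (note $\widetilde{\bU}_{1:t}\widetilde{\bSigma}_{1:t}^2\widetilde{\bU}_{1:t}^\top=\tau_{k_t}(\bB_t\bB_t^\top)$, the best rank-$k_t$ approximation of $\bB_t\bB_t^\top$). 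Since $\widetilde{\bW}_t=\bY_{1:t}\bH_{1:t}^\top\widetilde{\bU}_{1:t}\widetilde{\bSigma}_{1:t}^{-2}\widetilde{\bU}_{1:t}^\top$, the residual equals $\widetilde{\bW}_t\bH_{1:t}-\bY_{1:t}=\bY_{1:t}(\bM_t-\bI)$, where $\bM_t:=\bH_{1:t}^\top\widetilde{\bU}_{1:t}\widetilde{\bSigma}_{1:t}^{-2}\widetilde{\bU}_{1:t}^\top\bH_{1:t}=\bQ_t^\top\bQ_t$ with $\bQ_t:=\widetilde{\bSigma}_{1:t}^{-1}\widetilde{\bU}_{1:t}^\top\bH_{1:t}\in\bbR^{k_t\times M_t}$. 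Substituting the model and using $\|A+B\|_{\textnormal{F}}^2\le 2\|A\|_{\textnormal{F}}^2+2\|B\|_{\textnormal{F}}^2$, the claim reduces to bounding $\|\bW^\ast_t\bH_{1:t}(\bI-\bM_t)\|_{\textnormal{F}}^2$ and $\|\cE_{1:t}(\bI-\bM_t)\|_{\textnormal{F}}^2$ separately; below $\|\cdot\|$ denotes the spectral norm.

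\textit{The recurrence.} From $\bB_t=[\widetilde{\bU}_{1:t-1}\widetilde{\bSigma}_{1:t-1},\ \bH_t]$ one gets $\bB_t\bB_t^\top=\widetilde{\bU}_{1:t-1}\widetilde{\bSigma}_{1:t-1}^2\widetilde{\bU}_{1:t-1}^\top+\bH_t\bH_t^\top$, which gives the recurrence $\bE_t=\bE_{t-1}+\big(\bB_t\bB_t^\top-\tau_{k_t}(\bB_t\bB_t^\top)\big)$, both summands PSD, with $\bE_0:=0$. Unrolling this I obtain: (i) $0\preceq\bE_t$ and $\|\bE_t\|\le\sum_{i=1}^{t}\mu_{k_i+1}(\bB_i\bB_i^\top)=a_t$; (ii) $\rank(\bE_{t-1})\le\sum_{i=1}^{t-1}(\rank\bB_i-k_i)=M_{t-1}-k_{t-1}$ and $\trace(\bE_{t-1})\le(M_{t-1}-k_{t-1})\max_{i<t}\mu_{k_i+1}(\bB_i\bB_i^\top)$; and, crucially, (iii) $\bE_t\widetilde{\bU}_{1:t}=\bE_{t-1}\widetilde{\bU}_{1:t}$, because $\widetilde{\bU}_{1:t}$ spans the top-$k_t$ eigenspace of $\bB_t\bB_t^\top$ and hence $\big(\bB_t\bB_t^\top-\tau_{k_t}(\bB_t\bB_t^\top)\big)\widetilde{\bU}_{1:t}=0$. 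From (iii), $\widetilde{\bU}_{1:t}^\top\bE_t\widetilde{\bU}_{1:t}=\widetilde{\bU}_{1:t}^\top\bE_{t-1}\widetilde{\bU}_{1:t}$, so $\bQ_t\bQ_t^\top=\bI+\bS_t$ with $\bS_t:=\widetilde{\bSigma}_{1:t}^{-1}\widetilde{\bU}_{1:t}^\top\bE_{t-1}\widetilde{\bU}_{1:t}\widetilde{\bSigma}_{1:t}^{-1}\succeq0$. Using the deterministic identities $\mu_{k_t}(\bB_t\bB_t^\top)=\gamma_t\max_{i<t}\mu_{k_i+1}(\bB_i\bB_i^\top)$ and $a_{t-1}\le(t-1)\max_{i<t}\mu_{k_i+1}(\bB_i\bB_i^\top)$, claims (i)--(iii) yield the estimates I need: $\|\bE_t\widetilde{\bU}_{1:t}\|=\|\bE_{t-1}\widetilde{\bU}_{1:t}\|\le a_{t-1}$; $\|\bS_t\|\le a_{t-1}/\mu_{k_t}(\bB_t\bB_t^\top)\le(t-1)/\gamma_t$; $\rank\bS_t\le\min\{k_t,M_{t-1}-k_{t-1}\}$; $\trace\bS_t\le\trace(\bE_{t-1})/\mu_{k_t}(\bB_t\bB_t^\top)\le(M_{t-1}-k_{t-1})/\gamma_t$; and $\|\widetilde{\bSigma}_{1:t}^{-1}\widetilde{\bU}_{1:t}^\top\bH_{1:t}\|^2=\|\bQ_t\bQ_t^\top\|=1+\|\bS_t\|\le 1+a_{t-1}/\mu_{k_t}(\bB_t\bB_t^\top)$.

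\textit{Bounding the two terms.} For the signal term, $\bH_{1:t}\bH_{1:t}^\top=\widetilde{\bU}_{1:t}\widetilde{\bSigma}_{1:t}^2\widetilde{\bU}_{1:t}^\top+\bE_t$ gives the identity $\bH_{1:t}(\bI-\bM_t)=(\bI-\widetilde{\bU}_{1:t}\widetilde{\bU}_{1:t}^\top)\bH_{1:t}-\bE_t\widetilde{\bU}_{1:t}\widetilde{\bSigma}_{1:t}^{-2}\widetilde{\bU}_{1:t}^\top\bH_{1:t}$, and $\|\bW^\ast_t\bH_{1:t}(\bI-\bM_t)\|_{\textnormal{F}}\le\|\bW^\ast_t\|_{\textnormal{F}}\,\|\bH_{1:t}(\bI-\bM_t)\|$. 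Since $\bI-\widetilde{\bU}_{1:t}\widetilde{\bU}_{1:t}^\top$ annihilates $\widetilde{\bU}_{1:t}\widetilde{\bSigma}_{1:t}^2\widetilde{\bU}_{1:t}^\top$, the first summand obeys $\|(\bI-\widetilde{\bU}_{1:t}\widetilde{\bU}_{1:t}^\top)\bH_{1:t}\|^2=\|(\bI-\widetilde{\bU}_{1:t}\widetilde{\bU}_{1:t}^\top)\bE_t(\bI-\widetilde{\bU}_{1:t}\widetilde{\bU}_{1:t}^\top)\|\le\|\bE_t\|\le a_t$; the second summand, factored as $(\bE_t\widetilde{\bU}_{1:t}\widetilde{\bSigma}_{1:t}^{-1})(\widetilde{\bSigma}_{1:t}^{-1}\widetilde{\bU}_{1:t}^\top\bH_{1:t})$ and bounded by $\|\bE_t\widetilde{\bU}_{1:t}\|\,\mu_{k_t}(\bB_t\bB_t^\top)^{-1/2}(1+a_{t-1}/\mu_{k_t}(\bB_t\bB_t^\top))^{1/2}$, has square at most $\tfrac{a_{t-1}^2}{\mu_{k_t}(\bB_t\bB_t^\top)}+\tfrac{a_{t-1}^3}{\mu_{k_t}(\bB_t\bB_t^\top)^2}\le\tfrac{a_{t-1}(t-1)}{\gamma_t}+\tfrac{a_{t-1}(t-1)^2}{\gamma_t^2}$ by the last-paragraph estimates. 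Collecting the constant factors of $2$ gives the signal bound $4\|\bW^\ast_t\|_{\textnormal{F}}^2\big(\tfrac{a_t}{M_t}+\tfrac{a_{t-1}(t-1)}{\gamma_t M_t}+\tfrac{a_{t-1}(t-1)^2}{\gamma_t^2 M_t}\big)$. For the noise term, since $\bQ_t$ has rank $\le k_t$ and $\bQ_t\bQ_t^\top=\bI+\bS_t$, the $M_t$ eigenvalues of $\bM_t-\bI$ are $M_t-k_t$ copies of $-1$ together with the eigenvalues of $\bS_t$, whence $\|\bM_t-\bI\|_{\textnormal{F}}^2=(M_t-k_t)+\|\bS_t\|_{\textnormal{F}}^2$; then $\|\cE_{1:t}(\bI-\bM_t)\|_{\textnormal{F}}^2\le\|\cE_{1:t}\|^2\big((M_t-k_t)+\|\bS_t\|_{\textnormal{F}}^2\big)$, and bounding $\|\bS_t\|_{\textnormal{F}}^2=\trace(\bS_t^2)$ once by $\trace(\bS_t)\|\bS_t\|\le\tfrac{(t-1)(M_{t-1}-k_{t-1})}{\gamma_t^2}$ and once by $\rank(\bS_t)\|\bS_t\|^2\le\tfrac{(t-1)^2 k_t}{\gamma_t^2}$ yields $\|\bS_t\|_{\textnormal{F}}^2\le\tfrac{(t-1)\min\{M_{t-1}-k_{t-1},(t-1)k_t\}}{\gamma_t^2}$, hence the noise bound $2\|\cE_{1:t}\|^2\big(\tfrac{M_t-k_t}{M_t}+\tfrac{(t-1)\min\{M_{t-1}-k_{t-1},(t-1)k_t\}}{\gamma_t^2 M_t}\big)$. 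Adding the two bounds gives \cref{eq:bound:theorem:training-loss-deterministic}.

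\textit{Main obstacle.} The substantive content is \cref{lemma:truncation-equality}: establishing the recurrence for $\bE_t$ and, above all, the orthogonality $\big(\bB_t\bB_t^\top-\tau_{k_t}(\bB_t\bB_t^\top)\big)\widetilde{\bU}_{1:t}=0$ that gives $\bE_t\widetilde{\bU}_{1:t}=\bE_{t-1}\widetilde{\bU}_{1:t}$ --- this ``in-subspace'' fact is what stops the current task from injecting its own error into the $\gamma_t$-dependent terms, replaces $a_t$ by $a_{t-1}$ there, and leaves the leading term exactly $a_t/M_t$; the rank and trace estimates on $\bE_{t-1}$ also need a careful unrolling of the recurrence (one checks $\sum_{i=1}^{t-1}(\rank\bB_i-k_i)=M_{t-1}-k_{t-1}$). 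Everything afterwards is routine operator-norm algebra together with the two deterministic identities relating $\gamma_t$, $a_{t-1}$ and $\max_{i<t}\mu_{k_i+1}(\bB_i\bB_i^\top)$, which convert the spectral estimates into the stated $\gamma_t$- and $(t-1)$-powers.
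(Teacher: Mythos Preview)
Your proposal is correct and follows essentially the same approach as the paper: the split into signal and noise parts, the identification of the key recurrence $\bE_t=\bE_{t-1}+(\bB_t\bB_t^\top-\tau_{k_t}(\bB_t\bB_t^\top))$ and the orthogonality $\bE_t\widetilde{\bU}_{1:t}=\bE_{t-1}\widetilde{\bU}_{1:t}$ (the paper's \cref{lemma:truncation-equality}), and the resulting spectral/trace bounds are all the same. Your packaging via $\bQ_t,\bS_t$ and the eigenvalue-counting identity $\|\bM_t-\bI\|_{\textnormal{F}}^2=(M_t-k_t)+\|\bS_t\|_{\textnormal{F}}^2$ is a slightly cleaner rephrasing of the paper's trace computations in \cref{lemma:various-bounds,lemma:various-bounds2,lemma:various-bounds3} and \cref{prop:projection-error}, but the substance is identical.
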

% the upper bound of the training loss has several terms, and we discuss them below:
In \cref{eq:bound:theorem:training-loss-deterministic}, $\left\| \cdot \right\|$ is an overloaded notation, denoting the spectrum norm of a matrix and also the Euclidean norm of a vector. One of the main quantities governing the bound in \cref{theorem:training-loss-deterministic} is $a_t/M_t$, which reflects the truncation process for the current task. When truncating the smallest eigenvalues (of order $10^{-5}$) and observing hundreds of tasks, $a_t$ is in the order of $10^{-3}$, which makes $a_t/M_t$ insignificant. Then, the terms $(t-1)/\gamma_t$ and $a_{t-1}/M_t$ capture the continual past truncations and are equal to zero for $t=1$. Similarly to $a_t$, when truncating only the smallest eigenvalues, we have $a_{t-1}\approx  10^{-5}(t-1)$ and $(t-1)/\gamma_t\approx  10^{-10}(t-1)$. Hence, all terms involving $(t-1)/\gamma_t$ and $a_{t-1}/M_t$ are under control for hundreds- even thousands- of tasks. Finally, although the ground-truth $\colorgt$ and noise  $\coloreps$ are unknown, we empirically verify that the minimum-norm solution to \ref{eq:LoRanPAC} achieves high accuracy (\cref{section:experiments}). This suggests the linear model assumption is adequate, and that $\| \coloreps\|^2$ and $\| \colorgt \|_{\textnormal{F}}^2$ are reasonably small. In summary, the upper bound \cref{eq:bound:theorem:training-loss-deterministic} shown in \cref{theorem:training-loss-deterministic} behaves well and is quite small if we truncate the eigenvalues suitably (which makes $\gamma_t$ large and $a_t$ small). 

\myparagraph{Bound The Test MSE}  Consider a test sample $(\bh,\by)$ satisfying $\by=\colorgt\bh + \colortesteps$ for some noise vector $\colortesteps$. %In order for our theory to cover a wide range of test cases rather than a single specific instance $(\bh,\by)$, 
To derive a bound on the test MSE, we assume that $\bh$ is randomly sampled from some distribution with a finite second-order moment ($\colorcov:=\bbE[\bh \bh^\top]< \infty$), and that $\colortesteps$ is random, independent of $\bh$. Given the output $\widetilde{\bW}_t$ of our method \cref{eq:Wt-output-ITSVD}, we bound its test error $\bbE_{\bh,\colortesteps}\big[ \| \widetilde{\bW}_t\bh - \by \|^2  \big]$ over the randomness of $\bh,\colortesteps$ as follows:
\begin{theorem}\label{theorem:test-loss-deterministic}
    Let $\bB_t, \gamma_t, a_t$ be defined as in \cref{eq:define-B-tilde}, \cref{eq:define-gap-gamma}, and \cref{eq:define-a} respectively. Assume $\bY_{1:t} = \colorgt \bH_{1:t} + \coloreps$ \cref{eq:model-assumption} and $\by=\colorgt\bh + \colortesteps$ with $\colorcov:=\bbE[\bh \bh^\top]$. The output $\widetilde{\bW}_t$ of \cref{algo:ITSVD-LS-concise-main-paper} satisfies
    \begin{equation}\label{eq:test-loss-bv-deterministic}
        \bbE_{\bh,\colortesteps} \big\| \widetilde{\bW}_t\bh - \by \big\|^2    \leq 4\cdot \| \colorgt \|_{\textnormal{F}}^2 \cdot \bbB_t  + 4 \cdot \| \coloreps \|^2 \cdot \bbV_t  + 2 \cdot \bbE_{\colortesteps}\big[ \| \colortesteps \|^2 \big],
    \end{equation} 
    where $\bbB_t$ and $\bbV_t$ are defined as follows:
    \begin{equation}\label{eq:test-loss-bv-deterministic-BV-bound}
         \begin{split}
             \bbB_t & =\left\|  \colorcov -  \frac{1}{M_t} \bH_{1:t} \bH_{1:t}^\top  \right\|  \left(1 + \frac{(t-1)^2}{ \gamma_t^2 }\right)  + \left( \frac{a_t}{M_t} +  \frac{a_{t-1} (t-1)}{ \gamma_t M_t} + \frac{a_{t-1} (t-1)^2}{ \gamma_t^2 M_t }  \right) \\ 
             \bbV_t &= \left\| \colorcov - \frac{1}{M_t} \bH_{1:t} \bH_{1:t}^\top \right\| \cdot \frac{\left(   \frac{1}{\gamma_t} \min\left\{  M_{t-1} - k_{t-1}, (t-1)k_t \right\}  + k_t \right)}{ \mu_{k_t} ( \bB_t \bB_t^\top \big) } \\ 
             &\quad + \frac{k_t}{M_t} + \left(\frac{t-1}{\gamma_t^2 M_t} + \frac{2}{\gamma_tM_t}\right)\cdot \min\left\{  M_{t-1} - k_{t-1}, (t-1)k_t \right\}.
         \end{split}
    \end{equation}    
    % where $\bbB_t$ is defined as follows (the definition of  $\bbV_t$ is complicated and shown in \cref{section:proof-main-paper}):
    % \begin{equation}\label{eq:test-loss-bv-deterministic-Bt}
    %      \begin{split}
    %          \bbB_t & =\left\|  \colorcov -  \frac{1}{M_t} \bH_{1:t} \bH_{1:t}^\top  \right\|  \left(1 + \frac{(t-1)^2}{ \gamma_t^2 }\right)  + \left( \frac{a_t}{M_t} +  \frac{a_{t-1} (t-1)}{ \gamma_t M_t} + \frac{a_{t-1} (t-1)^2}{ \gamma_t^2 M_t }  \right).
    %      \end{split}
    % \end{equation}
\end{theorem}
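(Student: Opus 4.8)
The plan is to peel off the test noise, reduce the remaining error to the (already bounded) training error plus a perturbation term governed by the covariance mismatch $\colorcov-\tfrac{1}{M_t}\bH_{1:t}\bH_{1:t}^\top$, and then feed everything back through the recurrence machinery behind \cref{theorem:training-loss-deterministic}. Starting from $\by=\colorgt\bh+\colortesteps$ and $\|u+v\|^2\le 2\|u\|^2+2\|v\|^2$, together with independence of $\bh$ and $\colortesteps$,
\[
  \bbE_{\bh,\colortesteps}\big\|\widetilde{\bW}_t\bh-\by\big\|^2 \;\le\; 2\,\bbE_{\bh}\big\|(\widetilde{\bW}_t-\colorgt)\bh\big\|^2 \;+\; 2\,\bbE_{\colortesteps}\big[\|\colortesteps\|^2\big],
\]
and the last term is already the final summand of \cref{eq:test-loss-bv-deterministic}. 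For the middle term, $\bbE_{\bh}\|(\widetilde{\bW}_t-\colorgt)\bh\|^2=\Trace\big((\widetilde{\bW}_t-\colorgt)\,\colorcov\,(\widetilde{\bW}_t-\colorgt)^\top\big)$; writing $\colorcov=\tfrac{1}{M_t}\bH_{1:t}\bH_{1:t}^\top+\big(\colorcov-\tfrac{1}{M_t}\bH_{1:t}\bH_{1:t}^\top\big)$ splits this trace into
\[
  \tfrac{1}{M_t}\big\|(\widetilde{\bW}_t-\colorgt)\bH_{1:t}\big\|_{\textnormal{F}}^2 \;+\; \Trace\!\big((\widetilde{\bW}_t-\colorgt)(\colorcov-\tfrac{1}{M_t}\bH_{1:t}\bH_{1:t}^\top)(\widetilde{\bW}_t-\colorgt)^\top\big),
\]
where the second piece is at most $\big\|\colorcov-\tfrac{1}{M_t}\bH_{1:t}\bH_{1:t}^\top\big\|\cdot\|\widetilde{\bW}_t-\colorgt\|_{\textnormal{F}}^2$.

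The first piece, $\tfrac{1}{M_t}\|(\widetilde{\bW}_t-\colorgt)\bH_{1:t}\|_{\textnormal{F}}^2$, is essentially the content of the proof of \cref{theorem:training-loss-deterministic}: since $\widetilde{\bW}_t\bH_{1:t}-\bY_{1:t}=(\widetilde{\bW}_t-\colorgt)\bH_{1:t}-\coloreps$, the same recurrence-based estimates that yield \cref{eq:bound:theorem:training-loss-deterministic} also bound $\tfrac{1}{M_t}\|(\widetilde{\bW}_t-\colorgt)\bH_{1:t}\|_{\textnormal{F}}^2$ by a sum of the form $4\|\colorgt\|_{\textnormal{F}}^2\big(\tfrac{a_t}{M_t}+\tfrac{a_{t-1}(t-1)}{\gamma_t M_t}+\tfrac{a_{t-1}(t-1)^2}{\gamma_t^2 M_t}\big)$ plus a $\|\coloreps\|^2$-multiple; I would simply re-run that argument. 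This already produces the second parenthetical group of $\bbB_t$ (against $\|\colorgt\|_{\textnormal{F}}^2$) and part of $\bbV_t$ (against $\|\coloreps\|^2$).

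It remains to bound $\|\widetilde{\bW}_t-\colorgt\|_{\textnormal{F}}^2$. Substituting $\bY_{1:t}=\colorgt\bH_{1:t}+\coloreps$ into \cref{eq:Wt-output-ITSVD} and setting $\widetilde P_t:=\widetilde{\bU}_{1:t}\widetilde{\bSigma}_{1:t}^{-2}\widetilde{\bU}_{1:t}^\top$,
\[
  \widetilde{\bW}_t-\colorgt \;=\; \colorgt\big(\bH_{1:t}\bH_{1:t}^\top\widetilde P_t-\bI\big) \;+\; \coloreps\,\bH_{1:t}^\top\widetilde P_t,
\]
so $\|\widetilde{\bW}_t-\colorgt\|_{\textnormal{F}}^2\le 2\|\colorgt\|_{\textnormal{F}}^2\,\|\bH_{1:t}\bH_{1:t}^\top\widetilde P_t-\bI\|^2+2\|\coloreps\|^2\,\|\bH_{1:t}^\top\widetilde P_t\|_{\textnormal{F}}^2$, the norms chosen to keep a Frobenius norm on $\colorgt$ and a spectral norm on $\coloreps$. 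The operator quantity $\|\bH_{1:t}\bH_{1:t}^\top\widetilde P_t-\bI\|$ measures how far the incrementally computed projector $\widetilde P_t$ is from the ideal $\overline{\bU}_{1:t}\overline{\bSigma}_{1:t}^{-2}\overline{\bU}_{1:t}^\top$ of \cref{eq:closed-form-solution-ICL}, and $\|\bH_{1:t}^\top\widetilde P_t\|_{\textnormal{F}}^2$ contributes the $k_t$ and $\min\{M_{t-1}-k_{t-1},(t-1)k_t\}$ counts together with the factor $1/\mu_{k_t}(\bB_t\bB_t^\top)$, since $\|\widetilde{\bSigma}_{1:t}^{-2}\|=1/\mu_{k_t}(\bB_t\bB_t^\top)$. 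Both are controlled by comparing $\widetilde{\bU}_{1:t},\widetilde{\bSigma}_{1:t}$ (the top-$k_t$ SVD of the surrogate $\bB_t$) against the true top SVD of $\bH_{1:t}$, and this is exactly where \cref{lemma:truncation-equality} enters: its recurrence expresses the task-$t$ mismatch in terms of the task-$(t-1)$ mismatch plus the freshly truncated mass $\mu_{k_t+1}(\bB_t\bB_t^\top)$, and unrolling it accumulates into the factors $\big(1+\tfrac{(t-1)^2}{\gamma_t^2}\big)$, $\tfrac{a_{t-1}(t-1)}{\gamma_t}$, and the like appearing in $\bbB_t,\bbV_t$. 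Collecting the four resulting contributions and regrouping by the coefficients $\|\colorgt\|_{\textnormal{F}}^2$, $\|\coloreps\|^2$, and $\bbE_{\colortesteps}\|\colortesteps\|^2$ yields \cref{eq:test-loss-bv-deterministic} after the constants $2\cdot 2 = 4$ are tracked through.

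The main obstacle is the one inherited from \cref{theorem:training-loss-deterministic}: $\widetilde{\bU}_{1:t},\widetilde{\bSigma}_{1:t}$ are built from the incremental surrogate $\bB_t$, not from the unavailable $\bH_{1:t}$, so $\widetilde P_t$ is only an approximation of the ideal projector, and one must carefully track how the per-task truncations $\mu_{k_i+1}(\bB_i\bB_i^\top)$ propagate through the recursion of \cref{lemma:truncation-equality} — in particular obtaining both sides of the $\min\{M_{t-1}-k_{t-1},(t-1)k_t\}$ estimate so the bound stays tight whether few or many SVD factors are kept. By contrast, the covariance-mismatch split and the surrounding noise bookkeeping are routine once $\|\widetilde{\bW}_t-\colorgt\|_{\textnormal{F}}^2$ and $\tfrac{1}{M_t}\|(\widetilde{\bW}_t-\colorgt)\bH_{1:t}\|_{\textnormal{F}}^2$ are in hand.
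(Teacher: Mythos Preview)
Your plan is correct and will deliver a bound of the stated shape, but it takes a genuinely different route from the paper. The paper first separates the bias and noise contributions of $\widetilde{\bW}_t-\colorgt$ \emph{before} touching $\colorcov$: it writes $\bbE_{\bh}\|(\widetilde{\bW}_t-\colorgt)\bh\|^2\le 2\,\bbE_{\bh}\|\colorgt(\bH_{1:t}\bH_{1:t}^\top\widetilde P_t-\bI_E)\bh\|^2+2\|\coloreps\|^2\,\bbE_{\bh}\|\bH_{1:t}^\top\widetilde P_t\bh\|^2$, and only then, inside each trace, splits $\colorcov=\tfrac{1}{M_t}\bH_{1:t}\bH_{1:t}^\top+(\colorcov-\tfrac{1}{M_t}\bH_{1:t}\bH_{1:t}^\top)$. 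For the bias piece it further uses \cref{lemma:truncation-equality} and \cref{lemma:ABCD-inequality} to decouple the projector part $(\bI_E-\widetilde{\bU}_{1:t}\widetilde{\bU}_{1:t}^\top)$ from the residual $\bD_t\widetilde P_t$, exploiting the idempotency of $\bI_E-\widetilde{\bU}_{1:t}\widetilde{\bU}_{1:t}^\top$ to land directly on $\bbB_{t1}=\|(\bI_E-\widetilde{\bU}_{1:t}\widetilde{\bU}_{1:t}^\top)\colorcov(\bI_E-\widetilde{\bU}_{1:t}\widetilde{\bU}_{1:t}^\top)\|$ and $\bbB_{t2}=\|\bD_t\widetilde P_t\colorcov\widetilde P_t\bD_t\|$; these are then bounded via \cref{lemma:various-bounds,lemma:various-bounds2}. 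You instead split $\colorcov$ at the outset, obtaining the training-type term $\tfrac{1}{M_t}\|(\widetilde{\bW}_t-\colorgt)\bH_{1:t}\|_{\textnormal{F}}^2$ and a term governed by $\|\widetilde{\bW}_t-\colorgt\|_{\textnormal{F}}^2$, and only then peel off bias and noise. Your route is more modular---it literally recycles the machinery of \cref{theorem:training-loss-deterministic} (cf.\ also \cref{theorem:training-loss-Gaussian-noise-2})---while the paper's route keeps $\colorcov$ sandwiched throughout and never needs $\|\widetilde{\bW}_t-\colorgt\|_{\textnormal{F}}$ as an intermediate quantity.

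One small point to tighten: when you bound $\|\bH_{1:t}\bH_{1:t}^\top\widetilde P_t-\bI_E\|^2$, the naive triangle inequality on $(\widetilde{\bU}_{1:t}\widetilde{\bU}_{1:t}^\top-\bI_E)+\bD_t\widetilde P_t$ gives $(1+(t-1)/\gamma_t)^2$, not the $1+(t-1)^2/\gamma_t^2$ appearing in $\bbB_t$. To match the stated factor you should use that $(\bI_E-\widetilde{\bU}_{1:t}\widetilde{\bU}_{1:t}^\top)(\bD_t\widetilde P_t)^\top=0$, so $\|A+B\|^2\le\|A\|^2+\|B\|^2$ holds here exactly; this is the same orthogonality the paper exploits implicitly through \cref{lemma:ABCD-inequality} and the projection structure. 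With that observation in place, both routes rest on \cref{lemma:truncation-equality} and \cref{lemma:various-bounds,lemma:various-bounds2} and yield the same $\bbB_t,\bbV_t$ up to absolute constants.
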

% The proof of \cref{theorem:test-loss-deterministic} can be found in \cref{section:proof-main-paper}. 
There are two major terms in  $\bbB_t$ \cref{eq:test-loss-bv-deterministic-BV-bound}. The term in the right-most large parenthesis also appears in the error bound of \cref{theorem:training-loss-deterministic}; and reflects the fact that training losses impact test errors. Then, the term $\big\|  \colorcov -  \frac{1}{M_t} \bH_{1:t} \bH_{1:t}^\top  \big\|$ is commonly seen in \textit{covariance estimation} \citep{Wainwright-book2019}, where $\bh$ and the columns of $\bH_{1:t}$ are assumed to be independent i.i.d. Gaussian vectors. In this case, if $\colorcov$ furthermore satisfies some boundedness condition, we can show $\big\| \colorcov -  \frac{1}{M_t} \bH_{1:t} \bH_{1:t}^\top  \big\|$ converges to $0$ as $M_t\to \infty$; cf. Theorem 9 of \citet{Koltchinskii-Bernoulli2017}. On the other hand, the Gaussian assumption is sufficient but not necessary, and a similar conclusion is reached if we take a much weaker assumption called \textit{hypercontractivity} \citep{Jirak-arXiv2024}. 
% In this case, if we furthermore assume the spectrum of $\colorcov$ is dominated by a few very large eigenvalues (similarly to $\bH_{1:t} \bH_{1:t}^\top$ as shown in \cref{fig:stability}a, \cref{fig:effective-rank}), we can show $\big\|  \colorcov -  \frac{1}{M_t} \bH_{1:t} \bH_{1:t}^\top  \big\|$ converges to $0$ as $M_t$ approaches infinity; cf. Theorem 9 of \citet{Koltchinskii-Bernoulli2017}. 
Note that $\bbV_t$ is independent of noise, so the rest of the terms in \cref{eq:test-loss-bv-deterministic}, which are weighted by noise magnitudes $\| \coloreps \|^2$, $\bbE_{\colortesteps}\big[ \| \colortesteps \|^2 \big]$, are negligible if the noise is sufficiently small.% (as suggested by \cref{section:experiments}).

% Continual truncation brings technical hurdles, and we overcome them via developing novel bounds on various quantities not shown in prior works (cf. \cref{lemma:various-bounds,lemma:various-bounds2}) and discovering a non-trivial recurrence relation that allows us to quantify the dynamics of our method (cf. \cref{lemma:truncation-equality}).

\section{Related Work}\label{section:related-work-main-paper}
We now discuss related works that are the most relevant to our method and theory. A more extensive review of the literature and context is in \cref{section:review}.

\myparagraph{RanPAC} The RanPAC method of \citet{Mcdonnell-NeurIPS2023} motivates our use of random ReLU features $\bH_{1},\dots,\bH_t$. It amounts to solving the ridge regression problem (with some $\lambda>0$)
\begin{equation}\label{eq:RanPAC}
    \min_{\bW\in\bbR^{c_t\times E} } \lambda \cdot  \| \bW \|_{\textnormal{F}}^2  + \| \bW \bH_{1:t} - \bY_{1:t}\|_{\textnormal{F}}^2.  \tag{RanPAC}
\end{equation} 
The choice of hyperparameter $\lambda$ is crucial; \ref{eq:RanPAC} with a small regularization $\lambda$ fails to achieve competitive performance, while it might work well with a large enough $\lambda$ (e.g., $\lambda=10^4$); cf. \cref{fig:ranpac-unstable}. In constrast, \citet{Prabhu-arXiv2024v2} finds that small $\lambda$ (of order $10^{-5}$) works better if $\bH_{1:t}$ is replaced with \textit{random Fourier features}. This implies the optimal choice of $\lambda$ depends, among other factors, on the scale of the features and the noise level. Our method also has a hyperparameter, the truncation percentage $\zeta$, while the choice of $\zeta$ is less sensitive to these factors (\cref{fig:truncation-effect}). In the implementation of \citet{Mcdonnell-NeurIPS2023}, \ref{eq:RanPAC} selects $\lambda$ from the predefined set $\{10^{-8},10^{-7},\dots, 10^8\}$ via cross-validation on a small faction of training data. Although this stabilizes \ref{eq:RanPAC} in some cases, cross-validation can fail when the validation (or training) set is small and not representative of test data. Unfortunately, this failure occurs often in CIL with small increments (cf. \cref{tab:CIL-ViT-inc1},  \cref{section:experiments}). 

In more detail, for every task $t$ and every each candidate choice of $\lambda$, \ref{eq:RanPAC} maintains the covariances $\bH_{1:t} \bH_{1:t}^\top$, $\bY_{1:t}\bH_{1:t}^\top$, to solve the normal equations $\bW (\bH_{1:t} \bH_{1:t}^\top + \lambda \bI_E) = \bY_{1:t}\bH_{1:t}^\top$ in variable $\bW$ using off-the-shelf solvers implemented in PyTorch, which in general takes $O(E^3)$ time. In contrast, \ref{eq:LoRanPAC} has $O(E(k_{t-1}+m_t)^2)$ time complexity, and this is why it is slower than \ref{eq:LoRanPAC} for the same $E$, particularly when $E$ is large (cf. \cref{fig:training-time-inc5}, \cref{section:experiments}). Certainly, both \ref{eq:RanPAC} and \ref{eq:LoRanPAC} can potentially be implemented more efficiently. For example, \ref{eq:RanPAC} involves inverting the regularized covariance $\bH_{1:t} \bH_{1:t}^\top + \lambda \bI_E$, and this inverse can be updated continually via the \textit{Sherman–Morrison–Woodbury} formula. This formula is at the heart of the classic \textit{recursive least-squares} method \citep{Sayed-2008book}, and its philosophy is also found in recent continual learning papers \citep{Min-CDC2022,Zhuang-NeurIPS2022,Zhuang-CVPR2023}. However, it is known that such a scheme can be numerically unstable,  brittle for ill-conditioned data. Indeed, in our setting,  We find the implementation based on the Sherman–Morrison–Woodbury formula suffers from numerical errors and is unable to maintain good accuracy. Moreover, numerical errors accumulate over time, leading to worse performance for longer sequences of tasks. Finally, even if we know the numerical errors might arise in these methods, there is no obvious way to remedy them. This is different from our implementation based on robust truncated SVD, which has the advantage that we could (empirically) reduce numerical errors by re-orthogonalizing $\widetilde{\bU}_{1:t}$ (see \cref{remark:re-orthogonalize} and \cref{algo:ITSVD}).

One more component in RanPAC is a preprocessing step called \textit{first-session adaptation}. That is, before using the pre-trained model for continual learning, one fine-tunes it with data from the first task in a \textit{parameter-efficient} way \citep{Panos-ICCV2023}. This needs extra hyperparameters and yields different features than $\bH_{1:t}$. We study the impact of this step in \cref{tab:CIL-ViT}, \cref{section:experiments}.

The final point that relates \ref{eq:LoRanPAC} to \ref{eq:RanPAC} is this: $\overline{\bW}_t$ in \cref{eq:closed-form-solution-ICL} is a global minimizer of 
\begin{align}\label{eq:LoRanPAC-LS}
    \min_{\bW\in\bbR^{c_t\times E} }   \| \bW \tau_{k_t}(\bH_{1:t}) - \bY_{1:t}\|_{\textnormal{F}}^2.
\end{align}
Both \ref{eq:LoRanPAC} and \ref{eq:RanPAC} aim to minimize the MSE loss; the former uses truncation and the latter uses regularization to make the problem better conditioned. The MSE loss typically yields similar performance to the cross-entropy loss in many settings \citep{Janocha-arXiv2017,Hui-ICLR2021}, and the MSE loss is preferred here as it allows for a closed-form least-squares solution to be rapidly computed and continually updated.

% \citet{Janocha-arXiv2017,Hui-ICLR2021} showed that, in many settings, the MSE and cross-entropy losses yield similar performance. We utilize the MSE loss as it allows for a closed-form least-squares solution to be rapidly computed and continually updated.

% \begin{remark}\label{remark:MSE}
%     The MSE loss is used for \ref{eq:RanPAC}, and will also be used in our approach (\cref{section:ITSVD-implementation}). \citet{Janocha-arXiv2017,Hui-ICLR2021} showed that, in many settings, the MSE and cross-entropy losses yield similar performance. We utilize the MSE loss as it allows for a closed-form least-squares solution to be rapidly computed and continually updated.
% \end{remark}

\myparagraph{ICL} \ref{eq:LoRanPAC} is also related to the \textit{Ideal Continual Learner} (ICL) of \citet{Peng-ICML2023}, which in the linear, over-parameterized case is the following linearly constrained quadratic problem: 
% With random ReLU features $\bH_t$ and labels $\bY_t$ of task $t$, we can now instantiate the ICL framework of \citet{Peng-ICML2023} as a linearly constrained quadratic problem: 
\begin{align}\label{eq:ICL-1layer}
	\min_{\bW\in\bbR^{c_t\times E} }  \| \bW \bH_t - \bY_t \|_{\textnormal{F}}^2 \quad \quad \textnormal{s.t.} \quad \quad \bW \bH_{i}   = \bY_i  , \quad i=1,\dots, t-1. \tag{ICL}
\end{align}
Proposition 5 of \citet{Peng-ICML2023} gives a method based on SVD to solve \ref{eq:ICL-1layer}; it is proved in \citet{Peng-MoCL2025} that this method implicitly finds the solution to \ref{eq:1layer-min-norm}. But we have seen in \cref{fig:stability} that solving \ref{eq:1layer-min-norm} is numerically challenging due to highly ill-conditioned features $\bH_{1:t}$. Proposition 6 of \citet{Peng-ICML2023} further suggests that solving \ref{eq:ICL-1layer} by a gradient-based method gives the approach of \citet{Farajtabar-AISTATS2020}, known as \textit{Orthogonal Gradient Descent} (OGD). Subsequently, OGD is combined with the idea of SVD truncation in the \textit{PCA-OGD} method \citep{Doan-AISTATS2021}. As gradient-based methods, OGD and PCA-OGD converge slowly for ill-conditioned data and would be less efficient than our \ref{eq:LoRanPAC} implementation; the differences between PCA-OGD and our method are thoroughly discussed in our rebuttal. The OR-Fit method of \citet{Min-CDC2022} improves PCA-OGD by devising carefully chosen stepsizes that facilitate solving the current task. Their proposed stepsizes are related to \ref{eq:ICL-1layer} and recursive least-squares in an intriguing manner; we refer the reader to \citet{Peng-MoCL2025} for the precise mathematical connections and a unifying perspective on the aforementioned methods.

\myparagraph{Principal Component Regression} \ref{eq:LoRanPAC} combines \textit{principal component analysis} and  \textit{ordinary least-squares}, which is analogous to \textit{principal component regression} (PCR) \citep{Xu-NeurIPS2019,Huang-SIAM-J-MDS2022,Hucker-TPMS2023,Bach-SIAM-J-MDS2024,Green-arXiv2024}. These papers consider the offline setting, where truncation is performed only once. In contrast, we analyze the effect of continual truncation, which is most pertinent for CL. Indeed, for $t=1$, $\bbB_1$ of \cref{theorem:test-loss-deterministic} is equal to the corresponding term in Theorem 1 of \citet{Huang-SIAM-J-MDS2022} up to a constant. More importantly, these papers have statistical assumptions on $\bH_{1:t}$, which are potentially violated by generating $\bH_{1:t}$ via $\bH_t:=\relu (\bP \bX_t)$, with $\bX_t$ consisting of features from pre-trained models. In contrast, \cref{theorem:training-loss-deterministic,theorem:test-loss-deterministic} have few assumptions, and so they apply, at least in principle, to the \textit{full} architecture (i.e., a pre-trained model and random ReLU feature model in cascade).

% While PCR truncates the SVD only once, \ref{eq:LoRanPAC} truncates continually for each $t$. Hence, our formulation can be viewed as \textit{continual PCR}.

% \begin{remark}\label{remark:PCR}
%      %; we refer the reader to \citet{Green-arXiv2024} for a review on PCR.
%     While PCR truncates the SVD only once, \ref{eq:LoRanPAC} truncates continually for each $t$. Hence, our formulation can be viewed as \textit{continual PCR}. % See also \cref{remark:RFM}.
% \end{remark}

\section{Numerical Validation}\label{section:experiments}

This section highlights the performance and efficiency of LoRanPAC in the CIL setting across a diverse range of datasets and increments. For additional results, see  \cref{section:extra-experiments}, particularly \cref{subsection:experiment-DIL} for experimental outcomes in the DIL (\textit{domain-incremental learning}) setting.

\subsection{Setup}
\myparagraph{Baselines} The most relevant baseline to compare is RanPAC \citep{Mcdonnell-NeurIPS2023}. Additional competitive baselines include L2P \citep{Wang-CVPR2022b}, DualPrompt \citep{Wang-ECCV2022}, CodaPrompt \citep{Smith-CVPR2023}, SimpleCIL, ADAM \citep{Zhou-arXiv2023} and EASE \citep{Zhou-CVPR2024}. We also compare LoRanPAC with a \textit{joint linear classifier}, that is, a linear model trained using either the pre-trained features $\bX_{1:T}$ of all $T$ tasks, or the random ReLU features $\bH_{1:T}$. We denote these two methods by LC ($\bX_{1:T}$) and LC ($\bH_{1:T}$). To ensure a fair comparison, all experiments are conducted based on the PILOT GitHub repository of \citet{Sun-arXiv2023}. Additional experimental details, as well as a comprehensive review of relevant baselines is given in \cref{subsection:experiment-details} and \cref{section:review}. % , and compared in Table 1 of \citet{Zhou-IJCAI2024} under a slightly different setting.

\myparagraph{Pre-trained Models} We use ViT models pre-trained on ImageNet-1K; specifically the model \texttt{vit\_base\_patch16\_224} from the \texttt{timm} repository \citep{Wightman-2019timm}. Experiments using ViTs pre-trained on ImageNet-21K are presented in \cref{subsection:experiments-ViT21k}.

% We conduct experiments based on the PILOT code base of \citet{Sun-arXiv2023} where multiple existing methods are implemented, including, L2P \citep{Wang-CVPR2022b}, DualPrompt \citep{Wang-ECCV2022}, CodaPrompt \citep{Smith-CVPR2023}, SimpleCIL and ADAM \citep{Zhou-arXiv2023}, EASE \citep{Zhou-CVPR2024}, RanPAC \citep{Mcdonnell-NeurIPS2023}. In addition, we also consider linear probing on all data, either using the original pre-trained feature $\bX_i$ or the random embedding feature $\bH_i$; we denote the two methods by LC ($\bX_{1:T}$) and LC ($\bH_{1:T}$). We set the hyperparameters of all methods in a fair way to the best of our knowledge, and we refer the reader to \cref{subsection:experiment-details} for the detailed experimental setup.
\myparagraph{Datasets} Following prior works \citep{Zhou-arXiv2023,Mcdonnell-NeurIPS2023}, we run CIL experiments with B-$q_1$, Inc-$q_2$ on continual learning versions of the following datasets: CIFAR100 \citep{Krizhevsky-CIFAR2009}, ImageNet-R \citep{Hendrycks-ICCV2021}, ImageNet-A \citep{Hendrycks-CVPR2021}, CUB-200 \citep{Wah-2011}, ObjectNet \citep{Barbu-NeurIPS2019}, OmniBenchmark \citep{Zhang-ECCV2022}, VTAB \citep{Zhai-arXiv2019b}, and StanfordCars \citep{Krause-ICCV2013}. We set $q_1=0$ for most cases, but since StanfordCars and VTAB have $196$ and $50$ classes, respectively, we take $q_1=16$ and $q_1=10$ for them. We let $q_2$ vary in $\{5,10,20\}$, and also consider the more challenging case $q_2=1$.

\myparagraph{Metrics} After learning task $t$ we evaluate the top-1 classification accuracy $\cA_{i,t}$ for every $i=1,\dots,t$. For a total of $T$ tasks, the \textit{accuracy matrix} $\cA$ is defined as a $T\times T$ upper triangular matrix with its $(i,t)$-th entry being $\cA_{i,t}$.  \textit{Final accuracy} is defined as the average $\frac{1}{T} \sum_{i=1}^T\cA_{i,T}$ of the last column of $\cA$. \textit{Total accuracy} is defined as the average $\frac{1}{T(T-1)} \sum_{1\leq i\leq t\leq T}\cA_{i,t}$ of all upper triangular entries. Following common practices, we use total accuracy and final accuracy as our evaluation metrics.

\subsection{Experimental Results and Analysis}
\cref{tab:CIL-ViT} contains the main results for $q_2=5,10,20$ on 8 different CIL datasets. First observe that L2P, DualPrompt, and CodaPrompt are unstable as their accuracy varies significantly in different datasets for different values of $q_2$. Second, SimpleCIL, ADAM, and EASE are unstable as their performance is largely compromised on StanfordCars. Then, RanPAC is unstable with respect to $q_2$ as it exhibits a large performance gap compared to LoRanPAC for $q_2=5$ on ImageNet-A and StanfordCars. Finally, we see LoRanPAC has more stable performance across datasets and for varying $q_2$.

\begin{table}[t]
    \centering
    \ra{1.2}
    \caption{Final accuracy with pre-trained ViTs. Large accuracy gaps between RanPAC and LoRanPAC (ours) are shown in bold. $\dagger$: Methods using first-session adaptation with the hyperparameters set as per RanPAC$^\dagger$. $*$: Methods using first-session adaptation with the hyperparameters set as per EASE$^*$ \citep{Zhou-CVPR2024}. \cref{tab:CIL-ViT-multiple} reports standard deviation.  \cref{subsection:experiment-details} reports experimental details. \label{tab:CIL-ViT}}
    \scalebox{0.72}{
    \begin{tabular}{lccccccccccccc}
        \toprule 
        \textcolor{myred}{\textbf{(Part 1)}} &  \multicolumn{3}{c}{CIFAR100 (B-0)} & \multicolumn{3}{c}{ImageNet-R (B-0)} & \multicolumn{3}{c}{ImageNet-A (B-0)}   & \multicolumn{3}{c}{CUB-200 (B-0)} & Avg.  \\
        \midrule
        \rowcolor[gray]{0.95} LC ($\bX_{1:T}$) &\multicolumn{3}{c}{87.56} & \multicolumn{3}{c}{72.42}& \multicolumn{3}{c}{58.85} & \multicolumn{3}{c}{88.76} & 76.90 \\ 
        \rowcolor[gray]{0.95} LC ($\bH_{1:T}$) &\multicolumn{3}{c}{87.76} & \multicolumn{3}{c}{73.00}& \multicolumn{3}{c}{59.25} & \multicolumn{3}{c}{88.72} & 77.18 \\ 
        \midrule
         &  Inc-5 & Inc-10 & Inc-20 & Inc-5 & Inc-10 & Inc-20 & Inc-5 & Inc-10 & Inc-20 & Inc-5 & Inc-10 & Inc-20 &  \\
        \midrule
        \rowcolor[gray]{0.95} L2P & 80.25  & 83.53 & 83.57 & 67.92 & 71.78 & 73.42 & 44.50 & 48.52 & 51.28 & 53.60 & 59.20 & 67.81 & 65.45 \\
        \rowcolor[gray]{0.95} DualPrompt & 80.85 & 83.86  &  84.59 & 67.12 & 71.57 & 72.87 & 49.70 & 53.72 & 56.75 & 54.79 & 63.99 & 69.93 & 67.48 \\
        \rowcolor[gray]{0.95} CodaPrompt &  82.93 & 86.31 & 87.87  & 67.80 & 72.73 & 74.85 & 34.43 & 49.57 & 59.51 & 36.39 & 60.18 & 71.29 & 65.32 \\
        \rowcolor[gray]{0.95} SimpleCIL & 80.48 & 80.48 & 80.48 &  63.47& 63.47 & 63.47 & 58.72 & 58.72 & 58.72 & 80.45 & 80.45 & 80.45 & 70.78 \\
        \rowcolor{mypurple} RanPAC & 86.71 & 87.02 & 87.10 & 71.90 & 71.97 & 72.50 & \textbf{56.48} & 62.34 & 61.75 & 88.08 & 87.15 & 88.13 & 76.76 \\ 
        \rowcolor{mypurple} LoRanPAC & 88.18 &  88.18 & 88.21 & 73.67 & 73.72 & 73.63 & \textbf{62.74} & 63.20 & 63.20 &  89.36 & 89.27 & 89.23 & 78.55 \\
        %ICL-R &  &  &  & &  & & & & & & &  \\
        \midrule 
        \rowcolor[gray]{0.95} ADAM$^\dagger$ &  83.55 & 85.13 & 85.86 & 63.73 & 65.03 & 71.40 & 58.72 & 58.66 & 58.99 & 80.49 & 80.66 & 81.00 & 72.77  \\
        \rowcolor{mypurple} RanPAC$^\dagger$ & 88.73 & 90.04 & 90.74 & 70.80 & 73.37 & 78.80 & 62.34 & 62.08 & 62.28 & 88.42 & 87.57 & 88.68 &78.65 \\
        \rowcolor{mypurple} LoRanPAC$^\dagger$ & 89.73 &  90.82 & 91.44 & 73.58 & 74.55 & 79.13 & 62.74 & 62.80 & 62.94 & 89.14 & 89.19 & 89.27 & 79.61 \\
        %ICL-R$^\dagger$ &  &  &  & &  & & & & & & &  \\
        \midrule
        \rowcolor[gray]{0.95} EASE$^*$ & 84.43 & 86.48 & 88.16 & 73.53 & 77.02  & 77.55 & 58.26 & 61.69 & 62.28 & 80.66 & 81.68 & 81.13 & 76.07 \\
        LoRanPAC$^*$ & 89.46 & 90.90 & 91.67 & 78.73 & 80.43 & 81.45 & 63.40 & 64.45 & 65.64 & 89.14 & 89.19 & 89.44 & 81.16 \\
        %\midrule \\
        \midrule
        \textcolor{myred}{\textbf{(Part 2)}}  &  \multicolumn{3}{c}{ObjectNet (B-0) } & \multicolumn{3}{c}{OmniBenchmark (B-0) } & \multicolumn{3}{c}{VTAB (B-10) } & \multicolumn{3}{c}{StanfordCars (B-16)} & Avg.  \\
        \rowcolor[gray]{0.95} LC ($\bX_{1:T}$) &\multicolumn{3}{c}{59.70} & \multicolumn{3}{c}{79.55}& \multicolumn{3}{c}{91.32} & \multicolumn{3}{c}{74.12} &  76.17 \\ 
        \rowcolor[gray]{0.95} LC ($\bH_{1:T}$) &\multicolumn{3}{c}{59.96} & \multicolumn{3}{c}{80.02} & \multicolumn{3}{c}{91.17} & \multicolumn{3}{c}{73.65} & 76.20 \\ 
        \midrule
        &  Inc-5 & Inc-10 & Inc-20 & Inc-5 & Inc-10 & Inc-20 & Inc-5 & Inc-10 & Inc-20 & Inc-5 & Inc-10 & Inc-20 & \\
        \midrule
        \rowcolor[gray]{0.95} L2P & 45.53 & 52.05 & 55.49 & 54.50 & 57.29 & 60.50 & 59.32 & 73.25 & 78.91 & 13.70 & 27.46 & 43.68 & 51.81 \\
        \rowcolor[gray]{0.95} DualPrompt & 47.56 
        & 53.68 & 55.64 & 56.14 & 59.18 & 62.39 & 64.10 & 77.78 & 83.75 & 11.38 & 18.84 & 27.89 & 51.53 \\
        \rowcolor[gray]{0.95} CodaPrompt & 46.61 & 54.44 & 59.17 & 60.00 & 64.98 & 68.25 & 68.77 & 76.81 & 86.32 & 7.96 & 11.29 & 30.74 & 52.95  \\
        \rowcolor[gray]{0.95} SimpleCIL & 51.66 & 51.66 & 51.66 & 70.19 & 70.19 & 70.19 & 82.53 & 82.53 & 82.53 & 35.46 & 35.46 & 35.46 & 59.96  \\
        \rowcolor{mypurple} RanPAC & 58.77 & 57.66 & 57.69 & 77.63 & 77.63 & 77.46 & 91.15 & 91.58 & 91.58 & \textbf{58.03} & 71.40 & 71.40 & 73.50 \\ 
        \rowcolor{mypurple} LoRanPAC & 60.83 & 60.86 & 60.77 & 79.50 & 79.60 & 79.70 & 92.46 & 92.55 & 92.56 & \textbf{74.21} & 74.39 & 74.39 & 76.82 \\
        \midrule
        \rowcolor[gray]{0.95} ADAM$^\dagger$ & 52.16 & 53.94 & 55.97 & 70.54 & 70.53 & 70.38 & 82.55 & 82.55 & 82.55 & 35.61 & 35.61 & 35.61  & 60.67  \\
        \rowcolor{mypurple} RanPAC$^\dagger$ & 59.14 & 61.54 & 64.59 & 78.10 & 78.46 & 78.86 & 91.48 & 91.86 & 91.86 & \textbf{58.65} & 72.24 & 72.24 & 74.56 \\
        \rowcolor{mypurple} LoRanPAC$^\dagger$ & 61.78 & 63.56 & 66.48 & 80.07 & 80.28 & 80.45 & 92.55 & 92.53 & 92.60  & \textbf{74.87} & 74.89 & 75.13 & 77.93 \\
        \midrule 
        \rowcolor[gray]{0.95} EASE$^*$ & 49.28 & 53.88 & 57.05 & 70.33 &  70.68 & 70.84 & 89.85 & 93.48 & 93.49 & 32.43 & 31.77 & 29.00 & 61.84 \\
        LoRanPAC$^*$  & 61.57 & 63.40 & 66.29 & 80.02 & 80.42 & 80.82 & 92.68 & 92.71 & 92.67 & 75.91 & 75.71 & 75.96 & 78.18 \\
        %ICL-R$^*$ &  &  &  & &  & & & & & & &  \\
        \bottomrule
    \end{tabular}
    }
\end{table}

\begin{figure}[ht]
    \centering
    \includegraphics[width=0.24\textwidth]{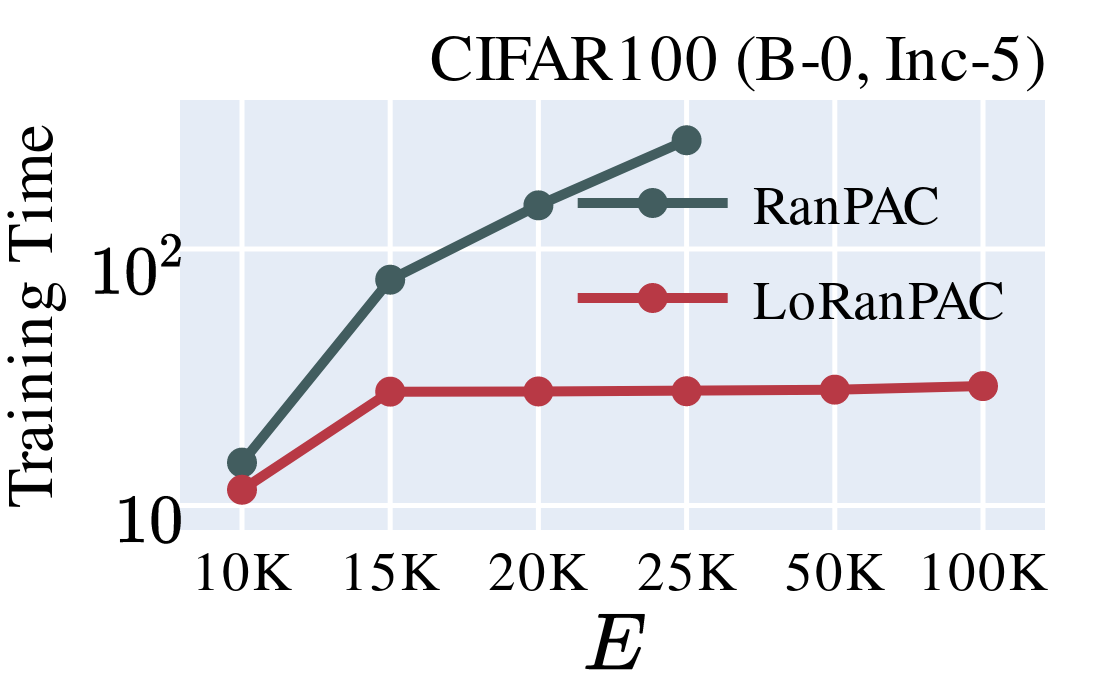}
    \includegraphics[width=0.24\textwidth]{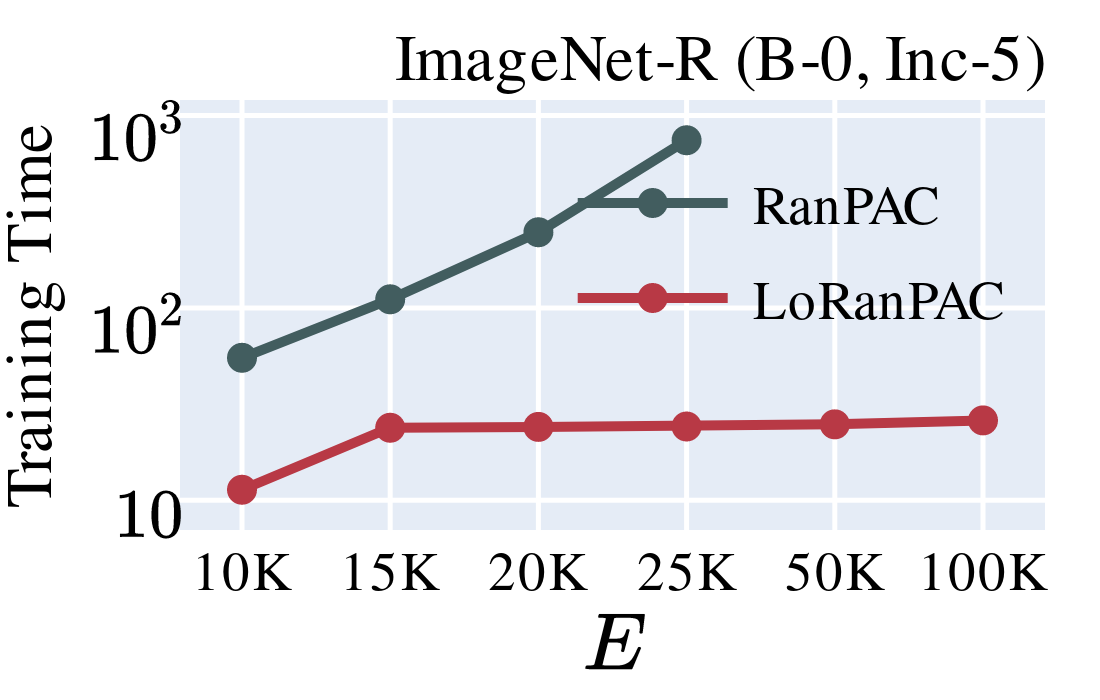}
    \includegraphics[width=0.24\textwidth]{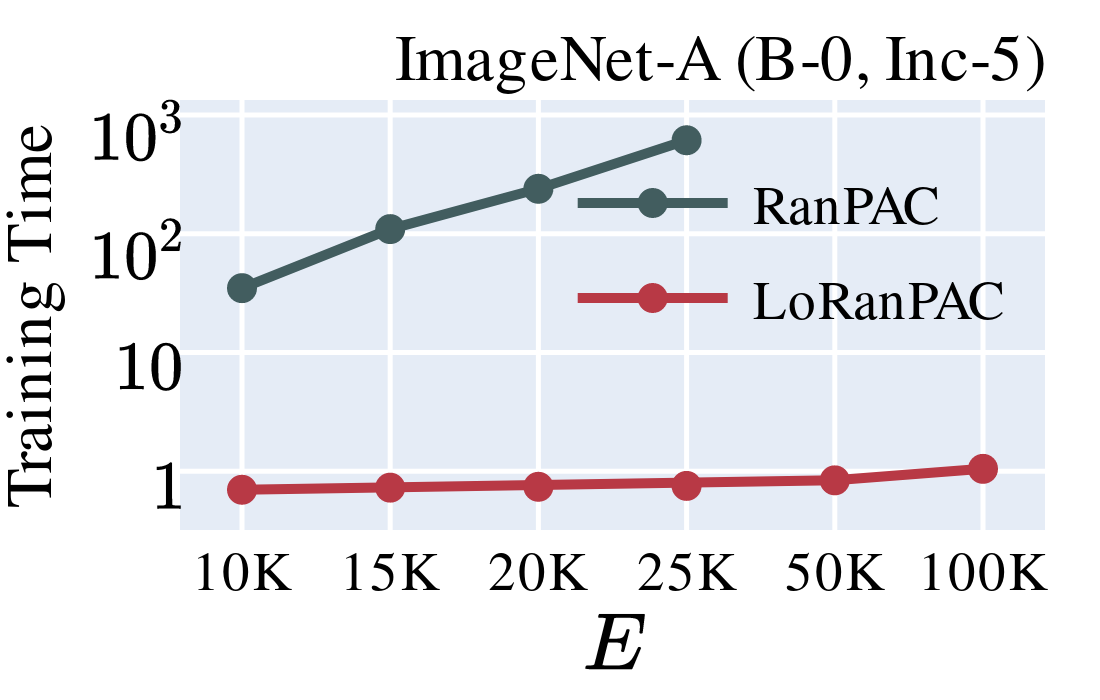}
    \includegraphics[width=0.24\textwidth]{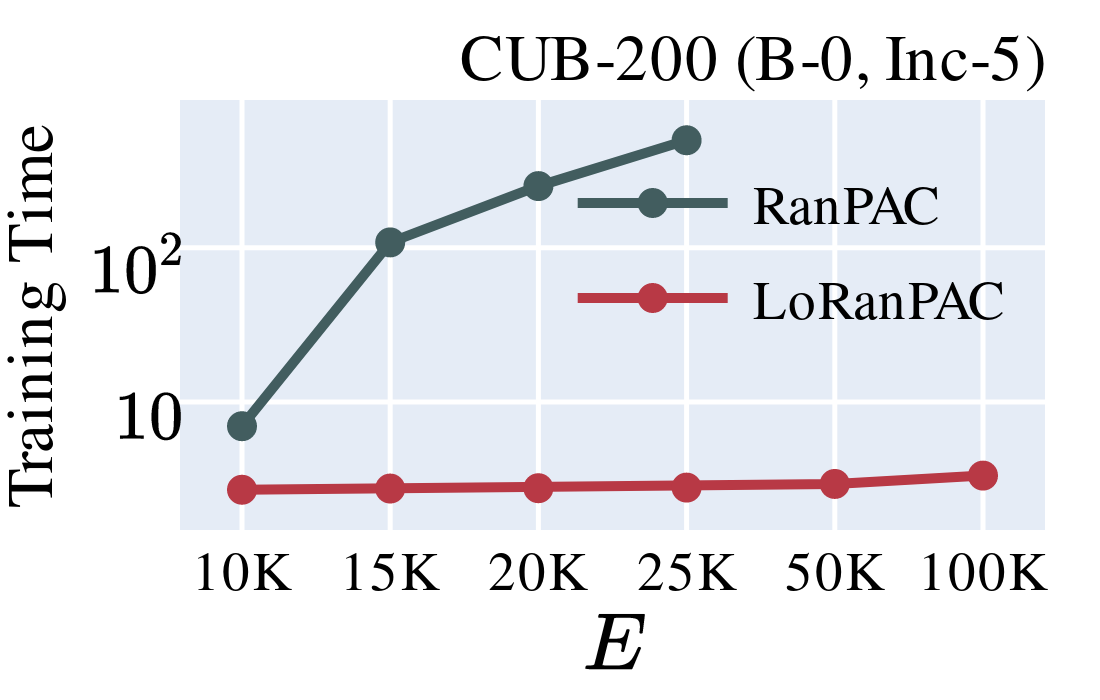}
    \caption{Training times (in minutes) for varying embedding dimensions $E$.  }
    \label{fig:training-time-inc5}
\end{figure}

\myparagraphquestion{Why Does LoRanPAC Uniformly Outperform RanPAC} The first reason is that LoRanPAC's high efficiency and scalability enable the use of a larger embedding dimension. Indeed, LoRanPAC uses $E=10^5$, taking advantage of the scaling law (\cref{fig:acc-inc5-E}, \cref{subsection:scale-E}), while RanPAC uses its default choice $E=10^4$. Note that this is a fair comparison since LoRanPAC's implementation is more scalable and more efficient than RanPAC's. Specifically, LoRanPAC has $O(E(k_{t-1}+m_t)^2)$ time complexity while RanPAC takes $O(E^3)$ time for each task $t$. An alternative way to make a fair comparison is to set the same embedding dimension $E$ for both methods, in which case LoRanPAC can be up to 1000 times faster than RanPAC (e.g., see $E=25000$ in \cref{fig:training-time-inc5}).

The second reason, as mentioned earlier, is that the cross-validation strategy of \citet{Mcdonnell-NeurIPS2023} might fail to choose a suitable regularization $\lambda$ for RanPAC when the validation set is small. This is the case in ImageNet-A (B-0, Inc-5) and StanfordCars (B-16, Inc-5) of \cref{tab:CIL-ViT}, where the validation sets are small and RanPAC's performance is severely degraded. A more careful analysis of these two failure cases shows that the accuracy matrices of RanPAC have multiple columns with nearly zero entries (cf. \cref{fig:cross-validation-instability3}a and \cref{fig:cross-validation-instability5}c), exposing RanPAC's instability.

\begin{table}[h]
    \centering
    \ra{1.2}
    \caption{Final and total accuracy in CIL datasets with $q_2=1$ (Inc-1). \label{tab:CIL-ViT-inc1}}
    \scalebox{0.72}{
    \begin{tabular}{lcccccccc}
        \toprule 
        &  CIFAR100 & ImageNet-R & ImageNet-A  & CUB &  ObjectNet & OmniBenchmark & VTAB  & StanfordCars  \\ 
        \midrule
        \multicolumn{9}{c}{\textit{Final Accuracy}} \\ 
        \rowcolor{mypurple} RanPAC & 86.99$_{\pm0.06}$ & \textbf{70.12$_{\pm0.39}$} & \textbf{36.6$_{\pm25.35}$} & \textbf{55.15$_{\pm37.14}$} & 57.14$_{\pm0.24}$ & 77.9$_{\pm0.04}$ & 91.47$_{\pm0.3}$ & \textbf{35.56$_{\pm24.75}$} \\ 
        \rowcolor{mypurple} LoRanPAC & 88.19$_{\pm0.05}$ & \textbf{73.66$_{\pm0.07}$} & \textbf{62.76$_{\pm0.16}$} & \textbf{89.19$_{\pm0.06}$} & 60.82$_{\pm0.15}$ & 79.3$_{\pm0.06}$ & 92.51$_{\pm0.05}$ & \textbf{74.32$_{\pm0.11}$} \\
        \midrule
        \multicolumn{9}{c}{\textit{Total Accuracy}} \\ 
        \rowcolor{mypurple} RanPAC  & 90.46$_{\pm0.73}$ & \textbf{69.1$_{\pm0.37}$} & \textbf{44.23$_{\pm0.46}$} & \textbf{74.67$_{\pm2.87}$} & \textbf{62.37$_{\pm2.1}$} & 85.23$_{\pm0.56}$ & \textbf{74.67$_{\pm3.07}$} & \textbf{56.27$_{\pm0.78}$} \\ 
        \rowcolor{mypurple} LoRanPAC & 92.18$_{\pm0.56}$ & \textbf{78.87$_{\pm0.34}$} & \textbf{70.08$_{\pm0.86}$} & \textbf{92.89$_{\pm0.59}$} & \textbf{70.54$_{\pm1.94}$} & 86.51$_{\pm0.59}$ & \textbf{96.41$_{\pm0.31}$} & \textbf{81.18$_{\pm0.68}$} \\
        \bottomrule
    \end{tabular}
    }
\end{table}

\myparagraph{Inc-1: One Class at A Time} 
In light of the above analysis, we consider the CIL setting, with one class given at each iteration (Inc-$1$). In this setting, a new task has much fewer training samples and CL methods need to cope with hundreds of tasks (classes) on certain datasets. Note that adapter-based methods such as EASE are infeasible for CIL with Inc-1 (cf. \cref{subsection:review-CL}). In this setting, the fragility of RanPAC with respect to the choice of $\lambda$ is amplified (see \cref{tab:CIL-ViT-inc1}), and the method exhibits a significant performance drop compared to \cref{tab:CIL-ViT}. In contrast, LoRanPAC's performance is stable, exhibiting high accuracy comparable to the cases of Inc-$\{5,10,20\}$ in \cref{tab:CIL-ViT}. Accuracy matrices associated with \cref{tab:CIL-ViT-inc1} are plotted in \cref{fig:cross-validation-instability2,fig:cross-validation-instability3,fig:cross-validation-instability4,fig:cross-validation-instability5,fig:stability-inc} of \cref{subsubsection:small-inc}, where we present similar results for Inc-$\{1,2,4,5\}$.

\section{Conclusion}
This work puts forward a simple method that bridges the gap between empirical performance and theoretical guarantees in continual learning with pre-trained models. By addressing the ill-conditioning of lifted features through continual SVD truncation, our approach achieves both stability and strong performance. Extensive experiments demonstrated that our method outperforms state-of-the-art methods across multiple datasets and can handle sequences with hundreds of tasks. Theoretically, we proved that our method maintains small training and test errors by appropriately truncating SVD factors. This work underscores the potential of combining empirical techniques with principled frameworks to develop robust and scalable continual learning systems, and will encourage follow-up works to achieve so as well.

\subsubsection*{Acknowledgments}
This work is supported by the project ULEARN “Unsupervised Lifelong Learning”, funded by the Research Council of Norway (grant number 316080), and by NSF-Simons Research Collaborations on the Mathematical and Scientific Foundations of Deep Learning (NSF grant 2031985).

\newpage
\bibliography{Liangzu}
\bibliographystyle{iclr2025_conference}
\newpage

\appendix

% \section{New Algorithm}

% Let $\bU_{1:t}$ be an orthonormal basis matrix for the range space of $\bH_{1:t}$. Let $\bW_{t-1}^* \in \bbR^{C_{t-1}\times E}$ be a solution to the equality constraint $\bW \bH_{i}   = \bY_i $ in \ref{eq:ICL-1layer}, and let $\underline{\bW}_{t-1}^* \in \bbR^{c_t\times E } $ be the zero-padded version of $\bW_{t-1}^*$. Then any matrix $\bA\in \bbR^{c_t\times E }$ determines a feasible point $\underline{\bW}_{t-1}^* + \bA (\bI_E - \bU_{1:t-1} \bU_{1:t-1}^\top )$ of \ref{eq:ICL-1layer} defined by this equality constraint. This means we can cast \ref{eq:ICL-1layer} into an unconstrained problem where we optimize in variable $\bA$. Indeed, we consider
% \begin{align}\label{eq:ICLR}
% 	\min_{\bA\in\bbR^{c_t\times E} }  \left\|  \bA (\bI_E - \bU_{1:t-1} \bU_{1:t-1}^\top ) \bH_t - (\bY_t-\underline{\bW}_{t-1}^* \bH_t) \right\|_{\textnormal{F}}^2. \tag{ICL-U}
% \end{align}
% Clearly, a solution $\bA^*$ to \ref{eq:ICLR} induces a solution  $\underline{\bW}_{t-1}^* + \bA^* (\bI_E - \bU_{1:t-1} \bU_{1:t-1}^\top )$ to \ref{eq:ICL-1layer}. 

\section{Overview of The Appendix}
\begin{itemize}
    \item In \cref{section:notations}, we complie all the mathematical notations used throughout the paper.
    \item In \cref{section:ITSVD-algo-details} we describe the implementation details of LoRanPAC. There, we also discuss other potential implementations and our design choice.
    \item In \cref{section:aux-lemma}, we present auxiliary lemmas that are useful for proving our main theorems.
    \item In \cref{section:proof-main-paper}, we prove the theorems displayed in the main paper (\cref{theorem:training-loss-deterministic,theorem:test-loss-deterministic}). 
    \item In \cref{section:theory-Gaussian-assumption} we present results similar to \cref{section:proof-main-paper}, with the difference that we now assume the noise is Gaussian, which gives slightly tighter error bounds.
    \item In \cref{subsection:extra-theory}, we prove some extra theoretical results such as perturbation bounds on eigenvalues and eigenvectors (\cref{theorem:eigenvalue-eigenspace-bound}).
    \item In \cref{section:review} we review related works on continual learning, focusing on CL methods with pretrained models and existing theoretical developments.
    \item In \cref{section:dataset-details} we report the statistics of the datasets we use for experiments.
    \item In \cref{subsection:experiment-details} we specify the experimental setup.
    \item In \cref{section:extra-experiments} we report extra experimental results, figures, and tables.
\end{itemize}

\section{Notations}\label{section:notations}
Here in \cref{tab:notations} we compile all the notations used in the paper.
\begin{table}[h]
    \centering
    \caption{Notations}
    \label{tab:notations}
    \begin{tabular}{ll}
       \toprule
       $d$ & dimension of pre-trained features \\ 
       $E$ & embedding dimension \\
       $m_t$ & number of training samples for task $t$ \\
       $M_t$ & $m_1+\cdots+ m_t$ \\ 
       $c_t$ & total number of classes seen in the first $t$ tasks \\
       $T$ & total number of tasks \\ 
       $\cN(0,1)$ & Gaussian distribution with mean $0$ and variance $1$ \\  
       $\ceil{\cdot}$ & the ceiling operator, which maps a number to the closest integer no smaller than it \\
       \midrule
       $\bI_E$ & $E\times E$ identity matrix \\ 
       $\bX_t$ & $d\times m_t$ matrix, whose columns are output features of pre-trained models \\
       $\bP$ & $E\times d$ random embedding matrix with $\cN(0,1)$ entries \\
       $\bH_t$ & Random ReLU features $ \relu ( \bP \bX_t )$ as defined in \cref{eq:def-H} \\ 
       $\lambda$ & ridge regularization parameter in \ref{eq:RanPAC} \\
       \midrule
       $\bB_t$ & the matrix whose SVDs are truncated by \cref{algo:ITSVD-LS}, defined in \cref{eq:define-B-tilde} \\
       $k_t$ & the number of singular values and vectors preserved for the first $t$ tasks \\ 
       $\tau_{k_t}(\cdot)$ & function that computes the best rank-$k_t$ approximation of a matrix \\ 
       $\mu_k(\cdot)$ & the $k$-th largest eigenvalue of a symmetric matrix \\
       \midrule
       $\bU_{1:t} \bSigma_{1:t} \bV_{1:t}^\top$  & SVD of $\bH_{1:t}$   \\
       $\overline{\bU}_{1:t} \overline{\bSigma}_{1:t}  \overline{\bV}_{1:t}^\top$ & SVD of $\tau_{k_t}(\bH_{1:t})$  \\ 
       $\widetilde{\bU}_{1:t}, \widetilde{\bSigma}_{1:t}$ &  SVD factors of $\bB_t$  \\ 
       \midrule
       $a_t$ & accumulative error defined in \cref{eq:define-a} \\ 
       $\gamma_t$ & the eigengap between the present and past, defined in \cref{eq:define-gap-gamma} \\ 
       \bottomrule
    \end{tabular}
\end{table}

\newpage
\section{Implementation Details for LoRanPAC}\label{section:ITSVD-algo-details}
In this section, we give full details of our algorithm for \ref{eq:LoRanPAC}. Note that \cref{algo:ITSVD-LS-concise-main-paper} of the main paper is a concise version of our approach, used to illustrate the methodology at high level.

In \cref{subsection:ITSVD}, we introduce \cref{algo:ITSVD}, our implementation of the incremental SVD approach. Note that \cref{algo:ITSVD} dates back at least to \citet{Bunch-1978} and has been applied to image processing, computer vision, and latent semantic indexing \citep{Zha-SIAM-J-SC1999,Levey-2000,Brand-ECCV2002,Artac-ICPR2002,Ross-IJCV2008}. Recently a link between continual learning and incremental SVD was built \citep{Peng-ICML2023}. However, it has not been applied to the context we consider here to the best of our knowledge, and suitable modifications are needed to incorporate incremental SVD for solving \ref{eq:LoRanPAC} satisfactorily. For example, we truncate the SVD factors in each continual update as shown \cref{algo:ITSVD}, and the outputs $(\widetilde{\bU}_{1:t}, \widetilde{\bSigma}_{1:t})$ of \cref{algo:ITSVD} are not necessarily equal to the top-$k_t$ SVD factors of $\bH_{1:t}$. It is then our contribution to arm \cref{algo:ITSVD} with theoretical guarantees (cf. \cref{lemma:truncation-equality,theorem:eigenvalue-eigenspace-bound}).

In \cref{subsection:algorithm-ITSVD}, we introduce \cref{algo:ITSVD-LS}, a continual learning method that stably solves \ref{eq:LoRanPAC}.

\subsection{Incremental Truncated SVD}\label{subsection:ITSVD}
We first explain the design choice as suggested by \cref{eq:closed-form-solution-ICL}: Should we maintain all SVD factors $\widetilde{\bU}_{1:t}, \widetilde{\bSigma}_{1:t}$, and $\widetilde{\bV}_{1:t}$, or should we just maintain the singular values  $\widetilde{\bSigma}_{1:t}$ and and left singular vectors $\widetilde{\bU}_{1:t}$? In the main paper, we suggested taking the latter choice, as we empirically found continually updating all SVD factors $\widetilde{\bU}_{1:t}, \widetilde{\bSigma}_{1:t}$, and $\widetilde{\bV}_{1:t}$ lead to large test errors.

% have numerical errors, and multiply the three matrices together could amplify these errors. The latter choice is numerically more stable.

We now describe how to update the top $k_t$ SVD factors $\widetilde{\bU}_{1:t}, \widetilde{\bSigma}_{1:t}$ from the previous estimates $\widetilde{\bU}_{1:t-1}, \widetilde{\bSigma}_{1:t-1}$ and new data $\bH_t$. 
% As mentioned in the main paper, one could compute directly the SVD of the $E\times (k_{t-1} + m_t)$ matrix $[\widetilde{\bU}_{1:t-1}\widetilde{\bSigma}_{1:t-1}, \ \bH_t ]$ in $O(E(k_{t-1}+m_t)^2)$ time. But this takes an additional $O(E (k_{t-1}+m_t))$ working memory as it needs temporary storage for the right singular vectors. However, the embedding dimension $E$ can be very large. A more memory-efficient way is as follows. 
Let $\bQ_t\bR_t$ be the QR decomposition of $(\bI_E - \widetilde{\bU}_{1:t-1} \widetilde{\bU}_{1:t-1}^\top)\bH_t$. Then we have
\begin{equation*}
    \left[\widetilde{\bU}_{1:t-1} \widetilde{\bSigma}_{1:t-1}, \ \bH_t \right] = \left[\widetilde{\bU}_{1:t-1}, \ \bQ_t \right] \begin{bmatrix}
           \widetilde{\bSigma}_{1:t-1} & \widetilde{\bU}_{t-1}^\top \bH_t \\
            0 & \bR_t \\ 
    \end{bmatrix}.
\end{equation*}
Note that $[\widetilde{\bU}_{1:t-1}, \ \bQ_t]$ is already orthogonal, we can do a truncated SVD on the smaller $(k_{t-1}+m_t)\times (k_{t-1}+m_t)$ matrix of the right-hand side. The full procedure is summarized below:

\begin{algorithm}[h]
    \SetAlgoLined
    \DontPrintSemicolon
    \texttt{Input}: data matrix $\bH_t\in \bbR^{E\times m_t}$ of task $t$, desired output rank $k_t\leq \min\{E, M_t\}$ ($M_t:=m_1+\dots +m_t$), truncated SVD factors $\widetilde{\bU}_{1:t-1}\in \bbR^{E \times k_{t-1}}$ and $\widetilde{\bSigma}_{1:t-1}\in \bbR^{k_{t-1} \times k_{t-1}}$ of the previous $t-1$ tasks;

    Compute the QR decomposition $\bQ_t\bR_t$ of $(\bI_E - \widetilde{\bU}_{1:t-1} \widetilde{\bU}_{1:t-1}^\top)\bH_t$;

    Set $(\bSigma_{\textnormal{tmp}}, \bU_{\textnormal{tmp}})$ to the top-$k_t$ SVD components of \tcp*{\cref{algo:TSVD}}
    \begin{equation}\label{eq:tmp-matrix}
     \begin{bmatrix}
           \widetilde{\bSigma}_{1:t-1} & \widetilde{\bU}_{t-1}^\top \bH_t \\
            0 & \bR_t \\ 
        \end{bmatrix}  \in \bbR^{(k_{t-1}+m_t)\times (k_{t-1}+m_t)};
    \end{equation}
    
    Set $\widetilde{\bSigma}_{1:t}\gets  \bSigma_{\textnormal{tmp}}$ and  $\widetilde{\bU}_{1:t} \gets [\widetilde{\bU}_{t-1}\ \bQ_t ] \bU_{\textnormal{tmp}}$;

    $\widetilde{\bU}_{1:t} \gets $ The orthogonal factor of QR decomposition of $\widetilde{\bU}_{1:t}$; \tcp*{improve numerical stability}

    \texttt{Output}: $(\widetilde{\bU}_{1:t}, \widetilde{\bSigma}_{1:t})$;
    
    \caption{Incremental Truncated Singular Value Decomposition } 
    \label{algo:ITSVD}
\end{algorithm}
\begin{algorithm}[h]
    \SetAlgoLined
    \DontPrintSemicolon
    \texttt{Input}: matrix $\bH\in \bbR^{E\times m}$ and desired output rank $r\leq \min\{ E, m\}$;

    $\textnormal{tmp}\gets \min\{ E, m\}$;
    
    Compute the SVD $\sigma_1 \bu_1 \bv_1^\top+\cdots + \sigma_{\textnormal{tmp}} \bu_{\textnormal{tmp}} \bv_{\textnormal{tmp}}^\top$ of $\bH$;

    Set $\widetilde{\bSigma}\gets \diag(\sigma_1,\dots,\sigma_r), \widetilde{\bU} \gets [\bu_1,\dots,\bu_r]$;

    \texttt{Output}: $(\widetilde{\bU}, \widetilde{\bSigma})$;
    
    \caption{Truncated Singular Value Decomposition (TSVD) } 
    \label{algo:TSVD}
\end{algorithm}

\begin{remark}\label{remark:re-orthogonalize}
    Since $ [\widetilde{\bU}_{1:t-1}\ \bQ_t ]$ and $\bU_{\textnormal{tmp}}$ are orthogonal, $\widetilde{\bU}_{1:t}$ is expected to be orthogonal as well. However, the multiplication $\widetilde{\bU}_{1:t}=[\widetilde{\bU}_{t-1}\ \bQ_t ] \bU_{\textnormal{tmp}}$ might lose orthogonality due to numerical errors, especially when $t$ gets large. This is fixed by an extra post-processing step that orthogonalizes $\widetilde{\bU}_{1:t}$.
\end{remark}

\myparagraph{Memory Complexity Analysis} The extra working memory of this approach is roughly:
\begin{itemize}
    \item $O( E m_t +m_t^2 )$, for the QR factors $\bQ_t \bR_t$;
    \item $O( (k_{t-1}+m_t)^2)$, for the matrix in \cref{eq:tmp-matrix} and its SVD factors;
\end{itemize}
Hence, for large $E$, this is less than the $O(E(k_{t-1}+m_t))$ memory used by the direct SVD method.

\myparagraph{Time Complexity Analysis} The major cost is the SVD of \cref{eq:tmp-matrix}, which takes $O((k_{t-1}+m_t)^3)$ time. While in principle the QR orthogonalization for the post-processing of $\widetilde{\bU}_{1:t}$ takes $O(E(k_{t-1}+m_t)^2)$ time, it is significantly faster than SVD as the constants behind its $O(\cdot)$ is very small. Therefore, one would expect the SVD on the matrix of \cref{eq:tmp-matrix} in $O((k_{t-1}+m_t)^3)$ time should be much faster than the SVD on the matrix $[\widetilde{\bU}_{1:t-1} \widetilde{\bSigma}_{1:t-1}, \ \bH_t]$, which needs  $O(E(k_{t-1}+m_t)^2)$ time, where $E$ is far larger than $k_{t-1}+m_t$ (e.g., $E=10^5$ and $k_{t-1}+m_t=10^4$). This is true on a sequential machine, but their running time difference is not significant for highly parallel GPU implementations in our experience (e.g., computing the inner product between two $E$-dimensional vectors has similar running times to computing the inner product between $(k_{t-1}+m_t)$-dimensional vectors, due to parallelism). Hence, for a parallel implementation, the main advantage of doing SVD on the matrix in \eqref{eq:tmp-matrix} is that it takes less working memory than SVD on $[\widetilde{\bU}_{1:t-1} \widetilde{\bSigma}_{1:t-1}, \ \bH_t]$.

\begin{remark}
    \cref{algo:TSVD} can, in fact, be implemented by randomized linear algebra techniques \citep{Halko-SIAM-Review2011}. Some of these techniques compute by design only the top $k$ SVD factors. Intuitively this could save time and memory if $k$ is very small. One such method is conveniently implemented in PyTorch as well (\texttt{torch.svd\_lowrank}). However, in our rudimentary attempts at using randomized approaches, we found this PyTorch routine does not seem to be as efficient or accurate as our present implementation (we consistently set truncation percentage $\zeta$ to $25\%$). This observation aligns with the PyTorch document of \texttt{torch.svd\_lowrank}: \textit{In general, use the full-rank SVD implementation torch.linalg.svd() for dense matrices due to its 10-fold higher performance characteristics. The low-rank SVD will be useful for huge sparse matrices that torch.linalg.svd() cannot handle}. For the moment, we conclude that it needs deeper investigations to see whether randomized techniques are suitable for the continual learning contexts.
\end{remark}

\subsection{Continual Solver for \ref{eq:LoRanPAC}}\label{subsection:algorithm-ITSVD}
The proposed algorithm is shown in \cref{algo:ITSVD-LS}. Here are a few details that we have not yet mentioned in the main paper. First, note that \cref{algo:ITSVD-LS} formally updates $M_t$ ad $\bJ_{t}$ continually. At Line 8 of \cref{algo:ITSVD-LS} we compute the label-feature covariance matrix $\bJ_1:=\bY_1 \bH_1^\top\in \bbR^{c_1\times E}$, and then at lines 10 and 11 we update $\bJ_{t-1}$ into $\bJ_{t}$ via $\bJ_{t}\gets \bJ_{t-1} + \bJ_{\textnormal{tmp}}$. The attentive reader might find that $\bJ_{t-1}$ is of size $c_{t-1} \times E$ while $\bJ_{\textnormal{tmp}}$ is of size $c_{t}\times E$. But it could be that $c_{t-1} < c_t$, so it might not make sense to add $\bJ_{t-1}$ and $\bJ_{\textnormal{tmp}}$ as in Line 11. Note that we wrote Line 11 just for simplicity. The implementation would pad $c_t-c_{t-1}$ zero rows to $\bJ_{t-1}$ in a similar fashion to how we extend $\bY_{t-1}$ into $\bY_t$ when more classes are given, and this is what Line 11 should mean.

Second, we add an extra parameter $r_{\textnormal{max}}$, to control the \textit{maximum allowable rank}, that is the maximum number of columns $\widetilde{\bU}_{1:t}$ is allowed to have. The purpose is to control the time complexity of \cref{algo:ITSVD-LS} and allow it to run more efficiently on large datasets such as DomainNet (cf. \cref{tab:CIL-dataset,tab:DIL-dataset}). We argue both the truncation percentage $\zeta$ and maximum allowable rank $r_{\textnormal{max}}$ are needed: With $\zeta$ alone, the method might run slowly or even exceed the memory for large datasets such as DomainNet (cf. \cref{tab:CIL-dataset,tab:DIL-dataset}); with $r_{\textnormal{max}}$ alone, truncation is not activated before receiving $r_{\textnormal{max}}$ samples, and numerical instability if it arises, can not be prevented before truncation is in effect. \cref{tab:hyperparameters} gives the values of $\zeta$ and $r_{\textnormal{max}}$ we use for each dataset. 
\begin{table}[h]
    \centering
    \scalebox{0.85}{
        \begin{tabular}{lccc}
        \toprule
          Dataset   &  Truncation Percentage $\zeta$ & Embedding Dimension $E$ & Maximum Allowable Rank $r_{\textnormal{max}}$ \\
          \midrule
            CIFAR100 & 25\% & $10^5$ & 10000\\
            ImageNet-R & 25\% & $10^5$ & 10000 \\
            ImageNet-A & 25\% & $10^5$ & 10000 \\ 
            CUB-200 & 25\%  & $10^5$ & 10000 \\
            ObjectNet & 25\% & $10^5$ & 20000\\ 
            OmniBenchmark &  25\%  & $10^5$ & 20000 \\ 
            VTAB$^\dagger$ &  25\% & $10^5$ & 10000 \\ 
            StanfordCars & 25\%  & $10^5$ & 10000 \\
            \bottomrule
        \end{tabular}
    }
    \caption{Hyperparameters we use for each dataset. }
    \label{tab:hyperparameters}
\end{table}

\begin{algorithm}[h]
    \SetAlgoLined
    \DontPrintSemicolon
    \texttt{Input of Task t}: Random ReLU features $\bH_t\in \bbR^{E\times m_t}$, label matrix $\bY_t\in \bbR^{c_t \times m_t} $, truncation percentage $\zeta\in[0,1]$, maximum allowable rank $r_{\textnormal{max}}$; \\
    For $t\gets 1, 2,\dots$: \\
    \quad $M_t\gets M_{t-1} + m_t$; \tcp*{update the total number of samples $M_t$} 
    \quad $k_t \gets \min\{r_{\textnormal{max}}, (1-\zeta) M_t)\}$; \tcp*{perserve $k_t$ SVD factors for the first $t$ tasks}

    \quad Form $\bB_t$ as per \cref{eq:define-B-tilde}; \\
    
    \quad  $(\widetilde{\bU}_{1:t},\widetilde{\bSigma}_{1:t})\gets$ Top-$k_t$ SVD factors of $\bB_t$;  \tcp*{use \cref{algo:TSVD} if $t=1$, or \cref{algo:ITSVD} if $t>1$ }
    
    \quad If $t=1$: \tcp*{Continual update of $\bJ_{t}:=\bY_{1:t} \bH_{1:t}^\top$}
    \quad \quad $\bJ_1\gets$ Output of \cref{algo:label-feature-cov} run with inputs $\bH_1,\bY_1$; \tcp*{label-feature covariance of task $1$}

    \quad Else: 
    
    \quad \quad $\bJ_{\textnormal{tmp}}\gets$ Output of \cref{algo:label-feature-cov} run with inputs $\bH_t,\bY_t$; \tcp*{label-feature covariance of task $t$}

    \quad \quad $\bJ_{t} \gets \bJ_{t-1} + \bJ_{\textnormal{tmp}}$;

    \quad Form the linear classifier $\widetilde{\bW}_t:= \bJ_{t} \left(\widetilde{\bU}_{1:t} \widetilde{\bSigma}_{1:t}^{-2} \widetilde{\bU}^\top_{1:t} \right)$; \tcp*{cf. \cref{eq:closed-form-solution-ICL} and \cref{eq:Wt-output-ITSVD} }

    \caption{Continual Solver of \ref{eq:LoRanPAC} (concise version in \cref{algo:ITSVD-LS-concise-main-paper}) } 
    \label{algo:ITSVD-LS}
\end{algorithm}

\begin{algorithm}[h]
    \SetAlgoLined
    \DontPrintSemicolon
    \texttt{Input}: matrix $\bH=[\bh_1,\dots,\bh_m]\in \bbR^{E\times m}$ and label matrix $\bY\in \bbR^{c\times m}$;

    Convert $\bY$ into a vector of indices $\by=[y_1,\dots,y_m]$ such that the $i$-th column of $\bY$ is the $y_i$-th standard basis vector (i.e., one-hot vector with $1$ at position $y_i$); 

    Initialize $\bS=[\bs_1,\dots,\bs_c]$ to be the $E\times c$ zero matrix;

    For $i=1,\dots,m$: \tcp*{parallel implementation via \texttt{torch.Tensor.index\_add\_}}
    \quad $\bS_{y_i}\gets \bS_{y_i} + \bh_i$;

    \texttt{Output}: $\bS^\top$; \tcp*{$\bS$ is equal to $\bH\bY^\top $}
        
    \caption{Compute The Label-Feature Covariance Matrix } 
    \label{algo:label-feature-cov}
\end{algorithm}

\section{Auxiliary Lemmas}\label{section:aux-lemma}
The following lemma provides an explicit expression for the difference between the continually truncated factors $\widetilde{\bU}_{1:t}\widetilde{\bSigma}_{1:t}^2 \widetilde{\bU}_{1:t}^\top$ and covariance $\bH_{1:t}\bH_{1:t}^\top$. 
\begin{lemma}\label{lemma:truncation-equality}
     Let $\bB_t$ be the matrix whose SVDs are truncated by \cref{algo:ITSVD-LS}, as defined in \cref{eq:define-B-tilde}.
    %  , that is 
    % \begin{equation*}
    %     \bB_{t}:= \begin{cases}
    %     \bH_1 & \textnormal{if } t=1; \\
    %     \big[\widetilde{\bU}_{1:t-1}\widetilde{\bSigma}_{1:t-1}, \ \bH_t \big] & \textnormal{otherwise.}
    %     \end{cases}
    % \end{equation*}
    We have $\overline{\bU}_{1}=\widetilde{\bU}_{1}$ and $ \overline{\bSigma}_{1} = \widetilde{\bSigma}_{1}$. Moreover, we have % for $t\geq 2$ we have
    \begin{equation*}
         \bH_{1:t}\bH_{1:t}^\top -\widetilde{\bU}_{1:t}\widetilde{\bSigma}_{1:t}^2 \widetilde{\bU}_{1:t}^\top  = \sum_{i=1}^t \left( \bB_{i} \bB_i^\top - \tau_{k_i} \big( \bB_{i} \bB_i^\top \big)  \right).
    \end{equation*}
\end{lemma}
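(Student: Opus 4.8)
The plan is to prove the identity by induction on $t$, with the inductive step resting on two elementary observations: appending a block to a matrix adds the corresponding outer product to its Gram matrix, and the continually-truncated factors $\widetilde{\bU}_{1:t}\widetilde{\bSigma}_{1:t}^2\widetilde{\bU}_{1:t}^\top$ are \emph{by construction} exactly the best rank-$k_t$ approximation $\tau_{k_t}(\bB_t\bB_t^\top)$. Throughout I write $\bG_i := \bH_{1:i}\bH_{1:i}^\top$ and $\widetilde{\bG}_i := \widetilde{\bU}_{1:i}\widetilde{\bSigma}_{1:i}^2\widetilde{\bU}_{1:i}^\top$, so the target reads $\bG_t - \widetilde{\bG}_t = \sum_{i=1}^t\big(\bB_i\bB_i^\top - \tau_{k_i}(\bB_i\bB_i^\top)\big)$.

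First I would dispatch the base case $t=1$. Since $\bB_1=\bH_1=\bH_{1:1}$, the top-$k_1$ SVD factors of $\bB_1$ form a compact SVD of $\tau_{k_1}(\bH_{1:1})$, so we may take $\overline{\bU}_1=\widetilde{\bU}_1$ and $\overline{\bSigma}_1=\widetilde{\bSigma}_1$ (which proves the first assertion of the lemma); consequently $\widetilde{\bG}_1=\tau_{k_1}(\bB_1\bB_1^\top)$, and the claimed equality becomes $\bH_{1:1}\bH_{1:1}^\top - \tau_{k_1}(\bB_1\bB_1^\top) = \bB_1\bB_1^\top - \tau_{k_1}(\bB_1\bB_1^\top)$, which holds since $\bB_1\bB_1^\top=\bH_{1:1}\bH_{1:1}^\top$.

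For the inductive step, $\bH_{1:t}=[\bH_{1:t-1},\bH_t]$ gives $\bG_t=\bG_{t-1}+\bH_t\bH_t^\top$, while $\bB_t=[\widetilde{\bU}_{1:t-1}\widetilde{\bSigma}_{1:t-1},\bH_t]$ (for $t>1$) together with the diagonality of $\widetilde{\bSigma}_{1:t-1}$ gives $\bB_t\bB_t^\top=\widetilde{\bG}_{t-1}+\bH_t\bH_t^\top$. Eliminating $\bH_t\bH_t^\top$ yields $\bG_t=(\bG_{t-1}-\widetilde{\bG}_{t-1})+\bB_t\bB_t^\top$. Subtracting $\widetilde{\bG}_t=\tau_{k_t}(\bB_t\bB_t^\top)$ from both sides and substituting the induction hypothesis $\bG_{t-1}-\widetilde{\bG}_{t-1}=\sum_{i=1}^{t-1}\big(\bB_i\bB_i^\top-\tau_{k_i}(\bB_i\bB_i^\top)\big)$, the right-hand side telescopes to $\sum_{i=1}^{t}\big(\bB_i\bB_i^\top-\tau_{k_i}(\bB_i\bB_i^\top)\big)$, which closes the induction.

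There is no genuine analytic obstacle here; the care-points are purely bookkeeping. The one worth writing out explicitly is the identification $\widetilde{\bU}_{1:t}\widetilde{\bSigma}_{1:t}^2\widetilde{\bU}_{1:t}^\top=\tau_{k_t}(\bB_t\bB_t^\top)$: from a compact SVD $\bB_t=\bU\bSigma\bV^\top$ one has $\bB_t\bB_t^\top=\bU\bSigma^2\bU^\top$, and truncating to the top $k_t$ eigenvalues of $\bB_t\bB_t^\top$ coincides with keeping the top $k_t$ singular values/left singular vectors of $\bB_t$, i.e. precisely $(\widetilde{\bU}_{1:t},\widetilde{\bSigma}_{1:t})$. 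The only remaining subtlety is a mild well-definedness caveat — if $\bB_t$ has a repeated singular value straddling index $k_t$, $\tau_{k_t}$ is defined only up to a choice of basis in that eigenspace — but this does not affect the stated matrix identity, and, as elsewhere in the paper, one may assume the relevant singular values are distinct (or fix one truncation convention). I would relegate this to a parenthetical remark rather than belabour it.
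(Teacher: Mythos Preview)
Your proof is correct and follows essentially the same approach as the paper's: both rest on the identifications $\widetilde{\bU}_{1:i}\widetilde{\bSigma}_{1:i}^2\widetilde{\bU}_{1:i}^\top=\tau_{k_i}(\bB_i\bB_i^\top)$ and $\bB_i\bB_i^\top=\widetilde{\bG}_{i-1}+\bH_i\bH_i^\top$, after which the paper writes out the telescoping sum directly while you package the same telescoping as an induction. Your remark on the non-uniqueness of $\tau_{k_t}$ at repeated singular values is a nice aside the paper omits.
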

\begin{proof}[Proof of \cref{lemma:truncation-equality}]
    It should be clear that $\overline{\bU}_{1}=\widetilde{\bU}_{1}$ and $ \overline{\bSigma}_{1} = \widetilde{\bSigma}_{1}$. For every $i=1,\dots,t$ we have
    \begin{equation*}
        \widetilde{\bU}_{1:i}\widetilde{\bSigma}_{1:i}^2 \widetilde{\bU}_{1:i}^\top = \tau_{k_i} \big( \bB_{i} \bB_i^\top \big).
    \end{equation*}
    and therefore
    \begin{equation*}
        \begin{split}
            \widetilde{\bU}_{1:i}\widetilde{\bSigma}_{1:i}^2 \widetilde{\bU}_{1:i}^\top - \bB_{i} \bB_i^\top = \tau_{k_i} \big( \bB_{i} \bB_i^\top \big) - \bB_{i} \bB_i^\top.
        \end{split}
    \end{equation*}
    A key observation is that summing the above equality over $i=2,\dots,t$ yields
    \begin{equation}\label{eq:sum-error}
        \begin{split}
            &\ \sum_{i=2}^t \left( \widetilde{\bU}_{1:i}\widetilde{\bSigma}_{1:i}^2 \widetilde{\bU}_{1:i}^\top - \bB_{i} \bB_i^\top \right) = \sum_{i=2}^t \left( \tau_{k_i} \big( \bB_{i} \bB_i^\top \big) - \bB_{i}\bB_i^\top \right) \\ 
            \Leftrightarrow &\ \sum_{i=2}^t \left( \widetilde{\bU}_{1:i}\widetilde{\bSigma}_{1:i}^2 \widetilde{\bU}_{1:i}^\top - \widetilde{\bU}_{1:i-1}\widetilde{\bSigma}_{1:i-1}^2 \widetilde{\bU}_{1:i-1}^\top - \bH_i \bH_i^\top \right) = \sum_{i=2}^t \left( \tau_{k_i} \big( \bB_{i} \bB_i^\top \big) - \bB_{i}\bB_i^\top \right) \\ 
            \Leftrightarrow &\ \widetilde{\bU}_{1:t}\widetilde{\bSigma}_{1:t}^2 \widetilde{\bU}_{1:t}^\top - \widetilde{\bU}_{1}\widetilde{\bSigma}_{1}^2 \widetilde{\bU}_{1}^\top - \bH_{2:t} \bH_{2:t}^\top = \sum_{i=2}^t \left( \tau_{k_i} \big( \bB_{i} \bB_i^\top \big) - \bB_{i}\bB_i^\top \right) \\ 
            \Leftrightarrow &\ \widetilde{\bU}_{1:t}\widetilde{\bSigma}_{1:t}^2 \widetilde{\bU}_{1:t}^\top  - \bH_{1:t} \bH_{1:t}^\top = \overline{\bU}_{1}\overline{\bSigma}_{1}^2 \overline{\bU}_{1}^\top - \bH_{1} \bH_{1}^\top +  \sum_{i=2}^t \left( \tau_{k_i} \big( \bB_{i} \bB_i^\top \big) - \bB_{i}\bB_i^\top \right) \\
            \Leftrightarrow &\ \widetilde{\bU}_{1:t}\widetilde{\bSigma}_{1:t}^2 \widetilde{\bU}_{1:t}^\top - \bH_{1:t}\bH_{1:t}^\top  = \sum_{i=1}^t \left( \tau_{k_i} \big( \bB_{i} \bB_i^\top \big) - \bB_{i} \bB_i^\top \right).
        \end{split}
    \end{equation}
    The last equality also holds for $t=1$. This finishes the proof.
\end{proof}

The lemma below is a direct consequence of Von Neumann's trace inequality, and its proof is omitted.
\begin{lemma}\label{lemma:Von-Neumann-trace-norm}
    Given two square matrices $A,B$ with $A$ positive semidefinite, we have
    \begin{equation*}
        \trace(A B) \leq  \trace(A) \cdot \| B \|.
    \end{equation*}
\end{lemma}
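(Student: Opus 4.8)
The plan is to reduce the claim to the spectral decomposition of $A$, exploiting that $A$ is positive semidefinite (hence symmetric, with nonnegative eigenvalues). First I would write $A = \sum_i \lambda_i \bu_i \bu_i^\top$, where $\lambda_i \ge 0$ are the eigenvalues of $A$ and $\{\bu_i\}$ an orthonormal eigenbasis. Plugging this into the trace and using linearity together with $\trace(\bu_i \bu_i^\top B) = \bu_i^\top B \bu_i$ gives $\trace(AB) = \sum_i \lambda_i\, \bu_i^\top B \bu_i$.

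Next I would bound each quadratic form. By Cauchy--Schwarz and the definition of the operator (spectral) norm, $\bu_i^\top B \bu_i = \langle \bu_i, B \bu_i\rangle \le \|\bu_i\|\,\|B\bu_i\| \le \|B\|\,\|\bu_i\|^2 = \|B\|$, since each $\bu_i$ is a unit vector. Because all $\lambda_i \ge 0$, I may multiply through and sum without reversing the inequality, obtaining $\trace(AB) = \sum_i \lambda_i\, \bu_i^\top B \bu_i \le \|B\| \sum_i \lambda_i = \|B\|\,\trace(A)$, where the final equality uses that the trace of $A$ equals the sum of its eigenvalues. This finishes the argument.

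An equivalent route — which is presumably what the phrase ``direct consequence of Von Neumann's trace inequality'' alludes to — is to apply $\trace(AB) \le \sum_i \sigma_i(A)\,\sigma_i(B)$ directly: since $A$ is positive semidefinite its singular values coincide with its eigenvalues, and every $\sigma_i(B) \le \sigma_1(B) = \|B\|$, so the right-hand side is at most $\|B\| \sum_i \sigma_i(A) = \|B\|\,\trace(A)$.

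There is no genuinely hard step here; the only point worth a moment's attention is that $B$ is \emph{not} assumed symmetric, so one must not attempt to diagonalize $B$. Instead the bound $\bu^\top B \bu \le \|B\|$ for unit $\bu$ should be obtained from Cauchy--Schwarz and the definition of the spectral norm, which is valid for an arbitrary square matrix $B$.
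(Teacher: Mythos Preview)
Your proposal is correct. The paper actually omits the proof entirely, stating only that the lemma ``is a direct consequence of Von Neumann's trace inequality'' --- which is exactly your second route; your first route via the spectral decomposition of $A$ and Cauchy--Schwarz is a valid, more elementary alternative that avoids invoking Von Neumann.
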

\cref{lemma:ABCD-inequality} presented below is elementary.
\begin{lemma}\label{lemma:ABCD-inequality}
    Assume $C$ is a positive semidefinite matrix. Then we have
    \begin{equation*}
        \trace(DACBD^\top) + \trace(DB^\top C A ^\top D^\top) \leq \trace(DAC A^\top D^\top) + \trace(DBCB^\top D^\top),
    \end{equation*}
    where $A,B,C,D$ are matrices of compatible sizes. Therefore, it holds that
    \begin{equation*}
        \trace\left(D(A+B)C(A+B)D^\top\right) \leq 2 \trace(DAC A^\top D^\top) + 2 \trace(DBCB^\top D^\top).
    \end{equation*}
\end{lemma}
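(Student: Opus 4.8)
The plan is to derive both displayed inequalities from a single elementary fact: if $C$ is positive semidefinite, then for \emph{every} matrix $Y$ of compatible size the matrix $YCY^\top$ is positive semidefinite, and hence $\trace(YCY^\top)\geq 0$. Equivalently, writing $C=C^{1/2}C^{1/2}$ with $C^{1/2}$ the symmetric positive semidefinite square root, one has $\trace(YCY^\top)=\|YC^{1/2}\|_{\textnormal{F}}^2\geq 0$. Everything else is bookkeeping.

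First I would prove the first inequality by applying this with $Y:=D(A-B)$; here the two cross terms on the left-hand side are read as $\trace(DACB^\top D^\top)$ and $\trace(DBCA^\top D^\top)$ (the transposes are needed for the products to even be dimensionally consistent once $A,B$ are not square). Expanding $(A-B)C(A-B)^\top=ACA^\top-ACB^\top-BCA^\top+BCB^\top$ and conjugating by $D$ on the left and $D^\top$ on the right gives
\begin{equation*}
0\leq\trace\big(D(A-B)C(A-B)^\top D^\top\big)=\trace(DACA^\top D^\top)+\trace(DBCB^\top D^\top)-\trace(DACB^\top D^\top)-\trace(DBCA^\top D^\top),
\end{equation*}
and rearranging is exactly the claim. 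From the square-root viewpoint, with $P:=DAC^{1/2}$ and $Q:=DBC^{1/2}$ the four traces are $\|P\|_{\textnormal{F}}^2$, $\|Q\|_{\textnormal{F}}^2$, $\langle P,Q\rangle_{\textnormal{F}}$, $\langle Q,P\rangle_{\textnormal{F}}$, so the inequality is just the nonnegativity of $\|P-Q\|_{\textnormal{F}}^2$.

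For the ``therefore'' statement I would use the algebraic identity
\begin{equation*}
2\,ACA^\top+2\,BCB^\top-(A+B)C(A+B)^\top=(A-B)C(A-B)^\top,
\end{equation*}
verified by expanding both sides; conjugating by $D$ and $D^\top$ and taking traces, the right-hand side equals $\trace\big(D(A-B)C(A-B)^\top D^\top\big)\geq 0$, which is precisely the second inequality (again with $(A+B)D^\top$ read as $(A+B)^\top D^\top$). Alternatively it follows by adding $\trace(DACA^\top D^\top)+\trace(DBCB^\top D^\top)$ to both sides of the first inequality, or directly from $\|P+Q\|_{\textnormal{F}}^2\leq 2\|P\|_{\textnormal{F}}^2+2\|Q\|_{\textnormal{F}}^2$.

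I do not expect any genuine obstacle: the lemma is a one-line consequence of positive semidefiniteness. The only care needed is bookkeeping — checking that all products are well defined under ``compatible sizes'' (which forces the transposes in the cross terms), and, if the square-root route is used, invoking the standard existence of $C^{1/2}$. Since the excerpt states this lemma without proof anyway, I would keep the write-up to the two one-line identities above and the remark that each right-hand side is the trace of a positive semidefinite matrix.
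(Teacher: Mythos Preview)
Your proposal is correct and is exactly the standard elementary argument; the paper in fact omits the proof entirely, stating only that the lemma ``is elementary.'' Your observation about the transposes in the cross terms (and in $(A+B)D^\top$) is also well taken: as written the statement is only dimensionally consistent when $A,B$ are square, and in the paper's applications $B$ happens to be symmetric so the distinction disappears, but your corrected reading $\trace(DACB^\top D^\top)+\trace(DBCA^\top D^\top)$ is the right general form.
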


The following two lemmas provide upper bounds on several terms appearing naturally in our main results.
\begin{lemma}\label{lemma:various-bounds}
    Let $\bB_t$ be defined in \cref{eq:define-B-tilde}, $\gamma_t$ in \cref{eq:define-gap-gamma}, and $a_t$ in \cref{eq:define-a}. Define
    \begin{align}\label{eq:def-Dt}
        \bD_{t}:= \sum_{i=1}^{t} \big( \bB_{i} \bB_i^\top   -   \tau_{k_i} \big( \bB_{i} \bB_i^\top \big)  \big).
    \end{align}
    We have 
    \begin{equation*}%\label{eq:bound-DU}
        \begin{split}
            \left\| \bD_t  \widetilde{\bU}_{1:t} \widetilde{\bSigma}_{1:t}^{-2} \widetilde{\bU}^\top_{1:t}  \right\| &\leq \frac{t-1}{\gamma_t}, \\ 
            \left\|  \widetilde{\bU}_{1:t} \widetilde{\bSigma}_{1:t}^{-2} \widetilde{\bU}^\top_{1:t} \bD_t  \right\| &\leq \frac{t-1}{\gamma_t}, \\  
            \trace\left(\bD_t \widetilde{\bU}_{1:t} \widetilde{\bSigma}_{1:t}^{-2} \widetilde{\bU}^\top_{1:t}\right) & \leq    \frac{1}{\gamma_t} \min\left\{  M_{t-1} - k_{t-1}, (t-1)k_t \right\}, \\
            \trace\left(\bD_t \widetilde{\bU}_{1:t} \widetilde{\bSigma}_{1:t}^{-4} \widetilde{\bU}^\top_{1:t}\right) & \leq  \frac{1}{\mu_{k_t} \big( \bB_t \bB_t^\top \big) } \cdot  \frac{1}{\gamma_t} \min\left\{  M_{t-1} - k_{t-1}, (t-1)k_t \right\}.
        \end{split}
    \end{equation*}
\end{lemma}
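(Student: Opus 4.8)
The plan is to exploit the splitting $\bD_t = \bD_{t-1} + \big(\bB_t\bB_t^\top - \tau_{k_t}(\bB_t\bB_t^\top)\big)$, together with the fact that the last summand annihilates $\widetilde{\bU}_{1:t}$. Indeed, as already used in the proof of \cref{lemma:truncation-equality}, $\tau_{k_t}(\bB_t\bB_t^\top)=\widetilde{\bU}_{1:t}\widetilde{\bSigma}_{1:t}^2\widetilde{\bU}^\top_{1:t}$, and the columns of $\widetilde{\bU}_{1:t}$ are eigenvectors of $\bB_t\bB_t^\top$ with eigenvalue matrix $\widetilde{\bSigma}_{1:t}^2$, so $\bB_t\bB_t^\top\widetilde{\bU}_{1:t}=\widetilde{\bU}_{1:t}\widetilde{\bSigma}_{1:t}^2=\tau_{k_t}(\bB_t\bB_t^\top)\widetilde{\bU}_{1:t}$, whence $\big(\bB_t\bB_t^\top-\tau_{k_t}(\bB_t\bB_t^\top)\big)\widetilde{\bU}_{1:t}=0$ and therefore $\bD_t\widetilde{\bU}_{1:t}=\bD_{t-1}\widetilde{\bU}_{1:t}$. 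I will also record three elementary facts: $\big\|\bB_i\bB_i^\top-\tau_{k_i}(\bB_i\bB_i^\top)\big\|=\mu_{k_i+1}(\bB_i\bB_i^\top)$; since $\widetilde{\bU}_{1:t}$ has orthonormal columns, $\big\|\widetilde{\bU}_{1:t}\widetilde{\bSigma}_{1:t}^{-2}\widetilde{\bU}^\top_{1:t}\big\|=1/\mu_{k_t}(\bB_t\bB_t^\top)$ and $\trace(\widetilde{\bU}_{1:t}\widetilde{\bU}^\top_{1:t})=k_t$; and $\bD_{t-1}$ is positive semidefinite. The degenerate case $t=1$ is immediate: $\bD_0=0$, so all four left-hand sides vanish, matching the right-hand sides $0/\gamma_1$ and $\tfrac{1}{\gamma_1}\min\{0,0\}$.

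For the first inequality I would write $\bD_t\widetilde{\bU}_{1:t}\widetilde{\bSigma}_{1:t}^{-2}\widetilde{\bU}^\top_{1:t}=\bD_{t-1}\widetilde{\bU}_{1:t}\widetilde{\bSigma}_{1:t}^{-2}\widetilde{\bU}^\top_{1:t}$ and apply submultiplicativity of the spectral norm: the norm is at most $\|\bD_{t-1}\|\cdot\big\|\widetilde{\bU}_{1:t}\widetilde{\bSigma}_{1:t}^{-2}\widetilde{\bU}^\top_{1:t}\big\|=\|\bD_{t-1}\|/\mu_{k_t}(\bB_t\bB_t^\top)$. The triangle inequality then gives $\|\bD_{t-1}\|\le\sum_{i=1}^{t-1}\mu_{k_i+1}(\bB_i\bB_i^\top)\le(t-1)\max_{i\le t-1}\mu_{k_i+1}(\bB_i\bB_i^\top)$, and dividing by $\mu_{k_t}(\bB_t\bB_t^\top)$ yields exactly $(t-1)/\gamma_t$ by the definition \cref{eq:define-gap-gamma}. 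The second inequality follows because its left-hand matrix is the transpose of the first (both $\bD_t$ and $\widetilde{\bU}_{1:t}\widetilde{\bSigma}_{1:t}^{-2}\widetilde{\bU}^\top_{1:t}$ are symmetric) and the spectral norm is transpose-invariant.

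For the two trace bounds I would first reduce, using $\bD_t\widetilde{\bU}_{1:t}=\bD_{t-1}\widetilde{\bU}_{1:t}$ and the cyclic property of the trace, to bounding $\trace\big(\widetilde{\bU}^\top_{1:t}\bD_{t-1}\widetilde{\bU}_{1:t}\cdot\widetilde{\bSigma}_{1:t}^{-2}\big)$ and $\trace\big(\widetilde{\bU}^\top_{1:t}\bD_{t-1}\widetilde{\bU}_{1:t}\cdot\widetilde{\bSigma}_{1:t}^{-4}\big)$. Applying \cref{lemma:Von-Neumann-trace-norm} with the PSD matrix $\widetilde{\bU}^\top_{1:t}\bD_{t-1}\widetilde{\bU}_{1:t}$ and $B=\widetilde{\bSigma}_{1:t}^{-2}$ (resp. $\widetilde{\bSigma}_{1:t}^{-4}$) leaves the factor $\trace\big(\widetilde{\bU}^\top_{1:t}\bD_{t-1}\widetilde{\bU}_{1:t}\big)$ divided by $\mu_{k_t}(\bB_t\bB_t^\top)$ (resp. $\mu_{k_t}(\bB_t\bB_t^\top)^2$). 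It then suffices to prove $\trace\big(\widetilde{\bU}^\top_{1:t}\bD_{t-1}\widetilde{\bU}_{1:t}\big)=\trace\big(\bD_{t-1}\widetilde{\bU}_{1:t}\widetilde{\bU}^\top_{1:t}\big)\le\min\{M_{t-1}-k_{t-1},(t-1)k_t\}\cdot\max_{i\le t-1}\mu_{k_i+1}(\bB_i\bB_i^\top)$, which after division gives both targets. The $(t-1)k_t$ half comes from applying \cref{lemma:Von-Neumann-trace-norm} termwise over $i=1,\dots,t-1$ with $A=\widetilde{\bU}_{1:t}\widetilde{\bU}^\top_{1:t}$ (PSD, trace $k_t$) and $B=\bB_i\bB_i^\top-\tau_{k_i}(\bB_i\bB_i^\top)$ (norm $\mu_{k_i+1}(\bB_i\bB_i^\top)$), then summing. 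The $M_{t-1}-k_{t-1}$ half comes from $\trace\big(\bD_{t-1}\widetilde{\bU}_{1:t}\widetilde{\bU}^\top_{1:t}\big)\le\trace(\bD_{t-1})\cdot\big\|\widetilde{\bU}_{1:t}\widetilde{\bU}^\top_{1:t}\big\|=\trace(\bD_{t-1})$ combined with $\trace(\bD_{t-1})=\sum_{i=1}^{t-1}\sum_{j>k_i}\mu_j(\bB_i\bB_i^\top)\le\sum_{i=1}^{t-1}\big(\rank(\bB_i)-k_i\big)\,\mu_{k_i+1}(\bB_i\bB_i^\top)$; since $\rank(\bB_1)\le m_1$ and $\rank(\bB_i)\le k_{i-1}+m_i$ for $i\ge2$, the sum $\sum_{i=1}^{t-1}\big(\rank(\bB_i)-k_i\big)$ telescopes to at most $M_{t-1}-k_{t-1}$.

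The step I expect to require the most care is this last telescoping rank count giving $\trace(\bD_{t-1})\le(M_{t-1}-k_{t-1})\max_{i\le t-1}\mu_{k_i+1}(\bB_i\bB_i^\top)$: one must keep track of the column counts of the recursively defined $\bB_i$ from \cref{eq:define-B-tilde} and verify that $\sum_{i=1}^{t-1}(\rank(\bB_i)-k_i)$ collapses to $M_{t-1}-k_{t-1}$ (the interior $k_i$'s cancel, leaving $m_1+\cdots+m_{t-1}-k_{t-1}$). Everything else is a routine combination of the identity $\bD_t\widetilde{\bU}_{1:t}=\bD_{t-1}\widetilde{\bU}_{1:t}$, submultiplicativity of the spectral norm, the cyclic property of the trace, and \cref{lemma:Von-Neumann-trace-norm}.
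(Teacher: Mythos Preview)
Your proposal is correct and follows essentially the same route as the paper: both exploit the key identity $\bD_t\widetilde{\bU}_{1:t}=\bD_{t-1}\widetilde{\bU}_{1:t}$, bound $\|\bD_{t-1}\|$ via the triangle inequality, and obtain the two halves of the $\min$ by combining \cref{lemma:Von-Neumann-trace-norm} with the telescoping count $\sum_{i=1}^{t-1}(m_i+k_{i-1}-k_i)=M_{t-1}-k_{t-1}$. The only cosmetic difference is that you first factor out $\widetilde{\bSigma}_{1:t}^{-2}$ and then bound the common quantity $\trace(\bD_{t-1}\widetilde{\bU}_{1:t}\widetilde{\bU}^\top_{1:t})$ in two ways, whereas the paper applies \cref{lemma:Von-Neumann-trace-norm} directly to $\trace(\bD_{t-1}\widetilde{\bU}_{1:t}\widetilde{\bSigma}_{1:t}^{-2}\widetilde{\bU}^\top_{1:t})$ in the two roles $A=\bD_{t-1}$ versus $A=\widetilde{\bU}_{1:t}\widetilde{\bSigma}_{1:t}^{-2}\widetilde{\bU}^\top_{1:t}$.
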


\begin{proof}[Proof of \cref{lemma:various-bounds}]
    It follows from definition that 
    \begin{equation*}
        \left( \bB_{i} \bB_i^\top   -   \tau_{k_i} \big( \bB_{i} \bB_i^\top \big) \right) \widetilde{\bU}_{1:t} =  0,
    \end{equation*}
    hence $\bD_t  \widetilde{\bU}_{1:t} = \bD_{t-1}  \widetilde{\bU}_{1:t}$. This means 
    \begin{equation*}
        \begin{split}
            \left\| \bD_t  \widetilde{\bU}_{1:t} \widetilde{\bSigma}_{1:t}^{-2} \widetilde{\bU}^\top_{1:t}  \right\| &= \left\| \bD_{t-1}  \widetilde{\bU}_{1:t} \widetilde{\bSigma}_{1:t}^{-2} \widetilde{\bU}^\top_{1:t}  \right\| \\ 
            &\leq \left\| \bD_{t-1} \right\| \cdot \left\| \widetilde{\bU}_{1:t} \widetilde{\bSigma}_{1:t}^{-2} \widetilde{\bU}^\top_{1:t}  \right\| \\
            &= \left\| \bD_{t-1} \right\| \cdot \frac{1}{\mu_{k_t} \big( \bB_t \bB_t^\top \big) },
        \end{split}
    \end{equation*}
    where the last equality follows by definition. On the other hand, we have
    \begin{equation*}
        \left\| \bD_{t-1}   \right\| \leq \sum_{i=1}^{t-1} \mu_{k_i+1} \big( \bB_i \bB_i^\top \big) \leq \frac{(t-1)}{\gamma_t} \mu_{k_t} \big( \bB_t \bB_t^\top \big).
    \end{equation*}
    Combining the above proves the first required equality. The second inequality follows similarly. 

    For the final trace inequality, we have ($k_0:=0$)
    \begin{equation*}
        \begin{split}
            \trace\left(\bD_t \widetilde{\bU}_{1:t} \widetilde{\bSigma}_{1:t}^{-2} \widetilde{\bU}^\top_{1:t}\right) &= \trace\left(\bD_{t-1} \widetilde{\bU}_{1:t} \widetilde{\bSigma}_{1:t}^{-2} \widetilde{\bU}^\top_{1:t}\right) \\ 
            &\overset{\textnormal{(i)}}{\leq} \trace( \bD_{t-1}) \cdot \left\| \widetilde{\bU}_{1:t} \widetilde{\bSigma}_{1:t}^{-2} \widetilde{\bU}^\top_{1:t} \right\| \\ 
            &=  \left(  \sum_{i=1}^{t-1} \sum_{j=k_i+1}^{m_i+k_{i-1}} \mu_{j} \big( \bB_i \bB_i^\top \big) \right)\cdot \frac{1}{\mu_{k_t} \big( \bB_t \bB_t^\top \big) } \\ 
            &\leq \frac{1}{\gamma_t} \sum_{i=1}^{t-1} (m_i +k_{i-1} - k_i ) \\ 
            &= \frac{1}{\gamma_t} (M_{t-1} - k_{t-1}),
        \end{split}
    \end{equation*}
    where (i) holds as $\bD_{t-1}$ is positive semidefinite (cf. \cref{lemma:Von-Neumann-trace-norm}).
    
    We can also bound $\trace\left(\bD_t \widetilde{\bU}_{1:t} \widetilde{\bSigma}_{1:t}^{-2} \widetilde{\bU}^\top_{1:t}\right)$ alternatively as follows:
    \begin{equation*}
        \begin{split}
            \trace\left(\bD_t \widetilde{\bU}_{1:t} \widetilde{\bSigma}_{1:t}^{-2} \widetilde{\bU}^\top_{1:t}\right) &= \trace\left(\bD_{t-1} \widetilde{\bU}_{1:t} \widetilde{\bSigma}_{1:t}^{-2} \widetilde{\bU}^\top_{1:t}\right) \\ 
            &\leq \left\|   \bD_{t-1} \right\| \cdot  \trace(  \widetilde{\bU}_{1:t} \widetilde{\bSigma}_{1:t}^{-2} \widetilde{\bU}^\top_{1:t}) \\ 
            &\leq \frac{(t-1) k_t}{\gamma_t}
        \end{split}
    \end{equation*}
    Combining the two bounds on $\trace\left(\bD_t \widetilde{\bU}_{1:t} \widetilde{\bSigma}_{1:t}^{-2} \widetilde{\bU}^\top_{1:t}\right)$ proves the third inequality. The fourth inequality follows similarly.
\end{proof}

\begin{lemma}\label{lemma:various-bounds2}
    Using the notations in \cref{lemma:various-bounds}, we have 
    \begin{equation*}%\label{eq:bound-DU}
        \begin{split}
            \trace\left(\bD_t \widetilde{\bU}_{1:t} \widetilde{\bSigma}_{1:t}^{-2} \widetilde{\bU}^\top_{1:t} \bD_t \widetilde{\bU}_{1:t} \widetilde{\bSigma}_{1:t}^{-2} \widetilde{\bU}^\top_{1:t}\right) &\leq    \frac{t-1}{\gamma_t^2} \min\left\{  M_{t-1} - k_{t-1}, (t-1)k_t \right\}, \\ 
            \big\|  \bD_t \widetilde{\bU}_{1:t} \widetilde{\bSigma}_{1:t}^{-2} \widetilde{\bU}^\top_{1:t}  \bH_{1:t} \bH_{1:t}^\top \widetilde{\bU}_{1:t} \widetilde{\bSigma}_{1:t}^{-2} \widetilde{\bU}^\top_{1:t} \bD_t \big\| &\leq a_{t-1} \cdot \left(  \frac{(t-1)^2}{ \gamma_t^2} + \frac{t-1}{ \gamma_t} \right), \\ 
            \big\| ( \bI_E- \widetilde{\bU}_{1:t} \widetilde{\bU}_{1:t}^\top) \bH_{1:t} \bH_{1:t}^\top ( \bI_E- \widetilde{\bU}_{1:t} \widetilde{\bU}_{1:t}^\top)  \big\| &\leq a_t, \\ 
            \big\|  \bH_{1:t}^\top \widetilde{\bU}_{1:t} \widetilde{\bSigma}_{1:t}^{-2} \widetilde{\bU}^\top_{1:t} \bH_{1:t} \big\|_{\textnormal{F}}^2 &\leq \left( \frac{t-1}{\gamma_t^2} + \frac{2}{\gamma_t}\right) \min\left\{  M_{t-1} - k_{t-1}, (t-1)k_t \right\} + k_t, \\ 
            \trace\left( \widetilde{\bU}_{1:t} \widetilde{\bSigma}_{1:t}^{-2} \widetilde{\bU}^\top_{1:t} \bH_{1:t} \bH_{1:t}^\top \widetilde{\bU}_{1:t} \widetilde{\bSigma}_{1:t}^{-2} \widetilde{\bU}^\top_{1:t} \right) &\leq \frac{1}{\mu_{k_t} \big( \bB_t \bB_t^\top \big) } \cdot  \left( \frac{\min\left\{  M_{t-1} - k_{t-1}, (t-1)k_t \right\}}{\gamma_t}  + k_t \right). 
        \end{split}
    \end{equation*}
\end{lemma}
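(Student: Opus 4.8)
The plan is to derive all five bounds from a single structural substitution together with the four estimates of \cref{lemma:various-bounds}. Throughout I would write $\bG_t:=\widetilde{\bU}_{1:t}\widetilde{\bSigma}_{1:t}^{-2}\widetilde{\bU}^\top_{1:t}$ and $\bPi_t:=\widetilde{\bU}_{1:t}\widetilde{\bU}^\top_{1:t}$ (the orthogonal projector onto the range of $\widetilde{\bU}_{1:t}$). The starting point is \cref{lemma:truncation-equality}, which I would rewrite as
\[
  \bH_{1:t}\bH_{1:t}^\top=\widetilde{\bU}_{1:t}\widetilde{\bSigma}_{1:t}^{2}\widetilde{\bU}^\top_{1:t}+\bD_t ,
\]
with $\bD_t$ as in \cref{eq:def-Dt}. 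I would also record the elementary facts $\bG_t\big(\widetilde{\bU}_{1:t}\widetilde{\bSigma}_{1:t}^{2}\widetilde{\bU}^\top_{1:t}\big)=\bPi_t$, $\bPi_t\bG_t=\bG_t\bPi_t=\bG_t$, $\|\bG_t\|=\mu_{k_t}(\bB_t\bB_t^\top)^{-1}$, $\trace(\bG_t)\le k_t\,\mu_{k_t}(\bB_t\bB_t^\top)^{-1}$; the absorption rule $\bD_t\widetilde{\bU}_{1:t}=\bD_{t-1}\widetilde{\bU}_{1:t}$ (hence $\bD_t\bG_t=\bD_{t-1}\bG_t$ and $\bG_t\bD_t=\bG_t\bD_{t-1}$), already used inside the proof of \cref{lemma:various-bounds}; and the operator-norm estimates $\|\bD_t\|\le\sum_{i=1}^{t}\mu_{k_i+1}(\bB_i\bB_i^\top)=a_t$, $\|\bD_{t-1}\|\le a_{t-1}$ and $\|\bD_{t-1}\|\le\frac{t-1}{\gamma_t}\mu_{k_t}(\bB_t\bB_t^\top)$ (the last two from the proof of \cref{lemma:various-bounds}). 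Everything beyond this is expansion followed by one application of \cref{lemma:Von-Neumann-trace-norm} or \cref{lemma:various-bounds}.

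Two inequalities I would dispatch first. For the third, $(\bI_E-\bPi_t)\widetilde{\bU}_{1:t}\widetilde{\bSigma}_{1:t}^{2}\widetilde{\bU}^\top_{1:t}=0$ collapses $(\bI_E-\bPi_t)\bH_{1:t}\bH_{1:t}^\top(\bI_E-\bPi_t)$ to $(\bI_E-\bPi_t)\bD_t(\bI_E-\bPi_t)$, whose norm is at most $\|\bD_t\|\le a_t$. For the first, the absorption rule turns its left-hand side into $\trace\big(\bD_{t-1}\bG_t\bD_{t-1}\bG_t\big)=\trace\big((\bG_t^{1/2}\bD_{t-1}\bG_t^{1/2})^2\big)$ with $\bG_t^{1/2}:=\widetilde{\bU}_{1:t}\widetilde{\bSigma}_{1:t}^{-1}\widetilde{\bU}^\top_{1:t}$; since $\bG_t^{1/2}\bD_{t-1}\bG_t^{1/2}$ is positive semidefinite, \cref{lemma:Von-Neumann-trace-norm} bounds this by $\|\bD_{t-1}\|\,\|\bG_t\|\cdot\trace(\bD_t\bG_t)$, which by $\|\bD_{t-1}\|\le\frac{t-1}{\gamma_t}\mu_{k_t}(\bB_t\bB_t^\top)$, $\|\bG_t\|=\mu_{k_t}(\bB_t\bB_t^\top)^{-1}$ and the third estimate of \cref{lemma:various-bounds} is $\le\frac{t-1}{\gamma_t^2}\min\{M_{t-1}-k_{t-1},(t-1)k_t\}$.

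For the remaining three I would substitute $\bH_{1:t}\bH_{1:t}^\top=\widetilde{\bU}_{1:t}\widetilde{\bSigma}_{1:t}^{2}\widetilde{\bU}^\top_{1:t}+\bD_t$ and expand. The fifth becomes $\trace(\bG_t)+\trace(\bG_t\bD_t\bG_t)\le k_t\,\mu_{k_t}(\bB_t\bB_t^\top)^{-1}+\trace\big(\bD_t\widetilde{\bU}_{1:t}\widetilde{\bSigma}_{1:t}^{-4}\widetilde{\bU}^\top_{1:t}\big)$, and the second summand is the fourth estimate of \cref{lemma:various-bounds}. The fourth equals $\trace\big((\bG_t\bH_{1:t}\bH_{1:t}^\top)^2\big)=\trace\big((\bPi_t+\bG_t\bD_t)^2\big)$; expanding, the pure-$\bPi_t$ term gives $\trace(\bPi_t)=k_t$, the two cross terms each reduce to $\trace(\bD_t\bG_t)$ (third estimate of \cref{lemma:various-bounds}), and the pure-$\bD_t$ term is exactly the left-hand side of the first inequality, already bounded. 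The second is the delicate one: after applying the absorption rule at the left end, the right end, and in the middle, its left-hand side equals $\big\|\bD_{t-1}\bG_t\bD_{t-1}+\bD_{t-1}\bG_t\bD_{t-1}\bG_t\bD_{t-1}\big\|$, and I would bound exactly one factor of $\bD_{t-1}$ by $\|\bD_{t-1}\|\le a_{t-1}$ and each remaining factor by $\|\bD_{t-1}\|\le\frac{t-1}{\gamma_t}\mu_{k_t}(\bB_t\bB_t^\top)$, using $\|\bG_t\|=\mu_{k_t}(\bB_t\bB_t^\top)^{-1}$ to cancel all powers of $\mu_{k_t}(\bB_t\bB_t^\top)$; this leaves $a_{t-1}\big(\frac{t-1}{\gamma_t}\big)$ from the first summand and $a_{t-1}\big(\frac{(t-1)^2}{\gamma_t^2}\big)$ from the second.

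The hard part will be the bookkeeping in the second inequality: one has to apply $\bD_t\widetilde{\bU}_{1:t}=\bD_{t-1}\widetilde{\bU}_{1:t}$ on the left, on the right, and in the middle without dropping a factor, and then allocate the two available bounds on $\|\bD_{t-1}\|$ so that the final constant involves only $a_{t-1}$, $t$, and $\gamma_t$ (every occurrence of $\mu_{k_t}(\bB_t\bB_t^\top)$ must cancel against the $\|\bG_t\|$'s). No separate treatment of $t=1$ is needed, since then $\bD_{t-1}=\bD_0=0$, $a_0=0$, and $M_{t-1}-k_{t-1}=0$, so every past-related term vanishes on both sides.
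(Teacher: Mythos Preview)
Your proposal is correct and follows essentially the same route as the paper: the same structural substitution $\bH_{1:t}\bH_{1:t}^\top=\widetilde{\bU}_{1:t}\widetilde{\bSigma}_{1:t}^{2}\widetilde{\bU}^\top_{1:t}+\bD_t$, the same absorption rule $\bD_t\widetilde{\bU}_{1:t}=\bD_{t-1}\widetilde{\bU}_{1:t}$, and the same appeals to \cref{lemma:Von-Neumann-trace-norm} and \cref{lemma:various-bounds}. The only cosmetic difference is that for the first inequality the paper uses a Cholesky factorization $\bD_{t-1}=\bL_{t-1}\bL_{t-1}^\top$ where you use $\bG_t^{1/2}=\widetilde{\bU}_{1:t}\widetilde{\bSigma}_{1:t}^{-1}\widetilde{\bU}^\top_{1:t}$, and for the second inequality the paper pulls out one factor $\|\bD_{t-1}\|\le a_{t-1}$ and then invokes the first two bounds of \cref{lemma:various-bounds} directly, whereas you rederive those norm bounds from scratch---the arithmetic and constants are identical.
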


\begin{proof}[Proof of \cref{lemma:various-bounds2}]
     Since $\bD_{t-1}$ is positive semidefinite, let $\bL_{t-1} \bL_{t-1}^\top$ be its Cholesky decomposition. Then we have
    \begin{equation*}
        \begin{split}
            &\ \trace\left(\bD_t \widetilde{\bU}_{1:t} \widetilde{\bSigma}_{1:t}^{-2} \widetilde{\bU}^\top_{1:t} \bD_t \widetilde{\bU}_{1:t} \widetilde{\bSigma}_{1:t}^{-2} \widetilde{\bU}^\top_{1:t}\right) \\ 
            =&\ \trace\left(\bD_{t-1} \widetilde{\bU}_{1:t} \widetilde{\bSigma}_{1:t}^{-2} \widetilde{\bU}^\top_{1:t} \bD_{t-1} \widetilde{\bU}_{1:t} \widetilde{\bSigma}_{1:t}^{-2} \widetilde{\bU}^\top_{1:t}\right) \\ 
            =&\  \trace\left(\bL_{t-1} \bL_{t-1}^\top \widetilde{\bU}_{1:t} \widetilde{\bSigma}_{1:t}^{-2} \widetilde{\bU}^\top_{1:t} \bL_{t-1} \bL_{t-1}^\top \widetilde{\bU}_{1:t} \widetilde{\bSigma}_{1:t}^{-2} \widetilde{\bU}^\top_{1:t}\right) \\ 
            =&\  \trace\left(\bL_{t-1}^\top \widetilde{\bU}_{1:t} \widetilde{\bSigma}_{1:t}^{-2} \widetilde{\bU}^\top_{1:t} \bL_{t-1} \bL_{t-1}^\top \widetilde{\bU}_{1:t} \widetilde{\bSigma}_{1:t}^{-2} \widetilde{\bU}^\top_{1:t} \bL_{t-1} \right) \\ 
            \overset{\textnormal{(i)}}{\leq } &\  \trace\left(\bL_{t-1}^\top \widetilde{\bU}_{1:t} \widetilde{\bSigma}_{1:t}^{-2} \widetilde{\bU}^\top_{1:t} \bL_{t-1} \right) \cdot \left\| \bL_{t-1}^\top \widetilde{\bU}_{1:t} \widetilde{\bSigma}_{1:t}^{-2} \widetilde{\bU}^\top_{1:t} \bL_{t-1} \right\| \\ 
            =&\  \trace\left( \bD_{t-1}  \widetilde{\bU}_{1:t} \widetilde{\bSigma}_{1:t}^{-2} \widetilde{\bU}^\top_{1:t}  \right) \cdot \left\| \bL_{t-1}^\top \widetilde{\bU}_{1:t} \widetilde{\bSigma}_{1:t}^{-2} \widetilde{\bU}^\top_{1:t} \bL_{t-1} \right\| \\ 
            \overset{\textnormal{(ii)}}{\leq} &\  \frac{1}{\gamma_t} \min\left\{  M_{t-1} - k_{t-1}, (t-1)k_t \right\} \cdot \left\| \bL_{t-1}^\top \widetilde{\bU}_{1:t} \widetilde{\bSigma}_{1:t}^{-2} \widetilde{\bU}^\top_{1:t} \bL_{t-1} \right\|.
        \end{split}
    \end{equation*}
    In the above, (i) follows from \cref{lemma:Von-Neumann-trace-norm}, and (ii) follows from \cref{lemma:various-bounds}. Continuing with the above inequality, we have
    \begin{equation*}
        \begin{split}
            \left\| \bL_{t-1}^\top \widetilde{\bU}_{1:t} \widetilde{\bSigma}_{1:t}^{-2} \widetilde{\bU}^\top_{1:t} \bL_{t-1} \right\| &\leq \left\| \bL_{t-1} \right\|^2 \cdot \frac{1}{\mu_{k_t} \big( \bB_t \bB_t^\top \big) } \\ 
            &= \left\| \bD_{t-1} \right\| \cdot \frac{1}{\mu_{k_t} \big( \bB_t \bB_t^\top \big) } \\ 
            &\leq \frac{t-1}{\gamma_t}.
        \end{split}
    \end{equation*}

    Recall  the fact $\bD_t \widetilde{\bU}_{1:t}=\bD_{t-1} \widetilde{\bU}_{1:t}$. The second inequality in \cref{lemma:various-bounds2} can be proved as follows:
    \begin{equation*}
        \begin{split}
            &\  \left\|  \bD_t \widetilde{\bU}_{1:t} \widetilde{\bSigma}_{1:t}^{-2} \widetilde{\bU}^\top_{1:t}  \bH_{1:t} \bH_{1:t}^\top  \widetilde{\bU}_{1:t} \widetilde{\bSigma}_{1:t}^{-2} \widetilde{\bU}^\top_{1:t} \bD_t \right\| \\ 
            \overset{\textnormal{(i)}}{=} &\ \left\|  \bD_t \widetilde{\bU}_{1:t} \widetilde{\bSigma}_{1:t}^{-2} \widetilde{\bU}^\top_{1:t}  \big( \bD_t + \widetilde{\bU}_{1:t} \widetilde{\bSigma}_{1:t}^{2} \widetilde{\bU}^\top_{1:t} \big)  \widetilde{\bU}_{1:t} \widetilde{\bSigma}_{1:t}^{-2} \widetilde{\bU}^\top_{1:t} \bD_t \right\| \\ 
            =&\ \left\|  \bD_t \widetilde{\bU}_{1:t} \widetilde{\bSigma}_{1:t}^{-2} \widetilde{\bU}^\top_{1:t} \bD_t \widetilde{\bU}_{1:t} \widetilde{\bSigma}_{1:t}^{-2} \widetilde{\bU}^\top_{1:t} \bD_t + \bD_t \widetilde{\bU}_{1:t} \widetilde{\bSigma}_{1:t}^{-2}  \widetilde{\bU}^\top_{1:t} \bD_t \right\| \\ 
            \leq&\  \left\|  \bD_{t-1} \right\| \cdot \left( \left\| \widetilde{\bU}_{1:t} \widetilde{\bSigma}_{1:t}^{-2} \widetilde{\bU}^\top_{1:t} \bD_t \widetilde{\bU}_{1:t} \widetilde{\bSigma}_{1:t}^{-2} \widetilde{\bU}^\top_{1:t} \bD_t\right\| +  \left\| \widetilde{\bU}_{1:t} \widetilde{\bSigma}_{1:t}^{-2}  \widetilde{\bU}^\top_{1:t} \bD_t \right\| \right) \\
            \leq&\ a_{t-1} \cdot \left(  \frac{(t-1)^2}{ \gamma_t^2} + \frac{t-1}{ \gamma_t} \right).
        \end{split}
    \end{equation*}
    In the above, (i) follows from \cref{lemma:truncation-equality}. 

    The third inequality is proved as follows:
    \begin{equation*}
        \begin{split}
            &\ \big\| ( \bI_E- \widetilde{\bU}_{1:t} \widetilde{\bU}_{1:t}^\top) \bH_{1:t} \bH_{1:t}^\top ( \bI_E- \widetilde{\bU}_{1:t} \widetilde{\bU}_{1:t}^\top)  \big\| \\ 
            = &\ \big\| ( \bI_E- \widetilde{\bU}_{1:t} \widetilde{\bU}_{1:t}^\top) (\bH_{1:t} \bH_{1:t}^\top - \widetilde{\bU}_{1:t} \widetilde{\bSigma}^2 \widetilde{\bU}_{1:t}^\top  ) ( \bI_E- \widetilde{\bU}_{1:t} \widetilde{\bU}_{1:t}^\top)  \big\| \\ 
            = &\ \big\| ( \bI_E- \widetilde{\bU}_{1:t} \widetilde{\bU}_{1:t}^\top) \bD_t ( \bI_E- \widetilde{\bU}_{1:t} \widetilde{\bU}_{1:t}^\top)  \big\| \\ 
            \leq &\ \| \bD_t \| = a_t.
        \end{split}
    \end{equation*}
    We now prove the fourth inequality:
    \begin{equation*}
        \begin{split}
            &\ \left\|  \bH_{1:t}^\top \widetilde{\bU}_{1:t} \widetilde{\bSigma}_{1:t}^{-2} \widetilde{\bU}^\top_{1:t} \bH_{1:t} \right\|_{\textnormal{F}}^2 \\ 
            =&\ \trace\left( \bH_{1:t} \bH_{1:t}^\top \widetilde{\bU}_{1:t} \widetilde{\bSigma}_{1:t}^{-2} \widetilde{\bU}^\top_{1:t} \bH_{1:t} \bH_{1:t}^\top \widetilde{\bU}_{1:t} \widetilde{\bSigma}_{1:t}^{-2} \widetilde{\bU}^\top_{1:t}  \right) \\ 
            \overset{\textnormal{(i)}}{=}&\ \trace\left( (\bD_t \widetilde{\bU}_{1:t} \widetilde{\bSigma}_{1:t}^{-2} \widetilde{\bU}^\top_{1:t} + \widetilde{\bU}_{1:t} \widetilde{\bU}^\top_{1:t} ) (\bD_t\widetilde{\bU}_{1:t} \widetilde{\bSigma}_{1:t}^{-2} \widetilde{\bU}^\top_{1:t} + \widetilde{\bU}_{1:t} \widetilde{\bU}^\top_{1:t}) \right) \\ 
            =&\ \trace\left(\bD_t \widetilde{\bU}_{1:t} \widetilde{\bSigma}_{1:t}^{-2} \widetilde{\bU}^\top_{1:t} \bD_t \widetilde{\bU}_{1:t} \widetilde{\bSigma}_{1:t}^{-2} \widetilde{\bU}^\top_{1:t} + 2 \bD_t \widetilde{\bU}_{1:t} \widetilde{\bSigma}_{1:t}^{-2} \widetilde{\bU}^\top_{1:t}  \right) + k_t \\ 
            \overset{\textnormal{(ii)}}{\leq} &\ \left( \frac{t-1}{\gamma_t^2} + \frac{2}{\gamma_t}\right) \min\left\{  M_{t-1} - k_{t-1}, (t-1)k_t \right\} + k_t.
        \end{split}
    \end{equation*}
    In the above, (i) follows from \cref{lemma:truncation-equality}, and (ii) follows from \cref{lemma:various-bounds} and the first inequality we just proved for \cref{lemma:various-bounds2}.

    The fifth inequality can be proved as follows:
    \begin{equation*}
        \begin{split}
            &\ \trace\left( \widetilde{\bU}_{1:t} \widetilde{\bSigma}_{1:t}^{-2} \widetilde{\bU}^\top_{1:t} \bH_{1:t} \bH_{1:t}^\top \widetilde{\bU}_{1:t} \widetilde{\bSigma}_{1:t}^{-2} \widetilde{\bU}^\top_{1:t} \right) \\ 
            \overset{\textnormal{(i)}}{=} &\ \trace\left( \widetilde{\bU}_{1:t} \widetilde{\bSigma}_{1:t}^{-2} \widetilde{\bU}^\top_{1:t} (\bD_t \widetilde{\bU}_{1:t} \widetilde{\bSigma}_{1:t}^{-2} \widetilde{\bU}^\top_{1:t} + \widetilde{\bU}_{1:t}  \widetilde{\bU}^\top_{1:t}) \right) \\ 
            =&\ \trace \big( \bD_t \widetilde{\bU}_{1:t} \widetilde{\bSigma}_{1:t}^{-4} \widetilde{\bU}^\top_{1:t}  \big)  + \trace \big( \widetilde{\bSigma}_{1:t}^{-2} \big) \\ 
            \overset{\textnormal{(ii)}}{\leq} & \frac{1}{\mu_{k_t} \big( \bB_t \bB_t^\top \big) } \cdot  \frac{1}{\gamma_t} \min\left\{  M_{t-1} - k_{t-1}, (t-1)k_t \right\} + \frac{k_t}{\mu_{k_t} ( \bB_t \bB_t^\top \big)  }.
        \end{split}
    \end{equation*}
    Here, (i) holds as a result of \cref{lemma:truncation-equality} and (ii) follows from \cref{lemma:various-bounds}.

\end{proof}
\begin{lemma}\label{lemma:various-bounds3}
    Using the notations in \cref{lemma:various-bounds}, we have for any $\bW$ that
    \begin{equation*}
        \big\| \bW ( \bH_{1:t} \bH_{1:t}^\top \widetilde{\bU}_{1:t} \widetilde{\bSigma}_{1:t}^{-2} \widetilde{\bU}^\top_{1:t}  - \bI_{M_t}) \bH_{1:t} \big\|_{\textnormal{F}}^2 \leq 2 \cdot \| \bW \|_{\textnormal{F}}^2  \left( a_t  + \frac{ a_{t-1}(t-1)}{ \gamma_t } + \frac{a_{t-1}(t-1)^2}{ \gamma_t^2} \right).
    \end{equation*}
\end{lemma}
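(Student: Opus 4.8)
The plan is to reduce the statement to \cref{lemma:truncation-equality} together with the already-proved bounds in \cref{lemma:various-bounds2}, so almost no new work is needed. Recall $\bD_t := \sum_{i=1}^t\big(\bB_i\bB_i^\top - \tau_{k_i}(\bB_i\bB_i^\top)\big)$ from \cref{eq:def-Dt}; by \cref{lemma:truncation-equality} we have $\bH_{1:t}\bH_{1:t}^\top = \bD_t + \widetilde{\bU}_{1:t}\widetilde{\bSigma}_{1:t}^2\widetilde{\bU}_{1:t}^\top$. First I would substitute this into the middle factor and use the orthogonality relations $\widetilde{\bU}_{1:t}^\top\widetilde{\bU}_{1:t}=\bI_{k_t}$ and $\widetilde{\bSigma}_{1:t}^2\widetilde{\bSigma}_{1:t}^{-2}=\bI_{k_t}$ to obtain the algebraic identity
\[
\bH_{1:t}\bH_{1:t}^\top\widetilde{\bU}_{1:t}\widetilde{\bSigma}_{1:t}^{-2}\widetilde{\bU}^\top_{1:t} - \bI_E = \bD_t\widetilde{\bU}_{1:t}\widetilde{\bSigma}_{1:t}^{-2}\widetilde{\bU}^\top_{1:t} - \big(\bI_E - \widetilde{\bU}_{1:t}\widetilde{\bU}_{1:t}^\top\big),
\]
where $\bI_E$ is the matrix written as ``$\bI_{M_t}$'' in the statement (the expression is $E\times E$, so this must be read as $\bI_E$). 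Multiplying on the left by $\bW$ and on the right by $\bH_{1:t}$ then splits the target into a sum $A+B$ with $A=\bW\bD_t\widetilde{\bU}_{1:t}\widetilde{\bSigma}_{1:t}^{-2}\widetilde{\bU}^\top_{1:t}\bH_{1:t}$ and $B=-\bW(\bI_E-\widetilde{\bU}_{1:t}\widetilde{\bU}_{1:t}^\top)\bH_{1:t}$, and the elementary bound $\|A+B\|_{\textnormal{F}}^2\le 2\|A\|_{\textnormal{F}}^2+2\|B\|_{\textnormal{F}}^2$ (\cref{lemma:ABCD-inequality}) reduces the claim to bounding $\|A\|_{\textnormal{F}}^2$ and $\|B\|_{\textnormal{F}}^2$ separately.

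For each of these I would write $\|\bW N\|_{\textnormal{F}}^2 = \trace\big((\bW^\top\bW)(NN^\top)\big) \le \trace(\bW^\top\bW)\cdot\|NN^\top\| = \|\bW\|_{\textnormal{F}}^2\,\|NN^\top\|$, which is legitimate because $\bW^\top\bW$ and $NN^\top$ are positive semidefinite, so \cref{lemma:Von-Neumann-trace-norm} applies. For the first term, $NN^\top = \bD_t\widetilde{\bU}_{1:t}\widetilde{\bSigma}_{1:t}^{-2}\widetilde{\bU}^\top_{1:t}\bH_{1:t}\bH_{1:t}^\top\widetilde{\bU}_{1:t}\widetilde{\bSigma}_{1:t}^{-2}\widetilde{\bU}^\top_{1:t}\bD_t$, whose spectral norm is at most $a_{t-1}\big((t-1)^2/\gamma_t^2 + (t-1)/\gamma_t\big)$ by the second inequality of \cref{lemma:various-bounds2}. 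For the second term, $NN^\top = (\bI_E-\widetilde{\bU}_{1:t}\widetilde{\bU}_{1:t}^\top)\bH_{1:t}\bH_{1:t}^\top(\bI_E-\widetilde{\bU}_{1:t}\widetilde{\bU}_{1:t}^\top)$ (using that $\bI_E-\widetilde{\bU}_{1:t}\widetilde{\bU}_{1:t}^\top$ is a symmetric projector), whose spectral norm is at most $a_t$ by the third inequality of \cref{lemma:various-bounds2}. Adding $2\|\bW\|_{\textnormal{F}}^2$ times these two bounds yields exactly $2\|\bW\|_{\textnormal{F}}^2\big(a_t + a_{t-1}(t-1)/\gamma_t + a_{t-1}(t-1)^2/\gamma_t^2\big)$, which is the asserted inequality.

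I do not expect a genuine obstacle, since \cref{lemma:truncation-equality} and \cref{lemma:various-bounds2} already carry the analytic weight. The only points that require care are the algebraic simplification in the first step — it must be carried out so that the decomposition matches the two matrices appearing in \cref{lemma:various-bounds2} verbatim — and checking positive semidefiniteness of $NN^\top$ before invoking the trace–operator-norm bound, so that the split-and-bound step is valid. One should also flag the $\bI_{M_t}$-versus-$\bI_E$ dimension mismatch in the statement, but it has no effect on the argument.
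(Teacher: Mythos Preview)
Your proposal is correct and follows essentially the same route as the paper's proof: substitute $\bH_{1:t}\bH_{1:t}^\top = \widetilde{\bU}_{1:t}\widetilde{\bSigma}_{1:t}^2\widetilde{\bU}_{1:t}^\top + \bD_t$ via \cref{lemma:truncation-equality}, split into the two pieces $\bD_t\widetilde{\bU}_{1:t}\widetilde{\bSigma}_{1:t}^{-2}\widetilde{\bU}_{1:t}^\top$ and $-(\bI_E-\widetilde{\bU}_{1:t}\widetilde{\bU}_{1:t}^\top)$ using \cref{lemma:ABCD-inequality}, then bound each piece via \cref{lemma:Von-Neumann-trace-norm} and the second and third inequalities of \cref{lemma:various-bounds2}. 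Your observation about the $\bI_{M_t}$ versus $\bI_E$ dimension mismatch is also accurate; the paper's own proof silently uses $\bI_E$ in the same place.
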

\begin{proof}
    We have
    \begin{equation*}
        \small
        \begin{split}
            &\ \big\| \bW  ( \bH_{1:t} \bH_{1:t}^\top \widetilde{\bU}_{1:t} \widetilde{\bSigma}_{1:t}^{-2} \widetilde{\bU}^\top_{1:t}  - \bI_{M_t}) \bH_{1:t} \big\|_{\textnormal{F}}^2   \\ 
            = &\ \trace \left( \bW ( \bH_{1:t} \bH_{1:t}^\top \widetilde{\bU}_{1:t} \widetilde{\bSigma}_{1:t}^{-2} \widetilde{\bU}^\top_{1:t}  - \bI_{M_t}) \bH_{1:t} \bH_{1:t}^\top ( \widetilde{\bU}_{1:t} \widetilde{\bSigma}_{1:t}^{-2} \widetilde{\bU}^\top_{1:t} \bH_{1:t} \bH_{1:t}^\top - \bI_{M_t}) (\bW)^\top \right) \\ 
            \overset{\textnormal{(i)}}{\leq} &\ \trace \left( \bW  ( \bD_t \widetilde{\bU}_{1:t} \widetilde{\bSigma}_{1:t}^{-2} \widetilde{\bU}^\top_{1:t} + \widetilde{\bU}_{1:t} \widetilde{\bU}_{1:t}^\top- \bI_E) \bH_{1:t} \bH_{1:t}^\top ( \widetilde{\bU}_{1:t} \widetilde{\bSigma}_{1:t}^{-2} \widetilde{\bU}^\top_{1:t} \bD_t  + \widetilde{\bU}_{1:t} \widetilde{\bU}_{1:t}^\top- \bI_E) (\bW)^\top \right) \\ 
            \overset{\textnormal{(ii)}}{\leq} &\ 2\trace \left( \bW  ( \widetilde{\bU}_{1:t} \widetilde{\bU}_{1:t}^\top- \bI_E) \bH_{1:t} \bH_{1:t}^\top ( \widetilde{\bU}_{1:t} \widetilde{\bU}_{1:t}^\top- \bI_E) (\bW)^\top \right) \\
            & \quad + 2\trace \left( \bW   \bD_t \widetilde{\bU}_{1:t} \widetilde{\bSigma}_{1:t}^{-2} \widetilde{\bU}^\top_{1:t}  \bH_{1:t} \bH_{1:t}^\top  \widetilde{\bU}_{1:t} \widetilde{\bSigma}_{1:t}^{-2} \widetilde{\bU}^\top_{1:t} \bD_t   (\bW)^\top \right)  \\ 
            \overset{\textnormal{(iii)}}{\leq}& \ 2 \cdot \| \bW \|_{\textnormal{F}}^2 \cdot \big\| ( \bI_E- \widetilde{\bU}_{1:t} \widetilde{\bU}_{1:t}^\top) \bH_{1:t} \bH_{1:t}^\top ( \bI_E- \widetilde{\bU}_{1:t} \widetilde{\bU}_{1:t}^\top)  \big\|  \\
            &\quad + 2 \cdot \| \bW \|_{\textnormal{F}}^2 \cdot  \big\|  \bD_t \widetilde{\bU}_{1:t} \widetilde{\bSigma}_{1:t}^{-2} \widetilde{\bU}^\top_{1:t}  \bH_{1:t} \bH_{1:t}^\top \widetilde{\bU}_{1:t} \widetilde{\bSigma}_{1:t}^{-2} \widetilde{\bU}^\top_{1:t} \bD_t \big\| \\
            \overset{\textnormal{(iv)}}{\leq} &\ 2 \cdot \| \bW \|_{\textnormal{F}}^2 \cdot a_t + 2 \cdot \| \bW \|_{\textnormal{F}}^2 \cdot a_{t-1} \cdot \left(  \frac{(t-1)^2}{ \gamma_t^2} + \frac{t-1}{ \gamma_t} \right) \\ 
            = &\  2 \cdot \| \bW \|_{\textnormal{F}}^2  \left( a_t  + \frac{ a_{t-1}(t-1)}{ \gamma_t } + \frac{a_{t-1}(t-1)^2}{ \gamma_t^2} \right).
        \end{split}
    \end{equation*}
    In the above, (i) follows from \cref{lemma:truncation-equality}, (ii) from \cref{lemma:ABCD-inequality}, (iii) from \cref{lemma:Von-Neumann-trace-norm}, and (iv) from \cref{lemma:various-bounds2}. The proof is complete.
\end{proof}

\section{Proofs of \cref{theorem:training-loss-deterministic} and \cref{theorem:test-loss-deterministic} }\label{section:proof-main-paper}

\begin{proof}[Proof of \cref{theorem:training-loss-deterministic}]
    Let $\bI_{M_t}$ be the $M_t\times M_t$ identity matrix. The training loss can be written as 
    \begin{equation*}
        \begin{split}
            &\ \big\| \widetilde{\bW}_{t} \bH_{1:t} - \bY_{1:t} \big\|_{\textnormal{F}}^2 \\ 
            =&\  \big\| \bY_{1:t} \bH_{1:t}^\top \widetilde{\bU}_{1:t} \widetilde{\bSigma}_{1:t}^{-2} \widetilde{\bU}^\top_{1:t} \bH_{1:t}  - \bY_{1:t} \big\|_{\textnormal{F}}^2 \\
            =&\ \big\| \bY_{1:t} (  \bH_{1:t}^\top \widetilde{\bU}_{1:t} \widetilde{\bSigma}_{1:t}^{-2} \widetilde{\bU}^\top_{1:t} \bH_{1:t}  - \bI_{M_t}) \big\|_{\textnormal{F}}^2 \\ 
            =&\ \big\| (\colorgt \bH_{1:t} + \coloreps) (  \bH_{1:t}^\top \widetilde{\bU}_{1:t} \widetilde{\bSigma}_{1:t}^{-2} \widetilde{\bU}^\top_{1:t} \bH_{1:t}  - \bI_{M_t}) \big\|_{\textnormal{F}}^2 \\ 
            \leq &\ 2 \cdot \big\| \colorgt  ( \bH_{1:t} \bH_{1:t}^\top \widetilde{\bU}_{1:t} \widetilde{\bSigma}_{1:t}^{-2} \widetilde{\bU}^\top_{1:t}  - \bI_{M_t}) \bH_{1:t} \big\|_{\textnormal{F}}^2 +  2 \cdot \big\|  \coloreps (\bH_{1:t}^\top \widetilde{\bU}_{1:t} \widetilde{\bSigma}_{1:t}^{-2} \widetilde{\bU}^\top_{1:t} \bH_{1:t}  - \bI_{M_t}) \big\|_{\textnormal{F}}^2. %\\ 
            %\overset{\textnormal{(i)}}{\leq} &\ 4 \cdot \| \colorgt \|_{\textnormal{F}}^2  \left( \frac{a_t}{M_t}  + \frac{ a_{t-1}(t-1)}{ \gamma_t M_t} + \frac{a_{t-1}(t-1)^2}{ \gamma_t^2M_t} \right) + 2 \cdot \big\|  \coloreps (\bH_{1:t}^\top \widetilde{\bU}_{1:t} \widetilde{\bSigma}_{1:t}^{-2} \widetilde{\bU}^\top_{1:t} \bH_{1:t}  - \bI_{M_t}) \big\|_{\textnormal{F}}^2
        \end{split}
    \end{equation*}
    We can now bound the first term by \cref{lemma:various-bounds3} as follows: 
    \begin{equation*}
         \big\| \colorgt ( \bH_{1:t} \bH_{1:t}^\top \widetilde{\bU}_{1:t} \widetilde{\bSigma}_{1:t}^{-2} \widetilde{\bU}^\top_{1:t}  - \bI_{M_t}) \bH_{1:t} \big\|_{\textnormal{F}}^2 \leq 2 \cdot \| \colorgt \|_{\textnormal{F}}^2  \left( a_t  + \frac{ a_{t-1}(t-1)}{ \gamma_t } + \frac{a_{t-1}(t-1)^2}{ \gamma_t^2} \right).
    \end{equation*}
    The second term is bounded above as follows:
    \begin{equation*}
        \begin{split}
            &\ 2 \cdot \big\|  \coloreps (\bH_{1:t}^\top \widetilde{\bU}_{1:t} \widetilde{\bSigma}_{1:t}^{-2} \widetilde{\bU}^\top_{1:t} \bH_{1:t}  - \bI_{M_t}) \big\|_{\textnormal{F}}^2 \\ 
            \leq&\ 2 \cdot \left\| \coloreps  \right\|^2 \cdot   \big\| (\bH_{1:t}^\top \widetilde{\bU}_{1:t} \widetilde{\bSigma}_{1:t}^{-2} \widetilde{\bU}^\top_{1:t} \bH_{1:t}  - \bI_{M_t}) \big\|_{\textnormal{F}}^2 \\ 
            \leq&\ 2 \cdot \left\| \coloreps  \right\|^2 \cdot \left(M_t - k_t + \frac{t-1}{\gamma_t^2} \min\left\{  M_{t-1} - k_{t-1}, (t-1)k_t \right\}\right),
        \end{split} 
    \end{equation*}
    where the first inequality follows from \cref{lemma:Von-Neumann-trace-norm} and  the last inequality from \cref{prop:projection-error}.
\end{proof}

\begin{proof}[Proof of \cref{theorem:test-loss-deterministic}]
    Define $\bD_{t}:= \sum_{i=1}^{t} \big( \bB_{i} \bB_i^\top   -   \tau_{k_i} \big( \bB_{i} \bB_i^\top \big)  \big)$. Note that $\bD_t$ is a symmetric and positive semi-definite matrix. 

    Note that we have
    \begin{equation*}
        \begin{split}
            \bbE_{\bh} \left[ \big\| \widetilde{\bW}_t\bh - \by \big\|^2 \right] &= \bbE_{\bh} \left[ \big\| \widetilde{\bW}_t\bh - \colorgt \bh - \colortesteps \big\|^2 \right]  \\ 
            &=2 \cdot \bbE_{\bh} \left[ \big\| \widetilde{\bW}_t\bh - \colorgt \bh  \big\|^2 \right] + 2 \cdot  \| \colortesteps \|^2,
        \end{split}
    \end{equation*}
    so we next focus on bounding $\bbE_{\bh} \left[ \big\| \widetilde{\bW}_t\bh - \colorgt \bh  \big\|^2 \right]$. With the $E\times E$ identity matrix $\bI_E$ and $\colorcov:=\bbE[\bh\bh^\top]$, we have
    \begin{equation*}
        % \small
        \begin{split}
            &\ \bbE_{\bh}\left[ \| \widetilde{\bW}_t\bh - \colorgt\bh \|^2  \right]  \\ 
            =&\ \bbE_{\bh}\left[ \| \bY_{1:t} \bH_{1:t}^\top \widetilde{\bU}_{1:t} \widetilde{\bSigma}_{1:t}^{-2} \widetilde{\bU}^\top_{1:t} \bh - \colorgt\bh \|^2  \right] \\
            =&\ \bbE_{\bh}\left[ \| (\colorgt \bH_{1:t} + \coloreps) \bH_{1:t}^\top \widetilde{\bU}_{1:t} \widetilde{\bSigma}_{1:t}^{-2} \widetilde{\bU}^\top_{1:t} \bh - \colorgt\bh \|^2  \right] \\ 
            =&\ \bbE_{\bh}\left[ \| (\colorgt \bH_{1:t} + \coloreps) \bH_{1:t}^\top \widetilde{\bU}_{1:t} \widetilde{\bSigma}_{1:t}^{-2} \widetilde{\bU}^\top_{1:t} \bh - \colorgt\bh \|^2  \right] \\
            \leq&\ 2 \cdot \bbE_{\bh}\left[ \| \colorgt  \bH_{1:t} \bH_{1:t}^\top \widetilde{\bU}_{1:t} \widetilde{\bSigma}_{1:t}^{-2} \widetilde{\bU}^\top_{1:t} \bh - \colorgt\bh \|^2  \right] +  2 \cdot \bbE_{\bh}\left[ \|  \coloreps \bH_{1:t}^\top \widetilde{\bU}_{1:t} \widetilde{\bSigma}_{1:t}^{-2} \widetilde{\bU}^\top_{1:t} \bh  \|^2  \right] \\ 
            \leq &\ 2 \cdot \bbE_{\bh}\left[ \| \colorgt  (\bH_{1:t} \bH_{1:t}^\top \widetilde{\bU}_{1:t} \widetilde{\bSigma}_{1:t}^{-2} \widetilde{\bU}^\top_{1:t} - \bI_E) \bh  \|^2  \right] + 2 \cdot \| \coloreps \|^2 \cdot \bbE_{\bh}\left[ \|  \bH_{1:t}^\top \widetilde{\bU}_{1:t} \widetilde{\bSigma}_{1:t}^{-2} \widetilde{\bU}^\top_{1:t} \bh  \|^2  \right]
        \end{split}
    \end{equation*}
    The term $\bbE_{\bh} \|  \bH_{1:t}^\top \widetilde{\bU}_{1:t} \widetilde{\bSigma}_{1:t}^{-2} \widetilde{\bU}^\top_{1:t} \bh  \|^2$ can be bounded above as follows: 
    \begin{equation*}
        \begin{split}
            &\ \ \ \ \ \bbE_{\bh} \|  \bH_{1:t}^\top \widetilde{\bU}_{1:t} \widetilde{\bSigma}_{1:t}^{-2} \widetilde{\bU}^\top_{1:t} \bh  \|^2 \\ 
            &= \trace\left(  \bH_{1:t}^\top \widetilde{\bU}_{1:t} \widetilde{\bSigma}_{1:t}^{-2} \widetilde{\bU}^\top_{1:t} \colorcov \widetilde{\bU}_{1:t} \widetilde{\bSigma}_{1:t}^{-2} \widetilde{\bU}^\top_{1:t} \bH_{1:t}  \right) \\ 
            &= \trace\left(  \Big(\colorcov - \frac{1}{M_t} \bH_{1:t} \bH_{1:t}^\top + \frac{1}{M_t} \bH_{1:t} \bH_{1:t}^\top \Big) \widetilde{\bU}_{1:t} \widetilde{\bSigma}_{1:t}^{-2} \widetilde{\bU}^\top_{1:t} \bH_{1:t}  \bH_{1:t}^\top \widetilde{\bU}_{1:t} \widetilde{\bSigma}_{1:t}^{-2} \widetilde{\bU}^\top_{1:t} \right) \\
            &\overset{\textnormal{(i)} }{\leq} \left\| \colorcov - \frac{1}{M_t} \bH_{1:t} \bH_{1:t}^\top \right\| \cdot \trace\left( \widetilde{\bU}_{1:t} \widetilde{\bSigma}_{1:t}^{-2} \widetilde{\bU}^\top_{1:t} \bH_{1:t}  \bH_{1:t}^\top \widetilde{\bU}_{1:t} \widetilde{\bSigma}_{1:t}^{-2} \widetilde{\bU}^\top_{1:t} \right) \\ 
            &\quad + \frac{1}{M_t} \trace \left( \bH_{1:t} \bH_{1:t}^\top  \widetilde{\bU}_{1:t} \widetilde{\bSigma}_{1:t}^{-2} \widetilde{\bU}^\top_{1:t} \bH_{1:t}  \bH_{1:t}^\top \widetilde{\bU}_{1:t} \widetilde{\bSigma}_{1:t}^{-2} \widetilde{\bU}^\top_{1:t} \right) \\
            &\overset{\textnormal{(ii)} }{\leq} \left\| \colorcov - \frac{1}{M_t} \bH_{1:t} \bH_{1:t}^\top \right\| \cdot \frac{\left(   \frac{1}{\gamma_t} \min\left\{  M_{t-1} - k_{t-1}, (t-1)k_t \right\}  + k_t \right)}{ \mu_{k_t} ( \bB_t \bB_t^\top \big) } \\ 
            &\quad + \frac{1}{M_t \gamma_t} \left( \frac{t-1}{\gamma_t^2 M_t} + \frac{2}{\gamma_t M_t} \right)\cdot  \min\left\{  M_{t-1} - k_{t-1}, (t-1)k_t \right\} +   \frac{k_t}{M_t}  \\ 
            &=: \bbV_t
        \end{split}
    \end{equation*}
    Here, (i) follows from \cref{lemma:Von-Neumann-trace-norm} and (ii) follows from \cref{lemma:various-bounds2}.

    The  term $\bbE_{\bh}\left[ \| \colorgt  (\bH_{1:t} \bH_{1:t}^\top \widetilde{\bU}_{1:t} \widetilde{\bSigma}_{1:t}^{-2} \widetilde{\bU}^\top_{1:t} - \bI_E) \bh  \|^2  \right]$ satisfies:
    \begin{equation*}
        \small
        \begin{split}
             &\ \ \ \ \ \bbE_{\bh}\left[ \| \colorgt  (\bH_{1:t} \bH_{1:t}^\top \widetilde{\bU}_{1:t} \widetilde{\bSigma}_{1:t}^{-2} \widetilde{\bU}^\top_{1:t} - \bI_E) \bh  \|^2  \right] \\ 
             &=  \trace \left( \colorgt  (\bH_{1:t} \bH_{1:t}^\top \widetilde{\bU}_{1:t} \widetilde{\bSigma}_{1:t}^{-2} \widetilde{\bU}^\top_{1:t} - \bI_E) \colorcov (\widetilde{\bU}_{1:t} \widetilde{\bSigma}_{1:t}^{-2} \widetilde{\bU}^\top_{1:t} \bH_{1:t} \bH_{1:t}^\top  - \bI_E) (\colorgt)^\top \right)   \\ 
             &\overset{\textnormal{(i)}}{=} \trace \left( \colorgt  ( \bD_t \widetilde{\bU}_{1:t} \widetilde{\bSigma}_{1:t}^{-2} \widetilde{\bU}^\top_{1:t} + \widetilde{\bU}_{1:t} \widetilde{\bU}_{1:t}^\top- \bI_E) \colorcov ( \widetilde{\bU}_{1:t} \widetilde{\bSigma}_{1:t}^{-2} \widetilde{\bU}^\top_{1:t} \bD_t  + \widetilde{\bU}_{1:t} \widetilde{\bU}_{1:t}^\top- \bI_E) (\colorgt)^\top \right) \\ 
             &\overset{\textnormal{(ii)}}{\leq}  2\trace \left( \colorgt  ( \widetilde{\bU}_{1:t} \widetilde{\bU}_{1:t}^\top- \bI_E) \colorcov ( \widetilde{\bU}_{1:t} \widetilde{\bU}_{1:t}^\top- \bI_E) (\colorgt)^\top \right) \\
             & \quad + 2\trace \left( \colorgt   \bD_t \widetilde{\bU}_{1:t} \widetilde{\bSigma}_{1:t}^{-2} \widetilde{\bU}^\top_{1:t}  \colorcov  \widetilde{\bU}_{1:t} \widetilde{\bSigma}_{1:t}^{-2} \widetilde{\bU}^\top_{1:t} \bD_t   (\colorgt)^\top \right)  \\ 
             &\overset{\textnormal{(iii)}}{\leq} 2 \cdot \| \colorgt ( \bI_E- \widetilde{\bU}_{1:t} \widetilde{\bU}_{1:t}^\top) \|_{\textnormal{F}}^2 \cdot \big\| ( \bI_E- \widetilde{\bU}_{1:t} \widetilde{\bU}_{1:t}^\top) \colorcov ( \bI_E- \widetilde{\bU}_{1:t} \widetilde{\bU}_{1:t}^\top)  \big\|  \\
             & \quad + 2 \cdot \| \colorgt \|_{\textnormal{F}}^2 \cdot  \big\|  \bD_t \widetilde{\bU}_{1:t} \widetilde{\bSigma}_{1:t}^{-2} \widetilde{\bU}^\top_{1:t}  \colorcov  \widetilde{\bU}_{1:t} \widetilde{\bSigma}_{1:t}^{-2} \widetilde{\bU}^\top_{1:t} \bD_t \big\|
        \end{split}
    \end{equation*}
    where the above three steps, (i), (ii), and (iii), follow from \cref{lemma:truncation-equality}, \cref{lemma:ABCD-inequality}, and \cref{lemma:Von-Neumann-trace-norm} respectively.  To bound $\bbB_{t1}:= \big\| ( \bI_E- \widetilde{\bU}_{1:t} \widetilde{\bU}_{1:t}^\top) \colorcov ( \bI_E- \widetilde{\bU}_{1:t} \widetilde{\bU}_{1:t}^\top)  \big\|$, we have 
    \begin{equation*}
        \begin{split}
            \bbB_{t1} &=  \left\| (\bI_E -\widetilde{\bU}_{1:t} \widetilde{\bU}_{1:t}^\top )  \Big( \colorcov - 
            \frac{1}{M_t}\widetilde{\bU}_{1:t} \widetilde{\bSigma}_{1:t}^2 \widetilde{\bU}_{1:t}^\top  \Big) (\bI_E -\widetilde{\bU}_{1:t} \widetilde{\bU}_{1:t}^\top ) \right\| \\ 
            &\leq  \left\|  \colorcov - 
            \frac{1}{M_t}\widetilde{\bU}_{1:t} \widetilde{\bSigma}_{1:t}^2 \widetilde{\bU}_{1:t}^\top   \right\| \\ 
            & \leq \left\|  \colorcov -  \frac{1}{M_t} \bH_{1:t} \bH_{1:t}^\top  \right\| + \frac{1}{M_t} \cdot \left\|  \bH_{1:t} \bH_{1:t}^\top -  \widetilde{\bU}_{1:t} \widetilde{\bSigma}_{1:t}^2 \widetilde{\bU}_{1:t}^\top   \right\| \\ 
            &= \left\|  \colorcov -  \frac{1}{M_t} \bH_{1:t} \bH_{1:t}^\top  \right\| + \frac{1}{M_t} \cdot \left\|  \bD_t   \right\| \\ 
            &= \left\|  \colorcov -  \frac{1}{M_t} \bH_{1:t} \bH_{1:t}^\top  \right\| + \frac{a_t}{M_t}. \\ 
        \end{split}
    \end{equation*}
    To bound $\bbB_{t2}:=\big\|  \bD_t \widetilde{\bU}_{1:t} \widetilde{\bSigma}_{1:t}^{-2} \widetilde{\bU}^\top_{1:t}  \colorcov  \widetilde{\bU}_{1:t} \widetilde{\bSigma}_{1:t}^{-2} \widetilde{\bU}^\top_{1:t} \bD_t \big\|$, we have
    \begin{equation*}
        \begin{split}
             \bbB_{t2} &= \big\|  \bD_t \widetilde{\bU}_{1:t} \widetilde{\bSigma}_{1:t}^{-2} \widetilde{\bU}^\top_{1:t}  \colorcov  \widetilde{\bU}_{1:t} \widetilde{\bSigma}_{1:t}^{-2} \widetilde{\bU}^\top_{1:t} \bD_t \big\| \\ 
             &= \left\|  \bD_t \widetilde{\bU}_{1:t} \widetilde{\bSigma}_{1:t}^{-2} \widetilde{\bU}^\top_{1:t}  \Big(\colorcov - \frac{1}{M_t} \bH_{1:t} \bH_{1:t}^\top + \frac{1}{M_t} \bH_{1:t} \bH_{1:t}^\top \Big)  \widetilde{\bU}_{1:t} \widetilde{\bSigma}_{1:t}^{-2} \widetilde{\bU}^\top_{1:t} \bD_t \right\| \\ 
             &\leq \left\|  \colorcov -  \frac{1}{M_t} \bH_{1:t} \bH_{1:t}^\top  \right\| \cdot \big\|\widetilde{\bU}_{1:t} \widetilde{\bSigma}_{1:t}^{-2} \widetilde{\bU}^\top_{1:t} \bD_t \big\|^2 \\ 
             & \quad + \frac{1}{M_t} \left\|  \bD_t \widetilde{\bU}_{1:t} \widetilde{\bSigma}_{1:t}^{-2} \widetilde{\bU}^\top_{1:t}  \bH_{1:t} \bH_{1:t}^\top  \widetilde{\bU}_{1:t} \widetilde{\bSigma}_{1:t}^{-2} \widetilde{\bU}^\top_{1:t} \bD_t \right\| \\ 
             &\leq \left\|  \colorcov -  \frac{1}{M_t} \bH_{1:t} \bH_{1:t}^\top  \right\| \cdot \frac{(t-1)^2}{ \gamma_t^2} + a_{t-1} \cdot \left(  \frac{(t-1)^2}{ \gamma_t^2} + \frac{t-1}{ \gamma_t} \right), 
        \end{split}
    \end{equation*}
    where the last step follows from \cref{lemma:various-bounds2}. 
    Putting together, we have obtained
    \begin{equation*}
        \begin{split}
             &\ \frac{\bbE_{\bh}\left[ \| \colorgt  (\bH_{1:t} \bH_{1:t}^\top \widetilde{\bU}_{1:t} \widetilde{\bSigma}_{1:t}^{-2} \widetilde{\bU}^\top_{1:t} - \bI_E) \bh  \|^2  \right]}{2 \cdot \| \colorgt \|_{\textnormal{F}}^2 }\\ 
            \leq &\  \bbB_{t1} + \bbB_{t2} \\ 
            % &\leq \left\|  \colorcov -  \frac{1}{M_t} \bH_{1:t} \bH_{1:t}^\top  \right\| + \frac{a_t}{M_t} + B_{2,1} + \frac{1}{M_t} B_{2,2} \\ 
            \leq &\ \left\|  \colorcov -  \frac{1}{M_t} \bH_{1:t} \bH_{1:t}^\top  \right\| \cdot \left(1 + \frac{(t-1)^2}{ \gamma_t^2 }\right) + \frac{a_{t-1}}{M_t} \cdot \left(  \frac{(t-1)^2}{ \gamma_t^2} + \frac{t-1}{ \gamma_t} \right) + \frac{a_t}{M_t}=: \bbB_t.
            %\left\|  \colorcov -  \frac{1}{M_t} \bH_{1:t} \bH_{1:t}^\top  \right\| \cdot \left(1 + \frac{(t-1)^2}{M_t \gamma_t^2 }\right) + \frac{a_{t-1}}{M_t} \cdot \left(  \frac{(t-1)^2}{ \gamma_t^2} + \frac{t-1}{ \gamma_t} \right) + \frac{a_t}{M_t}
        \end{split}
    \end{equation*}
    Combining the above finishes the proof.
\end{proof}

\section{Theoretical Guarantees under Gaussian Assumptions}\label{section:theory-Gaussian-assumption}
In this section, we prove slightly tighter results than \cref{theorem:training-loss-deterministic,theorem:test-loss-deterministic} presented in the main paper. The key idea is to make certain Gaussian assumptions on noise. Specifically, we assume both the training noise $\coloreps$ and test noise $\colortesteps$ have i.i.d. $\cN(0, \colorvar)$ entries. With these, we present and prove \cref{theorem:training-loss-Gaussian-noise,theorem:training-loss-Gaussian-noise-2,theorem:test-Gaussian-noise-full-details-V} below.

\begin{theorem}\label{theorem:training-loss-Gaussian-noise}
    On top of the settings of \cref{theorem:training-loss-deterministic}, furthermore assume $\coloreps$ consists of i.i.d. $\cN(0,\colorvar)$ entries. Then the output $\widetilde{\bW}_t= \bY_{1:t} \bH_{1:t}^\top \widetilde{\bU}_{1:t} \widetilde{\bSigma}_{1:t}^{-2} \widetilde{\bU}^\top_{1:t}$ of our method \cref{eq:Wt-output-ITSVD} satisfies
    \begin{align}\label{eq:bound:theorem:training-loss-Gaussian-noise}
        %\begin{split}
            \frac{1}{M_t}\bbE_{\coloreps} \big\| \widetilde{\bW}_{t} \bH_{1:t} - \bY_{1:t} \big\|_{\textnormal{F}}^2 &\leq 2 \cdot \| \colorgt \|_{\textnormal{F}}^2  \left( \frac{a_t}{M_t}  + \frac{ a_{t-1}(t-1)}{ \gamma_t M_t} + \frac{a_{t-1}(t-1)^2}{ \gamma_t^2M_t} \right) \\
        &\quad + c_t\colorvar \left( \frac{(M_t-k_t)}{M_t} + \frac{(t-1) \min\left\{  M_{t-1} - k_{t-1}, (t-1)k_t \right\}}{\gamma_t^2 M_t }  \right). \nonumber
        %\end{split}
    \end{align}
\end{theorem}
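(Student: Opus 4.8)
The plan is to re-run the proof of \cref{theorem:training-loss-deterministic}, but to exploit the independence and zero mean of $\coloreps$ so as to (i) eliminate the bias--variance cross term, which saves a factor of two on the bias contribution, and (ii) replace the crude estimate $\| \coloreps \bP \|_{\textnormal{F}}^2 \le \| \coloreps \|^2 \| \bP \|_{\textnormal{F}}^2$ by an exact identity. Set $\bP := \bH_{1:t}^\top \widetilde{\bU}_{1:t} \widetilde{\bSigma}_{1:t}^{-2} \widetilde{\bU}^\top_{1:t} \bH_{1:t} - \bI_{M_t}$ and note $\bP$ depends only on $\bH_{1:t}$ (through $\bB_t$), not on $\coloreps$. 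Since $\widetilde{\bW}_t = \bY_{1:t}\bH_{1:t}^\top \widetilde{\bU}_{1:t} \widetilde{\bSigma}_{1:t}^{-2} \widetilde{\bU}^\top_{1:t}$, the residual is $\widetilde{\bW}_t \bH_{1:t} - \bY_{1:t} = \bY_{1:t}\bP = (\colorgt \bH_{1:t} + \coloreps)\bP$. Expanding the Frobenius norm and taking expectation over $\coloreps$, the cross term $2\,\bbE_{\coloreps}\trace\big( \bP^\top \bH_{1:t}^\top (\colorgt)^\top \coloreps \bP \big)$ vanishes because $\bbE_{\coloreps}[\coloreps] = 0$, so
\[
\bbE_{\coloreps}\big\| \widetilde{\bW}_{t} \bH_{1:t} - \bY_{1:t} \big\|_{\textnormal{F}}^2 = \big\| \colorgt \bH_{1:t} \bP \big\|_{\textnormal{F}}^2 + \bbE_{\coloreps}\big\| \coloreps \bP \big\|_{\textnormal{F}}^2 .
\]

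For the bias term, I would rewrite $\bH_{1:t}\bP = (\bH_{1:t}\bH_{1:t}^\top \widetilde{\bU}_{1:t} \widetilde{\bSigma}_{1:t}^{-2} \widetilde{\bU}^\top_{1:t} - \bI_E)\bH_{1:t}$ and apply \cref{lemma:various-bounds3} with $\bW = \colorgt$ directly, obtaining $\big\| \colorgt \bH_{1:t} \bP \big\|_{\textnormal{F}}^2 \le 2\,\| \colorgt \|_{\textnormal{F}}^2\big( a_t + a_{t-1}(t-1)/\gamma_t + a_{t-1}(t-1)^2/\gamma_t^2 \big)$, which is exactly half of the corresponding term in \cref{theorem:training-loss-deterministic} since here there is no $\|x+y\|^2\le 2\|x\|^2+2\|y\|^2$ split to pay for. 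For the variance term, I would use $\bbE_{\coloreps}\big\| \coloreps \bP \big\|_{\textnormal{F}}^2 = \bbE_{\coloreps}\trace\big( \bP^\top \coloreps^\top \coloreps \bP \big) = \trace\big( \bP^\top\, \bbE_{\coloreps}[\coloreps^\top\coloreps]\, \bP \big)$, and then the fact that $\coloreps \in \bbR^{c_t\times M_t}$ has i.i.d.\ $\cN(0,\colorvar)$ entries gives $\bbE_{\coloreps}[\coloreps^\top\coloreps] = c_t\colorvar\,\bI_{M_t}$; hence $\bbE_{\coloreps}\big\| \coloreps \bP \big\|_{\textnormal{F}}^2 = c_t\colorvar\,\| \bP \|_{\textnormal{F}}^2$. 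Bounding $\| \bP \|_{\textnormal{F}}^2 \le M_t - k_t + \frac{t-1}{\gamma_t^2}\min\{M_{t-1}-k_{t-1},(t-1)k_t\}$ via \cref{prop:projection-error} and dividing through by $M_t$ then yields \cref{eq:bound:theorem:training-loss-Gaussian-noise}.

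There is no real obstacle: the argument is a sharper rerun of the deterministic one, and all the structural work --- the identity $\bH_{1:t}\bH_{1:t}^\top - \widetilde{\bU}_{1:t}\widetilde{\bSigma}_{1:t}^2\widetilde{\bU}_{1:t}^\top = \bD_t$ from \cref{lemma:truncation-equality} and the trace estimates behind \cref{lemma:various-bounds3} and \cref{prop:projection-error} --- is already available. The only two places that require any thought are the two simplifications noted above: that independence kills the bias--variance cross term, and that $\bbE[\coloreps^\top\coloreps] = c_t\colorvar\bI_{M_t}$ turns the expected noise energy into exactly $c_t\colorvar\|\bP\|_{\textnormal{F}}^2$ rather than the looser product $\|\coloreps\|^2\|\bP\|_{\textnormal{F}}^2$ used deterministically; both are elementary.
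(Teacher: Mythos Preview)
Your proposal is correct and follows essentially the same route as the paper: the paper also uses the zero-mean assumption on $\coloreps$ to drop the cross term exactly (yielding an equality rather than the $\|x+y\|^2\le 2\|x\|^2+2\|y\|^2$ split), bounds the bias term via \cref{lemma:various-bounds3}, and computes $\bbE_{\coloreps}\|\coloreps\bP\|_{\textnormal{F}}^2 = c_t\colorvar\,\|\bP\|_{\textnormal{F}}^2$ before invoking \cref{prop:projection-error}. The only cosmetic difference is that you name the matrix $\bP$ explicitly.
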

\begin{proof}[Proof of \cref{theorem:training-loss-Gaussian-noise}]
    Let $\bI_{M_t}$ be the $M_t\times M_t$ identity matrix. The training loss can be written as 
    \begin{equation*}
        \begin{split}
            &\ \bbE_{\coloreps} \big\| \widetilde{\bW}_{t} \bH_{1:t} - \bY_{1:t} \big\|_{\textnormal{F}}^2 \\ 
            =&\ \bbE_{\coloreps}  \big\| \bY_{1:t} \bH_{1:t}^\top \widetilde{\bU}_{1:t} \widetilde{\bSigma}_{1:t}^{-2} \widetilde{\bU}^\top_{1:t} \bH_{1:t}  - \bY_{1:t} \big\|_{\textnormal{F}}^2 \\
            =&\ \bbE_{\coloreps}  \big\| \bY_{1:t} (  \bH_{1:t}^\top \widetilde{\bU}_{1:t} \widetilde{\bSigma}_{1:t}^{-2} \widetilde{\bU}^\top_{1:t} \bH_{1:t}  - \bI_{M_t}) \big\|_{\textnormal{F}}^2 \\ 
            =&\ \bbE_{\coloreps}  \big\| (\colorgt \bH_{1:t} + \coloreps) (  \bH_{1:t}^\top \widetilde{\bU}_{1:t} \widetilde{\bSigma}_{1:t}^{-2} \widetilde{\bU}^\top_{1:t} \bH_{1:t}  - \bI_{M_t}) \big\|_{\textnormal{F}}^2 \\ 
            =&\ \big\| \colorgt  ( \bH_{1:t} \bH_{1:t}^\top \widetilde{\bU}_{1:t} \widetilde{\bSigma}_{1:t}^{-2} \widetilde{\bU}^\top_{1:t}  - \bI_{M_t}) \bH_{1:t} \big\|_{\textnormal{F}}^2 + \bbE_{\coloreps}  \big\|  \coloreps (\bH_{1:t}^\top \widetilde{\bU}_{1:t} \widetilde{\bSigma}_{1:t}^{-2} \widetilde{\bU}^\top_{1:t} \bH_{1:t}  - \bI_{M_t}) \big\|_{\textnormal{F}}^2. 
        \end{split}
    \end{equation*}
    We can now bound the first term by \cref{lemma:various-bounds3} as follows: 
    \begin{equation*}
        \big\| \colorgt ( \bH_{1:t} \bH_{1:t}^\top \widetilde{\bU}_{1:t} \widetilde{\bSigma}_{1:t}^{-2} \widetilde{\bU}^\top_{1:t}  - \bI_{M_t}) \bH_{1:t} \big\|_{\textnormal{F}}^2 \leq 2 \cdot \| \colorgt \|_{\textnormal{F}}^2  \left( a_t  + \frac{ a_{t-1}(t-1)}{ \gamma_t } + \frac{a_{t-1}(t-1)^2}{ \gamma_t^2} \right).
    \end{equation*} 
    The second term can be bounded above as follows:
    \begin{equation*}
        \begin{split}
            &\ \bbE_{\cE}  \big\|  \coloreps (\bH_{1:t}^\top \widetilde{\bU}_{1:t} \widetilde{\bSigma}_{1:t}^{-2} \widetilde{\bU}^\top_{1:t} \bH_{1:t}  - \bI_{M_t}) \big\|_{\textnormal{F}}^2 \\ 
            =&\ \bbE_{\cE}  \trace \left( (\bH_{1:t}^\top \widetilde{\bU}_{1:t} \widetilde{\bSigma}_{1:t}^{-2} \widetilde{\bU}^\top_{1:t} \bH_{1:t}  - \bI_{M_t}) \coloreps^\top \coloreps (\bH_{1:t}^\top \widetilde{\bU}_{1:t} \widetilde{\bSigma}_{1:t}^{-2} \widetilde{\bU}^\top_{1:t} \bH_{1:t}  - \bI_{M_t}) \right) \\ 
            =&\ c_t\colorvar \cdot \left\| \bH_{1:t}^\top  \widetilde{\bU}_{1:t} \widetilde{\bSigma}_{1:t}^{-2} \widetilde{\bU}^\top_{1:t} \bH_{1:t} -  \bI_{M_t} \right\|_{\textnormal{F}}^2 \\ 
            \leq&\ c_t\colorvar \cdot (M_t-k_t) + c_t\colorvar \cdot \frac{t-1}{\gamma_t^2} \min\left\{  M_{t-1} - k_{t-1}, (t-1)k_t \right\}.
        \end{split}
    \end{equation*}
    The last inequality follows from \cref{prop:projection-error}. Combining the above finishes the proof.
\end{proof}
While in \cref{theorem:training-loss-Gaussian-noise} bounds the average training MSE loss $\frac{1}{M_t}\bbE_{\coloreps} \big\| \widetilde{\bW}_{t} \bH_{1:t} - \bY_{1:t} \big\|_{\textnormal{F}}^2$, an alternative is to give a bound on $\frac{1}{M_t}\bbE_{\coloreps} \big\| \widetilde{\bW}_{t} \bH_{1:t} - \colorgt  \bH_{1:t}\big\|_{\textnormal{F}}^2$. The latter term evaluates the difference between the prediction of $\widetilde{\bW}_{t}$ and the ground-truth $\colorgt$ on training data $\bH_{1:t}$. The difference between the two terms is that $\bY_{1:t}= \colorgt  \bH_{1:t} + \coloreps$ is contaminated by noise. We bound $\frac{1}{M_t}\bbE_{\coloreps} \big\| \widetilde{\bW}_{t} \bH_{1:t} - \colorgt  \bH_{1:t}\big\|_{\textnormal{F}}^2$ in the next result. 
\begin{theorem}\label{theorem:training-loss-Gaussian-noise-2}
    On top of the settings of \cref{theorem:training-loss-deterministic}, furthermore assume $\coloreps$ consists of i.i.d. $\cN(0,\colorvar)$ entries. Then the output $\widetilde{\bW}_t= \bY_{1:t} \bH_{1:t}^\top \widetilde{\bU}_{1:t} \widetilde{\bSigma}_{1:t}^{-2} \widetilde{\bU}^\top_{1:t}$ of our method \cref{eq:Wt-output-ITSVD} satisfies
    \begin{equation*}
        \begin{split}
            \frac{1}{M_t}\bbE_{\coloreps} \big\| \widetilde{\bW}_{t} \bH_{1:t} - \colorgt \bH_{1:t} \big\|_{\textnormal{F}}^2 &\leq 2 \cdot \| \colorgt \|_{\textnormal{F}}^2  \left( \frac{a_t}{M_t}  + \frac{ a_{t-1}(t-1)}{ \gamma_t M_t} + \frac{a_{t-1}(t-1)^2}{ \gamma_t^2M_t} \right)  \\
        + c_t\colorvar \cdot  &\left( \frac{k_t}{M_t} + \left( \frac{t-1}{\gamma_t^2M_t} + \frac{2}{\gamma_t M_t}\right) \min\left\{  M_{t-1} - k_{t-1}, (t-1)k_t \right\}\right).
        \end{split}
    \end{equation*}
\end{theorem}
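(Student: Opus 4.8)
The plan is to follow the template of the proof of \cref{theorem:training-loss-Gaussian-noise} almost verbatim, the only change being that the target is now the clean signal $\colorgt \bH_{1:t}$ rather than the noisy labels $\bY_{1:t}$, which alters how the noise $\coloreps$ enters the final bound.

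First I would substitute $\bY_{1:t}=\colorgt \bH_{1:t}+\coloreps$ into $\widetilde{\bW}_t=\bY_{1:t}\bH_{1:t}^\top \widetilde{\bU}_{1:t}\widetilde{\bSigma}_{1:t}^{-2}\widetilde{\bU}^\top_{1:t}$ and expand:
\begin{equation*}
    \widetilde{\bW}_t \bH_{1:t}-\colorgt \bH_{1:t}
    = \colorgt\big(\bH_{1:t}\bH_{1:t}^\top \widetilde{\bU}_{1:t}\widetilde{\bSigma}_{1:t}^{-2}\widetilde{\bU}^\top_{1:t}-\bI_{M_t}\big)\bH_{1:t}
    + \coloreps\, \bH_{1:t}^\top \widetilde{\bU}_{1:t}\widetilde{\bSigma}_{1:t}^{-2}\widetilde{\bU}^\top_{1:t}\bH_{1:t}.
\end{equation*}
Since $\coloreps$ has i.i.d.\ mean-zero entries, applying $\bbE_{\coloreps}\|\cdot\|_{\textnormal{F}}^2$ annihilates the cross term, leaving a deterministic squared norm plus a pure-noise squared norm.

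For the deterministic term $\big\|\colorgt(\bH_{1:t}\bH_{1:t}^\top \widetilde{\bU}_{1:t}\widetilde{\bSigma}_{1:t}^{-2}\widetilde{\bU}^\top_{1:t}-\bI_{M_t})\bH_{1:t}\big\|_{\textnormal{F}}^2$, I would invoke \cref{lemma:various-bounds3} with $\bW=\colorgt$, which gives the bound $2\|\colorgt\|_{\textnormal{F}}^2\big(a_t+a_{t-1}(t-1)/\gamma_t+a_{t-1}(t-1)^2/\gamma_t^2\big)$. For the noise term, I would use $\bbE_{\coloreps}[\coloreps^\top\coloreps]=c_t\colorvar\,\bI_{M_t}$ together with the trace identity $\bbE_{\coloreps}\|\coloreps\bM\|_{\textnormal{F}}^2=c_t\colorvar\,\|\bM\|_{\textnormal{F}}^2$ to reduce it to $c_t\colorvar\,\big\|\bH_{1:t}^\top \widetilde{\bU}_{1:t}\widetilde{\bSigma}_{1:t}^{-2}\widetilde{\bU}^\top_{1:t}\bH_{1:t}\big\|_{\textnormal{F}}^2$, which is exactly the fourth inequality of \cref{lemma:various-bounds2}, i.e.\ at most $\big(\tfrac{t-1}{\gamma_t^2}+\tfrac{2}{\gamma_t}\big)\min\{M_{t-1}-k_{t-1},(t-1)k_t\}+k_t$. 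Summing the two pieces and dividing by $M_t$ yields precisely the claimed inequality.

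There is no genuine obstacle: the heavy lifting has already been done in \cref{lemma:truncation-equality,lemma:various-bounds,lemma:various-bounds2,lemma:various-bounds3}. The one subtlety worth flagging is that, unlike in \cref{theorem:training-loss-Gaussian-noise} where the noise contributes $\big\|\bH_{1:t}^\top\widetilde{\bU}_{1:t}\widetilde{\bSigma}_{1:t}^{-2}\widetilde{\bU}^\top_{1:t}\bH_{1:t}-\bI_{M_t}\big\|_{\textnormal{F}}^2$ and hence an $(M_t-k_t)$ term via the projection-error estimate, here the $-\bI_{M_t}$ is absent because we compare against $\colorgt\bH_{1:t}$ rather than $\bY_{1:t}$; this is exactly why the current bound carries a $k_t$ term in place of $M_t-k_t$, and why no projection-error proposition is needed. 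Care is also needed to match the $\coloreps$ entrywise-variance bookkeeping ($c_t\colorvar$, coming from the $c_t$ rows of $\coloreps$) against the $\| \coloreps \|^2$ appearing in the deterministic \cref{theorem:training-loss-deterministic}.
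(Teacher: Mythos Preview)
Your proposal is correct and follows essentially the same approach as the paper's own proof: expand $\widetilde{\bW}_t\bH_{1:t}-\colorgt\bH_{1:t}$, use the zero-mean of $\coloreps$ to drop the cross term, bound the deterministic piece via \cref{lemma:various-bounds3}, and bound the noise piece using $\bbE_{\coloreps}[\coloreps^\top\coloreps]=c_t\colorvar\,\bI_{M_t}$ together with the fourth inequality of \cref{lemma:various-bounds2}. Your closing remark about why the $k_t$ term replaces the $M_t-k_t$ term of \cref{theorem:training-loss-Gaussian-noise} is exactly the right observation.
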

\begin{proof}[Proof of \cref{theorem:training-loss-Gaussian-noise-2}] 
    We have
    \begin{equation*}
        \begin{split}
            &\ \bbE_{\coloreps} \big\| \widetilde{\bW}_{t} \bH_{1:t} - \colorgt \bH_{1:t} \big\|_{\textnormal{F}}^2 \\ 
            =&\ \bbE_{\coloreps}  \big\| \bY_{1:t} \bH_{1:t}^\top \widetilde{\bU}_{1:t} \widetilde{\bSigma}_{1:t}^{-2} \widetilde{\bU}^\top_{1:t} \bH_{1:t}  - \colorgt \bH_{1:t}\big\|_{\textnormal{F}}^2 \\ 
            =&\ \bbE_{\coloreps}  \big\| \colorgt \bH_{1:t} (  \bH_{1:t}^\top \widetilde{\bU}_{1:t} \widetilde{\bSigma}_{1:t}^{-2} \widetilde{\bU}^\top_{1:t} \bH_{1:t}  - \bI_{M_t}) +  \coloreps  \bH_{1:t}^\top \widetilde{\bU}_{1:t} \widetilde{\bSigma}_{1:t}^{-2} \widetilde{\bU}^\top_{1:t} \bH_{1:t} \big\|_{\textnormal{F}}^2 \\ 
            =&\ \big\| \colorgt  ( \bH_{1:t} \bH_{1:t}^\top \widetilde{\bU}_{1:t} \widetilde{\bSigma}_{1:t}^{-2} \widetilde{\bU}^\top_{1:t}  - \bI_{M_t}) \bH_{1:t} \big\|_{\textnormal{F}}^2 + \bbE_{\coloreps}  \big\|  \coloreps \bH_{1:t}^\top \widetilde{\bU}_{1:t} \widetilde{\bSigma}_{1:t}^{-2} \widetilde{\bU}^\top_{1:t} \bH_{1:t}   \big\|_{\textnormal{F}}^2. 
        \end{split}
    \end{equation*}
    The first term is identical to that of \cref{theorem:training-loss-Gaussian-noise}, and it remains to bound the second term:
    \begin{equation*}
        \begin{split}
            &\ \bbE_{\coloreps}  \big\|  \coloreps \bH_{1:t}^\top \widetilde{\bU}_{1:t} \widetilde{\bSigma}_{1:t}^{-2} \widetilde{\bU}^\top_{1:t} \bH_{1:t}   \big\|_{\textnormal{F}}^2 \\ 
            =&\ \bbE_{\cE}  \trace \left( \bH_{1:t}^\top \widetilde{\bU}_{1:t} \widetilde{\bSigma}_{1:t}^{-2} \widetilde{\bU}^\top_{1:t} \bH_{1:t}  \coloreps^\top \coloreps \bH_{1:t}^\top \widetilde{\bU}_{1:t} \widetilde{\bSigma}_{1:t}^{-2} \widetilde{\bU}^\top_{1:t} \bH_{1:t}  \right) \\ 
            =&\ c_t\colorvar \cdot \left\| \bH_{1:t}^\top  \widetilde{\bU}_{1:t} \widetilde{\bSigma}_{1:t}^{-2} \widetilde{\bU}^\top_{1:t} \bH_{1:t} \right\|_{\textnormal{F}}^2 \\ 
            \leq&\ c_t\colorvar \cdot  \left( \frac{t-1}{\gamma_t^2} + \frac{2}{\gamma_t}\right) \min\left\{  M_{t-1} - k_{t-1}, (t-1)k_t \right\} + c_t\colorvar \cdot  k_t.
        \end{split}
    \end{equation*}
    The last inequality follows from \cref{lemma:various-bounds2}.
\end{proof}

\begin{theorem}\label{theorem:test-Gaussian-noise-full-details-V}
    On top of the settings of \cref{theorem:test-loss-deterministic}, furthermore assume both $\coloreps$ and $\colortesteps$ consists of i.i.d. $\cN(0,\colorvar)$ entries. Then the output $\widetilde{\bW}_t= \bY_{1:t} \bH_{1:t}^\top \widetilde{\bU}_{1:t} \widetilde{\bSigma}_{1:t}^{-2} \widetilde{\bU}^\top_{1:t}$ of our method \cref{eq:Wt-output-ITSVD} satisfies
    \begin{equation}\label{eq:test-loss-bv-Gaussian-noise}
        \bbE_{\coloreps, \bh,\colortesteps} \big\| \widetilde{\bW}_t\bh - \by \big\|^2    \leq 2\cdot \| \colorgt \|_{\textnormal{F}}^2 \cdot \bbB_t  +  c_t \colorvar \cdot \bbV_t  + c_t \colorvar.
    \end{equation} 
    where $\bbB_t$ and $\bbV_t$ are defined in \cref{eq:test-loss-bv-deterministic-BV-bound} and also shown below:
    \begin{equation*}
         \begin{split}
             \bbB_t & =\left\|  \colorcov -  \frac{1}{M_t} \bH_{1:t} \bH_{1:t}^\top  \right\|  \left(1 + \frac{(t-1)^2}{ \gamma_t^2 }\right)  + \left( \frac{a_t}{M_t} +  \frac{a_{t-1} (t-1)}{ \gamma_t M_t} + \frac{a_{t-1} (t-1)^2}{ \gamma_t^2 M_t }  \right) \\ 
             \bbV_t &= \left\| \colorcov - \frac{1}{M_t} \bH_{1:t} \bH_{1:t}^\top \right\| \cdot \frac{\left(   \frac{1}{\gamma_t} \min\left\{  M_{t-1} - k_{t-1}, (t-1)k_t \right\}  + k_t \right)}{ \mu_{k_t} ( \bB_t \bB_t^\top \big) } \\ 
             &\quad + \frac{k_t}{M_t} + \left(\frac{t-1}{\gamma_t^2 M_t} + \frac{2}{\gamma_tM_t}\right)\cdot \min\left\{  M_{t-1} - k_{t-1}, (t-1)k_t \right\}.
         \end{split}
    \end{equation*}
\end{theorem}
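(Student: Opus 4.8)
The plan is to re-run the proof of \cref{theorem:test-loss-deterministic} essentially line by line, exploiting the two additional expectations (over the training noise $\coloreps$ and the test noise $\colortesteps$) to replace the crude splittings $\|u+v\|^2\le 2\|u\|^2+2\|v\|^2$ used there by \emph{exact} orthogonal decompositions, and to replace the factor $\|\coloreps\|^2$ by the average value $c_t\colorvar$. This is precisely what upgrades the prefactors $(4,4,2)$ of \cref{theorem:test-loss-deterministic} to the $(2,c_t\colorvar,c_t\colorvar)$ of \cref{eq:test-loss-bv-Gaussian-noise}.

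Concretely, I would proceed in three steps. \textbf{First}, peel off the test noise: writing $\by=\colorgt\bh+\colortesteps$ and using that $\colortesteps$ is zero-mean and independent of $\bh$, of $\coloreps$, and (being a function of the deterministic $\bH_{1:t}$) of $\widetilde{\bW}_t$, the cross term vanishes, so
\[
   \bbE_{\coloreps,\bh,\colortesteps}\big\|\widetilde{\bW}_t\bh-\by\big\|^2
   = \bbE_{\coloreps,\bh}\big\|\widetilde{\bW}_t\bh-\colorgt\bh\big\|^2 + \bbE_{\colortesteps}\big\|\colortesteps\big\|^2 ,
\]
with the last term equal to exactly $c_t\colorvar$ since $\colortesteps\in\bbR^{c_t}$ has i.i.d. $\cN(0,\colorvar)$ entries. \textbf{Second}, substitute $\widetilde{\bW}_t=\bY_{1:t}\bH_{1:t}^\top\widetilde{\bU}_{1:t}\widetilde{\bSigma}_{1:t}^{-2}\widetilde{\bU}^\top_{1:t}$ and $\bY_{1:t}=\colorgt\bH_{1:t}+\coloreps$, so that
\[
   \widetilde{\bW}_t\bh-\colorgt\bh
   = \colorgt\big(\bH_{1:t}\bH_{1:t}^\top\widetilde{\bU}_{1:t}\widetilde{\bSigma}_{1:t}^{-2}\widetilde{\bU}^\top_{1:t}-\bI_E\big)\bh
   + \coloreps\,\bH_{1:t}^\top\widetilde{\bU}_{1:t}\widetilde{\bSigma}_{1:t}^{-2}\widetilde{\bU}^\top_{1:t}\bh ;
\]
again $\coloreps$ is zero-mean and independent of the matrices multiplying it, so the cross term in $\coloreps$ vanishes and $\bbE_{\coloreps,\bh}\|\widetilde{\bW}_t\bh-\colorgt\bh\|^2$ splits \emph{without} a factor of $2$ into a bias term $\bbE_{\bh}\|\colorgt(\bH_{1:t}\bH_{1:t}^\top\widetilde{\bU}_{1:t}\widetilde{\bSigma}_{1:t}^{-2}\widetilde{\bU}^\top_{1:t}-\bI_E)\bh\|^2$ and a variance term $\bbE_{\coloreps,\bh}\|\coloreps\bH_{1:t}^\top\widetilde{\bU}_{1:t}\widetilde{\bSigma}_{1:t}^{-2}\widetilde{\bU}^\top_{1:t}\bh\|^2$. \textbf{Third}, bound each piece: the bias term is $\le 2\|\colorgt\|_{\textnormal{F}}^2\bbB_t$ by quoting verbatim the corresponding estimate inside the proof of \cref{theorem:test-loss-deterministic} (which invokes \cref{lemma:truncation-equality}, \cref{lemma:ABCD-inequality}, \cref{lemma:Von-Neumann-trace-norm} and \cref{lemma:various-bounds2}; the residual factor $2$ is exactly the one introduced by \cref{lemma:ABCD-inequality}); and, writing $\bM:=\bH_{1:t}^\top\widetilde{\bU}_{1:t}\widetilde{\bSigma}_{1:t}^{-2}\widetilde{\bU}^\top_{1:t}$ and $\|\coloreps\bM\bh\|^2=\trace(\coloreps\bM\bh\bh^\top\bM^\top\coloreps^\top)$, the elementary identity $\bbE[\coloreps^\top\coloreps]=c_t\colorvar\bI_{M_t}$ gives $\bbE_{\coloreps}\|\coloreps\bM\bh\|^2=c_t\colorvar\|\bM\bh\|^2$, whence the variance term equals $c_t\colorvar\,\bbE_{\bh}\|\bM\bh\|^2\le c_t\colorvar\,\bbV_t$, the last inequality being exactly the bound on $\bbE_{\bh}\|\bH_{1:t}^\top\widetilde{\bU}_{1:t}\widetilde{\bSigma}_{1:t}^{-2}\widetilde{\bU}^\top_{1:t}\bh\|^2$ already established in the proof of \cref{theorem:test-loss-deterministic}. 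Summing the three contributions yields $2\|\colorgt\|_{\textnormal{F}}^2\bbB_t+c_t\colorvar\bbV_t+c_t\colorvar$.

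I do not anticipate a real obstacle: the entire spectral bookkeeping (the auxiliary lemmas of \cref{section:aux-lemma} and the estimates of $\bbB_t$ and $\bbV_t$) is re-used unchanged, and the only genuinely new ingredients are the identities $\bbE[\coloreps^\top\coloreps]=c_t\colorvar\bI_{M_t}$ and $\bbE_{\colortesteps}\|\colortesteps\|^2=c_t\colorvar$ together with the vanishing of the two cross terms. The one point that needs a little care is purely organizational: the expectations over $\coloreps$ and $\colortesteps$ must be taken at the level of the decompositions above, so that the cross terms genuinely disappear, rather than after the deterministic inequalities have been applied; done in that order, the argument of \cref{theorem:test-loss-deterministic} transfers directly and produces the sharper constants of \cref{theorem:test-Gaussian-noise-full-details-V}.
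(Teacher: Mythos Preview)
Your proposal is correct and matches the paper's own proof essentially step for step: both peel off $\colortesteps$ exactly (yielding $c_t\colorvar$), split the remaining error into bias and variance without a factor of $2$ by exploiting the zero mean and independence of $\coloreps$, reduce the variance term via $\bbE[\coloreps^\top\coloreps]=c_t\colorvar\bI_{M_t}$, and then quote the bounds on the bias and on $\bbE_{\bh}\|\bH_{1:t}^\top\widetilde{\bU}_{1:t}\widetilde{\bSigma}_{1:t}^{-2}\widetilde{\bU}^\top_{1:t}\bh\|^2$ already established inside the proof of \cref{theorem:test-loss-deterministic}. The paper literally ends its proof with ``The rest of the proof is identical to that of \cref{theorem:test-loss-deterministic},'' which is exactly your plan.
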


\begin{proof}[Proof of \cref{theorem:test-Gaussian-noise-full-details-V}]
    Recall the definition of $\bD_t$ in \cref{eq:def-Dt}. Note that $\bD_t$ is a symmetric and positive semi-definite matrix. 

    Note that for any $\bW\in \bbR^{c_t \times E}$ we have
    \begin{equation*}
        \begin{split}
            \bbE_{\coloreps,\bh,\colortesteps}\left[ \| \bW\bh - \by \|^2  \right] &= \bbE_{\coloreps,\bh,\colortesteps}\left[ \| \bW\bh - \colorgt\bh - \colortesteps \|^2  \right]  \\
            &= \bbE_{\coloreps,\bh}\left[ \| \bW\bh - \colorgt\bh \|^2  \right] + c_t\colorvar.
        \end{split}
    \end{equation*}
    Denote by $\bI_E$ the $E\times E$ identity matrix. With $\widetilde{\bW}_t= \bY_{1:t} \bH_{1:t}^\top \widetilde{\bU}_{1:t} \widetilde{\bSigma}_{1:t}^{-2} \widetilde{\bU}^\top_{1:t}$ and $\bY_{1:t} = \colorgt \bH_{1:t} + \coloreps$ we obtain
    \begin{equation*}
        % \small
        \begin{split}
            &\ \bbE_{\coloreps,\bh}\left[ \| \widetilde{\bW}_t\bh - \colorgt\bh \|^2  \right]  \\ 
            =&\ \bbE_{\coloreps,\bh}\left[ \| \bY_{1:t} \bH_{1:t}^\top \widetilde{\bU}_{1:t} \widetilde{\bSigma}_{1:t}^{-2} \widetilde{\bU}^\top_{1:t} \bh - \colorgt\bh \|^2  \right] \\
            =&\ \bbE_{\coloreps,\bh}\left[ \| (\colorgt \bH_{1:t} + \coloreps) \bH_{1:t}^\top \widetilde{\bU}_{1:t} \widetilde{\bSigma}_{1:t}^{-2} \widetilde{\bU}^\top_{1:t} \bh - \colorgt\bh \|^2  \right] \\ 
            =&\ \bbE_{\coloreps,\bh}\left[ \| (\colorgt \bH_{1:t} + \coloreps) \bH_{1:t}^\top \widetilde{\bU}_{1:t} \widetilde{\bSigma}_{1:t}^{-2} \widetilde{\bU}^\top_{1:t} \bh - \colorgt\bh \|^2  \right] \\
            =&\ \bbE_{\bh}\left[ \| \colorgt  \bH_{1:t} \bH_{1:t}^\top \widetilde{\bU}_{1:t} \widetilde{\bSigma}_{1:t}^{-2} \widetilde{\bU}^\top_{1:t} \bh - \colorgt\bh \|^2  \right] +  \bbE_{\coloreps,\bh}\left[ \|  \coloreps \bH_{1:t}^\top \widetilde{\bU}_{1:t} \widetilde{\bSigma}_{1:t}^{-2} \widetilde{\bU}^\top_{1:t} \bh  \|^2  \right] \\ 
            =&\ \bbE_{\bh}\left[ \| \colorgt  (\bH_{1:t} \bH_{1:t}^\top \widetilde{\bU}_{1:t} \widetilde{\bSigma}_{1:t}^{-2} \widetilde{\bU}^\top_{1:t} - \bI_E) \bh  \|^2  \right] +  c_t\colorvar \bbE_{\bh}\left[ \|  \bH_{1:t}^\top \widetilde{\bU}_{1:t} \widetilde{\bSigma}_{1:t}^{-2} \widetilde{\bU}^\top_{1:t} \bh  \|^2  \right]
        \end{split}
    \end{equation*}
    The rest of the proof is identical to that of \cref{theorem:test-loss-deterministic}.
\end{proof}

\section{Additional Theoretical Results}\label{subsection:extra-theory}
Given the weight $\widetilde{\bW}_t:= \bY_{1:t} \bH_{1:t}^\top \widetilde{\bU}_{1:t} \widetilde{\bSigma}_{1:t}^{-2} \widetilde{\bU}^\top_{1:t}$ computed by our continual implementation in \cref{section:ITSVD-implementation}, here we aim to derive upper bounds on the training MSE losses $\frac{1}{M_t}\big\| \widetilde{\bW}_{t} \bH_{1:t} - \bY_{1:t} \big\|_{\textnormal{F}}^2$ without the linear model assumption $\bY_{1:t}=\colorgt \bH_{1:t} + \coloreps$ as used in the main paper.

First observe that 
\begin{equation*}
    \big\| \widetilde{\bW}_{t} \bH_{1:t} - \bY_{1:t} \big\|_{\textnormal{F}}^2 % = \big\| \bY_{1:t} \bH_{1:t}^\top \widetilde{\bU}_{1:t} \widetilde{\bSigma}_{1:t}^{-2} \widetilde{\bU}^\top_{1:t} \bH_{1:t}  - \bY_{1:t} \big\|_{\textnormal{F}}^2 
    = \big\| \bY_{1:t} (  \bH_{1:t}^\top \widetilde{\bU}_{1:t} \widetilde{\bSigma}_{1:t}^{-2} \widetilde{\bU}^\top_{1:t} \bH_{1:t}  - \bI_{M_t}) \big\|_{\textnormal{F}}^2,
\end{equation*}
where we recall $\bI_{M_t}$ is the $M_t\times M_t$ identity matrix. This motivates us to give a bound on $\|  (  \bH_{1:t}^\top \widetilde{\bU}_{1:t} \widetilde{\bSigma}_{1:t}^{-2} \widetilde{\bU}^\top_{1:t} \bH_{1:t}  - \bI_{M_t}) \|_{\textnormal{F}}^2$:
\begin{prop}\label{prop:projection-error}
    It holds for every $t\geq 1$ that ($M_0:=0,k_0:=0$)
    \begin{equation*}
        \left\| \bH_{1:t}^\top  \widetilde{\bU}_{1:t} \widetilde{\bSigma}_{1:t}^{-2} \widetilde{\bU}^\top_{1:t} \bH_{1:t} -  \bI_{M_t} \right\|_{\textnormal{F}}^2 \leq M_t - k_t + \frac{t-1}{\gamma_t^2} \min\left\{  M_{t-1} - k_{t-1}, (t-1)k_t \right\}. 
    \end{equation*}
\end{prop}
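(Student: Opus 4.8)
The plan is to compute $\big\| \bH_{1:t}^\top \widetilde{\bU}_{1:t} \widetilde{\bSigma}_{1:t}^{-2} \widetilde{\bU}^\top_{1:t} \bH_{1:t} - \bI_{M_t} \big\|_{\textnormal{F}}^2$ \emph{exactly}, so that the only surviving term is one already controlled by \cref{lemma:various-bounds2}. Write $\bM_t := \bH_{1:t}^\top \widetilde{\bU}_{1:t} \widetilde{\bSigma}_{1:t}^{-2} \widetilde{\bU}^\top_{1:t} \bH_{1:t}$; since $\bM_t$ is symmetric, $\big\| \bM_t - \bI_{M_t} \big\|_{\textnormal{F}}^2 = \trace(\bM_t^2) - 2\trace(\bM_t) + M_t$. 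Using cyclicity of the trace, $\trace(\bM_t) = \trace\big(\widetilde{\bU}_{1:t} \widetilde{\bSigma}_{1:t}^{-2} \widetilde{\bU}^\top_{1:t}\,\bH_{1:t}\bH_{1:t}^\top\big)$ and $\trace(\bM_t^2) = \trace\big((\widetilde{\bU}_{1:t} \widetilde{\bSigma}_{1:t}^{-2} \widetilde{\bU}^\top_{1:t}\,\bH_{1:t}\bH_{1:t}^\top)^2\big)$, so everything reduces to understanding the matrix $\widetilde{\bU}_{1:t} \widetilde{\bSigma}_{1:t}^{-2} \widetilde{\bU}^\top_{1:t}\,\bH_{1:t}\bH_{1:t}^\top$.

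The crucial ingredient is the exact identity $\bH_{1:t}\bH_{1:t}^\top = \widetilde{\bU}_{1:t} \widetilde{\bSigma}_{1:t}^{2} \widetilde{\bU}^\top_{1:t} + \bD_t$ from \cref{lemma:truncation-equality}, where $\bD_t := \sum_{i=1}^t \big(\bB_i \bB_i^\top - \tau_{k_i}(\bB_i \bB_i^\top)\big)$ is positive semidefinite. Because $\widetilde{\bU}_{1:t}^\top\widetilde{\bU}_{1:t} = \bI_{k_t}$ and $\widetilde{\bSigma}_{1:t}$ is diagonal and invertible, this yields $\widetilde{\bU}_{1:t} \widetilde{\bSigma}_{1:t}^{-2} \widetilde{\bU}^\top_{1:t}\,\bH_{1:t}\bH_{1:t}^\top = \widetilde{\bU}_{1:t}\widetilde{\bU}^\top_{1:t} + \widetilde{\bU}_{1:t} \widetilde{\bSigma}_{1:t}^{-2} \widetilde{\bU}^\top_{1:t}\bD_t$. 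Introducing the abbreviations $\alpha := \trace\big(\bD_t\widetilde{\bU}_{1:t} \widetilde{\bSigma}_{1:t}^{-2} \widetilde{\bU}^\top_{1:t}\big) \geq 0$ and $\beta := \trace\big(\bD_t\widetilde{\bU}_{1:t} \widetilde{\bSigma}_{1:t}^{-2} \widetilde{\bU}^\top_{1:t}\bD_t\widetilde{\bU}_{1:t} \widetilde{\bSigma}_{1:t}^{-2} \widetilde{\bU}^\top_{1:t}\big) \geq 0$, I would expand the square and repeatedly apply cyclicity together with $\trace(\widetilde{\bU}_{1:t}\widetilde{\bU}^\top_{1:t}) = k_t$: the two diagonal blocks contribute $k_t$ and $\beta$, and both cross terms collapse to $\trace\big(\widetilde{\bU}_{1:t} \widetilde{\bSigma}_{1:t}^{-2} \widetilde{\bU}^\top_{1:t}\bD_t\big) = \alpha$ (using $\widetilde{\bU}_{1:t}^\top\widetilde{\bU}_{1:t} = \bI_{k_t}$ once more). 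This gives $\trace(\bM_t) = k_t + \alpha$ and $\trace(\bM_t^2) = k_t + 2\alpha + \beta$, hence $\big\| \bM_t - \bI_{M_t} \big\|_{\textnormal{F}}^2 = (k_t + 2\alpha + \beta) - 2(k_t + \alpha) + M_t = M_t - k_t + \beta$.

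It then remains only to bound $\beta$, which is exactly the quantity estimated in the first inequality of \cref{lemma:various-bounds2}: $\beta \leq \frac{t-1}{\gamma_t^2}\min\{M_{t-1}-k_{t-1},(t-1)k_t\}$; combined with the identity above this is precisely the asserted bound, and for $t=1$ the extra term vanishes, consistent with the conventions $M_0 = k_0 = 0$ and $\gamma_1 = 1$. I do not expect a genuine obstacle here, but the one point that must not be overlooked is that the argument needs the \emph{exact} decomposition $\bH_{1:t}\bH_{1:t}^\top = \widetilde{\bU}_{1:t} \widetilde{\bSigma}_{1:t}^{2} \widetilde{\bU}^\top_{1:t} + \bD_t$ rather than a mere perturbation bound on $\bH_{1:t}\bH_{1:t}^\top - \widetilde{\bU}_{1:t} \widetilde{\bSigma}_{1:t}^{2} \widetilde{\bU}^\top_{1:t}$: it is this exactness that makes the $\alpha$-terms (the two cross terms and the $-2\trace(\bM_t)$ contribution) cancel identically, so that a crude bound on $\alpha$ — such as the one used elsewhere in \cref{lemma:various-bounds2} to control $\trace(\bM_t^2)$ in isolation — would produce the strictly weaker coefficient $\frac{t-1}{\gamma_t^2}+\frac{2}{\gamma_t}$ in place of $\frac{t-1}{\gamma_t^2}$.
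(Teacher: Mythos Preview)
Your proposal is correct and follows essentially the same route as the paper's proof: both use the exact identity $\bH_{1:t}\bH_{1:t}^\top = \widetilde{\bU}_{1:t}\widetilde{\bSigma}_{1:t}^{2}\widetilde{\bU}_{1:t}^\top + \bD_t$ from \cref{lemma:truncation-equality} to reduce the Frobenius norm to the exact equality $M_t - k_t + \trace\big(\bD_t\widetilde{\bU}_{1:t}\widetilde{\bSigma}_{1:t}^{-2}\widetilde{\bU}_{1:t}^\top\bD_t\widetilde{\bU}_{1:t}\widetilde{\bSigma}_{1:t}^{-2}\widetilde{\bU}_{1:t}^\top\big)$, and then invoke the first inequality of \cref{lemma:various-bounds2}. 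Your observation that the $\alpha$-terms cancel \emph{identically} (so that only $\beta$ survives, giving the coefficient $\tfrac{t-1}{\gamma_t^2}$ rather than $\tfrac{t-1}{\gamma_t^2}+\tfrac{2}{\gamma_t}$) is exactly the mechanism the paper exploits.
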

\begin{remark}
    The term $M_t - k_t$ is inevitable as we truncate $M_t-k_t$ eigenvalues. Indeed,  $M_t - k_t$ is precisely equal to $\| \bH_{1:t}^\top  \overline{\bU}_{1:t} \overline{\bSigma}_{1:t}^{-2} \overline{\bU}^\top_{1:t} \bH_{1:t} -  \bI_{M_t} \|_{\textnormal{F}}^2$, and it is the minimum of a rank-$k_t$ approximation problem:
    \begin{equation*}
        M_t - k_t = \min_{\bL\in \bbR^{M_t\times k_t}  } \| \bL \bL^\top - \bI_{M_t} \|_{\textnormal{F}}^2.
    \end{equation*}
    The term $\frac{t-1}{\gamma_t^2} \min\left\{  M_{t-1} - k_{t-1}, (t-1)k_t \right\}$ arises as we solve \ref{eq:LoRanPAC} continually rather than offline. With $\gamma_t=1$, this term is upper bounded by $(t-1)(M_{t-1} - k_{t-1})$. With $\gamma_t= 10^{10}$ (as discussed in the main paper), this term is negligible for even hundreds of tasks. % In this case, $\widetilde{\bU}_{1:t} \widetilde{\bSigma}_{1:t}^{-2} \widetilde{\bU}^\top_{1:t}$ behaves similarly as $\overline{\bU}_{1:t} \overline{\bSigma}_{1:t}^{-2} \overline{\bU}^\top_{1:t}$.
\end{remark} 

\begin{proof}[Proof of \cref{prop:projection-error}]
    From \cref{lemma:truncation-equality} it follows that
    \begin{equation}\label{eq:local-eq-H-U}
        \bH_{1:t}\bH_{1:t}^\top \widetilde{\bU}_{1:t} \widetilde{\bSigma}_{1:t}^{-2} \widetilde{\bU}_{1:t}^\top = \widetilde{\bU}_{1:t} \widetilde{\bU}_{1:t}^\top + \bD_t \widetilde{\bU}_{1:t}\widetilde{\bSigma}_{1:t}^{-2} \widetilde{\bU}_{1:t}^\top,
    \end{equation}
    where we recall $\bD_t$ is defined as $\bD_{t}= \sum_{i=1}^{t} \big( \bB_{i} \bB_i^\top   -   \tau_{k_i} \big( \bB_{i} \bB_i^\top \big)  \big)$ in \cref{eq:def-Dt}. Then we have 
    \begin{equation*}
        \begin{split}
           &\ \left\| \bH_{1:t}^\top  \widetilde{\bU}_{1:t} \widetilde{\bSigma}_{1:t}^{-2} \widetilde{\bU}^\top_{1:t} \bH_{1:t} -  \bI_{M_t} \right\|_{\textnormal{F}}^2 \\ 
           =&\ \trace\left( \bH_{1:t}^\top  \widetilde{\bU}_{1:t} \widetilde{\bSigma}_{1:t}^{-2} \widetilde{\bU}^\top_{1:t} \bH_{1:t} \bH_{1:t}^\top  \widetilde{\bU}_{1:t} \widetilde{\bSigma}_{1:t}^{-2} \widetilde{\bU}^\top_{1:t} \bH_{1:t} - 2 \bH_{1:t}^\top  \widetilde{\bU}_{1:t} \widetilde{\bSigma}_{1:t}^{-2} \widetilde{\bU}^\top_{1:t} \bH_{1:t} + \bI_{M_t} 
 \right) \\ 
        =&\ \trace\left( \bH_{1:t}\bH_{1:t}^\top  \widetilde{\bU}_{1:t} \widetilde{\bSigma}_{1:t}^{-2} \widetilde{\bU}^\top_{1:t} \bH_{1:t} \bH_{1:t}^\top  \widetilde{\bU}_{1:t} \widetilde{\bSigma}_{1:t}^{-2} \widetilde{\bU}^\top_{1:t}  - 2 \bH_{1:t} \bH_{1:t}^\top  \widetilde{\bU}_{1:t} \widetilde{\bSigma}_{1:t}^{-2} \widetilde{\bU}^\top_{1:t} \right) + M_t \\ 
        \overset{\cref{eq:local-eq-H-U}}{=}&\ \trace\left( \big( \widetilde{\bU}_{1:t} \widetilde{\bU}_{1:t}^\top + \bD_t \widetilde{\bU}_{1:t}\widetilde{\bSigma}_{1:t}^{-2} \widetilde{\bU}_{1:t}^\top \big) \big( \widetilde{\bU}_{1:t} \widetilde{\bU}_{1:t}^\top + \bD_t \widetilde{\bU}_{1:t}\widetilde{\bSigma}_{1:t}^{-2} \widetilde{\bU}_{1:t}^\top \big) \right) + M_t \\ 
        &\quad - 2 \big( \widetilde{\bU}_{1:t} \widetilde{\bU}_{1:t}^\top + \bD_t \widetilde{\bU}_{1:t}\widetilde{\bSigma}_{1:t}^{-2} \widetilde{\bU}_{1:t}^\top \big)  \\ 
        =&\ \trace\left( \bD_t \widetilde{\bU}_{1:t}\widetilde{\bSigma}_{1:t}^{-2} \widetilde{\bU}_{1:t}^\top \bD_t \widetilde{\bU}_{1:t}\widetilde{\bSigma}_{1:t}^{-2} \widetilde{\bU}_{1:t}^\top  - \widetilde{\bU}_{1:t} \widetilde{\bU}_{1:t}^\top \right) + M_t \\ 
        =&\ \trace\left( \bD_t \widetilde{\bU}_{1:t}\widetilde{\bSigma}_{1:t}^{-2} \widetilde{\bU}_{1:t}^\top \bD_t \widetilde{\bU}_{1:t}\widetilde{\bSigma}_{1:t}^{-2} \widetilde{\bU}_{1:t}^\top  \right) + M_t - k_t \\ 
        \overset{\textnormal{(i)}}{\leq}&\ \frac{t-1}{\gamma_t^2} \min\left\{  M_{t-1} - k_{t-1}, (t-1)k_t \right\} + M_t - k_t
        \end{split}
    \end{equation*}
    where (i) is due to \cref{lemma:various-bounds2}.
\end{proof}
A simple corollary of \cref{prop:projection-error} now follows:
\begin{corollary}\label{corollary:training-loss}
    The output $\widetilde{\bW}_{t}$ of \cref{algo:ITSVD-LS} satisfies
    \begin{equation*}
        \frac{1}{M_t}\big\| \widetilde{\bW}_{t} \bH_{1:t} - \bY_{1:t} \big\|_{\textnormal{F}}^2 \leq  \frac{\| \bY_{1:t}^\top \bY_{1:t} \|}{M_t} \left( M_t - k_t + \frac{t-1}{\gamma_t^2} \min\left\{  M_{t-1} - k_{t-1}, (t-1)k_t \right\}\right).
    \end{equation*}
\end{corollary}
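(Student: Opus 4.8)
The plan is to obtain \cref{corollary:training-loss} as an immediate consequence of \cref{prop:projection-error}. First I would substitute the closed-form weight $\widetilde{\bW}_t = \bY_{1:t} \bH_{1:t}^\top \widetilde{\bU}_{1:t} \widetilde{\bSigma}_{1:t}^{-2} \widetilde{\bU}^\top_{1:t}$ into the training residual and pull $\bY_{1:t}$ out on the left, which is precisely the identity recorded just before the statement of \cref{prop:projection-error}:
\begin{equation*}
    \big\| \widetilde{\bW}_{t} \bH_{1:t} - \bY_{1:t} \big\|_{\textnormal{F}}^2 = \big\| \bY_{1:t} \bM_t \big\|_{\textnormal{F}}^2, \qquad \bM_t := \bH_{1:t}^\top \widetilde{\bU}_{1:t} \widetilde{\bSigma}_{1:t}^{-2} \widetilde{\bU}^\top_{1:t} \bH_{1:t} - \bI_{M_t},
\end{equation*}
where $\bM_t$ is symmetric.

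Next I would decouple the factor $\bY_{1:t}$ from $\bM_t$. Using $\|A\|_{\textnormal{F}}^2=\trace(A^\top A)$, cyclicity of the trace, and $\bM_t=\bM_t^\top$, one has $\big\| \bY_{1:t} \bM_t \big\|_{\textnormal{F}}^2 = \trace\big( \bM_t^\top \bY_{1:t}^\top \bY_{1:t} \bM_t \big) = \trace\big( \bM_t^2 \, \bY_{1:t}^\top \bY_{1:t} \big)$. Since $\bM_t^2$ is positive semidefinite, \cref{lemma:Von-Neumann-trace-norm} (with $\bM_t^2$ in the role of $A$ and $\bY_{1:t}^\top \bY_{1:t}$ in the role of $B$) gives
\begin{equation*}
    \big\| \bY_{1:t} \bM_t \big\|_{\textnormal{F}}^2 \le \trace\big( \bM_t^2 \big)\cdot \big\| \bY_{1:t}^\top \bY_{1:t} \big\| = \big\| \bM_t \big\|_{\textnormal{F}}^2 \cdot \big\| \bY_{1:t}^\top \bY_{1:t} \big\|.
\end{equation*}
Equivalently, this is just the mixed submultiplicativity $\| \bY_{1:t} \bM_t \|_{\textnormal{F}} \le \| \bY_{1:t} \|\, \| \bM_t \|_{\textnormal{F}}$ combined with $\| \bY_{1:t} \|^2 = \| \bY_{1:t}^\top \bY_{1:t} \|$.

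Finally I would invoke \cref{prop:projection-error} to bound $\| \bM_t \|_{\textnormal{F}}^2 = \| \bH_{1:t}^\top \widetilde{\bU}_{1:t} \widetilde{\bSigma}_{1:t}^{-2} \widetilde{\bU}^\top_{1:t} \bH_{1:t} - \bI_{M_t} \|_{\textnormal{F}}^2$ by $M_t - k_t + \tfrac{t-1}{\gamma_t^2}\min\{M_{t-1}-k_{t-1},(t-1)k_t\}$, then divide through by $M_t$, which yields exactly the claimed inequality. There is no substantive obstacle here: the only points requiring care are the trace rewriting $\trace(\bM_t \bY_{1:t}^\top \bY_{1:t} \bM_t) = \trace(\bM_t^2 \bY_{1:t}^\top \bY_{1:t})$, which uses symmetry of $\bM_t$, and the positive semidefiniteness of $\bM_t^2$ needed to apply \cref{lemma:Von-Neumann-trace-norm}. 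The corollary is deliberately the assumption-free, coarser analogue of \cref{theorem:training-loss-deterministic}: it replaces the (uncomputable) structured factor involving $\| \colorgt \|_{\textnormal{F}}^2$ and $\| \coloreps \|^2$ by the single directly accessible quantity $\| \bY_{1:t}^\top \bY_{1:t} \|$, at the cost of a looser constant.
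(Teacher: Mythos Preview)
Your proposal is correct and follows essentially the same approach the paper intends: the paper records the identity $\big\| \widetilde{\bW}_{t} \bH_{1:t} - \bY_{1:t} \big\|_{\textnormal{F}}^2 = \big\| \bY_{1:t} (\bH_{1:t}^\top \widetilde{\bU}_{1:t} \widetilde{\bSigma}_{1:t}^{-2} \widetilde{\bU}^\top_{1:t} \bH_{1:t} - \bI_{M_t}) \big\|_{\textnormal{F}}^2$, states \cref{prop:projection-error}, and then simply writes ``A simple corollary of \cref{prop:projection-error} now follows'' without spelling out the decoupling step. Your use of \cref{lemma:Von-Neumann-trace-norm} (equivalently the mixed submultiplicativity $\|\bY_{1:t}\bM_t\|_{\textnormal{F}}\le \|\bY_{1:t}\|\,\|\bM_t\|_{\textnormal{F}}$) is exactly the intended bridge, and the symmetry/PSD checks you flag are the only points needing care.
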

\begin{remark}
In classification, the columns of $\bY_{1:t}$ are one-hot vectors. Hence, up to permutation, $\bY_{1:t}^\top \bY_{1:t}\in\bbR^{M_t\times M_t}$ is a block diagonal matrix with $c_t$ block, where the $i$-th diagonal block is a $n_i\times n_i$ matrix of all ones $\bm{1}_{n_i}$; here $n_i$ is the number of labels in class $i$. In other words, there exists a permutation matrix $\bPi$ such that
\begin{equation*}
    \bY_{1:t}^\top \bY_{1:t} = \bPi \diag( \bm{1}_{n_1}, \bm{1}_{n_2},\dots, \bm{1}_{n_{c_t}} ) \bPi^\top.
\end{equation*}
Since the maximum eigenvalue of $\bm{1}_{m_i}$ is $m_i$, we know 
\begin{equation*}
    \| \bY_{1:t}^\top \bY_{1:t}\| = \max_{i=1,\dots,c_t}\{ n_i\}.
\end{equation*}
Substitute this into \cref{corollary:training-loss} and we obtain
\begin{equation*}
    \frac{1}{M_t}\big\| \widetilde{\bW}_{t} \bH_{1:t} - \bY_{1:t} \big\|_{\textnormal{F}}^2 \leq  \frac{ \max_{i=1,\dots,c_t}\{ n_i\} }{M_t} \left( M_t - k_t + \frac{t-1}{\gamma_t^2} \min\left\{  M_{t-1} - k_{t-1}, (t-1)k_t \right\}\right).
    \end{equation*}
\end{remark}

% \begin{corollary}
    
% \end{corollary}

It is also of interest to bound the distances between the SVD factors computed online and offline, namely the distances between $\widetilde{\bSigma}_{1:t}, \overline{\bSigma}_{1:t}$ and between $\widetilde{\bU}_{1:t}, \overline{\bU}_{1:t}$. We do so in the next result. 
\begin{theorem}\label{theorem:eigenvalue-eigenspace-bound}
    Let $a_t$ be defined as in \cref{eq:define-a}. For $t\geq 1$ define
    \begin{equation}\label{eq:gap}
        \textnormal{gap}_t:= \mu_{k_{t} }\left( \bH_{1:t} \bH_{1:t}^\top   \right) - \mu_{k_{t} + 1 }\left( \bH_{1:t} \bH_{1:t}^\top   \right).
    \end{equation}
    Then it always holds that
    \begin{equation}\label{eq:bound-Sigma}
		\begin{split}
            \big\| \overline{\bSigma}_{1:t}^2 -  \widetilde{\bSigma}_{1:t}^2\big\|_{\infty}  \leq a_{t-1}.
		\end{split}
    \end{equation}
    
    Moreover, if $a_{t-1}< \left( 1 - 1/\sqrt{2} \right) \textnormal{gap}_t$, 
    % \begin{equation}\label{eq:condition-eigengap}
    %      \frac{a_{t-1}}{\textnormal{gap}_t} \leq \left( 1 - \frac{1}{\sqrt{2}} \right) ,
    % \end{equation}
    then for any $t\geq 1$ we have
    \begin{equation}\label{eq:bound-U}
        \min_{\bO \in \cO(k) } \left\|   \overline{\bU}_{1:t} - \widetilde{\bU}_{1:t} \bO \right\|_{\textnormal{F}} \leq \big\| \overline{\bU}_{1:t} \overline{\bU}_{1:t}^\top - \widetilde{\bU}_{1:t} \widetilde{\bU}_{1:t}^\top \big\| \leq \frac{\sqrt{2} a_{t-1}}{\textnormal{gap}_t},
    \end{equation}
    where $\cO(k)$ be the set of $k\times k$ orthogonal matrices, defined as $$\cO(k) := \{ \bO\in \bbR^{k\times k}: \bO^\top \bO = \bO \bO^\top = \bI_k  \}.$$
\end{theorem}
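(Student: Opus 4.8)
The plan is to view $\bB_t\bB_t^\top$ as a symmetric, low‑rank perturbation of the true Gram matrix $\bH_{1:t}\bH_{1:t}^\top$, with perturbation matrix $\bD_{t-1}$, and then read off \cref{eq:bound-Sigma} from Weyl's inequality and \cref{eq:bound-U} from a Davis--Kahan $\sin\Theta$ bound. The first step is to pin down the perturbation. For $t>1$, the definition \cref{eq:define-B-tilde} gives $\bB_t\bB_t^\top = \widetilde{\bU}_{1:t-1}\widetilde{\bSigma}_{1:t-1}^2\widetilde{\bU}_{1:t-1}^\top + \bH_t\bH_t^\top$, and \cref{lemma:truncation-equality} applied at index $t-1$ says $\widetilde{\bU}_{1:t-1}\widetilde{\bSigma}_{1:t-1}^2\widetilde{\bU}_{1:t-1}^\top = \bH_{1:t-1}\bH_{1:t-1}^\top - \bD_{t-1}$, where $\bD_{t-1}$ is as in \cref{eq:def-Dt}; adding $\bH_t\bH_t^\top$ yields
\[
  \bB_t\bB_t^\top \;=\; \bH_{1:t}\bH_{1:t}^\top - \bD_{t-1}, \qquad \bD_{t-1}\succeq 0,
\]
the case $t=1$ being trivial since $\bD_0=0$. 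Each summand $\bB_i\bB_i^\top - \tau_{k_i}(\bB_i\bB_i^\top)$ of $\bD_{t-1}$ is positive semidefinite with spectral norm $\mu_{k_i+1}(\bB_i\bB_i^\top)$, so the triangle inequality and \cref{eq:define-a} give $\|\bD_{t-1}\| \le \sum_{i=1}^{t-1}\mu_{k_i+1}(\bB_i\bB_i^\top) = a_{t-1}$.

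For \cref{eq:bound-Sigma}: the diagonal entries of $\widetilde{\bSigma}_{1:t}^2$ are the $k_t$ largest eigenvalues of $\bB_t\bB_t^\top$, and those of $\overline{\bSigma}_{1:t}^2$ are the $k_t$ largest eigenvalues of $\bH_{1:t}\bH_{1:t}^\top$ (since $\tau_{k_t}(\bH_{1:t})\tau_{k_t}(\bH_{1:t})^\top = \overline{\bU}_{1:t}\overline{\bSigma}_{1:t}^2\overline{\bU}_{1:t}^\top$ and truncation keeps the top singular values). Weyl's inequality together with $\|\bD_{t-1}\|\le a_{t-1}$ then gives, for every $j\le k_t$, that $\mu_j(\bH_{1:t}\bH_{1:t}^\top)$ and $\mu_j(\bB_t\bB_t^\top)$ differ by at most $a_{t-1}$ (indeed, by $\bD_{t-1}\succeq 0$, the difference is in $[0,a_{t-1}]$), so $\|\overline{\bSigma}_{1:t}^2 - \widetilde{\bSigma}_{1:t}^2\|_\infty \le a_{t-1}$.

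For \cref{eq:bound-U}: $\widetilde{\bU}_{1:t}\widetilde{\bU}_{1:t}^\top$ and $\overline{\bU}_{1:t}\overline{\bU}_{1:t}^\top$ are the orthogonal projectors onto the top‑$k_t$ eigenspaces of $\bB_t\bB_t^\top$ and $\bH_{1:t}\bH_{1:t}^\top$ respectively. The hypothesis $a_{t-1} < (1-1/\sqrt2)\,\textnormal{gap}_t$ forces $\textnormal{gap}_t>0$ and $\|\bD_{t-1}\|\le a_{t-1}<\textnormal{gap}_t$, and moreover $\textnormal{gap}_t - 2a_{t-1} \ge (\sqrt2-1)\,\textnormal{gap}_t>0$, so by Weyl both $k_t$‑dimensional eigenspaces are unambiguously defined. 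I would then invoke the Davis--Kahan $\sin\Theta$ theorem in the variant whose separation is measured between the tail eigenvalues of the unperturbed $\bH_{1:t}\bH_{1:t}^\top$ and the leading eigenvalues of $\bB_t\bB_t^\top$ — which is at least $\textnormal{gap}_t - \|\bD_{t-1}\|$ — to get
\[
  \bigl\| \overline{\bU}_{1:t}\overline{\bU}_{1:t}^\top - \widetilde{\bU}_{1:t}\widetilde{\bU}_{1:t}^\top \bigr\| \;\le\; \frac{\|\bD_{t-1}\|}{\textnormal{gap}_t - \|\bD_{t-1}\|}.
\]
The right side is increasing in $\|\bD_{t-1}\|\in[0,\textnormal{gap}_t)$, hence at most $a_{t-1}/(\textnormal{gap}_t - a_{t-1})$, and the hypothesis gives $\textnormal{gap}_t - a_{t-1} > \textnormal{gap}_t/\sqrt2$, producing the bound $\sqrt2\,a_{t-1}/\textnormal{gap}_t$. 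The left inequality in \cref{eq:bound-U} is the standard canonical‑angle relation bounding the Procrustes‑aligned distance of two orthonormal bases by the distance of the associated projectors (via $\min_{\bO}\|\overline{\bU}_{1:t}-\widetilde{\bU}_{1:t}\bO\|_{\textnormal F}^2 = 2\sum_j(1-\cos\theta_j)$ and $1-\cos\theta_j\le\sin^2\theta_j$).

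\textbf{Main obstacle.} The delicate part is the norm/gap bookkeeping in the subspace step: one must use precisely the Davis--Kahan variant whose denominator is $\textnormal{gap}_t-\|\bD_{t-1}\|$ (an $\bH_{1:t}\bH_{1:t}^\top$-gap minus the perturbation norm), since only that form turns the hypothesis $a_{t-1}<(1-1/\sqrt2)\textnormal{gap}_t$ cleanly into the multiplicative factor $\sqrt2$, and one must separately verify that both the reference and the perturbed top‑$k_t$ eigenspaces are well defined (which is exactly why a hypothesis slightly stronger than $a_{t-1}<\textnormal{gap}_t/2$ appears).
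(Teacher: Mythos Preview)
Your proposal is correct and follows essentially the same route as the paper: identify $\bB_t\bB_t^\top=\bH_{1:t}\bH_{1:t}^\top-\bD_{t-1}$ via \cref{lemma:truncation-equality}, bound $\|\bD_{t-1}\|\le a_{t-1}$, apply Weyl for \cref{eq:bound-Sigma}, and apply Davis--Kahan for \cref{eq:bound-U}. The paper dispatches the subspace step by citing Corollary~2.8 of \citet{Chen-FTML2021} directly, whereas you unpack it as $\|\bD_{t-1}\|/(\textnormal{gap}_t-\|\bD_{t-1}\|)$ and then use the hypothesis to produce the $\sqrt{2}$; both arrive at the same bound under the same condition.
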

\begin{proof}[Proof of \cref{theorem:eigenvalue-eigenspace-bound}]
    It is clear that $\overline{\bU}_{1}=\widetilde{\bU}_{1}$ and $ \overline{\bSigma}_{1} = \widetilde{\bSigma}_{1}$. We now consider the case $t\geq 2$. Note that $\widetilde{\bU}_{1:t}\widetilde{\bSigma}_{1:t} \widetilde{\bU}_{1:t}^\top $ is the eigen decomposition of $\tau_{k_t}(\widetilde{\bU}_{1:t-1}\widetilde{\bSigma}_{1:t-1}^2 \widetilde{\bU}_{1:t-1}^\top + \bH_t \bH_t^\top)$, and $\overline{\bU}_{1:t}\overline{\bSigma}_{1:t} \overline{\bU}_{1:t}^\top $ is the eigen decomposition of $\tau_{k_t}(\bH_{1:t} \bH_{1:t}^\top)$. We can compute 
    \begin{equation*}
		\begin{split}
			\bH_{1:t} \bH_{1:t}^\top  -  \left( \widetilde{\bU}_{1:t-1}\widetilde{\bSigma}_{1:t-1}^2 \widetilde{\bU}_{1:t-1}^\top + \bH_t \bH_t^\top \right)
			=&\ \bH_{1:t-1} \bH_{1:t-1}^\top  - \widetilde{\bU}_{1:t-1}\widetilde{\bSigma}_{1:t-1}^2 \widetilde{\bU}_{1:t-1}^\top \\ 
                \overset{\textnormal{(i)}}{=}&\ \sum_{i=1}^{t-1} \left(  \bB_{i} \bB_i^\top - \tau_{k_i} \big( \bB_{i} \bB_i^\top \big)  \right)=: \bD_t,
		\end{split}
    \end{equation*}
    where (i) follows from \cref{lemma:truncation-equality}. We can therefore apply Weyl’s inequality to obtain
    \begin{equation*}
        \begin{split}
            \left\| \overline{\bSigma}_{1:t}^2 -  \widetilde{\bSigma}_{1:t}^2\right\|_{\infty}  \leq \left\| \bD_t  \right\| = a_{t-1}.
        \end{split}
    \end{equation*}
    This proves \cref{eq:bound-Sigma}. On the other hand, \cref{eq:bound-U} follows from the Davis-Kahan theorem \citep{Davis-SIAM-J-NA1970}, or more precisely, from Corollary 2.8 of \citet{Chen-FTML2021}.
    % By \cref{lemma:truncation-equality} we have
    % \begin{equation*}
    %     \widetilde{\bU}_{1:t}\widetilde{\bSigma}_{1:t}^2 \widetilde{\bU}_{1:t}^\top - \bH_{1:t}\bH_{1:t}^\top  = \sum_{i=1}^t \left( \tau_{k_i} \big( \bB_{i} \bB_i^\top \big) - \bB_{i} \bB_i^\top \right).
    % \end{equation*}
\end{proof}

\subsection{Relation between the offline and online solutions }
Recall the definitions of the offline solution $\overline{\bW}_t$ in \cref{eq:closed-form-solution-ICL} and the output $\widetilde{\bW}_t$ of LoRanPAC in \cref{eq:Wt-output-ITSVD}:
\begin{equation*}
    \overline{\bW}_t= \bY_{1:t} \bH_{1:t}^\top \overline{\bU}_{1:t} \overline{\bSigma}_{1:t}^{-2} \overline{\bU}^\top_{1:t}, \quad  \widetilde{\bW}_t= \bY_{1:t} \bH_{1:t}^\top \widetilde{\bU}_{1:t} \widetilde{\bSigma}_{1:t}^{-2} \widetilde{\bU}^\top_{1:t}.
\end{equation*}
Here we aim to bound the distance $\| \overline{\bW}_t - \widetilde{\bW}_t \|_{\textnormal{F}}$. We consider the model $\bY_{1:t} = \colorgt \bH_{1:t}$; this is the $\bY_{1:t} = \colorgt \bH_{1:t} + \coloreps$ with $\coloreps$. Here we make this assumption for simplicity, and the result here can be extended to the case with noise. % , and we will include such a result in our next revision (even though the corresponding theorem will be much more complicated).

Recall the definition of $\textnormal{gap}_t$ in \cref{eq:gap}:
\begin{equation*}
    \textnormal{gap}_t:= \mu_{k_{t} }\left( \bH_{1:t} \bH_{1:t}^\top   \right) - \mu_{k_{t} + 1 }\left( \bH_{1:t} \bH_{1:t}^\top   \right).
\end{equation*}
Based on \cref{theorem:eigenvalue-eigenspace-bound}, we prove the following result.
\begin{theorem}\label{theorem:W-offline-online}
    Let $a_t$ be defined as in \cref{eq:define-a} and $\textnormal{gap}_t$ as in \cref{eq:gap}. Assume $a_{t-1}< \left( 1 - 1/\sqrt{2} \right) \textnormal{gap}_t$. Suppose $\bY_{1:t} = \colorgt \bH_{1:t}$. Then we have
    \begin{align}
        \left\| \overline{\bW}_t - \widetilde{\bW}_t \right\|_{\textnormal{F}} \leq \left\| \colorgt \right\|_{\textnormal{F}} \cdot \left(  \frac{\sqrt{2} a_{t-1}}{\textnormal{gap}_t} + \frac{t-1}{\gamma_t}  \right) .
    \end{align}
\end{theorem}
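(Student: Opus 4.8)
The plan is to reduce the whole statement to two algebraic identities and then invoke two bounds already proved in the excerpt. First I would substitute the noiseless model $\bY_{1:t} = \colorgt \bH_{1:t}$ into the closed forms $\overline{\bW}_t = \bY_{1:t}\bH_{1:t}^\top \overline{\bU}_{1:t}\overline{\bSigma}_{1:t}^{-2}\overline{\bU}_{1:t}^\top$ and $\widetilde{\bW}_t = \bY_{1:t}\bH_{1:t}^\top \widetilde{\bU}_{1:t}\widetilde{\bSigma}_{1:t}^{-2}\widetilde{\bU}_{1:t}^\top$. For the offline weight, I would use that $\overline{\bU}_{1:t}$ consists of columns of the full left-singular-vector matrix $\bU_{1:t}$ of $\bH_{1:t}$ (the fact already used in step (i) of \cref{eq:closed-form-solution-ICL}), so that $\bH_{1:t}\bH_{1:t}^\top \overline{\bU}_{1:t} = \overline{\bU}_{1:t}\overline{\bSigma}_{1:t}^2$; this collapses $\overline{\bW}_t$ to the projector form $\overline{\bW}_t = \colorgt \overline{\bU}_{1:t}\overline{\bU}_{1:t}^\top$. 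For the online weight, I would invoke \cref{lemma:truncation-equality}, which gives $\bH_{1:t}\bH_{1:t}^\top = \widetilde{\bU}_{1:t}\widetilde{\bSigma}_{1:t}^2\widetilde{\bU}_{1:t}^\top + \bD_t$ with $\bD_t$ as in \cref{eq:def-Dt}; right-multiplying by $\widetilde{\bU}_{1:t}\widetilde{\bSigma}_{1:t}^{-2}\widetilde{\bU}_{1:t}^\top$ and using orthonormality of $\widetilde{\bU}_{1:t}$ gives $\widetilde{\bW}_t = \colorgt \widetilde{\bU}_{1:t}\widetilde{\bU}_{1:t}^\top + \colorgt \bD_t \widetilde{\bU}_{1:t}\widetilde{\bSigma}_{1:t}^{-2}\widetilde{\bU}_{1:t}^\top$.

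Subtracting the two expressions yields $\overline{\bW}_t - \widetilde{\bW}_t = \colorgt\big(\overline{\bU}_{1:t}\overline{\bU}_{1:t}^\top - \widetilde{\bU}_{1:t}\widetilde{\bU}_{1:t}^\top\big) - \colorgt \bD_t \widetilde{\bU}_{1:t}\widetilde{\bSigma}_{1:t}^{-2}\widetilde{\bU}_{1:t}^\top$. Then I would apply the triangle inequality together with the submultiplicative bound $\|AB\|_{\textnormal{F}} \le \|A\|_{\textnormal{F}}\,\|B\|$ to pull out $\|\colorgt\|_{\textnormal{F}}$ and leave two operator-norm quantities. The first, $\big\|\overline{\bU}_{1:t}\overline{\bU}_{1:t}^\top - \widetilde{\bU}_{1:t}\widetilde{\bU}_{1:t}^\top\big\|$, is bounded by $\sqrt{2}\,a_{t-1}/\textnormal{gap}_t$ by \cref{eq:bound-U} of \cref{theorem:eigenvalue-eigenspace-bound} — which is precisely why the hypothesis $a_{t-1} < (1 - 1/\sqrt{2})\,\textnormal{gap}_t$ is assumed. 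The second, $\big\|\bD_t \widetilde{\bU}_{1:t}\widetilde{\bSigma}_{1:t}^{-2}\widetilde{\bU}_{1:t}^\top\big\|$, is bounded by $(t-1)/\gamma_t$ by the first inequality of \cref{lemma:various-bounds}. Adding the two bounds gives exactly the claimed inequality.

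The argument is essentially routine once the two identities are written down, so I do not expect a serious obstacle; the only point needing a little care is the simplification $\overline{\bW}_t = \colorgt \overline{\bU}_{1:t}\overline{\bU}_{1:t}^\top$, which hinges on $\overline{\bU}_{1:t}$ sharing columns with $\bU_{1:t}$, and on noting that $\bD_t$ is symmetric so that the \cref{lemma:various-bounds} bound applies equally to the product and its transpose. Should one wish to treat the noisy model $\bY_{1:t} = \colorgt \bH_{1:t} + \coloreps$, the same decomposition would produce an extra term $\coloreps\big(\bH_{1:t}^\top \overline{\bU}_{1:t}\overline{\bSigma}_{1:t}^{-2}\overline{\bU}_{1:t}^\top - \bH_{1:t}^\top \widetilde{\bU}_{1:t}\widetilde{\bSigma}_{1:t}^{-2}\widetilde{\bU}_{1:t}^\top\big)$, whose norm could be controlled with the same lemmas together with the eigenspace perturbation bound; but for the stated theorem this is not required.
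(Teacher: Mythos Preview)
Your proposal is correct and follows essentially the same approach as the paper's proof: both reduce $\overline{\bW}_t$ and $\widetilde{\bW}_t$ to the projector forms $\colorgt\,\overline{\bU}_{1:t}\overline{\bU}_{1:t}^\top$ and $\colorgt\,\widetilde{\bU}_{1:t}\widetilde{\bU}_{1:t}^\top + \colorgt\,\bD_t\widetilde{\bU}_{1:t}\widetilde{\bSigma}_{1:t}^{-2}\widetilde{\bU}_{1:t}^\top$ via the eigen-relation for $\overline{\bU}_{1:t}$ and \cref{lemma:truncation-equality}, then pull out $\|\colorgt\|_{\textnormal{F}}$ and bound the two remaining operator-norm pieces with \cref{theorem:eigenvalue-eigenspace-bound} and \cref{lemma:various-bounds}. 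The only cosmetic difference is that the paper factors out $\|\colorgt\|_{\textnormal{F}}$ before invoking the identities and applies the triangle inequality in operator norm, whereas you distribute $\colorgt$ first and apply the triangle inequality in Frobenius norm; the resulting bound is identical.
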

\begin{proof}
    Note that
    \begin{align*}
        \bH_{1:t} \bH_{1:t}^\top \overline{\bU}_{1:t} \overline{\bSigma}_{1:t}^{-2} \overline{\bU}^\top_{1:t} &= \overline{\bU}_{1:t} \overline{\bU}_{1:t}^\top  \\
        \bH_{1:t} \bH_{1:t}^\top \widetilde{\bU}_{1:t} \widetilde{\bSigma}_{1:t}^{-2} \widetilde{\bU}^\top_{1:t} &= \widetilde{\bU}_{1:t} \widetilde{\bU}^\top_{1:t} + \bD_t \widetilde{\bU}_{1:t}\widetilde{\bSigma}_{1:t}^{-2} \widetilde{\bU}_{1:t}^\top,
    \end{align*}
    where $\bD_t$ is defined in \cref{eq:def-Dt} and the second equality follows from \cref{lemma:truncation-equality}. So we have
    \begin{align}
        \left\| \overline{\bW}_t - \widetilde{\bW}_t \right\|_{\textnormal{F}} &= \left\| \colorgt \bH_{1:t} \bH_{1:t}^\top \overline{\bU}_{1:t} \overline{\bSigma}_{1:t}^{-2} \overline{\bU}^\top_{1:t} - \colorgt \bH_{1:t} \bH_{1:t}^\top \widetilde{\bU}_{1:t} \widetilde{\bSigma}_{1:t}^{-2} \widetilde{\bU}^\top_{1:t} \right\|_{\textnormal{F}} \\ 
        &\leq \left\| \colorgt \right\|_{\textnormal{F}} \cdot \left\| \bH_{1:t} \bH_{1:t}^\top \overline{\bU}_{1:t} \overline{\bSigma}_{1:t}^{-2} \overline{\bU}^\top_{1:t} - \bH_{1:t} \bH_{1:t}^\top \widetilde{\bU}_{1:t} \widetilde{\bSigma}_{1:t}^{-2} \widetilde{\bU}^\top_{1:t} \right\| \\ 
        &= \left\| \colorgt \right\|_{\textnormal{F}} \cdot \left\| \overline{\bU}_{1:t}  \overline{\bU}^\top_{1:t} -  \widetilde{\bU}_{1:t} \widetilde{\bU}^\top_{1:t} - \bD_t \widetilde{\bU}_{1:t}\widetilde{\bSigma}_{1:t}^{-2} \widetilde{\bU}_{1:t}^\top \right\| \\ 
        &\leq \left\| \colorgt \right\|_{\textnormal{F}} \cdot \left(  \left\| \overline{\bU}_{1:t}  \overline{\bU}^\top_{1:t} -  \widetilde{\bU}_{1:t} \widetilde{\bU}^\top_{1:t}  \right\| + \left\| \bD_t \widetilde{\bU}_{1:t}\widetilde{\bSigma}_{1:t}^{-2} \widetilde{\bU}_{1:t}^\top \right\| \right) \\ 
        &\leq \left\| \colorgt \right\|_{\textnormal{F}} \cdot \left(  \left\| \overline{\bU}_{1:t}  \overline{\bU}^\top_{1:t} -  \widetilde{\bU}_{1:t} \widetilde{\bU}^\top_{1:t}  \right\| + \left\| \bD_t \widetilde{\bU}_{1:t}\widetilde{\bSigma}_{1:t}^{-2} \widetilde{\bU}_{1:t}^\top \right\| \right) \\ 
        &\leq \left\| \colorgt \right\|_{\textnormal{F}} \cdot \left(  \frac{\sqrt{2} a_{t-1}}{\textnormal{gap}_t} + \frac{t-1}{\gamma_t}  \right) 
    \end{align}
    where the last inequality is due to \cref{theorem:eigenvalue-eigenspace-bound} and \cref{lemma:various-bounds}. The proof is complete.
\end{proof}

\section{Review of Related Works}\label{section:review}
In \cref{subsection:review-CL} we review related work on CL. Recent surveys on CL include \citet{Parisi-NN2019,Van-NMI2022,Shaheen-JIRS2022,Zhou-TPAMI2024,Wang-TPAMI2024,Shi-arXiv2024}. See also the GitHub repo of \citet{Awesome-LL} for an extensive list of CL papers.

In \cref{subsection:review-RFM} we review related work on random feature models.

\subsection{More Related Work on Continual Learning
 }\label{subsection:review-CL}
Many CL methods have been proposed without explicitizing the use of pre-trained models \citep{Ruvolo-ICML2013,Kirkpatrick-2017,Rebuffi-CVPR2017,Lopez-NeurIPS2017,Zeng-NMI2019,Chaudhry-arXiv2019v4,Yan-CVPR2021,Saha-ICLR2021,Douillard-CVPR2022,Elenter-arXiv2023v2}. An easy way to boost their performance is to adapt them for the context of pre-trained models. There are two natural approaches to do so. One approach is to use the pre-trained model as initialization and run these CL algorithms to fine-tune the pre-trained model; see, e.g., \citet{Li-arXiv2024v3model}. The other approach is to train a shallow network with the output features of the pre-trained model and either of these CL algorithms. We do not explore these directions here. In what follows, we review existing CL methods explicitly designed for leveraging pre-trained models, and we review theoretical developments for CL as well.

\myparagraph{Prior Work on CL with Pre-trained Models} The availability of pre-trained models has motivated new insights into designing CL methods. CL methods such designed can be roughly divided into two categories. In one category, the pre-trained model is completely frozen, and their output features are used as inputs for a tailored CL method. A straightforward method in this category, known as \textit{SimpleCIL} \citep{Zhou-arXiv2023} or  \textit{Nearest Mean Classifier} (NMC) \citep{Mensink-TPAMI2013,Rebuffi-CVPR2017,Panos-ICCV2023,Janson-NeurIPSW2022}, is to classify a test image based on the (cosine) distances of its feature to class means of the training features. While this method is stable, hyperparameter-free, and can handle long task sequences, to the best of our knowledge, it does not have theoretical guarantees. Of course, RanPAC and  LoRanPAC  also fall into this category.\footnote{Note that  RanPAC might use first-session adaptation, which modifies the output features of the pre-trained model, so one might not consider RanPAC as completely freezing the pre-trained model. However, such strategy of first-session adaptation is applied only before the first task, and is not used during continual learning of tasks. In other words, the model after first-session adaptation is completely frozen, and we might just view it as our pre-trained model. } Other methods in the category include \citet{Ahrens-TMLR2024,Prabhu-arXiv2024v2}. Both methods make certain modifications on top of RanPAC:
\begin{itemize}
    \item The method of \citet{Ahrens-TMLR2024} replaces the random ReLU features with the concatenation of the output features of intermediate layers. We identify that this generalizes the idea of \citet{Pao-Computer1992}\footref{footnote:Ahrens}.
    \item The method of \citet{Prabhu-arXiv2024v2} replaces the random ReLU features with random Fourier features (which were used by \citet{Rahimi-NeurIPS2007} for learning kernel machines), and the ridge regression solver of RanPAC with \textit{linear discriminant analysis} (LDA) \citep{Hayes-CVPR2020}. Note that LDA optimizes an objective that is in general different from the MSE training loss, for which \citet{Prabhu-arXiv2024v2} have not provided theoretical guarantees.
\end{itemize}

Similarly to RanPAC, the methods of \citet{Ahrens-TMLR2024,Prabhu-arXiv2024v2} need $O(E^3)$ time per task to invert the (regularized) $E\times E$ covariance matrix. \citet{Prabhu-arXiv2024v2} uses $E=25000$ in their experiments, which might constitute the current computational limit of performing the inversion (cf. \cref{fig:training-time-inc5,fig:training-time-inc5-2}). Also, both methods lack theoretical guarantees. In contrast, the running time of our LoRanPAC method depends only linearly on $E$ and can handle $E\geq 10^5$ with stable performance and theoretical guarantees.

In the other category of methods, the weights of the pre-trained models remain fixed, but the output features of the pre-trained models are changed. The catch is that these methods either change the input or change the network architecture. Such change could be applied layer-wise, therefore, in order to describe the idea, it is the simplest to assume the pre-trained model $f$ is a single-layer network. 
\begin{itemize}
    \item If we keep both the input and architecture fixed, then the network would take an input $\bX$ and output $f(\bX)$.
    \item A popular way to change the input is to stack some trainable parameters $\bZ$ with input $\bX$, where $\bZ$ and $\bX$ have the same number of columns. The network outputs $f([\bX; \bZ])$. Here, it is implicitly assumed that $f$ can take input matrices with different number of rows (i.e., different number of tokens). For instance, $f$ can be a single-layer vision transformer. In this case $\bZ$ is often called (visual) \textit{prompts} \citep{Jia-ECCV2022}, and CL methods using this strategy are often called \textit{prompt-based methods}; see, e.g., \citep{Wang-CVPR2022,Wang-ECCV2022,Wang-NeurIPS2022,Smith-CVPR2023,Wang-NeurIPS2023,Jung-ICCV2023,Tang-ICCV2023,Gao-CVPR2024,Roy-CVPR2024,Kim-ICML2024}.
    \item A popular way to change the architecture is to replace the input-output map $\bX\mapsto f(\bX)$ with $\bX\mapsto f(\bX) + g(\bX)$, where $g$ is some simple shallow network parametrized by the extra trainable parameters. For instance, $g$ could be a simple two-layer linear network of the form $g(\bX) = \bA \bB \bX$ or  $g(\bX) = \bA \relu(\bB \bX) $, where $\bA,\bB$ are trainable. In these case, $\bA,\bB$ are called \textit{adapters} \citep{Houlsby-ICML2019,Hu-ICLR202l,Chen-NeurIPS2022}, and CL methods using this strategy are often called \textit{adapter-based methods} \citep{Zhou-arXiv2023,Zhou-CVPR2024,Liang-CVPR2024,Tan-CVPR2024,Gao-ECCV2024}.
\end{itemize}
Clearly, prompt-based and adapter-based methods can both be viewed as \textit{expansion-based methods} that enlarge the capacity of a network in order to learn new tasks \citep{Rusu-arXiv2016,Yoon-ICLR2018,Li-ICML2019,Ramesh-ICLR2022}.

Despite their popularity, both prompt-based and adapter-based methods need to solve highly non-convex training problems, for which deriving informative theoretical guarantees is a significant challenge. Their lack of theoretical guarantees makes them prone to unexpected failures. For example, prompt-based methods such as \textit{L2P} \citep{Wang-CVPR2022}, \textit{DualPrompt} \citep{Wang-ECCV2022}, \textit{CodaPrompt} \citep{Smith-CVPR2023}, have their performance highly sensitive to the choice of hyperparameters and therefore to the pre-trained model in use \citep{Wang-NeurIPS2023}, dataset, and problem setting (cf. \cref{tab:CIL-ViT}); indeed, a small perturbation in learning rates might change the accuracy drastically \citep{Zhang-ICCV2023}. While they are often equipped with dataset-specific hyperparameters released by authors (cf. \cref{subsection:experiment-details}), their instability still emerges when applied to a long sequence of tasks. This is because new prompts or adapters are often needed to maintain high accuracy on new tasks \citep{Zhou-CVPR2024}, but doing so eventually becomes infeasible. Indeed, to train on the 100 tasks of the CIFAR100 dataset in the CIL setting with one class given at a time (B-0, Inc-1), running the adapter-based method of \citet{Zhou-CVPR2024}, called \textit{EASE}, with default hyperparameters, would create more than 117M parameters for its growing number of adapters, while the pre-trained ViTs in use have less than 87M parameters. % Moreover, the adapters of \citet{Liang-CVPR2024,Gao-ECCV2024} are required to satisfy certain orthogonality properties, but for sufficiently many tasks, previous adapters would span the entire space, which makes it impossible to adding new adapters while enforcing orthogonality constraints.

In \cref{table:ICL} we summarize the conceptual differences of our approach from prior works.

\myparagraph{Prior Work on CL Theory} Theoretical developments on CL have been chasing the current CL practice, with a majority of the theory CL papers limiting themselves to the linear, two-layer, or kernel setting \citep{Doan-AISTATS2021,Heckel-AISTATS2022,Evron-COLT2022,Pengb-NeurIPS2022,Lin-ICML2023,Wwartworth-NeurIPS2023,Goldfarb-ICLR2024,Zhao-ICML2024,Ding-ICML2024b}. While these works cover various theoretical aspects (e.g., generalization bounds, sample complexity, and convergence rates), there has arguably been a huge gap between their simplified settings and deep networks that state-of-the-art CL methods use. On the other hand, we have seen that deep pre-trained models in cascade with shallow trainable networks can provide competitive performance, thus it now makes sense to revise and extend these theoretical contributions within this cascaded architecture, thereby providing meaningful guarantees for learning the shallow networks. We believe this viewpoint would greatly reduce the gap between the theory and practice in the current CL literature.

% An even greater concern is that these methods might not be stable, in the sense that their performance \footnote{\textit{Stability} means that the performance and efficiency of a CL method should be robust to changes at different levels. At the algorithmic level, for example, it could mean numerical stability and robustness to the changes in algorithm parameters. At the problem-setting level, specifically for CIL, it means robustness to different increments $q_2$, which is desired as the eventual offline learning problem is identical, no matter how $q_2$ changes.}. 

% We refer the reader to the above references for a more comprehensive analysis of these methods, and to the GitHub repo of \citet{Awesome-LL} for an extensive list of CL papers.

% In this paper, we assume the availability of a large pre-trained model $\phi_{\theta}$ parameterized by weights $\theta$, e.g., $\phi_{\theta}$ could be a \textit{vision transformer} (ViT) pre-trained on ImageNet-1K \citep{Deng-CVPR2009,Vaswani-NeurIPS2017,Dosovitskiy-ICLR2021}. Pre-trained models, which are widely available nowadays, are believed to provide highly generalizable features, and fine-tuning them for downstream tasks is often efficient as the weights $\theta$ of the pre-trained models are frozen. Furthermore, recent research shows that CL methods equipped with a pre-trained model achieve much higher performance than without leveraging it \citep{Wang-CVPR2022b,Wang-NeurIPS2022,Mcdonnell-NeurIPS2023,Zhou-CVPR2024}. 

\begin{table}[]
	\centering
	
	\ra{1.2}
        \caption{Conceptual comparison to prior work. \label{table:ICL}}
    \scalebox{0.69}{
	\begin{tabular}{lcccc}
		\toprule 
		& Theoretical Guarantees? & Stable?  &  Can Handle Long Task Sequences? \\
        \midrule 
        L2P, Dual Prompt, CodaPrompt  & None & No  &  No \\ 
        % \midrule
        EASE &  None  & No  &No \\ 
        % \midrule
		SimpleCIL  & None &  Yes & Yes \\
		RanPAC  & None & No  &  No \\
		LoRanPAC (Ours) & \cref{theorem:training-loss-deterministic,theorem:test-loss-deterministic} & Yes &  Yes \\ 
		\bottomrule
	\end{tabular}
    }
\end{table}

\subsection{Random Vector Functional Link Network and Random Feature Models}\label{subsection:review-RFM}
Here we review related works on random feature models. Recall our model is a two-layer network of the form 
\begin{equation}\label{eq:review-2-layer-ELM-RFM} 
    \bX \mapsto \bW\cdot  \relu ( \bP \bX )
\end{equation}
where $\bP$ is randomly generated and fixed, and $\bW$ consists of trainable parameters.

\myparagraph{Random Vector Functional Link Network} In independent efforts, \citet{Schmidt-ICPR1992} and \citet{Pao-Computer1992} considered models of form \cref{eq:def-H}. \citet{Schmidt-ICPR1992} used the sigmoid activation function $\xi \mapsto \frac{1}{1+\exp(-\xi)}$, while \citet{Pao-Computer1992} specified an arbitrary activation function as inspired by \citet{Hornik-NN1989} and stack the features $\bX$ and $\bH$ together (see, e.g., Section 2.1 of \citet{Malik-ASC2023}).\footnote{Hence, the method of \citet{Ahrens-TMLR2024} can be viewed as a modern variant of \citet{Pao-Computer1992} as \citet{Ahrens-TMLR2024} stacks the output features of multiple intermediate layers for regression. \label{footnote:Ahrens} } The model \citet{Pao-Computer1992} proposed has been known as \textit{random vector functional link} (RVFL), and the model of \citet{Schmidt-ICPR1992} is referred to, according to a recent review \citep{Malik-ASC2023}, as \textit{Schmidt neural network} (SNN).

Models of form \cref{eq:review-2-layer-ELM-RFM} are best combined with MSE losses, as training $\bW$ with $\bP$ fixed amounts to solving a least-squares problem, which admits a closed-form solution as shown by \citet{Schmidt-ICPR1992} and even earlier by \citet{Webb-NN1990}. 

The model \citet{Schmidt-ICPR1992} and \citet{Pao-Computer1992} advocated was proposed, again, by \citet{Huang-IEEEJCNN2004,Huang-Neurocomputing2006} under the name \textit{extreme learning machine} (ELM). While \citet{Huang-IEEEJCNN2004} claimed ELMs to be a \textit{new learning scheme} in the paper title, it was criticized by \citep{Wang-TNN2008,ELM-url} that ELMs are ideas stolen from the last century \citep{Schmidt-ICPR1992,Pao-Computer1992}, which \citet{Huang-IEEEJCNN2004} were aware of yet did not cite. Despite the criticism, and perhaps because of its ``fancy'' name, ELMs had once been popular and attracted many follow-up variants. We shall not review these variants here. 

% While \citet{Huang-CC2015} aimed to justify the differences between ELMs and the method of \citet{Schmidt-ICPR1992}, they are almost identical in our humble opinion. 

The model we considered, therefore, follows in spirit the framework put forth by \citet{Schmidt-ICPR1992,Pao-Computer1992}. Crucially, our approach is a modern instantiation of their framework (cf. \cref{table:ELMs}), where we consider larger-scale problems with ill-conditioned data, online solvers with GPU implementations. But does it make sense to use the random ReLU features $\bH_t:=\relu ( \bP \bX_t )$ rather than the pre-trained features $\bX_t$ for regression? Would the transformation $\bP\in \bbR^{E\times d}$ even harm the pre-trained knowledge? \citet{Mcdonnell-NeurIPS2023} empirically  verified that having the first layer $\bP$ is beneficial to performance as long as $E\geq d$, and the accuracy tends to be higher for larger $E$ (see Table A5 of Appendix F.6 of \citet{Mcdonnell-NeurIPS2023}). While RanPAC is limited to $E\approx 10^4$, our approach is inherently more scalable, allowing us to take $E=10^5$. The pursuit in higher embedding dimension $E$ brings us into the \textit{over-parameterized} territory, where the corresponding MSE objective has infinitely many solutions, and this is different from the classic works on RVFLs or SNNs that largely focus on the case where there are just a few hundred neurons (e.g., $E\approx 100$).

\begin{table}[h]
	\centering
	
	\ra{1.2}
        \caption{Comparison between our approach and classic methods for extreme learning machines. \label{table:ELMs} }
    \scalebox{0.8}{
	\begin{tabular}{lcccc}
		\toprule 
		& Problem Scale & Solver &  Data &  Compute Platform \\
        \midrule 
        \citet{Schmidt-ICPR1992,Pao-Computer1992} & small & offline & well-conditioned  &  CPU \\ 
        % \midrule
		LoRanPAC (Ours) & large & online & ill-conditioned &  GPU \\ 
		\bottomrule
	\end{tabular}
    }
\end{table}

\myparagraph{Random Feature Models} Model \cref{eq:review-2-layer-ELM-RFM} is also studied under the name \textit{random feature model} (RFM) with the origin of RFMs often attributed to \citet{Rahimi-NeurIPS2007}. The RFM is considered to be a simple proxy model for understanding deep networks, and hence it has recently been popular \citep{Belkin-ICML2018,Belkin-PNAS2019,Bartlett-PNAS2020,Hastie-AS2022,Mei-CPAM2022,Alexander-JMLR2023}. Some of these works make statistical assumptions on $\bX_t$ and address technical challenges in analyzing the nonlinear map $\bW\cdot  \relu (\bP \bX_t)$. 

Alternatively, one could conduct analysis conditioned on $\bH_t:=\relu (\bP \bX_t)$, which would be more manageable as it reduces to linear models. For example, the work of \citet{Xu-NeurIPS2019,Huang-SIAM-J-MDS2022,Bach-SIAM-J-MDS2024,Green-arXiv2024} truncates the SVD factors of the features before applying least-squares. This is similar to ours, with an important difference that they apply LoRanPAC only once, while we apply it continually. Also, their results make statistical (e.g., Gaussian) assumptions on $\bH_{1:t}$, which could violate our context that $\bH_{1:t}$ is generated via $\bH_t:=\relu (\bP \bX_t)$. In contrast, our results in the main paper, namely \cref{theorem:training-loss-deterministic,theorem:test-loss-deterministic}, make no assumptions on $\bH_{1:t}$ and are therefore applicable to random feature models and to the pre-trained models bridged with a random feature model.

\newpage

\section{Dataset Details}\label{section:dataset-details}
For convenience and completeness, we collect some details about the datasets in \cref{tab:CIL-dataset,tab:DIL-dataset}. 

% Note that we set $E=10^5$, while there are more than $10^5$ training samples for datasets such as CORe50 and DomainNet. This means that we are not  
\begin{table}[h]
    \centering
    \ra{1.2}
    \caption{Datasets used for class-incremental learning experiments. $^\dagger$: ObjectNet, OmniBenchmark, and VTAB contain a large number of classes, and we use a subset of these datasets delivered by \citet{Zhou-arXiv2023}; see their Table 4 and also Table A2 of \citet{Mcdonnell-NeurIPS2023}. \label{tab:CIL-dataset}}
    \scalebox{0.85}{
    \begin{tabular}{llrrrc}
        \toprule
        Dataset Name & Origin & Training Set Size & Test Set Size & \# of Classes  & Link\\
        \midrule
        CIFAR100 & \citep{Krizhevsky-CIFAR2009} & 50,000 & 10,000 & 100 & \href{https://www.cs.toronto.edu/~kriz/cifar.html}{Here} \\
        ImageNet-R & \citep{Hendrycks-ICCV2021} & 24,000 & 6,000 & 200 & \href{https://github.com/hendrycks/imagenet-r}{Here}\\
        ImageNet-A & \citep{Hendrycks-CVPR2021} & 5,981 & 5,985 & 200 & \href{https://github.com/hendrycks/natural-adv-examples}{Here} \\ 
        CUB-200 & \citep{Wah-2011} & 9,430 & 2,358 & 200 & \href{https://www.vision.caltech.edu/datasets/cub_200_2011/}{Here} \\
        ObjectNet$^\dagger$ & \citep{Barbu-NeurIPS2019} & 26,509 & 6,628 & 200 & \href{https://objectnet.dev/}{Here} \\ 
        OmniBenchmark$^\dagger$ & \citep{Zhang-ECCV2022} &  89,697 & 5,985 & 300 & \href{https://github.com/ZhangYuanhan-AI/OmniBenchmark}{Here} \\ 
        VTAB$^\dagger$ & \citep{Zhai-arXiv2019b} & 1,796 & 8,619 & 50 & \href{https://google-research.github.io/task_adaptation/}{Here} \\ 
        StanfordCars & \citep{Krause-ICCV2013} & 8,144 & 8,041 & 196 & \href{https://www.kaggle.com/datasets/jessicali9530/stanford-cars-dataset}{Here} \\
        \bottomrule
    \end{tabular}
    }
\end{table}

\begin{table}[h]
    \centering
    \ra{1.2}
    \caption{Datasets used for domain-incremental learning. See Table A3 of \citet{Mcdonnell-NeurIPS2023} for even more details. \label{tab:DIL-dataset}}
    \scalebox{0.85}{
    \begin{tabular}{llrrrc}
        \toprule
        Dataset Name & Origin & Training Set Size & Test Set Size & \# of Classes & Link \\
        \midrule
        CORe50 & \citep{Lomonaco-CoRL2017} & 119,894 & 44,972 & 50 & \href{http://bias.csr.unibo.it/maltoni/download/core50/core50_imgs.npz}{Here}\\
        CDDB-Hard & \citep{Li-WACV2023} & 16,068 & 5,353 & 2 & \href{https://coral79.github.io/CDDB_web/}{Here}\\
        DomainNet & \citep{Peng-ICCV2019} & 409,832 & 176,743 & 345 & \href{http://ai.bu.edu/M3SDA/\#dataset}{Here} \\ 
        \bottomrule
    \end{tabular}
    }
\end{table}

\section{Experimental Setup Details}\label{subsection:experiment-details}
The details of how we run each of the methods are specified as follows.
For L2P, DualPrompt, CodaPrompt, we use the hyperparameters available in the PILOT repo \citep{Sun-arXiv2023} for the CIFAR100 and ImageNet-R datasets. Since no official hyperparameters are released for other datasets, we simply use their respective hyperparameters of CIFAR100 for other datasets. One might notice that these methods have large accuracy drops on other datasets, suggesting that they are sensitive to hyperparameters and the good and dataset-specific hyperparameters, if they exist, are to be found for these methods to perform well. This is an inherent drawback as they involve minimizing a highly non-convex training objective. On the other hand, many other methods, including SimpleCIL, RanPAC, and ours, are almost parameter-free. Specifically, the only hyperparameter of our approach is the truncation threshold and its role is clearly explained in the main paper.

For joint linear classifiers, that is LC ($\bX_{1:T}$) or LC ($\bH_{1:T}$), we train for 20 epochs using the cross-entropy loss, batch size $48$, weight decay $0.0005$, and SGD with the cosine annealing schedule. We run LC ($\bX_{1:T}$) and LC ($\bH_{1:T}$) with different initial learning rates $\{0.001, 0.005, 0.01, 0.02, 0.03\}$, and take report the maximum accuracy (\cref{tab:LC}). Note that LC ($\bX_{1:T}$) and LC ($\bH_{1:T}$) are trained using the cross-entropy loss, not the MSE loss. The reason is that the features $\bH_{1:t}$  are highly ill-conditioned (\cref{fig:stability}), which makes SGD converge very slowly with the MSE loss. Comparing this to \cref{eq:LoRanPAC-LS}, we conclude that the MSE loss in our setting is useful when the objective is minimized via robust numerical computation techniques (e.g., our LoRanPAC implementation in \cref{section:ITSVD-algo-details}) instead of SGD.

\begin{table}[t]
    \centering
    \ra{1.2}
    \caption{Accuracy of training joint linear classifiers given all data with different initial learning rates $\{0.001, 0.005, 0.01, 0.02, 0.03\}$ and the corresponding maximum accuracy. LC ($\bX_{1:T}$) trains a linear classifier using all pre-trained features $\bX_{1:T}$ and LC ($\bH_{1:T}$) uses all embedded features. \label{tab:LC}}
    \scalebox{0.75}{
    \begin{tabular}{lccccc>{\columncolor[gray]{.95}}cccccc>{\columncolor[gray]{.95}}c}  
        \toprule
        &  \multicolumn{6}{c}{LC ($\bX_{1:T}$)} & \multicolumn{6}{c}{LC ($\bH_{1:T}$)} \\ 
        \cmidrule(lr){2-7} \cmidrule(lr){8-13}
         & 0.001 & 0.005 & 0.01 & 0.02 & 0.03 & Max & 0.001 & 0.005 & 0.01 & 0.02 & 0.03 & Max \\
        \midrule
        \multicolumn{13}{l}{\textit{ViTs pre-trained on ImageNet-1K}
         (\texttt{vit\_base\_patch16\_224}):} \\ 
        CIFAR100 & 86.39 & 87.56 & 87.47 & 87.09 & 86.99 & 87.56 & 87.76 & 86.68 & 86.36 & 86.75 & 86.46 & 87.76 \\
        ImageNet-R & 70.25 & 72.42 & 72.22 & 72.02 & 71.52 & 72.42 & 73.00 & 71.08 & 71.18 & 70.70 & 71.15 & 73.00
\\
        ImageNet-A  & 54.64 & 58.85 & 58.46 & 57.67 & 56.55 & 58.85 & 59.25 & 56.16 & 56.09 & 56.48 & 56.42 & 59.25 \\ 
        CUB-200  & 82.32 & 87.62 & 88.59 & 88.63 & 88.76 & 88.76 & 88.72 & 88.13 & 88.08 & 88.17 & 87.83 & 88.72  \\
        ObjectNet & 57.53 & 59.70 & 59.22 & 58.68 & 58.43 & 59.70 & 59.96 & 56.14 & 55.63 & 55.48 & 55.34 & 59.96 \\ 
        Omnibenchmark & 76.11 & 79.00 & 79.55 & 79.43 & 79.50 & 79.55 & 80.02 & 79.62 & 79.57 & 79.62 & 79.43 & 80.02 \\
        VTAB & 86.40 & 90.89 & 91.32 & 91.23 & 90.89 & 91.32 & 91.17 & 89.86 & 89.99 & 90.16 & 89.99 & 91.17 \\
        StanfordCars  & 39.56 & 62.54 & 69.29 & 72.86 & 74.12 & 74.12 & 72.43 & 73.65 & 73.54 & 72.81 & 72.49 & 73.65 \\
        \midrule
        \multicolumn{13}{l}{\textit{ViTs pre-trained on ImageNet-21K} (\texttt{vit\_base\_patch16\_224\_in21k}):} \\ 
        CIFAR100 & 86.15 & 86.78 & 86.33 & 85.86 & 85.17 & 86.78  & 85.80 & 85.16 & 85.05 & 85.31 & 85.4 & 85.80 \\
        ImageNet-R & 67.22 & 68.63 & 67.12 & 65.90 & 65.28 & 68.63 & 68.65 & 68.00 & 68.17 & 68.23 & 67.78 & 68.65 \\
        ImageNet-A & 46.68 & 51.42 & 50.03 & 49.24 & 48.58 & 51.42  & 51.15 & 50.16 & 49.44 & 48.98 & 49.64 & 51.15 \\ 
        CUB-200 & 85.58 & 88.89 & 89.06 & 88.72 & 88.21 & 89.06 & 89.31 & 88.17 & 88.63 & 88.46 & 88.51 & 89.31 \\
        ObjectNet & 58.01 & 58.39 & 57.50 & 55.87 & 54.42 & 58.39 & 56.93 & 53.33 & 54.44 & 54.47 & 54.54 & 56.93 \\ 
        Omnibenchmark & 78.95 & 79.67 & 79.73 & 79.45 & 79.33 & 79.73 & 79.62 & 79.26 & 79.11 & 78.91 & 79.05 & 79.62 \\
        VTAB & 87.71 & 90.78 & 90.97 & 90.71 & 90.11 & 90.97 & 90.78 & 91.03 & 90.42 & 90.44 & 90.27 & 91.03 \\
        StanfordCars & 44.80 & 64.22 & 68.06 & 68.92 & 68.71 & 68.92 & 69.38 & 67.64 & 67.65 & 67.79 & 67.39 & 69.38 \\
        \bottomrule
    \end{tabular}
    }
\end{table}

We also consider the idea of \textit{first-session adaptation}. This idea introduces a few hyperparameters such as the learning rate and schedule. We run experiments with two sets of hyperparameters, given respectively by RanPAC and EASE. We attach the symbol $\dagger$ to the method name when we use the hyperparameters of RanPAC (e.g., ADAM$^\dagger$, LoRanPAC$^\dagger$, RanPAC$^\dagger$).  We attach the symbol $*$ when we use the hyperparamters of EASE (e.g., LoRanPAC$^*$, EASE$^*$).

For RanPAC, the official hyperparameters given by \citet{Mcdonnell-NeurIPS2023} vary for different datasets. We try to unify the setup by keeping using the hyperparameters most frequently used by \citet{Mcdonnell-NeurIPS2023}. Specifically, we set the embedding dimension $E$ to $10000$ for RanPAC; note the exception that \citet{Mcdonnell-NeurIPS2023} run the CDDB experiments with $E=5000$, even though their Table A5 showed that larger $E$ in general leads to higher accuracy on CIFAR100. The main hyperparameters of RanPAC used for first-session adaptation are as follows:
\begin{verbatim}
   {"tuned_epoch":20,
    "init_lr":0.01,
    "batch_size":48,
    "weight_decay":0.0005}
\end{verbatim}
We run ADAM$^\dagger$, LoRanPAC$^\dagger$, RanPAC$^\dagger$ where first-session adaptation uses these hyperparameters consistently for all datasets.

The EASE approach of \citet{Zhou-CVPR2024} performs fine-tuning, not just for the first session, but for every session, in an interesting way. The hyperparameters in their released code vary for different sessions and different datasets, and we refer the reader to the official GitHub repo of EASE for details. We also run LoRanPAC$^*$ with the hyperparameters of EASE for first-session adaptation. Note that since EASE does not show experiments on StanfordCars, or does not release hyperparameters on this dataset, we run EASE with its CIFAR100 hyperparameters for StanfordCars; see also \cref{tab:EASE-cars} of \cref{subsection:performance-stanford-cars} where we tune the initial learning rates of EASE on StanfordCars, showing that the accuracy is still low.

Finally, we note that in all tables, some approaches are marked in gray; they are not directly comparable to our approach as the methodology can be very different and it is in fact possible to combine one with another for even better performance. On the other hand, RanPAC is the most related to our method, hence we highlight the comparison with the purple background.

\section{Extra Experiments, Figures, and Tables}\label{section:extra-experiments}

\subsection{Performance on StanfordCars}\label{subsection:performance-stanford-cars}
It is observed in \cref{tab:CIL-ViT} that many methods exhibit significant performance drops on StanfordCars. Are these methods inherently unable to handle this dataset, or is it our taking inappropriate hyperparameters that lead to poor performance? Note that the authors of these works did not test their methods on StanfordCars, nor they released the corresponding hyperparameters. To rule out the case of hyperparameter misspecification, we take the EASE$^*$ method of \citet{Zhou-CVPR2024} for example, and we run it with different initial learning rates $\{0.001, 0.005, 0.01, 0.02, 0.03\}$ for the first task (this hyperparameter is called \texttt{init\_lr} in the JSON file of the code repo of \citet{Sun-arXiv2023}; all other hyperparameters are set to the corresponding hyperparamters \citet{Zhou-CVPR2024} gave for CIFAR100. The results are in \cref{tab:EASE-cars}. It shows that different initial learning rates for EASE do not improve the performance on StanfordCars too much, suggesting that StanfordCars is perhaps inherently difficult for these types of methods. 

\begin{table}[]
    \centering
    \ra{1.2}
    \caption{Accuracy of EASE with different initial learning rates $\{0.001, 0.005, 0.01, 0.02, 0.03\}$ on StanfordCars. \label{tab:EASE-cars}}
    \scalebox{0.68}{
    \begin{tabular}{ccccccccccccccc}  
        \toprule
         \multicolumn{3}{c}{0.001} & \multicolumn{3}{c}{0.005} & \multicolumn{3}{c}{0.01} & \multicolumn{3}{c}{0.02} & \multicolumn{3}{c}{0.03} \\
         \cmidrule(lr){1-3} \cmidrule(lr){4-6} \cmidrule(lr){7-9} \cmidrule(lr){10-12} \cmidrule(lr){13-15}
         Inc-5 & Inc-10 & Inc-20 & Inc-5 & Inc-10 & Inc-20 & Inc-5 & Inc-10 & Inc-20 & Inc-5 & Inc-10 & Inc-20 & Inc-5 & Inc-10 & Inc-20 \\ 
        \midrule
         33.42 & 32.27 & 19.38 & 33.48 & 32.33 & 19.46 & 33.39 & 32.65 & 20.06 & 32.58 & 32.11 & 24.96 & 32.41 & 31.77 & 29.76 \\ 
        \bottomrule
    \end{tabular}
    }
\end{table}

\subsection{Experiments with ViT features Pre-trained on ImageNet-21K}\label{subsection:experiments-ViT21k}
Note that by default we use ViTs pre-trained on ImageNet-1K (\texttt{vit\_base\_patch16\_224}). Here in \cref{tab:CIL-ViT-in21k} we show experiments with ViTs pre-trained on ImageNet-21K. Comparing this with \cref{tab:CIL-ViT}, we obtain a similar conclusion that LoRanPAC is more stable than and outperforms RanPAC.

\begin{table}[]
    \centering
    \ra{1.2}
    \caption{Final accuracy of different methods using ViTs pre-trained on ImageNet-21K (\texttt{vit\_base\_patch16\_224\_in21k}). Compare this with \cref{tab:CIL-ViT} of the main paper. \label{tab:CIL-ViT-in21k}}
    \scalebox{0.72}{
    \begin{tabular}{lccccccccccccc}
        \toprule 
        \textcolor{myred}{\textbf{(Part 1)}} &  \multicolumn{3}{c}{CIFAR100 (B-0) } & \multicolumn{3}{c}{ImageNet-R (B-0) } & \multicolumn{3}{c}{ImageNet-A (B-0) } & \multicolumn{3}{c}{CUB-200 (B-0)} & Avg.  \\
        \rowcolor[gray]{0.95} LC ($\bX_{1:T}$) &\multicolumn{3}{c}{86.78} & \multicolumn{3}{c}{68.63}& \multicolumn{3}{c}{51.42} & \multicolumn{3}{c}{89.06} & 73.97 \\ 
        \rowcolor[gray]{0.95} LC ($\bH_{1:T}$) &\multicolumn{3}{c}{85.80} & \multicolumn{3}{c}{68.65} & \multicolumn{3}{c}{51.15} & \multicolumn{3}{c}{89.31} & 73.73 \\ 
        \midrule
        &  Inc-5 & Inc-10 & Inc-20 & Inc-5 & Inc-10 & Inc-20 & Inc-5 & Inc-10 & Inc-20 & Inc-5 & Inc-10 & Inc-20 & \\
        \midrule
        \rowcolor{mypurple} RanPAC & 87.58 & 87.65 &87.75& 68.18 & 70.03 & 70.13 & \textbf{37.59} & 52.01 & 52.73 & 89.48 & 89.65 & 89.44 & 73.52 \\ 
        \rowcolor{mypurple} LoRanPAC & 88.62 & 88.63 & 88.67 & 70.85 & 70.93 & 70.72 & \textbf{54.71} & 54.71 & 55.10 & 90.33 & 90.33 & 90.46 & 76.17 \\
        \midrule
        \rowcolor[gray]{0.95} EASE$^*$ & 85.85 & 87.67 & 89.47 & 70.27 & 74.53 &  75.88 & 43.05 & 47.53 & 54.51 & 86.77 & 86.81 & 85.50 & 73.99 \\
        \rowcolor{mypurple} RanPAC$^*$ & 90.26 & 91.39 & 91.97 & 75.47 & 76.10 & 77.33  & \textbf{47.60} & 58.00 & 62.74 & 83.21 & 89.57 & 89.69 & 77.78\\
        \rowcolor{mypurple} LoRanPAC$^*$  & 90.55 & 91.88 & 92.39 & 76.48 & 76.82 & 77.25 & \textbf{56.95} & 58.85 &  62.74 & 90.63 & 90.71 & 90.67 & 79.66  \\
        %ICL-R$^*$ &  &  &  & &  & & & & & & &  \\
        \midrule
        \textcolor{myred}{\textbf{(Part 2)}} &  \multicolumn{3}{c}{ObjectNet (B-0) } & \multicolumn{3}{c}{OmniBenchmark (B-0) } & \multicolumn{3}{c}{VTAB (B-10) } & \multicolumn{3}{c}{StanfordCars (B-16)} & Avg.  \\
        \rowcolor[gray]{0.95} LC ($\bX_{1:T}$) &\multicolumn{3}{c}{58.39} & \multicolumn{3}{c}{79.73}& \multicolumn{3}{c}{90.97} & \multicolumn{3}{c}{68.92} &  74.50 \\ 
        \rowcolor[gray]{0.95} LC ($\bH_{1:T}$) &\multicolumn{3}{c}{56.93} & \multicolumn{3}{c}{79.62} & \multicolumn{3}{c}{91.03} & \multicolumn{3}{c}{69.38} & 74.24 \\ 
        \midrule
        &  Inc-5 & Inc-10 & Inc-20 & Inc-5 & Inc-10 & Inc-20 & Inc-5 & Inc-10 & Inc-20 & Inc-5 & Inc-10 & Inc-20 & \\
        \midrule
        \rowcolor{mypurple} RanPAC & 59.14 & 59.23 & 59.29 &  78.38 &  78.40 & 78.06 & 92.37 & 92.84 & 92.84  & \textbf{43.60} & 65.78 & 65.78 & 72.14 \\ 
        \rowcolor{mypurple} LoRanPAC & 61.35 & 61.36 & 61.33 & 80.12 & 80.03 & 80.13 & 92.90 & 92.76 & 92.92 & \textbf{68.96} & 68.76 & 68.95 & 75.80 \\ 
        \midrule
        \rowcolor[gray]{0.95} EASE$^*$ & 54.98 & 57.50 & 60.35& 72.88 & 73.50 & 73.87 & 88.24 & 93.46 &  93.43 & 35.43 & 34.62 & 37.32 & 64.63  \\
        \rowcolor{mypurple} RanPAC$^*$ & 64.56 & 66.55 & 66.05 & 78.55 & 79.15 &  79.23 & 92.77 &  92.84 &   93.72 &  \textbf{2.31} & 69.11 & 69.11 & 71.16 \\
        \rowcolor{mypurple} LoRanPAC$^*$  & 66.76 & 66.67 & 66.93 & 80.55 & 80.82 & 81.55 & 93.85 & 93.79 & 93.76 &  \textbf{71.46} & 71.58 & 71.71 & 78.29 \\
        %ICL-R$^*$ &  &  &  & &  & & & & & & &  \\
        \bottomrule
    \end{tabular}
    }
\end{table}

\subsection{Experiments That Show LoRanPAC Stabilizes the Training Losses}

In \cref{fig:truncation-effect} of the main paper we showed that LoRanPAC has stable test accuracy by truncating the SVD factors, compared to the nearly zero test accuracy of \ref{eq:1layer-min-norm}. In \cref{fig:training-loss}, we show that the truncation also stabilizes the average training MSE losses  $\frac{1}{M_t}  \| \bW \bH_{1:t} - \bY_{1:t}\|_{\textnormal{F}}^2$, which eliminates the numerical issues and furthermore enables generalization in test scenarios.

\begin{figure}[h]
    \centering
    \includegraphics[width=0.24\textwidth]{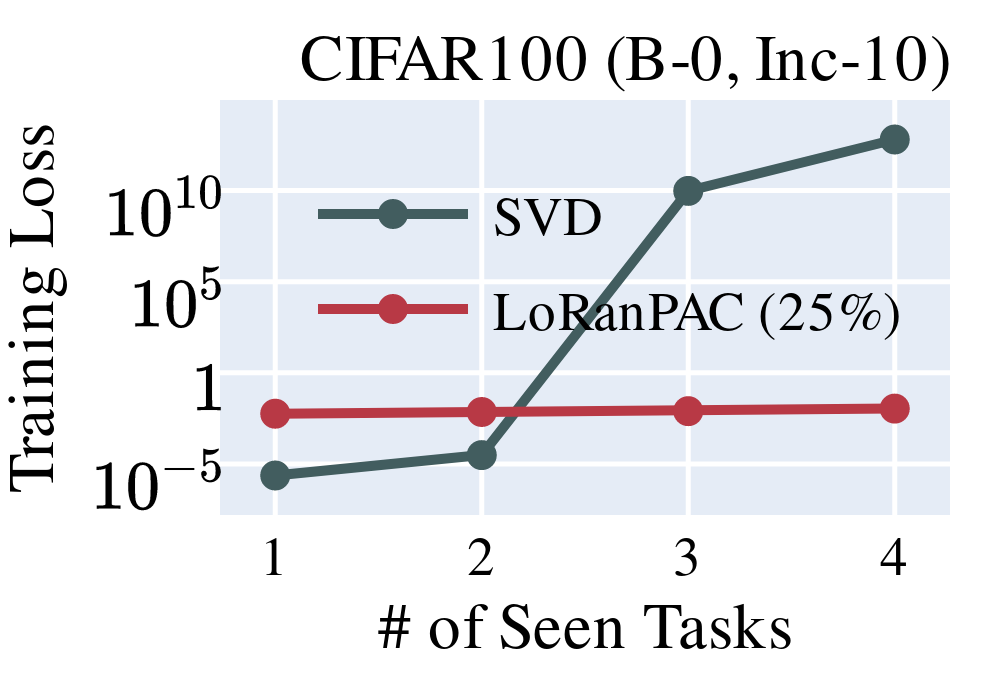}
	\includegraphics[width=0.24\textwidth]{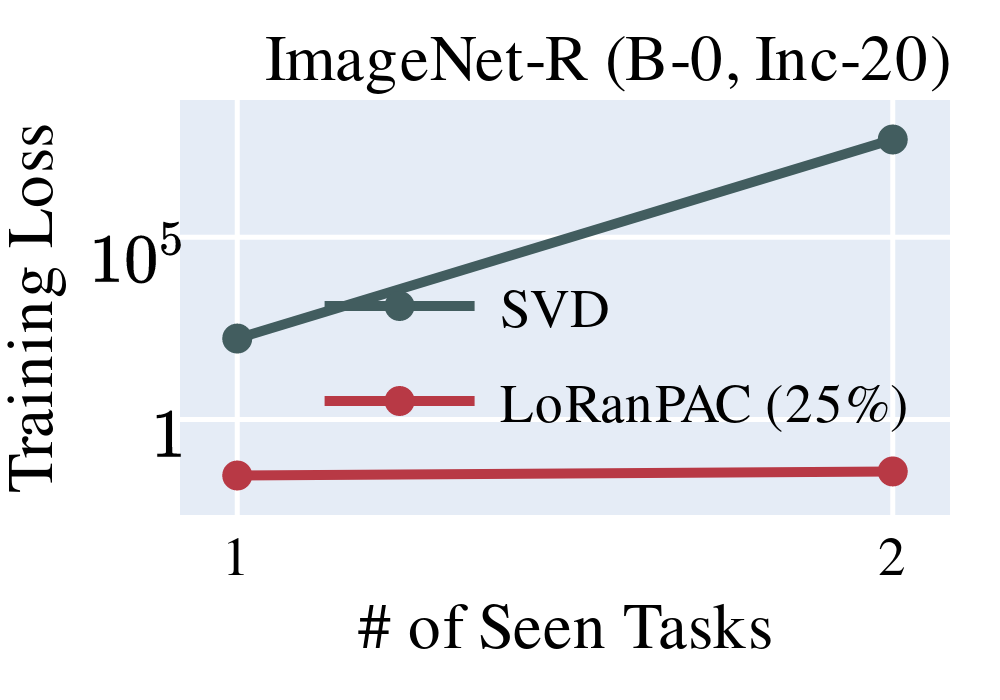}
	\includegraphics[width=0.24\textwidth]{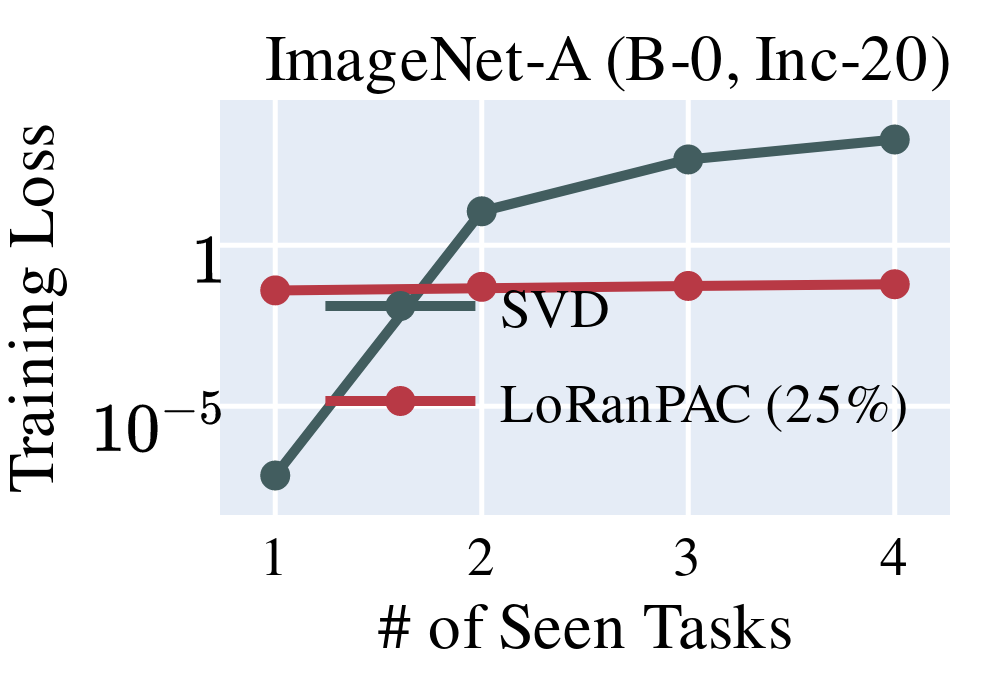}
	\includegraphics[width=0.24\textwidth]{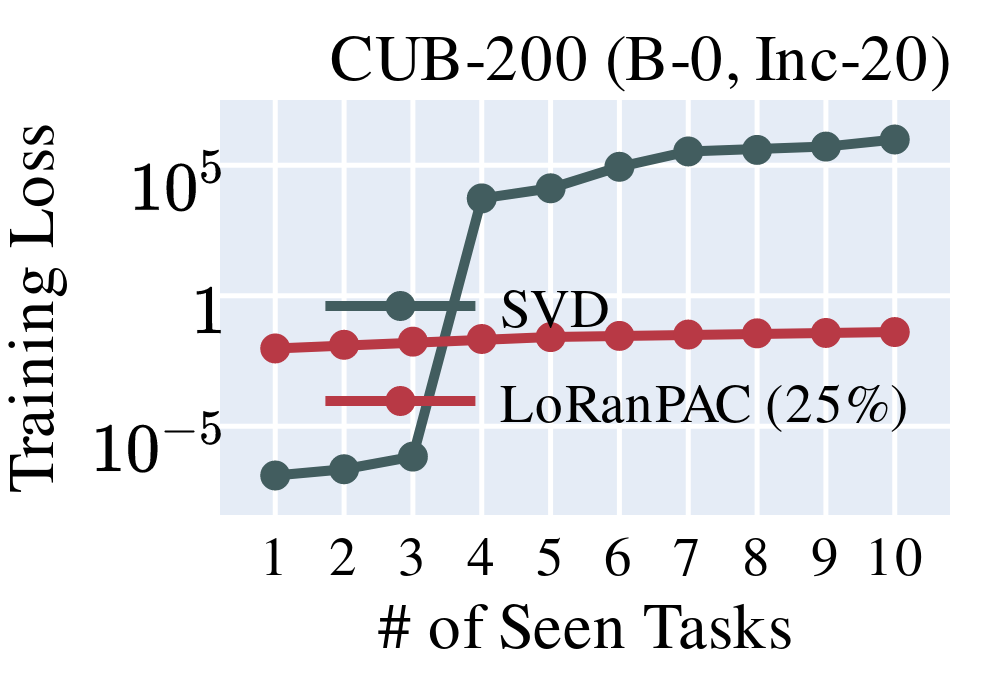}
    \caption{The average training MSE loss $\frac{1}{M_t}  \| \bW \bH_{1:t} - \bY_{1:t}\|_{\textnormal{F}}^2$ of the incremental SVD solution to \ref{eq:1layer-min-norm} explodes when eigenvalues of order $10^{-5}$ emerge (\cref{fig:stability}b). LoRanPAC ($25\%$) truncates $25\%$ minimum singular values and implements \ref{eq:LoRanPAC} online, stabilizing \ref{eq:1layer-min-norm}.  }
    \label{fig:training-loss}
\end{figure}

\subsection{Experiments on Domain-Incremental Learning (DIL)}\label{subsection:experiment-DIL}
Here we consider \textit{domain-incremental learning} (DIL), where each task has images of all objects collected from different sources or domains, e.g., objects in the images of task 1 could be hand-written sketches of cars, and images of task 2 could be colored cars.  

We follow the work of \citet{Mcdonnell-NeurIPS2023} to run domain-incremental learning experiments on 3 datasets, CORe50 \citep{Lomonaco-CoRL2017}, CDDB-Hard \citep{Li-WACV2023}, and DomainNet \citep{Peng-ICCV2019}. The corresponding experimental results are shown in \cref{tab:DIL-ViT}, from which we observe a similar phenomenon: LoRanPAC is more stable and has higher accuracy.

\begin{table}[]
    \centering
    \ra{1.2}
    \caption{Final accuracies of RanPAC and LoRanPAC with pre-trained ViTs for domain incremental learning. \label{tab:DIL-ViT}}
    \scalebox{0.75}{
    \begin{tabular}{lccccccc}
        \toprule 
        &  CORe50 & CDDB-Hard &  DomainNet  & Avg.  \\ 
        \midrule
        \rowcolor{mypurple} RanPAC & 94.98 & 75.14 & 64.20 & 78.11  \\ 
        \rowcolor{mypurple} LoRanPAC & 96.06 & 79.21 & 67.18 & 80.82 \\
        % ICL-R & & & & & &  \\
        \bottomrule
    \end{tabular}
    }
\end{table}

\subsection{Scaling Laws of the Embedding Dimension}\label{subsection:scale-E}

\begin{figure}[h]
    \centering
    \includegraphics[width=0.24\textwidth]{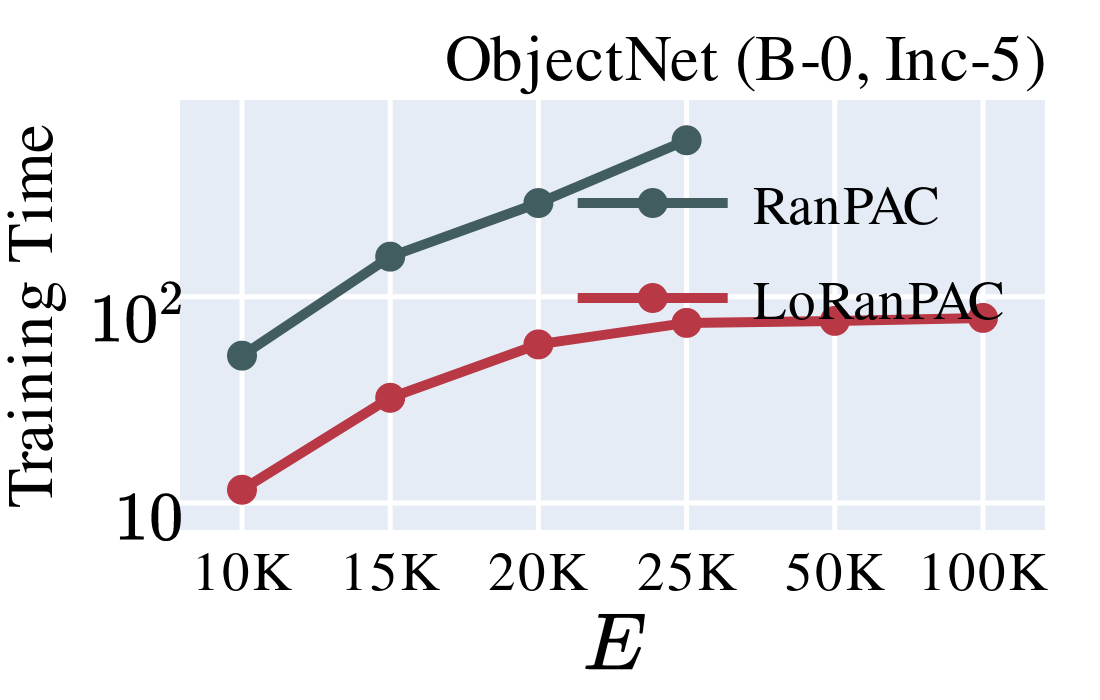}
    \includegraphics[width=0.24\textwidth]{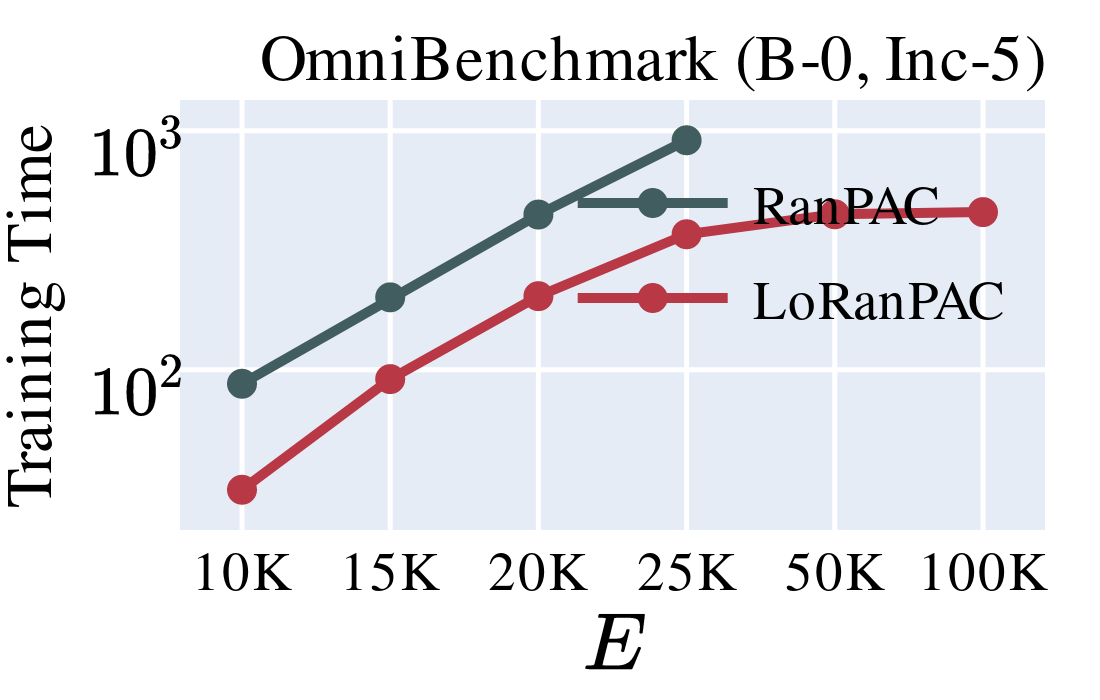}
    \includegraphics[width=0.24\textwidth]{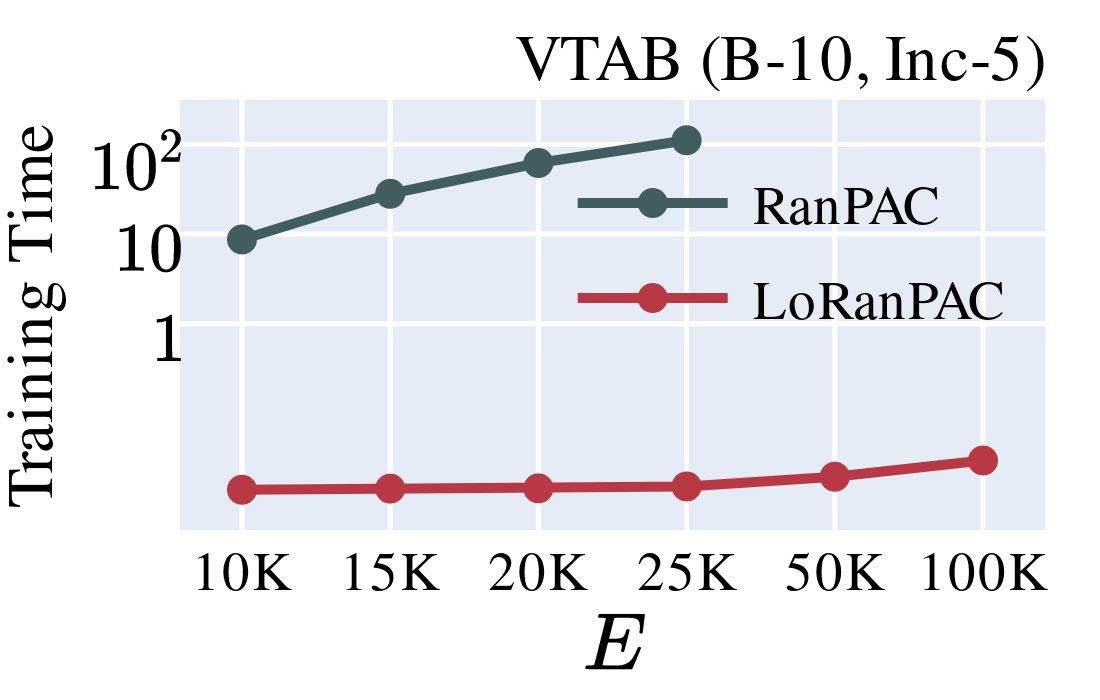}
    \includegraphics[width=0.24\textwidth]{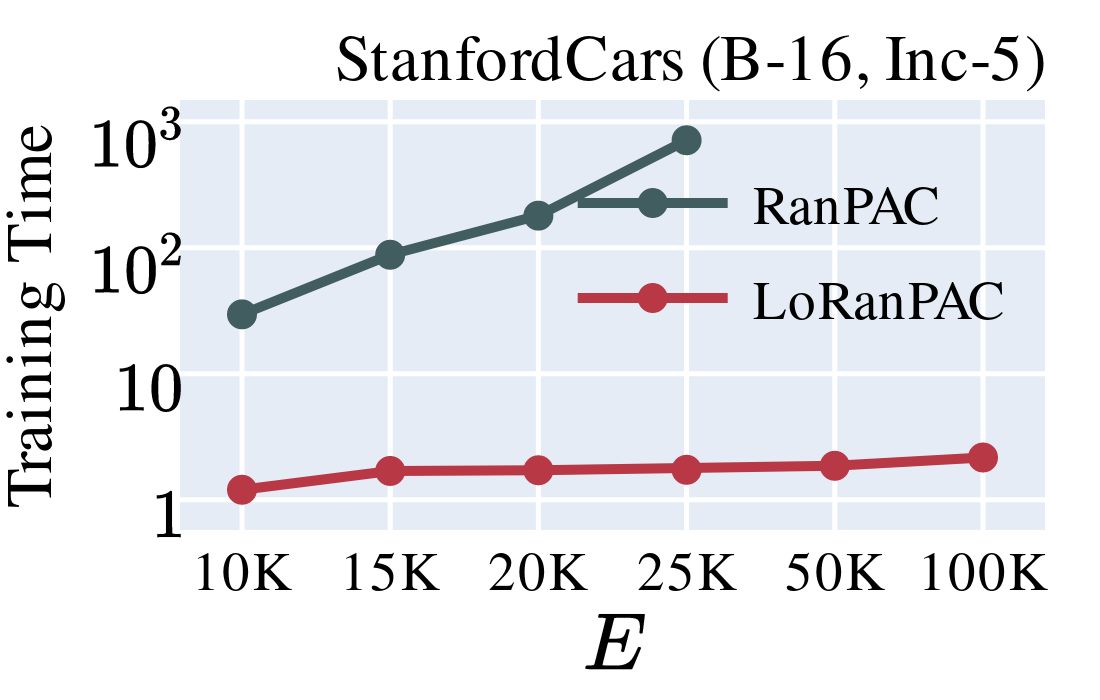}
    \caption{Training times (in minutes) of LoRanPAC and RanPAC for different embedding dimensions $E$. See also \cref{fig:training-time-inc5} in the main paper for similar results on the other four datasets. }
    \label{fig:training-time-inc5-2}
\end{figure}

\begin{figure}[h]
    \centering
    \includegraphics[width=0.24\textwidth]{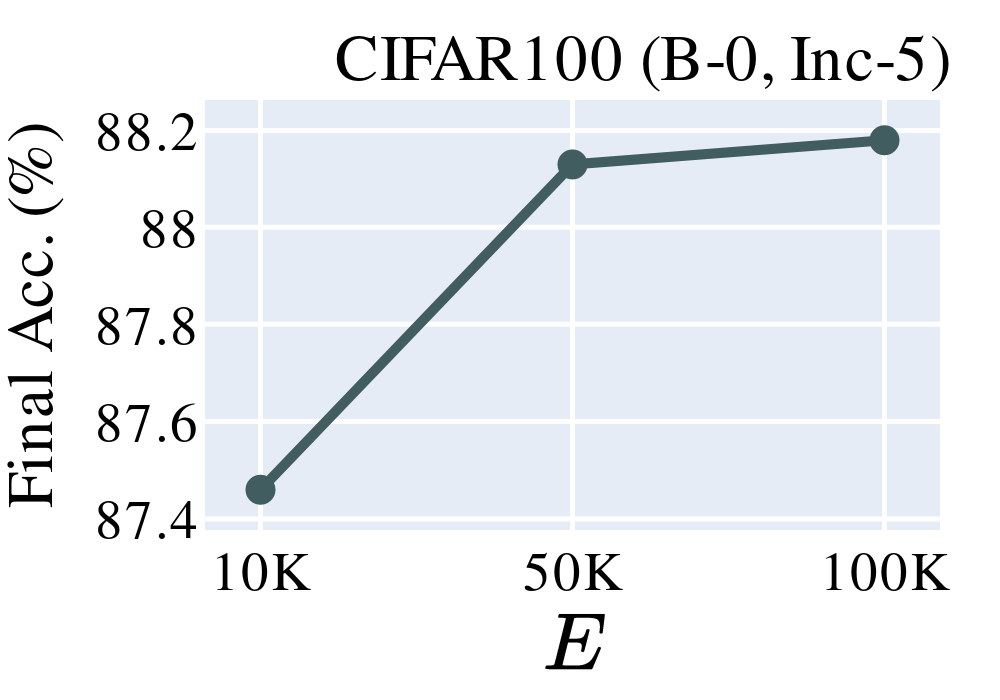}
    \includegraphics[width=0.24\textwidth]{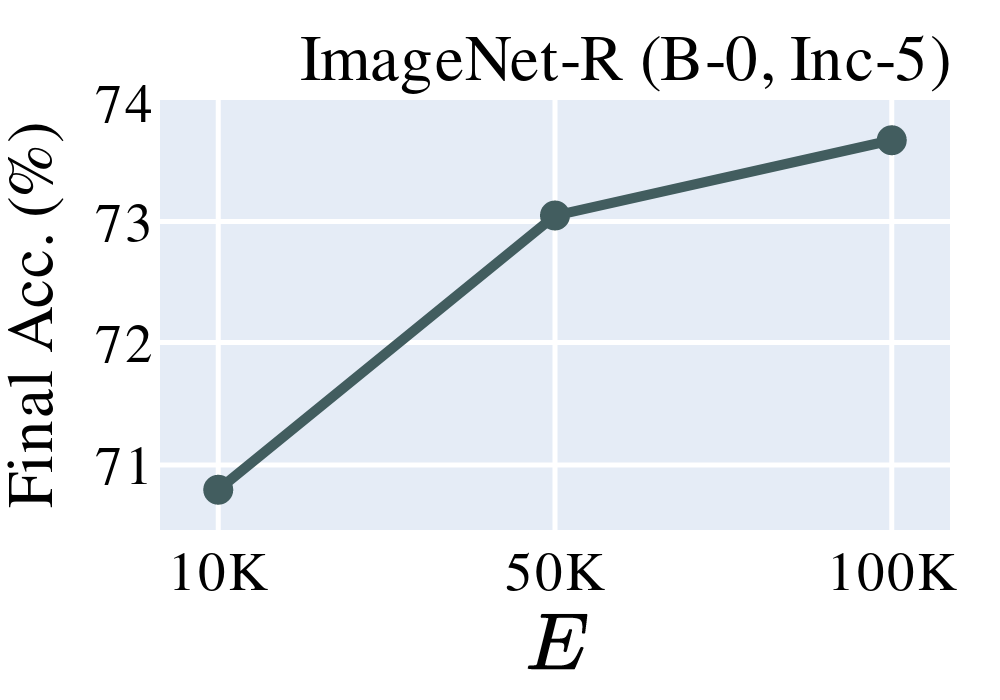}
    \includegraphics[width=0.24\textwidth]{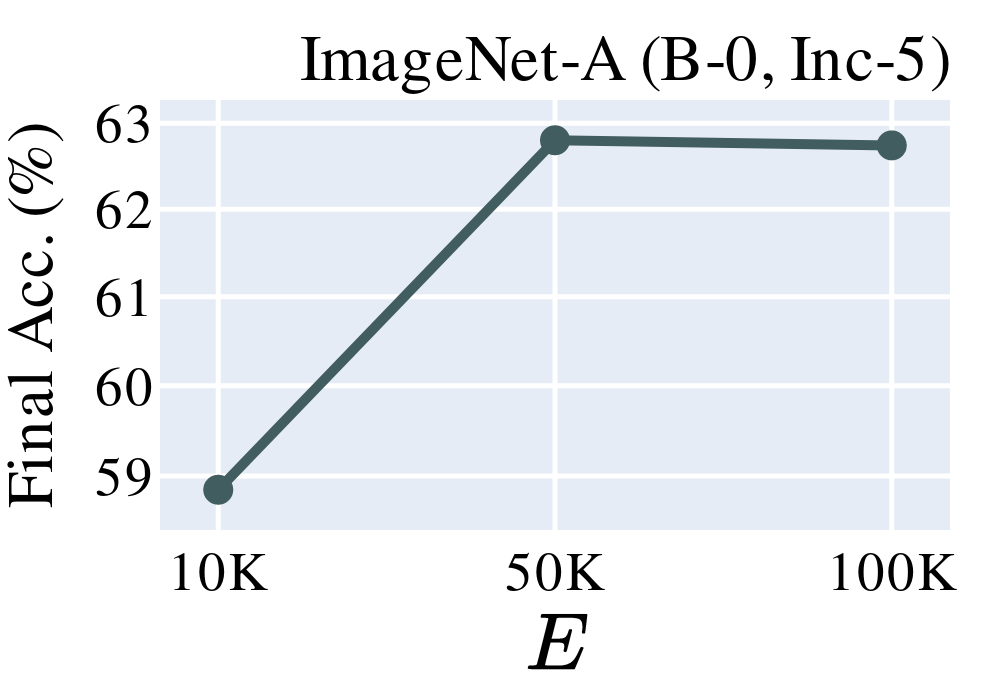}
    \includegraphics[width=0.24\textwidth]{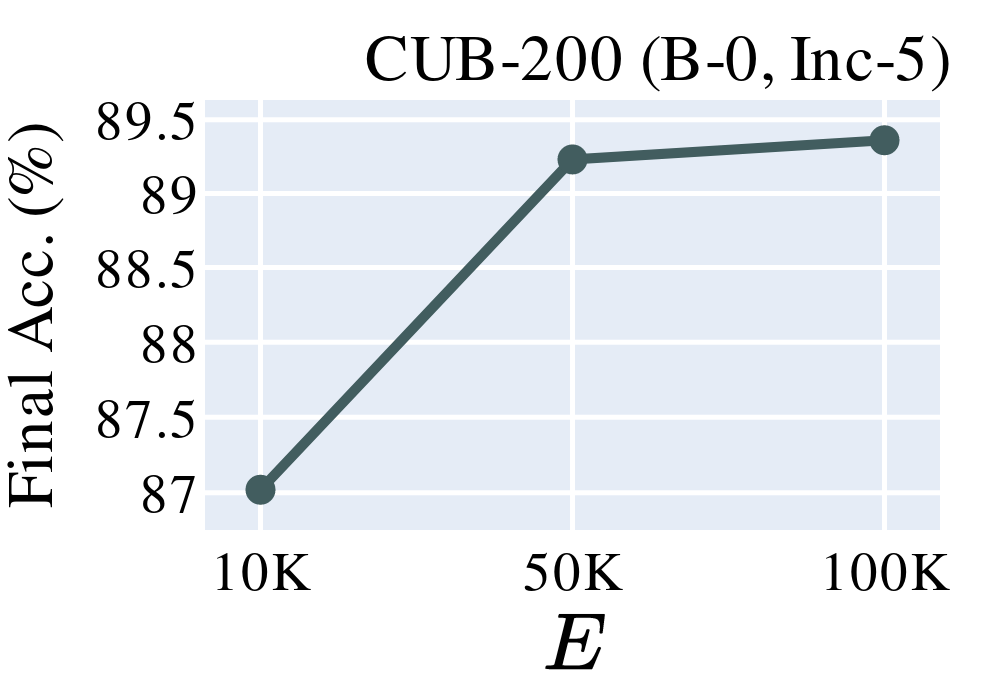}

    \includegraphics[width=0.24\textwidth]{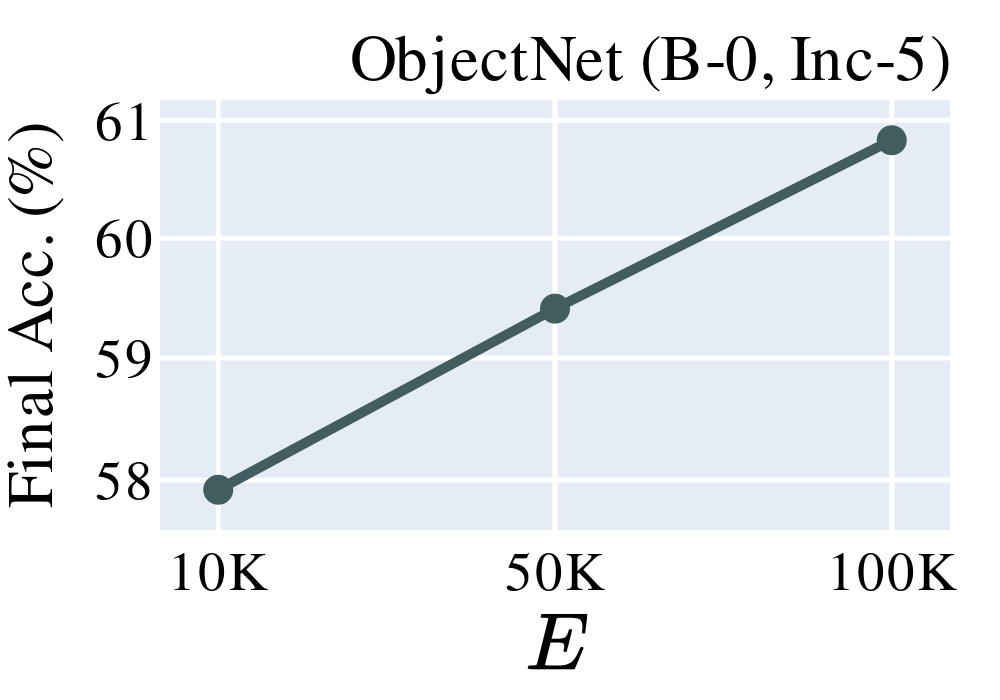}
    \includegraphics[width=0.24\textwidth]{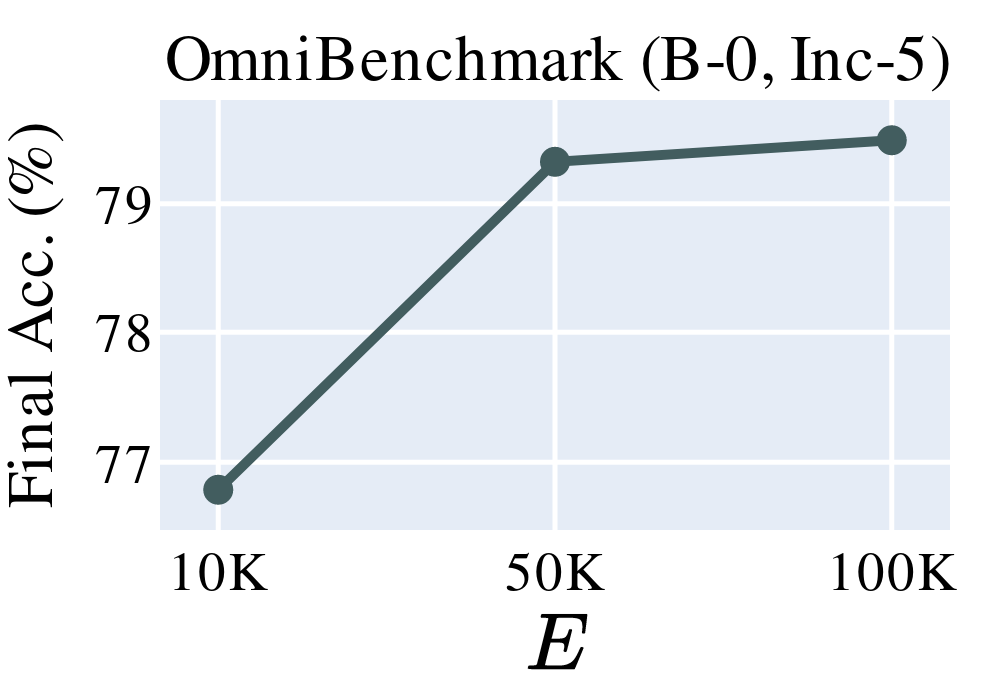}
    \includegraphics[width=0.24\textwidth]{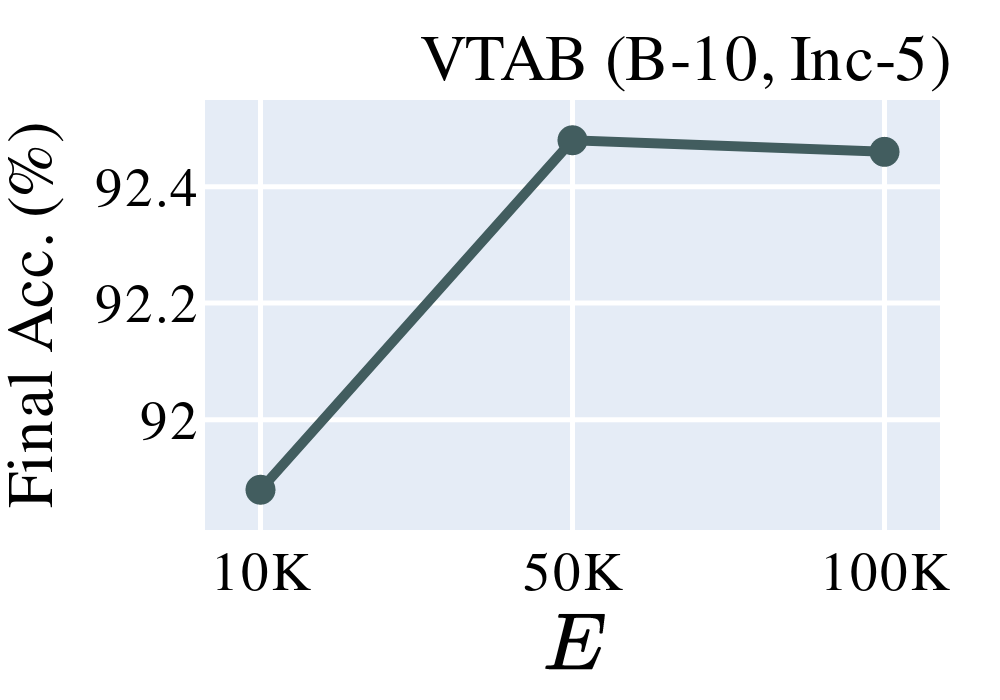}
    \includegraphics[width=0.24\textwidth]{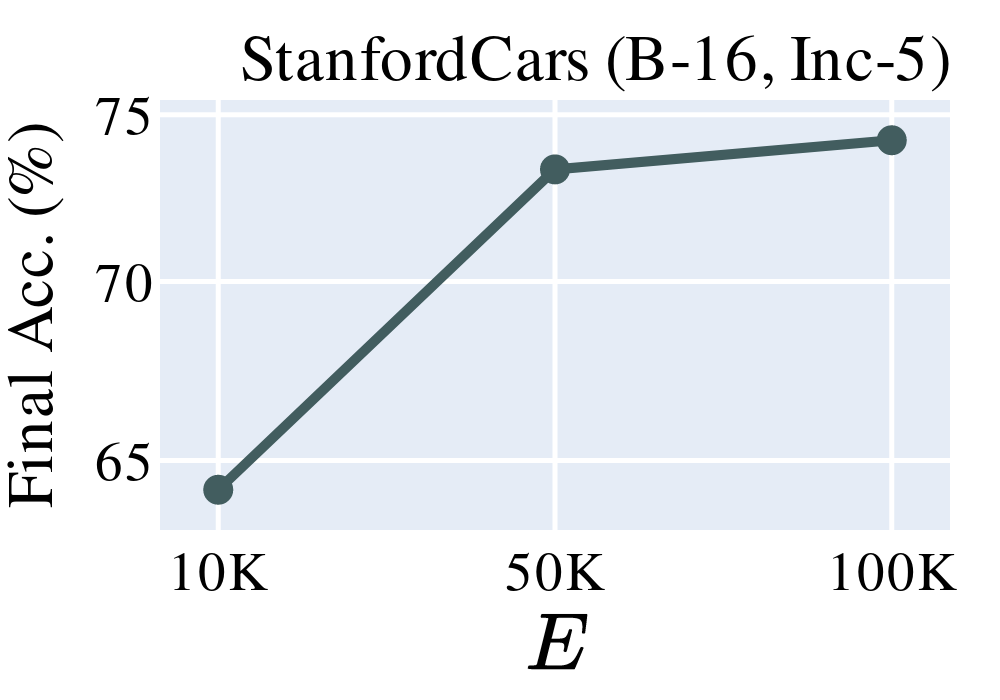}
    
    \caption{Final accuracy of LoRanPAC for varying embedding dimensions $E$. See also \cref{tab:CIL-ViT-E=10000} in comparison to RanPAC.  }
    \label{fig:acc-inc5-E} 
\end{figure}
\cref{fig:acc-inc5-E} exhibits a scaling law where the accuracy of LoRanPAC grows with the embedding dimension $E$. A similar phenomenon is also shown for RanPAC in Table A5 of Appendix F.6 of \citet{Mcdonnell-NeurIPS2023}. However, the experiments of \citet{Mcdonnell-NeurIPS2023} are limited to $E=15000$ as RanPAC is not scalable (cf. \cref{fig:training-time-inc5,fig:training-time-inc5-2}), and also limited to only the CIFAR100 dataset. Here, since our LoRanPAC implementation is more stable and scalable, we can run it with $E$ as large as $10^5$, therefore visualizing the scaling phenomenon for eight different datasets in \cref{fig:acc-inc5-E}. 

It is clearly tempting to scale the embedding dimension even more, but doing so brings up two issues. First, a large $E$ entails a large memory use, and will therefore eventually be infeasible. Second, we have not found any significant performance gain with even larger $E$ (e.g., $E=150000$ or $E=200000$), which is why we stopped at dimension $E=10^5$. Note that enlarging $E$ amounts to increasing the width of the corresponding layer. It was suggested that increasing the width is beneficial, theoretically \citep{Peng-ICML2023} and empirically \citep{Mirzadeh-ICML2022}. On the other hand, \cref{fig:acc-inc5-E} suggests the benefit of increasing the width empirically diminishes, e.g., the accuracies for $E=50$K and $E=100$K are comparable on  ImageNet-A, CUB-200, VTAB, and StanfordCars. This is empirically corroborated by \citet{Guha-ICML2024} and theoretically confirmed by \citet{Hu-arXiv2024}.

\subsection{Running times }
In \cref{fig:training-time-inc5} we compare the running times of several methods in addition to \ref{eq:RanPAC} (see \cref{fig:training-time-inc5,fig:training-time-inc5-2}). We see that LoRanPAC is faster than prompt-based methods (e.g., L2P, CodaPrompt) and adapter-based methods (e.g., EASE) on CIFAR100, ImageNet-R, ImageNet-A, and CUB.
\begin{table}[h]
    \centering
    \ra{1.2}
    \caption{Running times (in minutes) of various methods on CIL datasets with $q_2=5$ (Inc-5). This is the same setting as \cref{fig:training-time-inc5}. }
    \scalebox{0.75}{
    \begin{tabular}{lcccc}
        \toprule 
        &  CIFAR100 & ImageNet-R & ImageNet-A  & CUB \\ 
        % \midrule
        % LC ($\bX_{1:t}$)  \\ 
        % LC ($\bH_{1:t}$)  \\ 
        \midrule
        L2P & 52.09 & 72.31 & 26.96 & 17.62 \\ 
        CodaPrompt & 174.9 & 246.94 & 27.28 & 39.91 \\
        EASE  & 139.74 & 128.35 & 54.0 & 66.23 \\ 
        LoRanPAC ($E=100$K)  & 31.24 & 27.0 & 1.29 & 3.73\\
        \bottomrule
    \end{tabular}
    }
\end{table}

\subsection{Multiple Runs of The Experiments}
In \cref{tab:CIL-ViT-multiple} we report results under the the same setting of \cref{tab:CIL-ViT}. We report the mean and standard deviations over 3 random seeds.

\begin{table}[h]
    \centering
    \ra{1.2}
    \caption{Final accuracy with pre-trained ViTs under the same setting of \cref{tab:CIL-ViT}. Reported results are mean and standard deviations over 3 random seeds. See also \cref{tab:CIL-ViT}. \label{tab:CIL-ViT-multiple} }
    \scalebox{0.55}{
    \begin{tabular}{lcccccccccccc}
        \toprule 
        \textcolor{myred}{\textbf{(Part 1)}} &  \multicolumn{3}{c}{CIFAR100 (B-0)} & \multicolumn{3}{c}{ImageNet-R (B-0)} & \multicolumn{3}{c}{ImageNet-A (B-0)}   & \multicolumn{3}{c}{CUB-200 (B-0)}   \\
        %\midrule
        %\rowcolor[gray]{0.95} LC ($\bX_{1:T}$) &\multicolumn{3}{c}{87.56} & \multicolumn{3}{c}{72.42}& \multicolumn{3}{c}{58.85} & \multicolumn{3}{c}{88.76} & 76.90 \\ 
        %\rowcolor[gray]{0.95} LC ($\bH_{1:T}$) &\multicolumn{3}{c}{87.76} & \multicolumn{3}{c}{73.00}& \multicolumn{3}{c}{59.25} & \multicolumn{3}{c}{88.72} & 77.18 \\ 
        \midrule
         &  Inc-5 & Inc-10 & Inc-20 & Inc-5 & Inc-10 & Inc-20 & Inc-5 & Inc-10 & Inc-20 & Inc-5 & Inc-10 & Inc-20   \\
        \midrule
        % \rowcolor[gray]{0.95} L2P & 80.25  & 83.53 & 83.57 & 67.92 & 71.78 & 73.42 & 44.50 & 48.52 & 51.28 & 53.60 & 59.20 & 67.81 & 65.45 \\
        \rowcolor[gray]{0.95} L2P & 78.22$_{\pm1.49}$ & 82.33$_{\pm1.14}$ & 83.61$_{\pm0.24}$ & 68.36$_{\pm0.31}$ & 72.01$_{\pm0.37}$ & 73.74$_{\pm0.16}$ & 44.5$_{\pm0.18}$ & 49.31$_{\pm0.65}$ & 50.87$_{\pm0.35}$ & 52.6$_{\pm0.76}$ & 59.46$_{\pm0.55}$ & 67.47$_{\pm1.03}$ \\
        % \rowcolor[gray]{0.95} DualPrompt & 80.85 & 83.86  &  84.59 & 67.12 & 71.57 & 72.87 & 49.70 & 53.72 & 56.75 & 54.79 & 63.99 & 69.93 & 67.48 \\
        \rowcolor[gray]{0.95} DualPrompt & 80.39$_{\pm0.59}$ & 83.92$_{\pm0.12}$ & 85.58$_{\pm0.7}$ & 67.67$_{\pm0.4}$ & 71.58$_{\pm0.25}$ & 72.6$_{\pm0.16}$ & 49.24$_{\pm1.1}$ & 53.59$_{\pm0.19}$ & 56.46$_{\pm0.28}$ & 53.04$_{\pm1.32}$ & 61.96$_{\pm2.04}$ & 68.39$_{\pm1.25}$ \\
        % \rowcolor[gray]{0.95} CodaPrompt &  82.93 & 86.31 & 87.87  & 67.80 & 72.73 & 74.85 & 34.43 & 49.57 & 59.51 & 36.39 & 60.18 & 71.29 & 65.32 \\
        \rowcolor[gray]{0.95} CodaPrompt & 82.53$_{\pm0.51}$ & 85.85$_{\pm0.43}$ & 87.62$_{\pm0.2}$ & 67.74$_{\pm0.33}$ & 72.31$_{\pm0.37}$ & 74.67$_{\pm0.33}$ & 33.42$_{\pm0.56}$ & 48.56$_{\pm0.76}$ & 58.29$_{\pm0.93}$ & 38.48$_{\pm0.29}$ & 58.61$_{\pm0.87}$ & 70.67$_{\pm1.38}$ \\
        \rowcolor[gray]{0.95} SimpleCIL & 80.48$_{\pm0.0}$ & 80.48$_{\pm0.0}$ & 80.48$_{\pm0.0}$ & 63.47$_{\pm0.0}$ & 63.47$_{\pm0.0}$ & 63.47$_{\pm0.0}$ & 58.72$_{\pm0.0}$ & 58.72$_{\pm0.0}$ & 58.72$_{\pm0.0}$ & 80.45$_{\pm0.0}$ & 80.45$_{\pm0.0}$ & 80.45$_{\pm0.0}$\\
        % \rowcolor{mypurple} RanPAC & 86.71 & 87.02 & 87.10 & 71.90 & 71.97 & 72.50 & \textbf{56.48} & 62.34 & 61.75 & 88.08 & 87.15 & 88.13  \\ 
        \rowcolor{mypurple} RanPAC &  86.93$_{\pm0.15}$ & 87.06$_{\pm0.04}$ & 87.02$_{\pm0.06}$ & 71.95$_{\pm0.13}$ & 71.85$_{\pm0.06}$ & 72.29$_{\pm0.23}$ & \textbf{60.11$_{\pm3.03}$} & 61.07$_{\pm1.4}$ & 61.71$_{\pm0.32}$ & 88.07$_{\pm0.33}$ & 87.53$_{\pm0.33}$ & 87.83$_{\pm0.54}$ \\
        \rowcolor{mypurple} LoRanPAC  & 88.21$_{\pm0.02}$ & 88.23$_{\pm0.03}$ & 88.24$_{\pm0.04}$ & 73.62$_{\pm0.1}$ & 73.6$_{\pm0.13}$ & 73.67$_{\pm0.02}$ & \textbf{62.63$_{\pm0.08}$} & 62.96$_{\pm0.19}$ & 62.89$_{\pm0.23}$ & 89.2$_{\pm0.11}$ & 89.16$_{\pm0.16}$ & 89.14$_{\pm0.18}$ \\
        %ICL-R &  &  &  & &  & & & & & & &  \\
        \midrule 
        \rowcolor[gray]{0.95} ADAM$^\dagger$ & 83.44$_{\pm0.22}$ & 84.81$_{\pm0.2}$ & 86.03$_{\pm0.13}$ & 63.8$_{\pm0.11}$ & 64.94$_{\pm0.06}$ & 70.82$_{\pm0.51}$ & 58.72$_{\pm0.0}$ & 58.7$_{\pm0.03}$ & 58.86$_{\pm0.19}$ & 80.48$_{\pm0.02}$ & 80.69$_{\pm0.02}$ & 80.99$_{\pm0.12}$  \\
        % \rowcolor{mypurple} RanPAC$^\dagger$ & 88.73 & 90.04 & 90.74 & 70.80 & 73.37 & 78.80 & 62.34 & 62.08 & 62.28 & 88.42 & 87.57 & 88.68 &78.65 \\
        \rowcolor{mypurple} RanPAC$^\dagger$ & 88.76$_{\pm0.12}$ & 90.14$_{\pm0.11}$ & 90.62$_{\pm0.16}$ & 71.81$_{\pm0.83}$ & 73.43$_{\pm0.06}$ & 78.07$_{\pm0.59}$ & \textbf{21.26}$_{\pm29.0}$ & 61.71$_{\pm0.32}$ & 62.06$_{\pm0.17}$ & 87.91$_{\pm0.43}$ & 87.56$_{\pm0.19}$ & 88.31$_{\pm0.33}$ \\ 
        \rowcolor{mypurple} LoRanPAC$^\dagger$ & 89.62$_{\pm0.08}$ & 90.85$_{\pm0.12}$ & 91.54$_{\pm0.07}$ & 73.76$_{\pm0.19}$ & 74.58$_{\pm0.04}$ & 78.79$_{\pm0.58}$ & 62.56$_{\pm0.4}$ & 62.76$_{\pm0.38}$ & 62.98$_{\pm0.32}$ & 89.17$_{\pm0.02}$ & 89.24$_{\pm0.04}$ & 89.36$_{\pm0.07}$ \\
        \midrule
        \rowcolor[gray]{0.95} EASE$^*$ & 84.68$_{\pm0.4}$ & 86.87$_{\pm0.25}$ & 88.46$_{\pm0.24}$ & 73.44$_{\pm0.26}$ & 76.2$_{\pm0.69}$ & 77.54$_{\pm0.18}$ & 59.67$_{\pm1.05}$ & 59.56$_{\pm1.7}$ & 62.41$_{\pm0.24}$ & 80.56$_{\pm0.09}$ & 81.07$_{\pm0.45}$ & 80.92$_{\pm0.56}$ \\
        LoRanPAC$^*$ & 89.97$_{\pm0.44}$ & 90.89$_{\pm0.11}$ & 91.58$_{\pm0.06}$ & 78.42$_{\pm0.26}$ & 80.07$_{\pm0.35}$ & 81.48$_{\pm0.13}$ & 63.18$_{\pm0.17}$ & 64.12$_{\pm0.34}$ & 65.81$_{\pm0.2}$ & 89.1$_{\pm0.07}$ & 89.24$_{\pm0.11}$ & 89.38$_{\pm0.05}$ \\

        % & 77.49$_{\pm0.08}$ & 77.94$_{\pm0.22}$ & 79.01$_{\pm0.3}$ & 63.2$_{\pm0.18}$ & 63.97$_{\pm0.62}$ & 65.53$_{\pm0.32}$ & 89.1$_{\pm0.07}$ & 89.24$_{\pm0.11}$ & 89.38$_{\pm0.05}$
        %\midrule \\
        \midrule
        \textcolor{myred}{\textbf{(Part 2)}}  &  \multicolumn{3}{c}{ObjectNet (B-0) } & \multicolumn{3}{c}{OmniBenchmark (B-0) } & \multicolumn{3}{c}{VTAB (B-10) } & \multicolumn{3}{c}{StanfordCars (B-16)}  \\
        % \rowcolor[gray]{0.95} LC ($\bX_{1:T}$) &\multicolumn{3}{c}{59.70} & \multicolumn{3}{c}{79.55}& \multicolumn{3}{c}{91.32} & \multicolumn{3}{c}{74.12} &  76.17 \\ 
        % \rowcolor[gray]{0.95} LC ($\bH_{1:T}$) &\multicolumn{3}{c}{59.96} & \multicolumn{3}{c}{80.02} & \multicolumn{3}{c}{91.17} & \multicolumn{3}{c}{73.65} & 76.20 \\ 
        \midrule
        &  Inc-5 & Inc-10 & Inc-20 & Inc-5 & Inc-10 & Inc-20 & Inc-5 & Inc-10 & Inc-20 & Inc-5 & Inc-10 & Inc-20  \\
        \midrule
        % \rowcolor[gray]{0.95} L2P & 45.53 & 52.05 & 55.49 & 54.50 & 57.29 & 60.50 & 59.32 & 73.25 & 78.91 & 13.70 & 27.46 & 43.68 & 51.81 \\
        \rowcolor[gray]{0.95} L2P & 46.63$_{\pm0.73}$ & 52.3$_{\pm0.3}$ & 55.51$_{\pm0.65}$ & 53.74$_{\pm0.59}$ & 57.32$_{\pm0.2}$ & 60.9$_{\pm1.13}$ & 59.03$_{\pm5.12}$ & 72.32$_{\pm2.15}$ & 76.66$_{\pm1.67}$ & 14.49$_{\pm0.77}$ & 27.14$_{\pm0.79}$ & 41.35$_{\pm2.04}$ \\
        % \rowcolor[gray]{0.95} DualPrompt & 47.56 
        % & 53.68 & 55.64 & 56.14 & 59.18 & 62.39 & 64.10 & 77.78 & 83.75 & 11.38 & 18.84 & 27.89 & 51.53 \\
        \rowcolor[gray]{0.95} DualPrompt & 47.71$_{\pm0.4}$ & 52.83$_{\pm0.67}$ & 56.03$_{\pm0.54}$ & 56.56$_{\pm0.48}$ & 59.45$_{\pm0.24}$ & 62.4$_{\pm0.3}$ & 64.67$_{\pm4.34}$ & 75.98$_{\pm3.22}$ & 79.58$_{\pm3.09}$ & 11.91$_{\pm0.46}$ & 18.92$_{\pm0.76}$ & 29.04$_{\pm3.23}$ \\ 
        % \rowcolor[gray]{0.95} CodaPrompt & 46.61 & 54.44 & 59.17 & 60.00 & 64.98 & 68.25 & 68.77 & 76.81 & 86.32 & 7.96 & 11.29 & 30.74 & 52.95  \\
        \rowcolor[gray]{0.95} CodaPrompt & 46.7$_{\pm0.37}$ & 54.32$_{\pm0.79}$ & 59.26$_{\pm0.39}$ & 59.33$_{\pm0.67}$ & 64.87$_{\pm0.65}$ & 68.37$_{\pm0.45}$ & 62.39$_{\pm0.58}$ & 74.93$_{\pm0.6}$ & 85.24$_{\pm0.52}$ & 7.89$_{\pm0.17}$ & 11.63$_{\pm0.35}$ & 29.46$_{\pm1.43}$ \\
        \rowcolor[gray]{0.95} SimpleCIL & 51.66$_{\pm0.0}$ & 51.66$_{\pm0.0}$ & 51.66$_{\pm0.0}$ & 70.19$_{\pm0.0}$ & 70.19$_{\pm0.0}$ & 70.19$_{\pm0.0}$ & 82.53$_{\pm0.0}$ & 82.53$_{\pm0.0}$ & 82.53$_{\pm0.0}$ & 35.46$_{\pm0.0}$ & 35.46$_{\pm0.0}$ & 35.46$_{\pm0.0}$ \\
        \rowcolor{mypurple} RanPAC & 58.15$_{\pm0.17}$ & 58.17$_{\pm0.33}$ & 58.21$_{\pm0.43}$ & 77.74$_{\pm0.09}$ & 77.48$_{\pm0.1}$ & 77.45$_{\pm0.2}$ & 91.53$_{\pm0.14}$ & 91.77$_{\pm0.08}$ & 91.77$_{\pm0.13}$ & \textbf{55.53$_{\pm3.65}$} & \textbf{71.55$_{\pm0.11}$} & \textbf{71.54$_{\pm0.12}$} \\ 
        \rowcolor{mypurple} LoRanPAC & 60.71$_{\pm0.09}$ & 60.67$_{\pm0.16}$ & 60.61$_{\pm0.12}$ & 79.63$_{\pm0.09}$ & 79.65$_{\pm0.04}$ & 79.76$_{\pm0.04}$ & 92.47$_{\pm0.01}$ & 92.56$_{\pm0.06}$ & 92.53$_{\pm0.04}$ & \textbf{74.23$_{\pm0.11}$} & \textbf{74.35$_{\pm0.13}$} & \textbf{74.28$_{\pm0.09}$}
 \\
        \midrule
        % \rowcolor[gray]{0.95} ADAM$^\dagger$ & 52.16 & 53.94 & 55.97 & 70.54 & 70.53 & 70.38 & 82.55 & 82.55 & 82.55 & 35.61 & 35.61 & 35.61  & 60.67  \\
        \rowcolor[gray]{0.95} ADAM$^\dagger$ & 52.2$_{\pm0.15}$ & 53.78$_{\pm0.59}$ & 55.9$_{\pm0.09}$ & 70.53$_{\pm0.04}$ & 70.43$_{\pm0.14}$ & 70.25$_{\pm0.18}$ & 82.58$_{\pm0.05}$ & 82.58$_{\pm0.05}$ & 82.58$_{\pm0.05}$ & 35.56$_{\pm0.04}$ & 35.56$_{\pm0.04}$ & 35.56$_{\pm0.04}$ \\
        % \rowcolor{mypurple} RanPAC$^\dagger$ & 59.14 & 61.54 & 64.59 & 78.10 & 78.46 & 78.86 & 91.48 & 91.86 & 91.86 & \textbf{58.65} & 72.24 & 72.24 & 74.56 \\
        \rowcolor{mypurple} RanPAC$^\dagger$ & 59.33$_{\pm0.42}$ & 61.3$_{\pm0.6}$ & 64.37$_{\pm0.18}$ & 78.29$_{\pm0.08}$ & 78.2$_{\pm0.23}$ & 78.39$_{\pm0.37}$ & 91.76$_{\pm0.11}$ & 91.88$_{\pm0.07}$ & 91.98$_{\pm0.1}$ & 44.02$_{\pm31.09}$ & 72.32$_{\pm0.09}$ & 72.32$_{\pm0.09}$ \\
        \rowcolor{mypurple} LoRanPAC$^\dagger$ & 61.43$_{\pm0.17}$ & 63.29$_{\pm0.74}$ & 66.24$_{\pm0.38}$ & 79.85$_{\pm0.16}$ & 80.19$_{\pm0.24}$ & 80.08$_{\pm0.2}$ & 92.57$_{\pm0.03}$ & 92.54$_{\pm0.02}$ & 92.59$_{\pm0.09}$ & 74.67$_{\pm0.18}$ & 74.66$_{\pm0.21}$ & 74.67$_{\pm0.33}$ \\
        \midrule 
        \rowcolor[gray]{0.95} EASE$^*$ & 50.2$_{\pm0.4}$ & 53.92$_{\pm0.53}$ & 56.78$_{\pm0.31}$ & 70.4$_{\pm0.05}$ & 70.67$_{\pm0.22}$ & 70.81$_{\pm0.18}$ & 89.86$_{\pm0.24}$ & 93.56$_{\pm0.12}$ & 93.67$_{\pm0.15}$ & 32.46$_{\pm0.73}$ & 31.64$_{\pm1.03}$ & 29.79$_{\pm1.91}$ \\
        LoRanPAC$^*$ & 63.67$_{\pm0.6}$ & 66.52$_{\pm0.79}$ & 66.96$_{\pm0.27}$ & 80.07$_{\pm0.37}$ & 80.34$_{\pm0.17}$ & 80.41$_{\pm0.37}$ & 93.16$_{\pm0.67}$ & 93.17$_{\pm0.67}$ & 93.16$_{\pm0.7}$ & 75.63$_{\pm0.22}$ & 75.56$_{\pm0.14}$ & 75.67$_{\pm0.24}$ \\ 
        \bottomrule
    \end{tabular}
    }
\end{table}

% the recent finding of \citet{Guha-ICML2024} suggests that the benefit of increasing the width empirically diminishes. 

% It is therefore of interest to find the smallest embedding dimension $E$ that strikes a balance between model sizes and accuracy. \cref{fig:acc-inc5-E} suggests that such choice of $E$ is depends on datasets such as data distribution and dataset size (recall the dataset sizes in \cref{tab:CIL-dataset,tab:DIL-dataset}). The recent work of \citet{Hu-arXiv2024} sheds light on this point by proving, under certain statistical assumptions, that the optimal such $E$ should be proportional to the number of samples

% For small datasets such as ImageNet-A, CUB-200, VTAB, and StanfordCars, \cref{fig:acc-inc5-E} shows the test accuracy is stable at $E=50$K and a larger $E$ does not improve it too much. 

% The other datasets such as ObjectNet and ImageNet-R are larger, and their performance might grow for larger $E$.

\begin{table}[t]
    \centering
    \ra{1.2}
    \caption{Final accuracies of RanPAC and LoRanPAC with pre-trained ViTs. RanPAC takes its default choice $E=10$K, while for LoRanPAC we set three different values for $E$: $E=10$K, $E=50$K, and $E=100$K.  \label{tab:CIL-ViT-E=10000} }
    \scalebox{0.68}{
    \begin{tabular}{lccccccccccccc}
        \toprule 
        \textcolor{myred}{\textbf{(Part 1)}} &  \multicolumn{3}{c}{CIFAR100 (B-0)} & \multicolumn{3}{c}{ImageNet-R (B-0)} & \multicolumn{3}{c}{ImageNet-A (B-0)}   & \multicolumn{3}{c}{CUB-200 (B-0)} & Avg.  \\
        \midrule
         &  Inc-5 & Inc-10 & Inc-20 & Inc-5 & Inc-10 & Inc-20 & Inc-5 & Inc-10 & Inc-20 & Inc-5 & Inc-10 & Inc-20 &  \\
        \midrule
        RanPAC & 86.71 & 87.02 & 87.10 & 71.90 & 71.97 & 72.50 & 56.48 & 62.34 & 61.75 & 88.08 & 87.15 & 88.13 & 76.76 \\ 
        \midrule
        LoRanPAC ($E=10$K) & 87.48 & 87.49 & 87.42 & 70.77 & 70.85 & 70.48 & 58.85 & 58.46 & 59.38 & 86.73 & 86.85 & 87.19 & 76.00 \\
        LoRanPAC ($E=50$K)  & 88.13 & 88.05 & 88.04 & 73.05 & 73.07 & 73.05 & 62.80 & 62.80 & 62.48 & 89.23 & 89.23 & 89.19 &  78.26  \\
         LoRanPAC ($E=100$K) & 88.18 &  88.18 & 88.21 & 73.67 & 73.72 & 73.63 & 62.74 & 63.20 & 63.20 &  89.36 & 89.27 & 89.23 & 78.55 \\
        \midrule
        \textcolor{myred}{\textbf{(Part 2)}}  &  \multicolumn{3}{c}{ObjectNet (B-0) } & \multicolumn{3}{c}{OmniBenchmark (B-0) } & \multicolumn{3}{c}{VTAB (B-10) } & \multicolumn{3}{c}{StanfordCars (B-16)} & Avg.  \\
        \midrule
        &  Inc-5 & Inc-10 & Inc-20 & Inc-5 & Inc-10 & Inc-20 & Inc-5 & Inc-10 & Inc-20 & Inc-5 & Inc-10 & Inc-20 & \\
        \midrule
        RanPAC & 58.77 & 57.66 & 57.69 & 77.63 & 77.63 & 77.46 & 91.15 & 91.58 & 91.58 & 58.03 & 71.40 & 71.40 & 73.50 \\ 
        \midrule
        LoRanPAC ($E=10$K) & 57.97 & 57.82 &  58.06 & 76.79 & 76.96 & 76.91 & 91.89 & 91.81 & 91.91 & 64.03 & 64.43 & 64.72 & 72.78 \\
        LoRanPAC ($E=50$K)  & 59.41 & 59.4 & 59.54 & 79.33 & 79.35 & 79.43 & 92.48 & 92.47 & 92.54 & 73.32 & 73.66 & 73.47 & 76.20  \\
        LoRanPAC ($E=100$K) & 60.83 & 60.86 & 60.77 & 79.50 & 79.60 & 79.70 & 92.46 & 92.55 & 92.56 & 74.21 & 74.39 & 74.39 & 76.82 \\
        \bottomrule
    \end{tabular}
    }
\end{table}

\subsection{More Figures for Data Analysis}\label{subsection:figures-data-analysis}

\begin{figure}[h]
    \centering
    \includegraphics[width=0.24\textwidth]{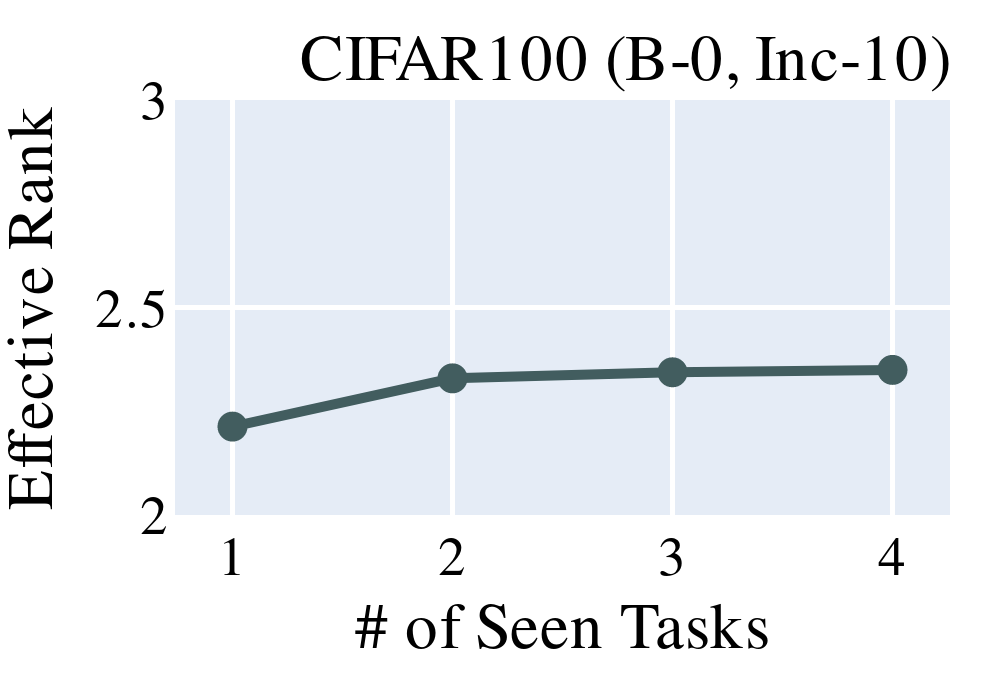}
	\includegraphics[width=0.24\textwidth]{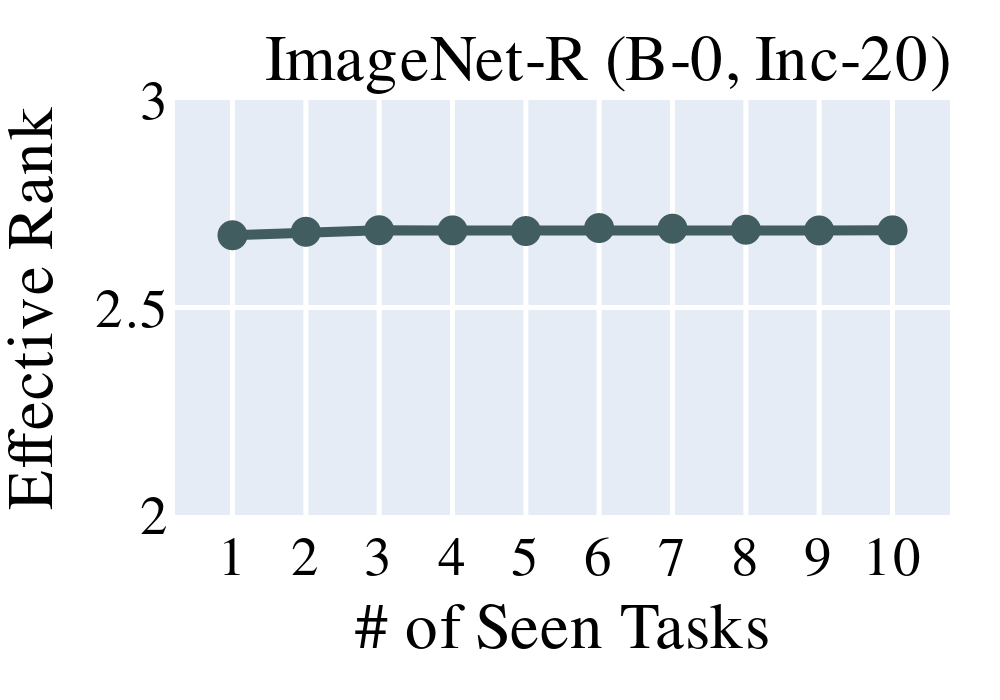}
	\includegraphics[width=0.24\textwidth]{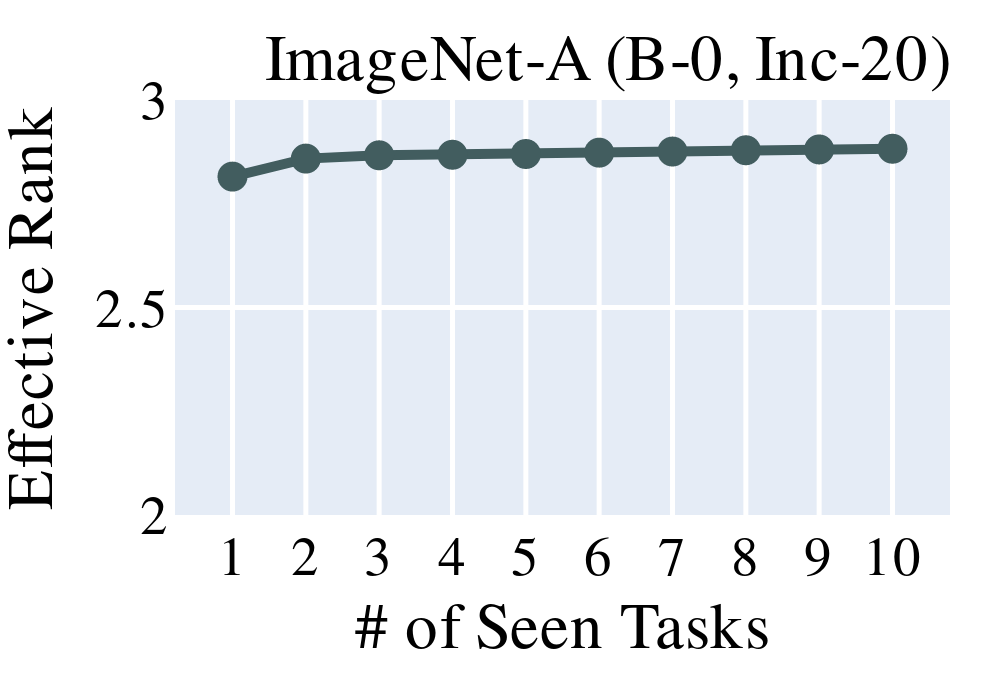}
	\includegraphics[width=0.24\textwidth]{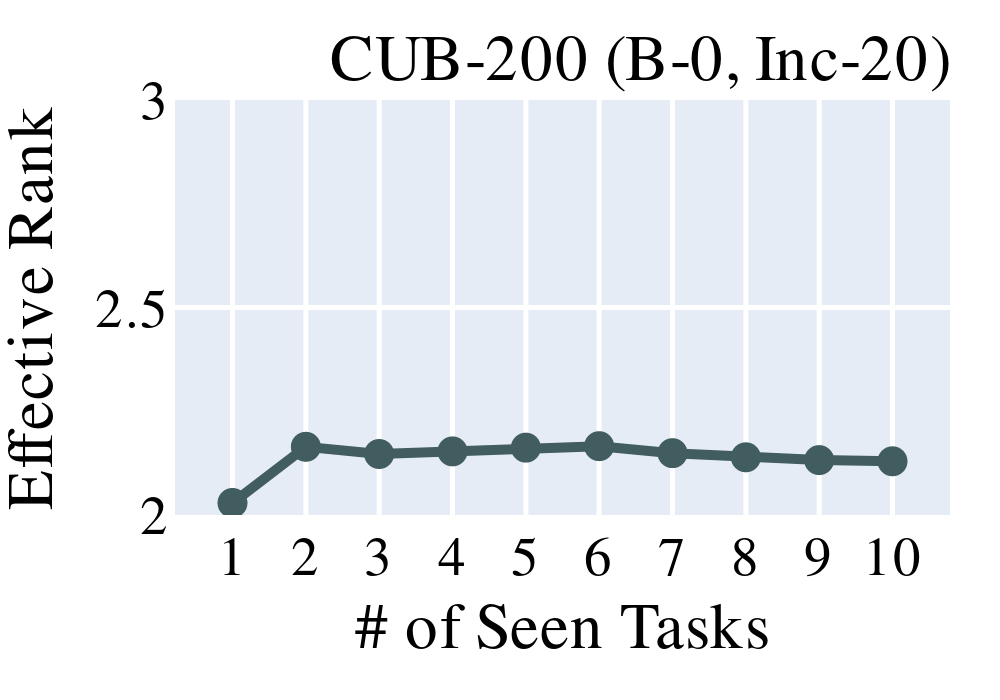}

    \vspace{-0.2em}
    \makebox[\textwidth]{\footnotesize \centering (\ref{fig:effective-rank}a) The effective rank $r_1(\bH_{1:t}^\top \bH_{1:t})$ as the number $t$ of seen tasks increases.  } 

    \includegraphics[width=0.24\textwidth]{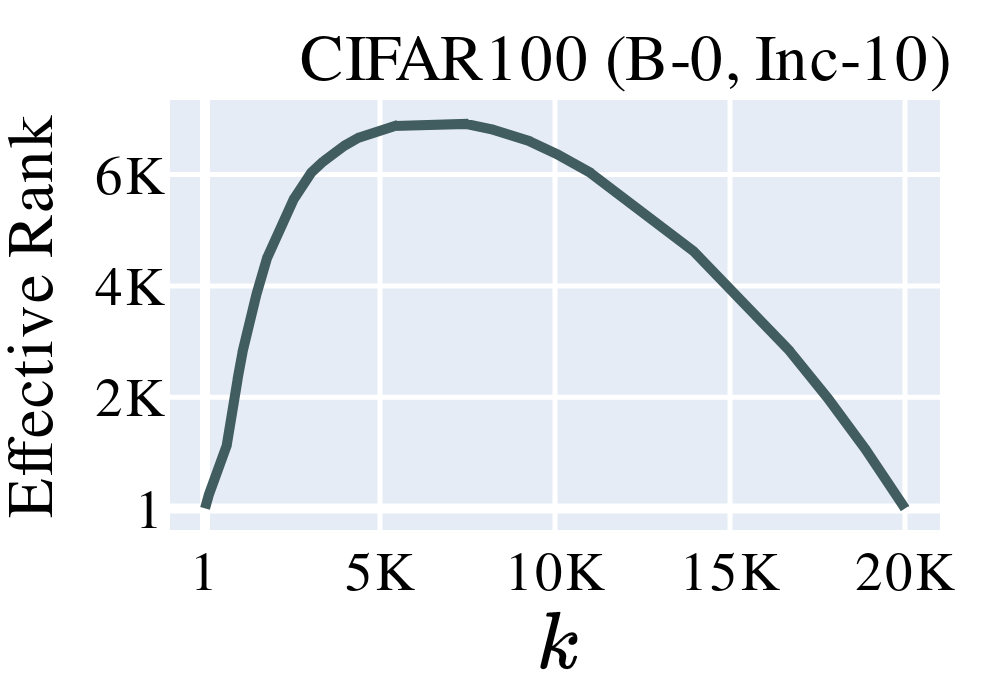}
	\includegraphics[width=0.24\textwidth]{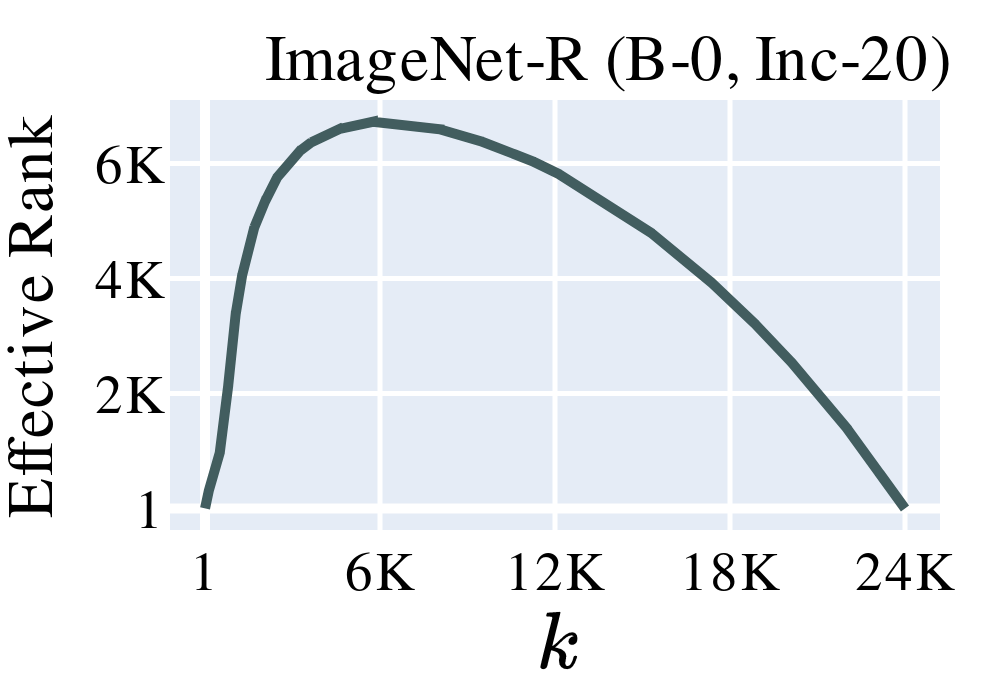}
	\includegraphics[width=0.24\textwidth]{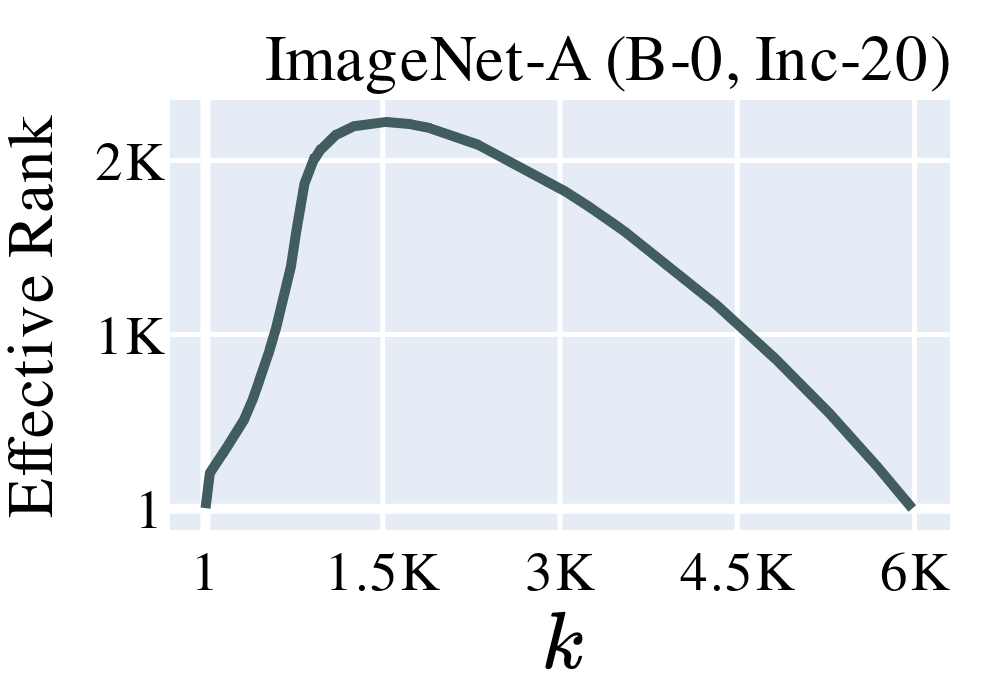}
	\includegraphics[width=0.24\textwidth]{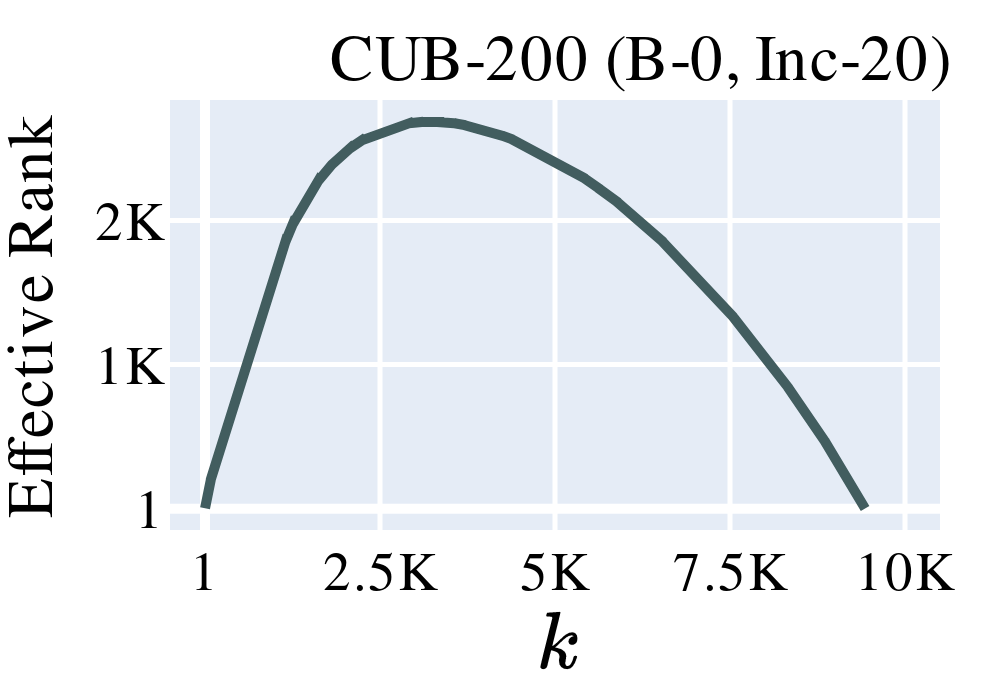}

    \vspace{-0.2em}
    \makebox[\textwidth]{\footnotesize \centering (\ref{fig:effective-rank}b) The eigenvalues of $\bH_{1:t}^\top \bH_{1:t}$ arranged in descending order ($t$ fixed).  }
    
    \caption{The effective ranks of $\bH_{1:t}^\top \bH_{1:t}$. }
    \label{fig:effective-rank}
\end{figure}

Recall $\mu_k(\cdot)$ denotes the $k$-th largest eigenvalue of a matrix and $M_t:=m_1+\cdots m_t$. Define the notion of \textit{effective rank}, as by \citet{Alexander-JMLR2023}:
\begin{equation}
    r_k\left(\bH_{1:t}^\top \bH_{1:t}\right):= \frac{\sum_{j\geq k  }\mu_{j}\left(\bH_{1:t}^\top \bH_{1:t}\right)}{\mu_{k}\left(\bH_{1:t}^\top \bH_{1:t}\right) },\quad \forall k=1,\dots, M_t
\end{equation}
Note that $r_1(\cdot)$ is the standard definition of effective rank (sometimes called \textit{stable rank}), while $r_k(\cdot)$ generalizes it by only considering the eigenvalues starting from the $k$ largest. 

\cref{fig:effective-rank}a plots the effective rank $r_1(\bH_{1:t}^\top \bH_{1:t})$ as the number of tasks increases and \cref{fig:effective-rank}b plots $r_k(\bH_{1:t}^\top \bH_{1:t})$ for different values of $k$. We observe similar curves on different datasets: $r_1(\bH_{1:t}^\top \bH_{1:t})$ is smaller than $3$, and $r_k(\bH_{1:t}^\top \bH_{1:t})$ first increases and then decreases as a function of $k$.

\begin{figure}[h]
    \centering
    \includegraphics[width=0.24\textwidth]{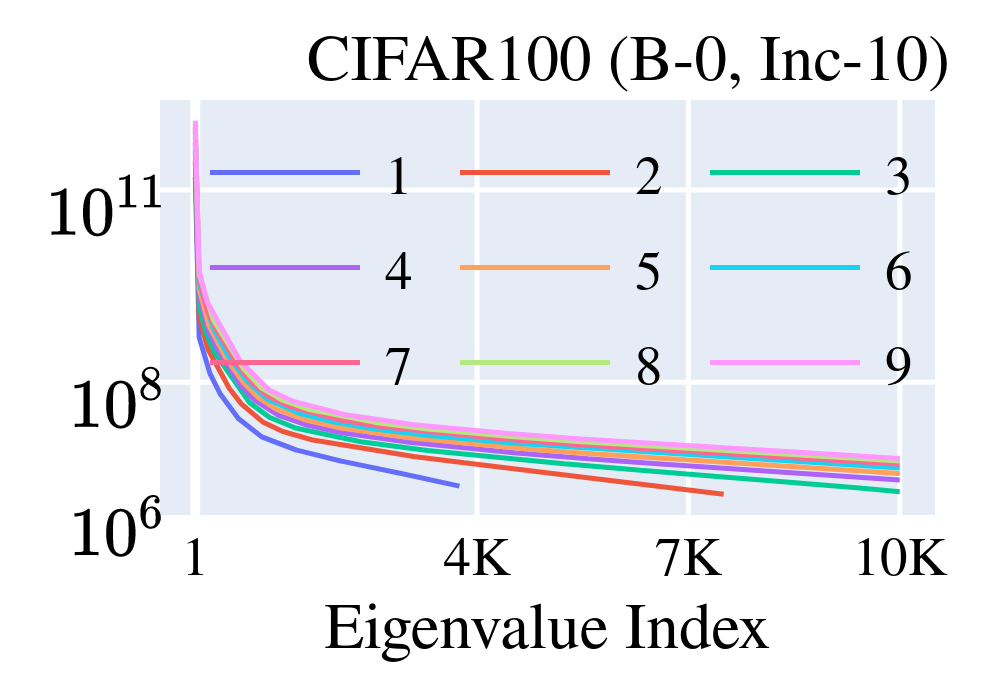}
    \includegraphics[width=0.24\textwidth]{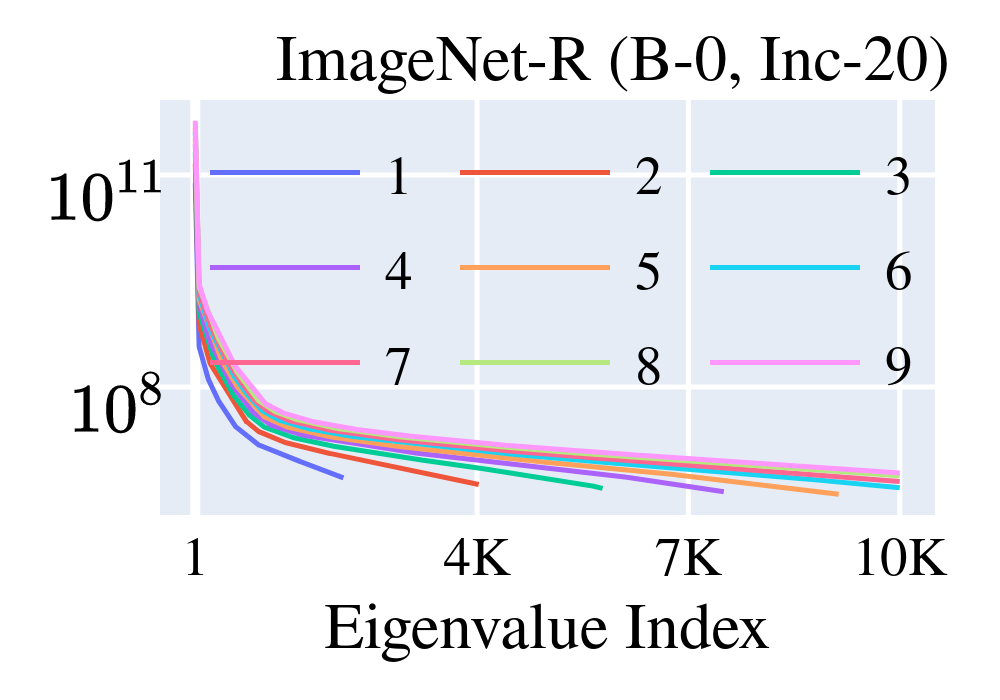}
    \includegraphics[width=0.24\textwidth]{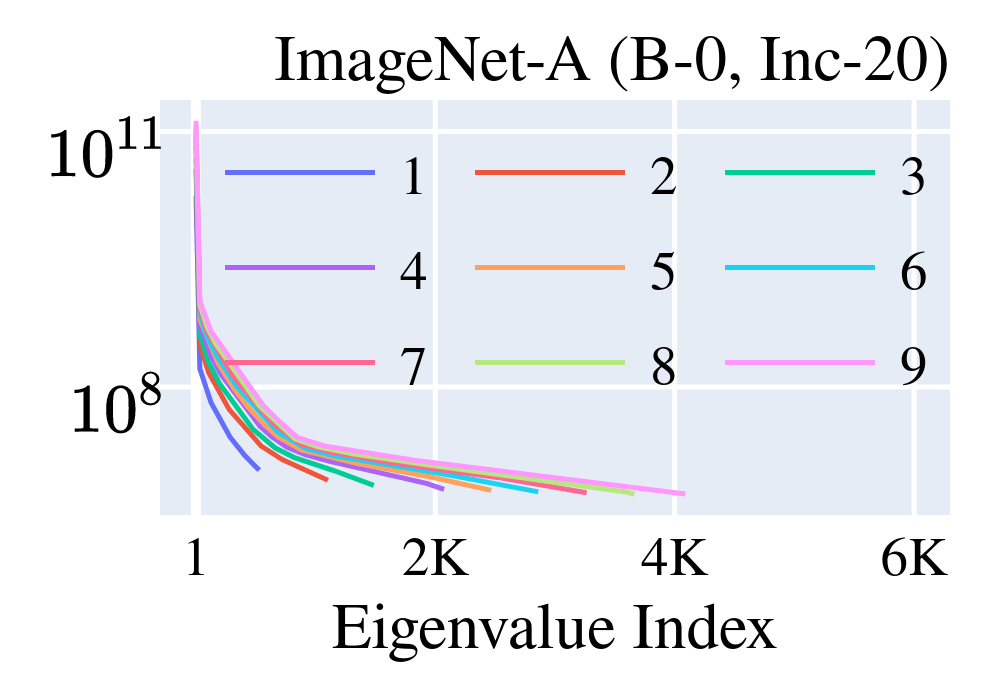}
    \includegraphics[width=0.24\textwidth]{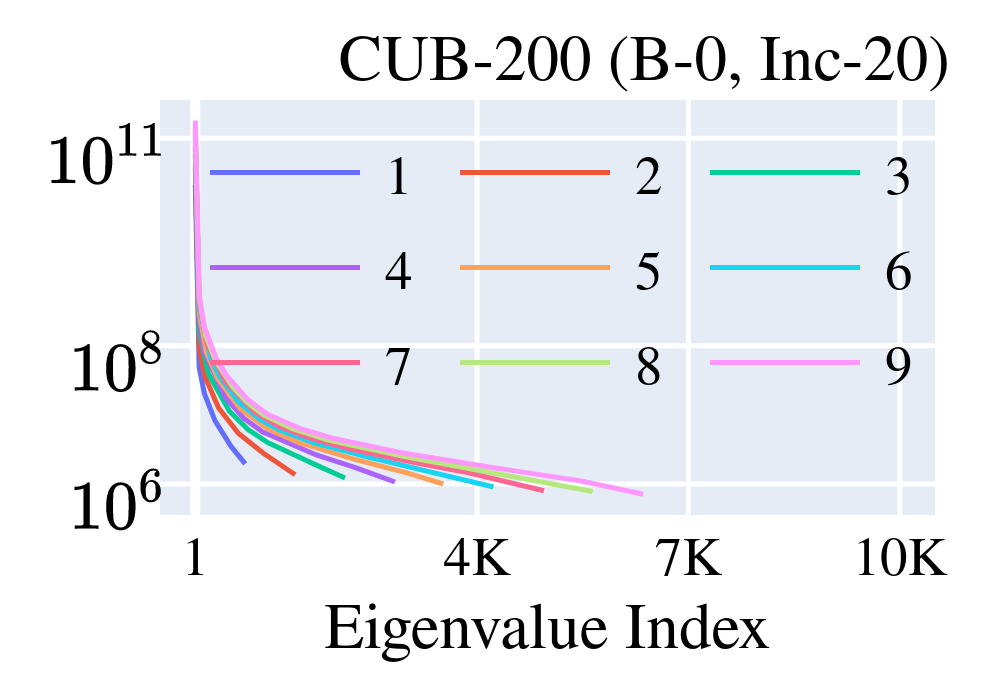}
    \caption{The eigenvalues of $\widetilde{\bSigma}_{1:t}$ for $t=1,\dots,9$. These are by definition the top $k_t$ eigenvalues of $\bB_t\bB_t^\top$. It shows that our continual implementation (\cref{algo:ITSVD-LS}) prunes the smallest eigenvalues (of order $10^{-5}$) so that the condition number is now of order $10^5$ ($10^{11}/10^6$); compare this figure with \cref{fig:stability}a. In this experiment we truncate $25\%$ of the eigenvalues; that is, given $M_t$, we select $k_t$ such that $k_t/M_t=75\%$. }
    \label{fig:eigenvalues-continual}
\end{figure}

\cref{fig:eigenvalues-continual} plots the top $k_t$ eigenvalues of $\bB_t\bB_t^\top$ (recall that we only preserve top $k_t$ singular values of $\bB_t$). It shows that the condition number is now of order $10^5$ ($10^{11}/10^6$), which is much smaller than the condition number of $\bH_{1:t}^\top \bH_{1:t}$.

\begin{figure}[h]
    \centering
    \includegraphics[width=0.24\textwidth]{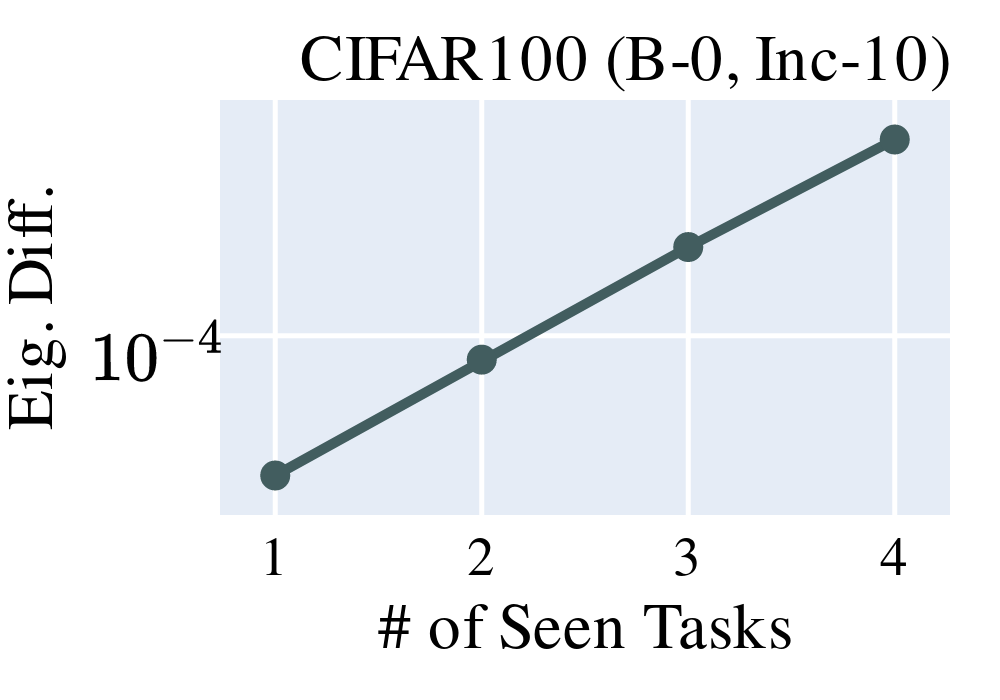}
    \includegraphics[width=0.24\textwidth]{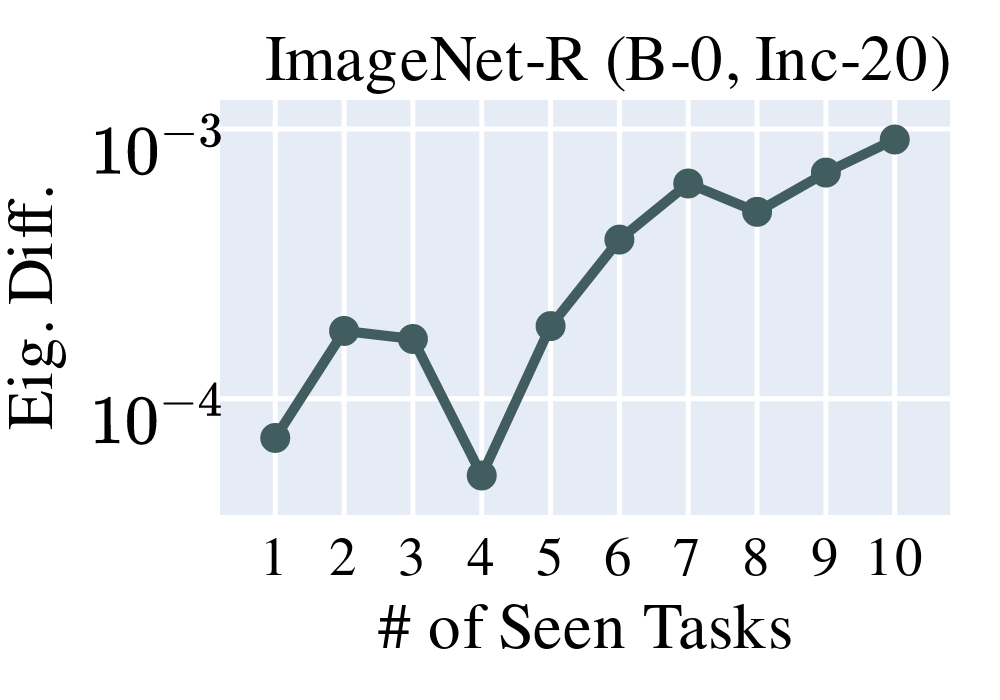}
    \includegraphics[width=0.24\textwidth]{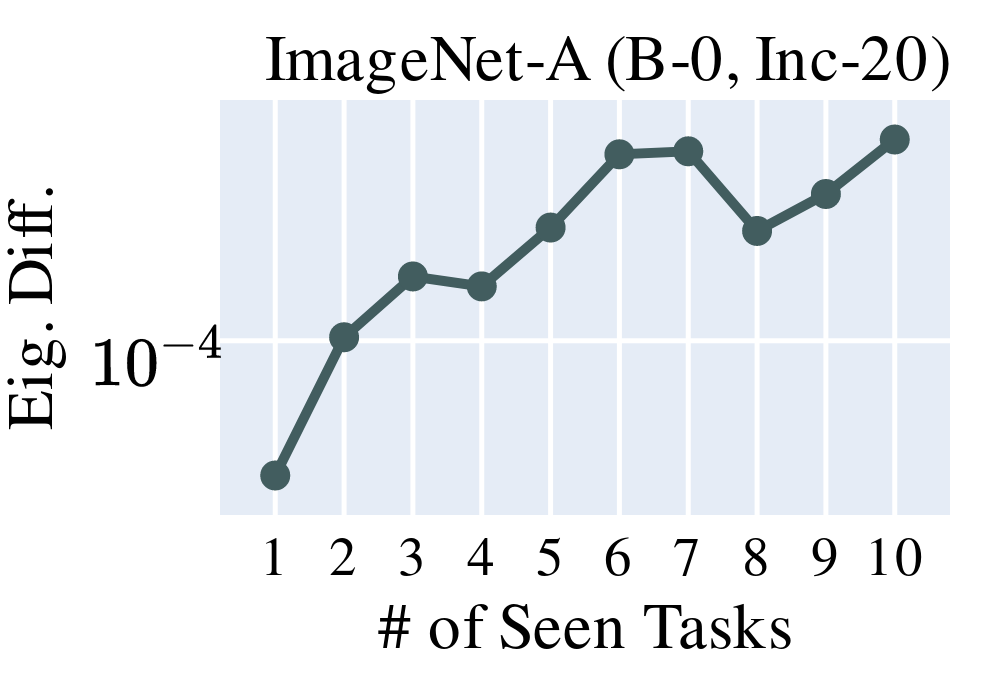}
    \includegraphics[width=0.24\textwidth]{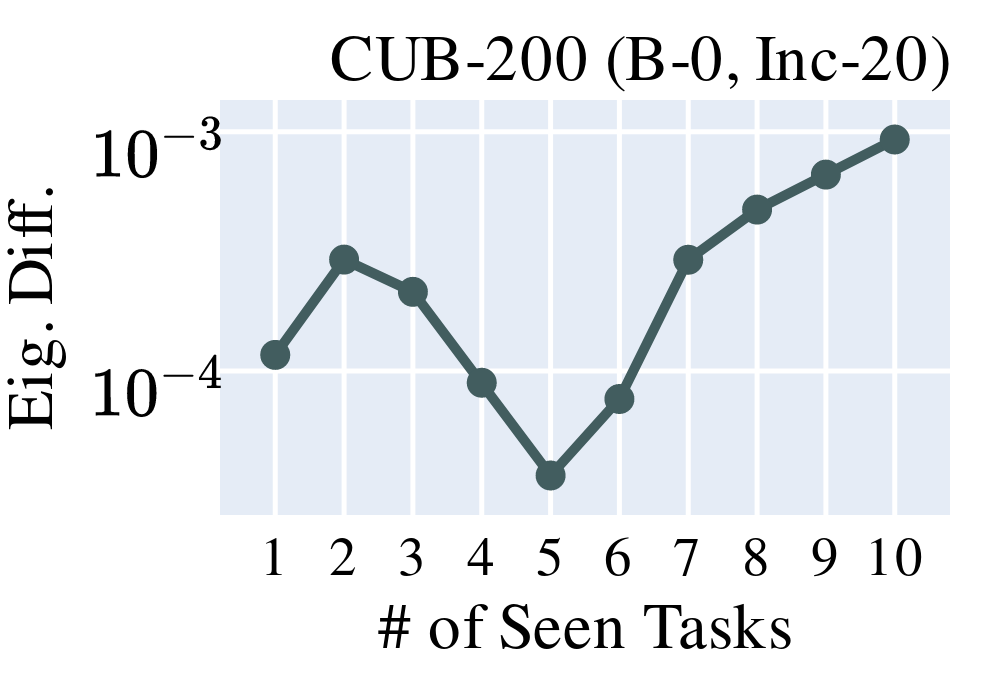}
    \caption{Normalized differences $\| \overline{\bSigma}_{1:t} - \widetilde{\bSigma}_{1:t} \|_{\infty} / \| \overline{\bSigma}_{1:t} \|_{\infty}$ between the eigenvalues given by \ref{eq:LoRanPAC} and its continual implementation (\cref{algo:ITSVD-LS}). This empirically verifies that the differences between the two are insignificant compared to the largest eigenvalue $\| \overline{\bSigma}_{1:t} \|_{\infty}$ (just of order $10^{-3}$). See \cref{fig:eigenvalues-continual} and \cref{fig:stability}a. See also \cref{theorem:eigenvalue-eigenspace-bound} where we formally bound the distances $\| \overline{\bSigma}_{1:t} - \widetilde{\bSigma}_{1:t} \|_{\infty}$ for every $t$. }
    \label{fig:normalized-ev-diff}
\end{figure}

\cref{fig:normalized-ev-diff} plots the normalized differences $\| \overline{\bSigma}_{1:t} - \widetilde{\bSigma}_{1:t} \|_{\infty} / \| \overline{\bSigma}_{1:t} \|_{\infty}$ between the eigenvalues given by \ref{eq:LoRanPAC} and its continual implementation (\cref{algo:ITSVD-LS}). This empirically verifies that the differences between the two are insignificant compared to the largest eigenvalue $\| \overline{\bSigma}_{1:t} \|_{\infty}$ (just of order $10^{-3}$). This experiment assists understanding \cref{theorem:eigenvalue-eigenspace-bound}.
% \cref{fig:effective-rank,fig:eigenvalues-continual,fig:normalized-ev-diff}

\begin{figure}
    \centering
    \includegraphics[width=0.24\textwidth]{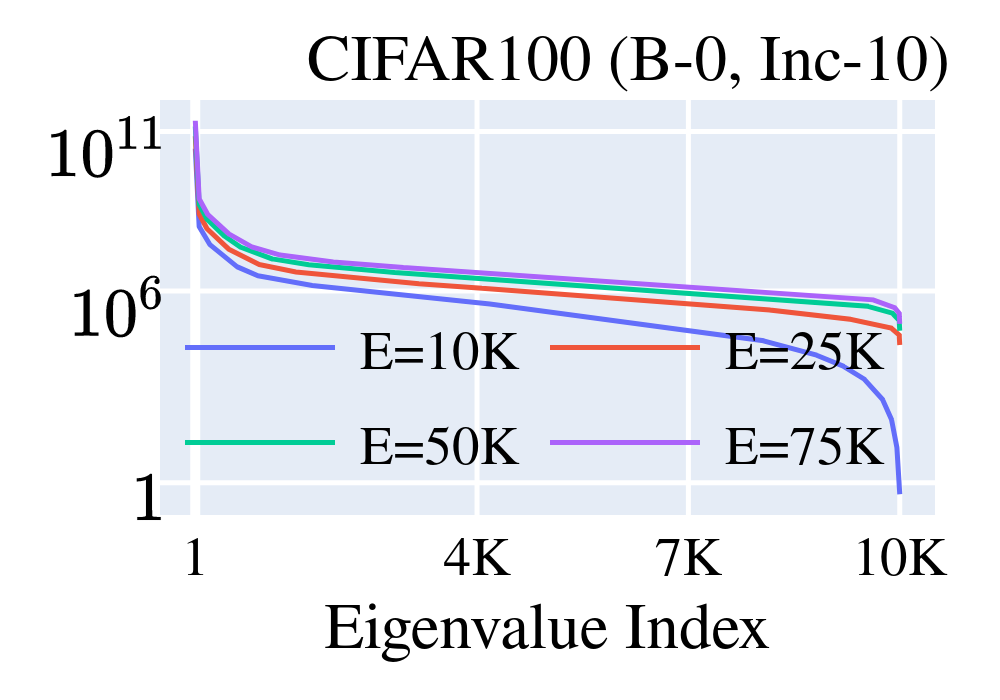}
    \includegraphics[width=0.24\textwidth]{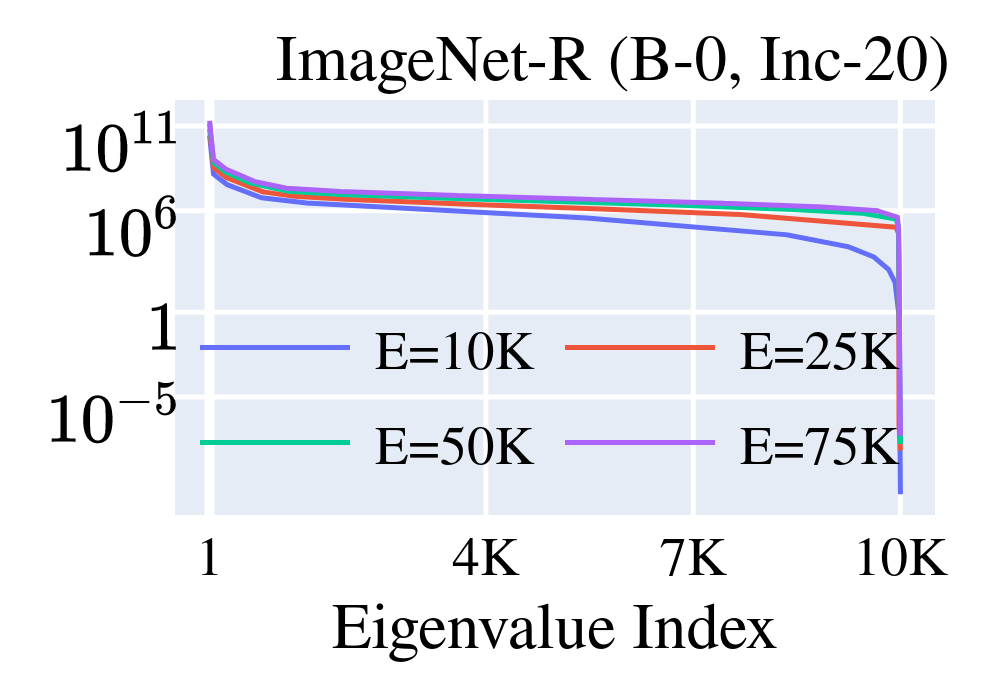}
    \includegraphics[width=0.24\textwidth]{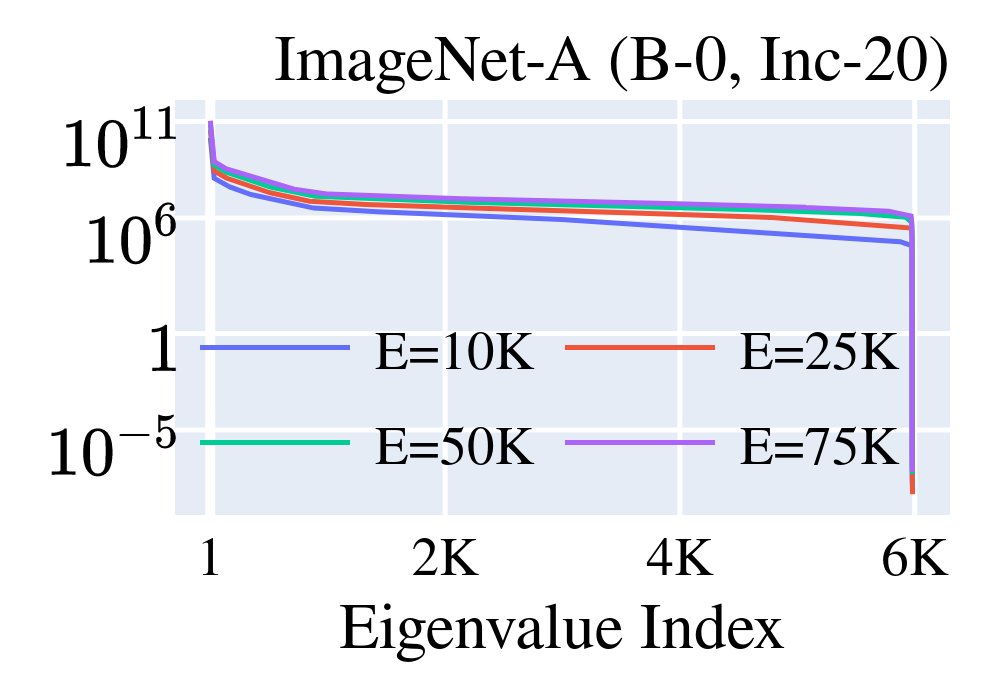}
    \includegraphics[width=0.24\textwidth]{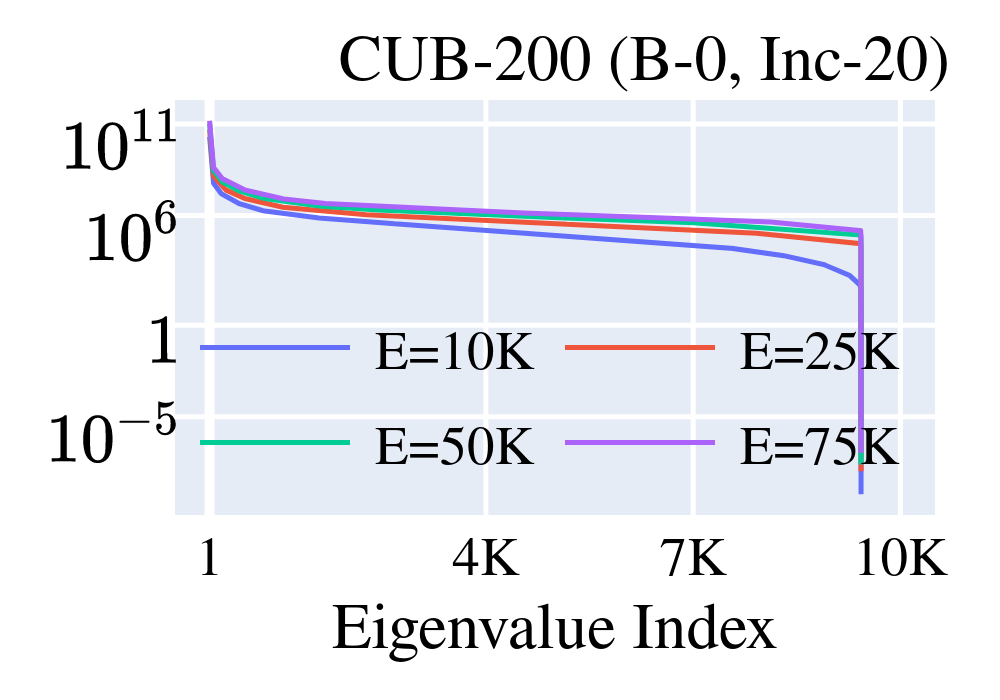}
    \caption{Eigenvalues of $\bH_{1:t}^\top \bH_{1:t} \in \bbR^{M_t\times M_t}$ ($M_t\leq 10^4$) with the embedding dimension $E$ varying in $\{10000, 25000, 50000, 75000\}$. We find that the ``shape'' of the spectrum is similar for different $E$ and on different datasets (see also \cref{fig:stability}a for the case $E=10^5$). }
    \label{fig:eigenvalues-varying-E}
\end{figure}

\cref{fig:eigenvalues-varying-E} plots the eigenvalues of $\bH_{1:t}^\top \bH_{1:t} \in \bbR^{M_t\times M_t}$ ($M_t\leq 10^4$) with the embedding dimension $E$ varying in $\{10000, 25000, 50000, 75000\}$. It shows that the ``shape'' of the spectrum is similar for different values of $E$ and on different datasets (see also \cref{fig:stability}a for the case $E=10^5$).

\begin{figure}
    \centering
    \includegraphics[width=0.99\textwidth]{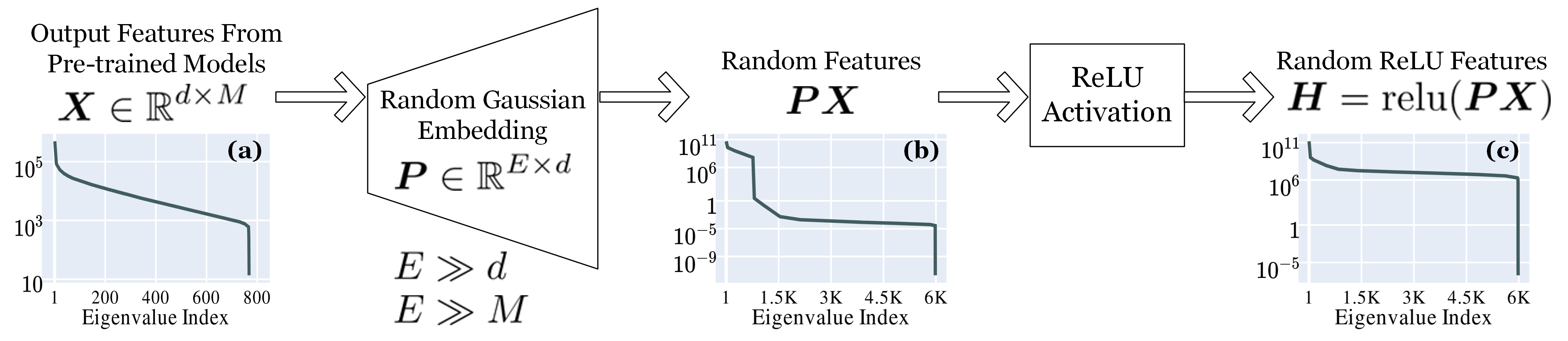}
    \caption{Feeding $M$ data samples to a pre-trained model gives its $d$-dimensional output features $\bX\in\bbR^{d\times M}$. Passing it through a random embedding layer $\bP$ and a ReLU layer yields the random ReLU features $\bH\in\bbR^{E\times M}$. Plotted in (a), (b), (c), respectively, are eigenvalues of $\bX \bX^\top$, $\bX^\top \bP^\top \bP \bX$, and $\bH^\top \bH$ in descending order, where $\bX$ consists of pre-trained ViT features of the ImageNet-A dataset ($d=768, M=5981, E=10^5$). It is seen that $\bP\bX$ and $, \bH$ are more ill-conditioned than $\bX$. See also \cref{tab:ablation-RFM}. }
    \label{fig:ICL1.5}
\end{figure}
\cref{fig:ICL1.5} depicts how the random embedding $\bP\in \bbR^{E\times d}$ and ReLU layer affect the spectrum of the features. In \cref{fig:ICL1.5}a we plot the output features $\bX\in\bbR^{d\times M}$ of the ImageNet-A dataset from pre-trained ViTs ($d=768, M=5981, E=10^5$). We see $\bX \bX^\top$ is relatively well-conditioned: Its maximum eigenvalue is of order $10^5$ and minimum eigenvalue of order $10$. \cref{fig:ICL1.5}b shows that $\bX^\top \bP^\top \bP \bX\in \bbR^{E\times M}$ is ill conditioned. This is because $\bP\bX$ has rank at most $d$, and the smallest $M-d$ eigenvalues of $\bX^\top \bP^\top \bP \bX$ should be zero, while we get these small and non-zero eigenvalues in \cref{fig:ICL1.5}b due to numerical errors in (incremental) SVD; these eigenvalues should be truncated (set to zero), in order to solve \ref{eq:1layer-min-norm} accurately. \cref{fig:ICL1.5}c shows that $\relu(\bP \bX)$ also has these small and non-zero eigenvalues. While the rank of $\relu(\bP \bX)$ is unclear, its smallest yet non-zero eigenvalues are likely inherent from $\bP\bX$, and we suggest truncating them as well. See also \cref{tab:ablation-RFM}.

\subsection{Ablation Study on Random ReLU Features}
In \cref{tab:ablation-RFM} we study the effects of the random ReLU model. Recall that, given the output features $\bX_t$ of the data of task $t$ from a pre-trained model,  we use the random ReLU features $\bH_t:=\relu(\bP\bX_t)$ and labels $\bY_t$ to train a linear classifier via continually solving \ref{eq:1layer-min-norm} or \ref{eq:LoRanPAC}. We could instead use $\bX_t$ or $\bP\bX_t$ or $\relu(\bX_t)$ as the features to train the linear classifier. To see the effects of these alternative choices, we make \cref{tab:ablation-RFM}  from which we have the following observations:
\begin{itemize}
    \item The random ReLU features $\bH_t$ gives the highest accuracy, while using the ReLU layer alone or random embedding alone does not make improvements over the original pre-trained features $\bX_t$.
    \item Solving \ref{eq:1layer-min-norm} via Incremental SVD exhibits numerical failures as soon as we use random embedding $\bP$. This is because $\bP\bX\in \bbR^{E\times M}$ has rank at most $d$ and we would get some $M-d$ small yet non-zero eigenvalues accounting for the numerical errors of (incremental) SVD solvers (see, e.g., \cref{fig:ICL1.5}). While they damage the accuracy, truncating these singular values and the corresponding singular vectors restores the performance.
\end{itemize}

\begin{table}[h]
    \centering
    \ra{1.2}
    \caption{ Final accuracy of \ref{eq:1layer-min-norm} and \ref{eq:LoRanPAC} when using different features, $\bX_t\in \bbR^{d\times m_t}, \relu(\bX_t), \bP\bX_t$, and $\bH_t:=\relu(\bP\bX_t)$. Here $\bP\in \bbR^{E\times d}$ is a random Gaussian matrix with $\cN(0,1)$ entries and $E=10^5$, and $\bH_t$ consists of random ReLU features we use by default. Note that both \ref{eq:1layer-min-norm} and \ref{eq:LoRanPAC} are solved by the incremental SVD method, with a difference that the latter truncates the SVDs. Incremental SVD without truncation is not scalable enough to handle all $50000$ data samples of CIFAR100, so we mark ``N.A.'' in the table for \ref{eq:1layer-min-norm}.    \label{tab:ablation-RFM}}
    \scalebox{0.72}{
    \begin{tabular}{lccccc}
        \toprule 
        &  CIFAR100 (B-0, Inc-10) & ImageNet-R (B-0, Inc-20) & ImageNet-A (B-0, Inc-20)  & CUB (B-0, Inc-20) & Avg. \\ 
        \midrule
        \multicolumn{6}{c}{\textit{Final Accuracy of \ref{eq:1layer-min-norm}}} \\ 
        % \rowcolor{mypurple} RanPAC & & &  &  & & & &  &  \\
        $\bX_t$ & 85.11 & 69.22 & 58.92 & 84.90  &  74.54 \\
        $\relu(\bX_t)$ &  84.07 & 66.43 & 55.23 & 84.14 & 72.47   \\
        $\bP\bX_t$ & N.A. & 1.35 & 2.37 & 0.55 & N.A. \\
        $\bH_t$ & N.A. & 0.42 & 0.92 & 0.72 & N.A. \\
        \midrule
        \multicolumn{6}{c}{\textit{Final Accuracy of \ref{eq:LoRanPAC}}} \\ 
        $\bX_t$ & 84.61 & 68.23 & 59.45 & 84.14 & 74.11  \\
        $\relu(\bX_t)$ & 83.51 & 66.20 & 56.62 & 84.01 & 72.59  \\
        $\bP\bX_t$ & 78.96 & 48.50 & 42.73 & 63.15  & 58.33  \\
        $\bH_t$ (default) & 88.18 & 73.65 & 63.20 & 89.23 & 78.57 \\
        \bottomrule
    \end{tabular}
    }
\end{table}

\subsection{Extra Empirical Study of RanPAC}
In \cref{subsubsection:ranpac}, we analyze the training losses of RanPAC. In \cref{subsubsection:small-inc} we show RanPAC is unstable with respect to small increments, while LoRanPAC is more stable.

\subsubsection{RanPAC is Unstable With Respect To Regularization Parameter}
In \cref{fig:ranpac-unstable}a, we show that \ref{eq:RanPAC} is unstable as it fails if the regularization $\lambda$ is small. In \cref{fig:ranpac-unstable}b, we extend the solution of \cref{eq:Wt-output-ITSVD} into $\bJ_{t}  \widetilde{\bU}_{1:t} (\widetilde{\bSigma}_{1:t}^2 + \lambda \bI_E)^{-1} \widetilde{\bU}^\top_{1:t}$ for ridge regression; this amounts to \ref{eq:RanPAC} with truncation. This extension stabilizes \ref{eq:RanPAC} and is practically immune to changes in the regularization parameter $\lambda$.

\begin{figure}[h]
	\centering
	\includegraphics[width=0.24\textwidth]{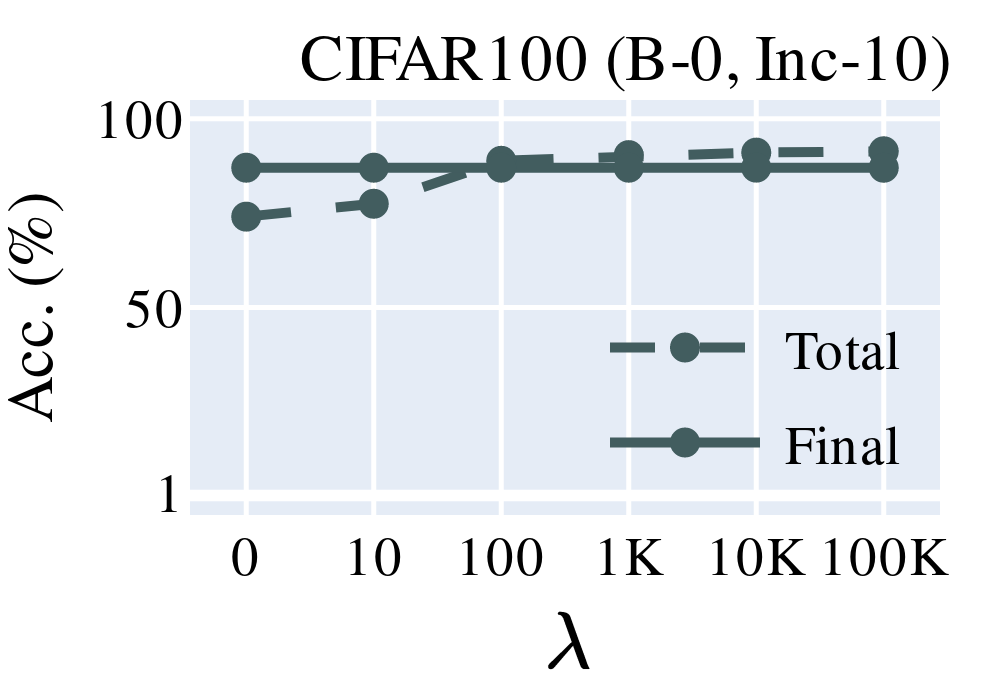}
	\includegraphics[width=0.24\textwidth]{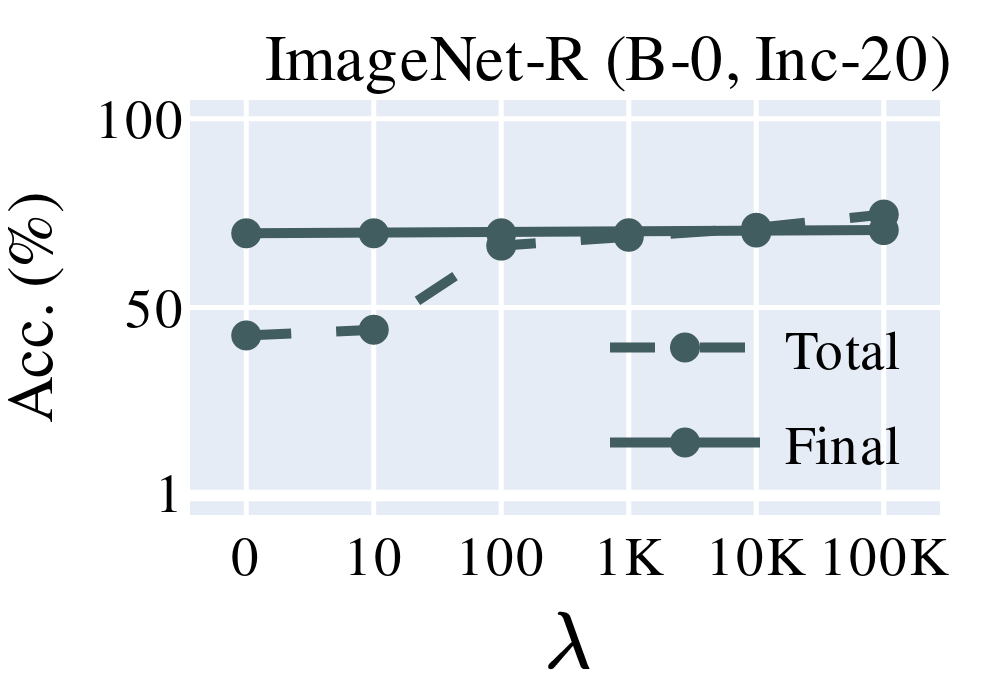}
	\includegraphics[width=0.24\textwidth]{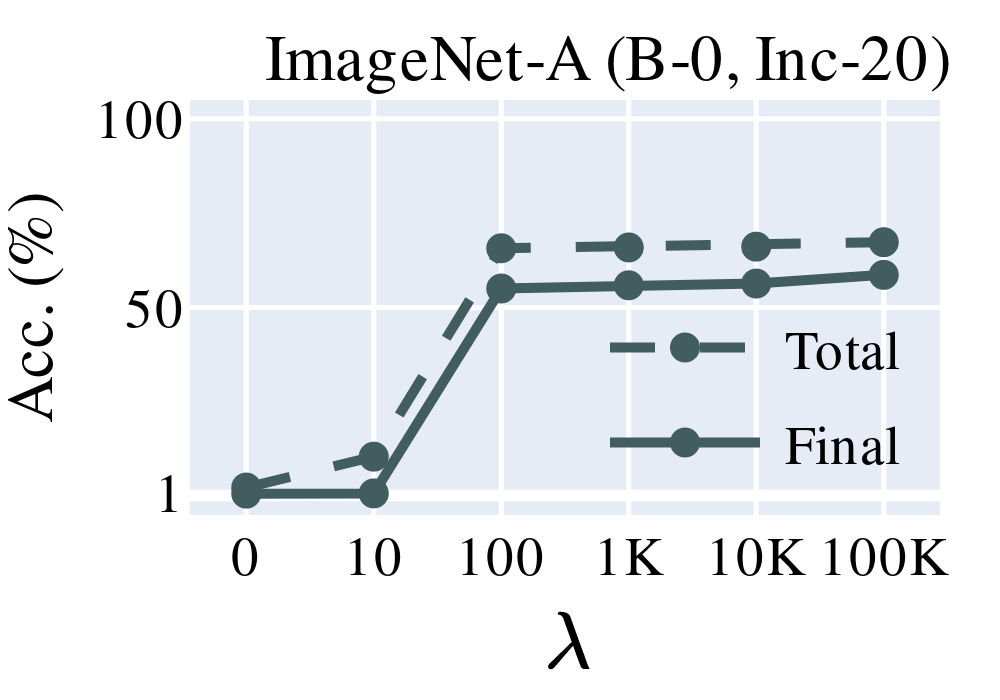}
	\includegraphics[width=0.24\textwidth]{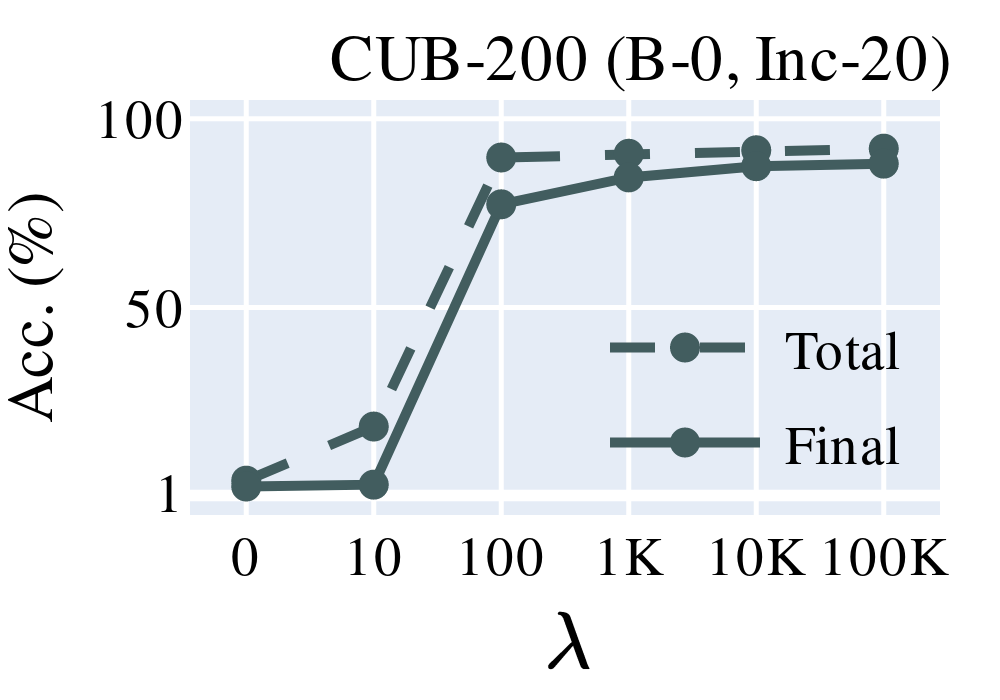}

    \makebox[\textwidth]{\footnotesize \centering (\ref{fig:ranpac-unstable}a) \ref{eq:RanPAC} breaks down for the small regularization $\lambda$.  }

    \includegraphics[width=0.24\textwidth]{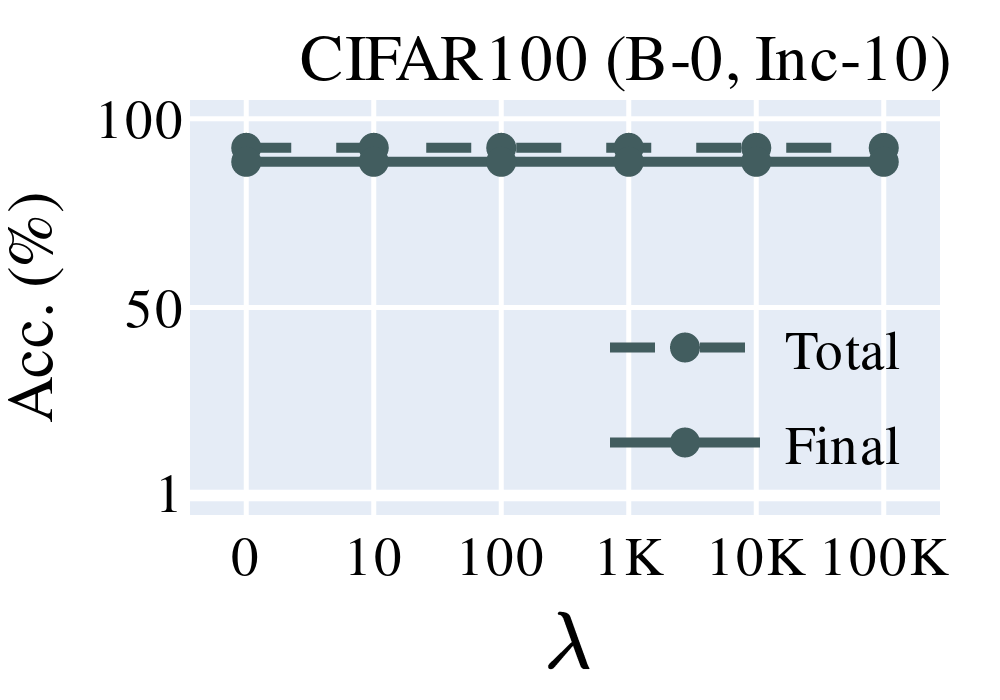}
	\includegraphics[width=0.24\textwidth]{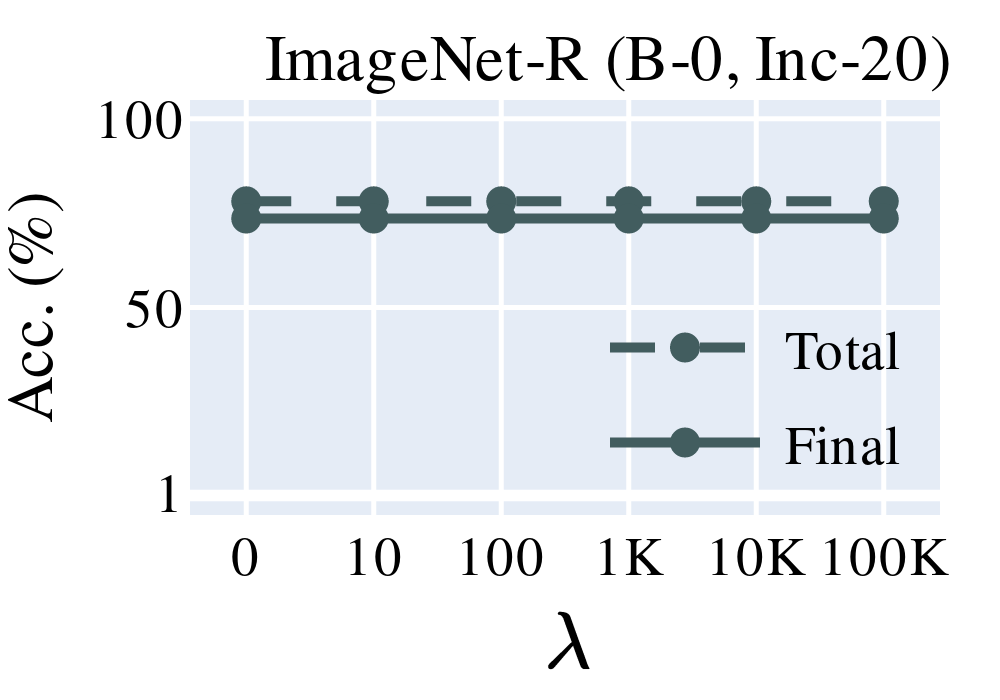}
	\includegraphics[width=0.24\textwidth]{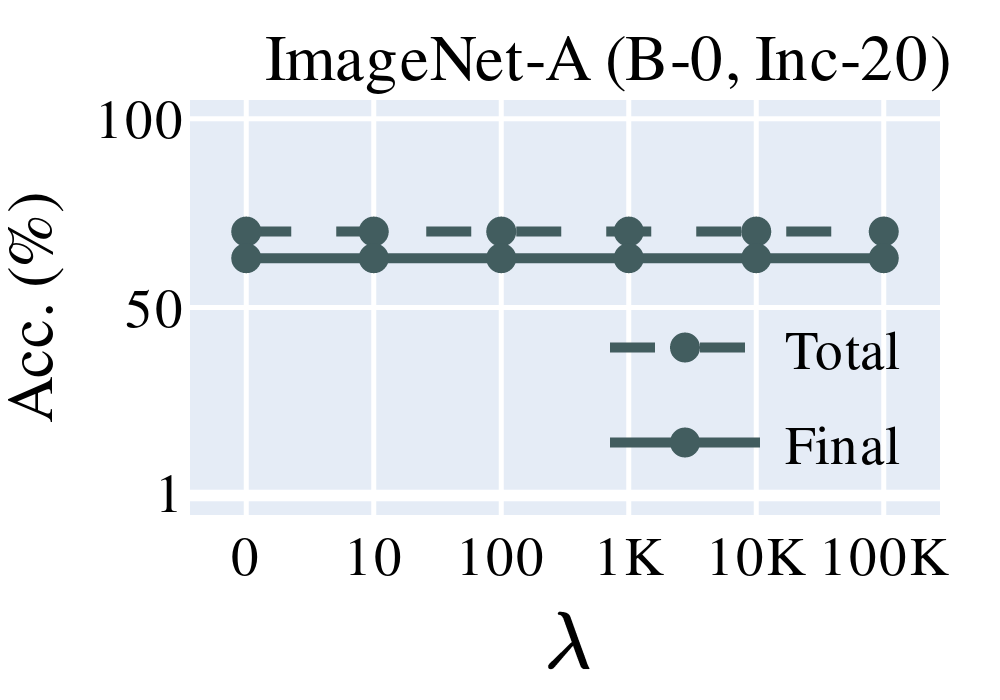}
	\includegraphics[width=0.24\textwidth]{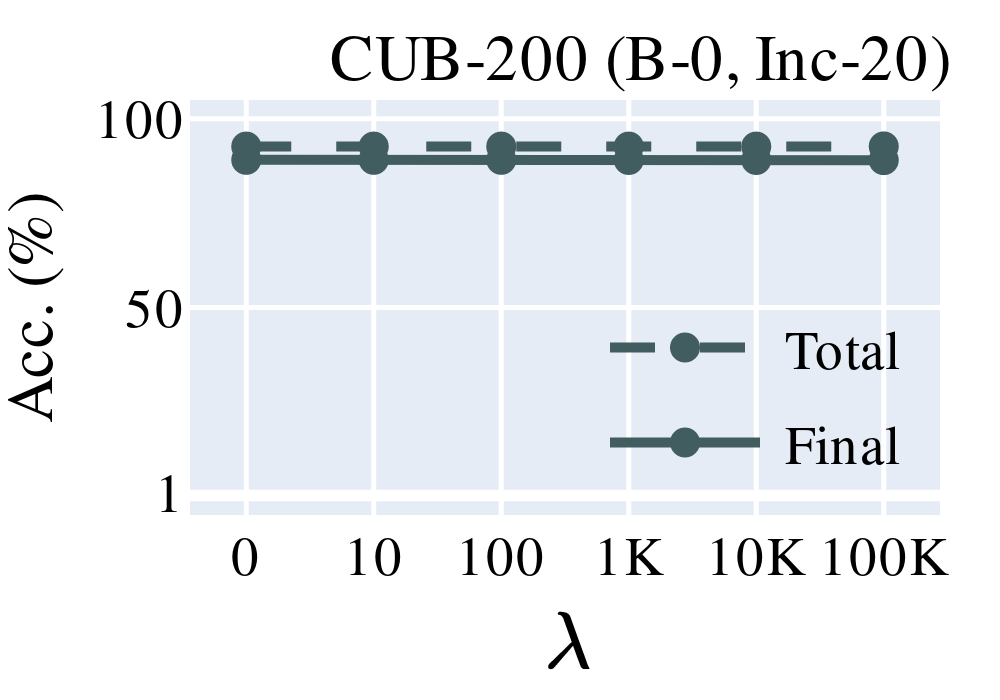}

    \makebox[\textwidth]{\footnotesize \centering (\ref{fig:ranpac-unstable}b) \ref{eq:RanPAC} with truncation ($25\%$) has stable performance for varying regularization parameter $\lambda$.}
 
    \caption{We extend the solution of \cref{eq:Wt-output-ITSVD} into $\bJ_{t}  \widetilde{\bU}_{1:t} (\widetilde{\bSigma}_{1:t}^2 + \lambda \bI_E)^{-1} \widetilde{\bU}^\top_{1:t}$ for ridge regression; this amounts to \ref{eq:RanPAC} with truncation. This extension stabilizes \ref{eq:RanPAC}.}% and is practically immune to changes in the regularization parameter $\lambda$. }
    \label{fig:ranpac-unstable}
\end{figure}

\subsubsection{Training losses of RanPAC}\label{subsubsection:ranpac}
\cref{fig:training-loss-ranpac} plots the training loss of RanPAC for different ridge regularization parameters  $\lambda\in\{ 0,1,10,100,1000\}$. We observe that, In the case of $\lambda=0$, the training loss of RanPAC is smaller than $1$. \cref{fig:training-loss} shows that the incremental SVD implementation of \ref{eq:1layer-min-norm} could have its training loss larger than $10^{10}$. This difference is because the incremental SVD implementation (without truncation) can be unstable and accumulates errors over time, while RanPAC is implemented by solving the normal equations $\bW (\bH_{1:t} \bH_{1:t}^\top + \lambda \bI_E) = \bY_{1:t}\bH_{1:t}^\top$ in variable $\bW$ (the covariances $\bH_{1:t} \bH_{1:t}^\top$ and $\bY_{1:t}\bH_{1:t}^\top$ are updated continually). The advantage is that maintaining $\bH_{1:t} \bH_{1:t}^\top$ and $\bY_{1:t}\bH_{1:t}^\top$ is easy and does not entail numerical errors, so solving the normal equations $\bW (\bH_{1:t} \bH_{1:t}^\top + \lambda \bI_E) = \bY_{1:t}\bH_{1:t}^\top$ directly is expected to be stable, as long as the solver invoked is numerically stable (the built-in PyTorch solver is used). This appears to be the case, as RanPAC maintains small training errors. On the other hand, the corresponding test accuracy can be nearly zero with small $\lambda$ (e.g., when $\lambda =0,1$ as shown in \cref{fig:ranpac-unstable}).

\begin{figure}[t]
    \centering
    \includegraphics[width=0.24\textwidth]{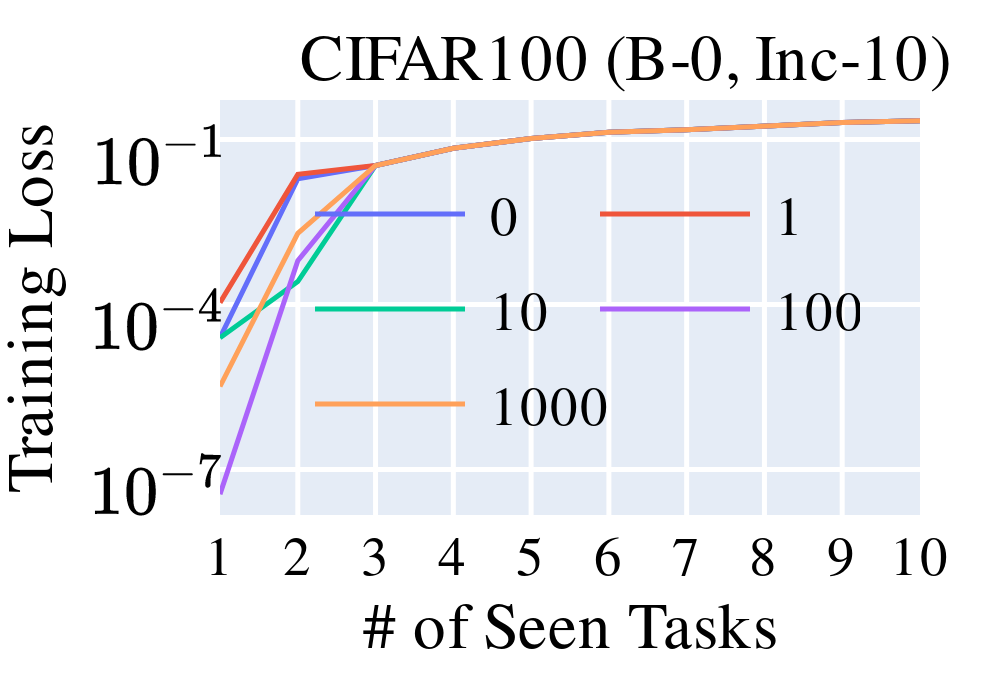}
	\includegraphics[width=0.24\textwidth]{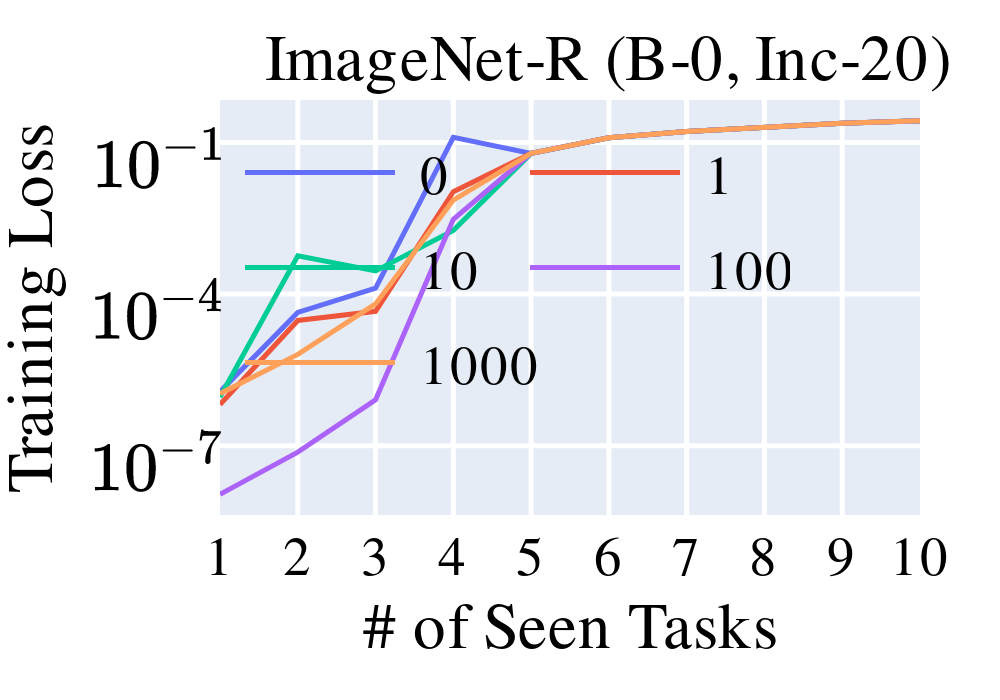}
	\includegraphics[width=0.24\textwidth]{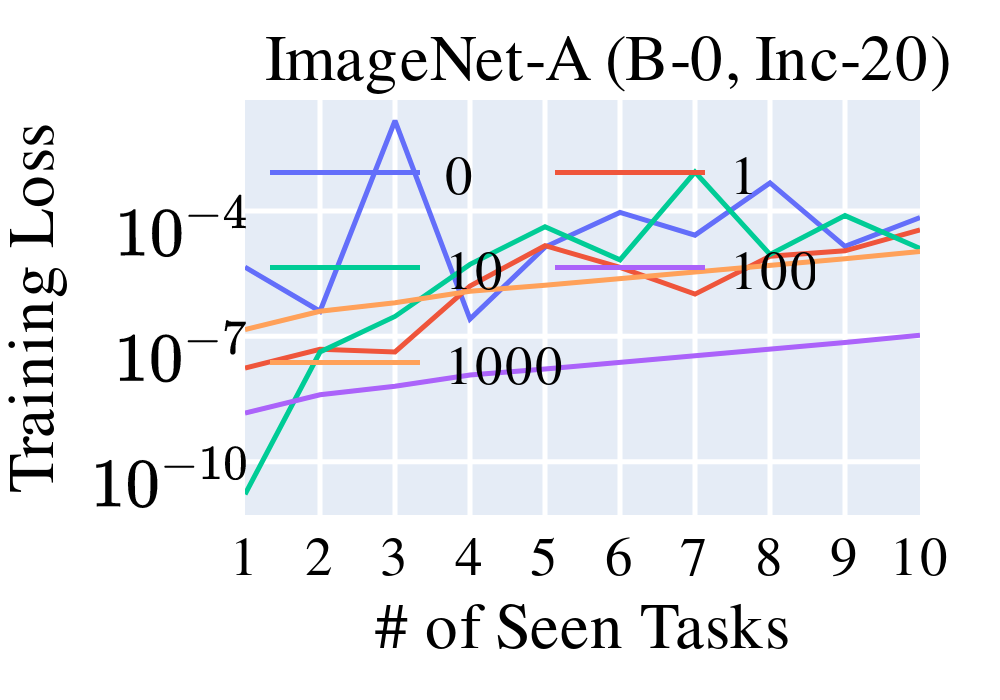}
	\includegraphics[width=0.24\textwidth]{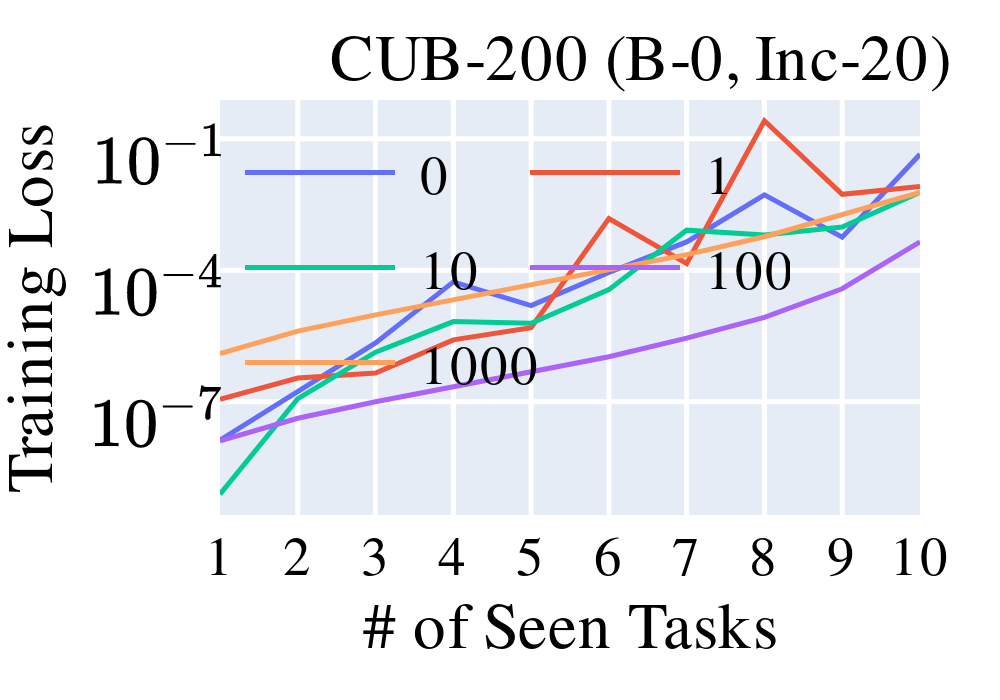}
    \caption{The average training MSE loss $\frac{1}{M_t}  \| \bW \bH_{1:t} - \bY_{1:t}\|_{\textnormal{F}}^2$ of \ref{eq:RanPAC} for different ridge regularization parameters $\lambda\in\{ 0,1,10,100,1000\}$. Compare this with \cref{fig:ranpac-unstable} and \cref{fig:training-loss}.  }
    \label{fig:training-loss-ranpac}
\end{figure}

\begin{figure}[h]
    \centering
    \includegraphics[width=0.24\textwidth]{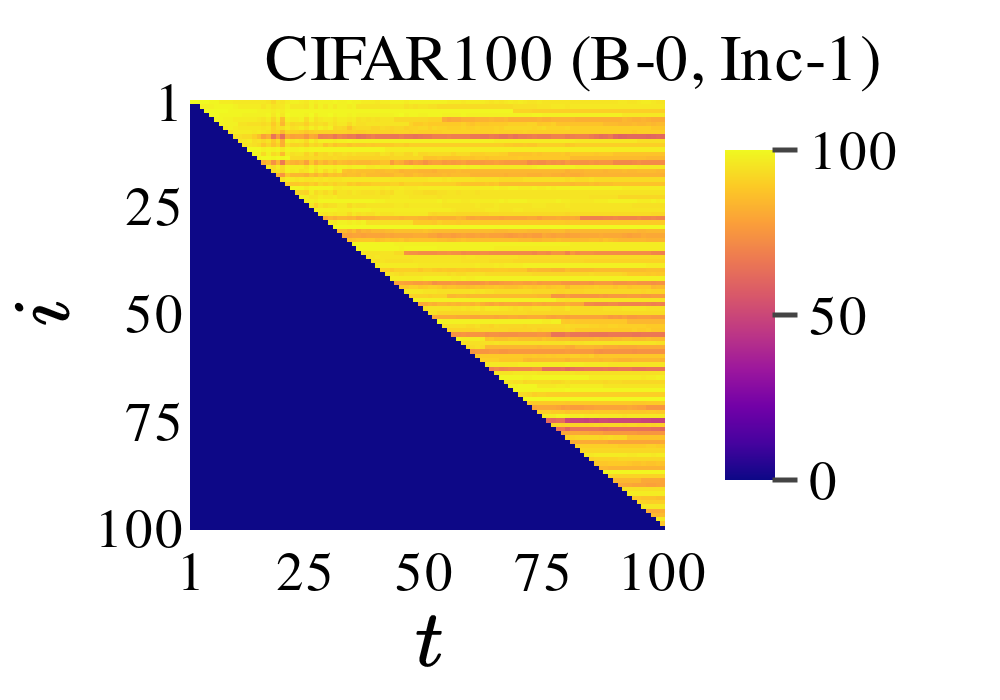} 
    \includegraphics[width=0.24\textwidth]{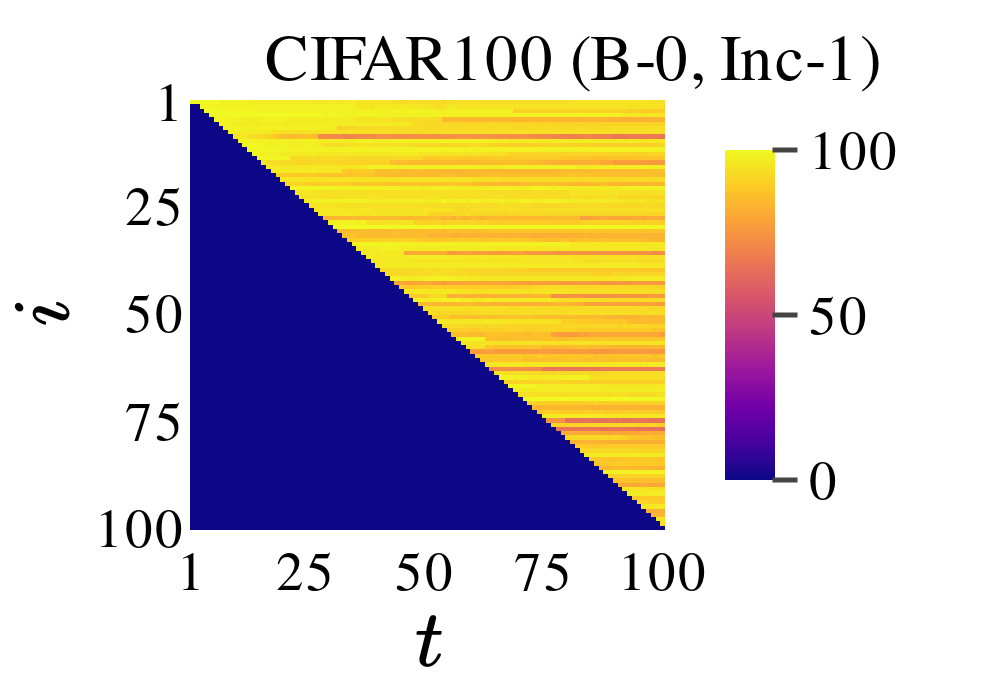}
	\includegraphics[width=0.24\textwidth]{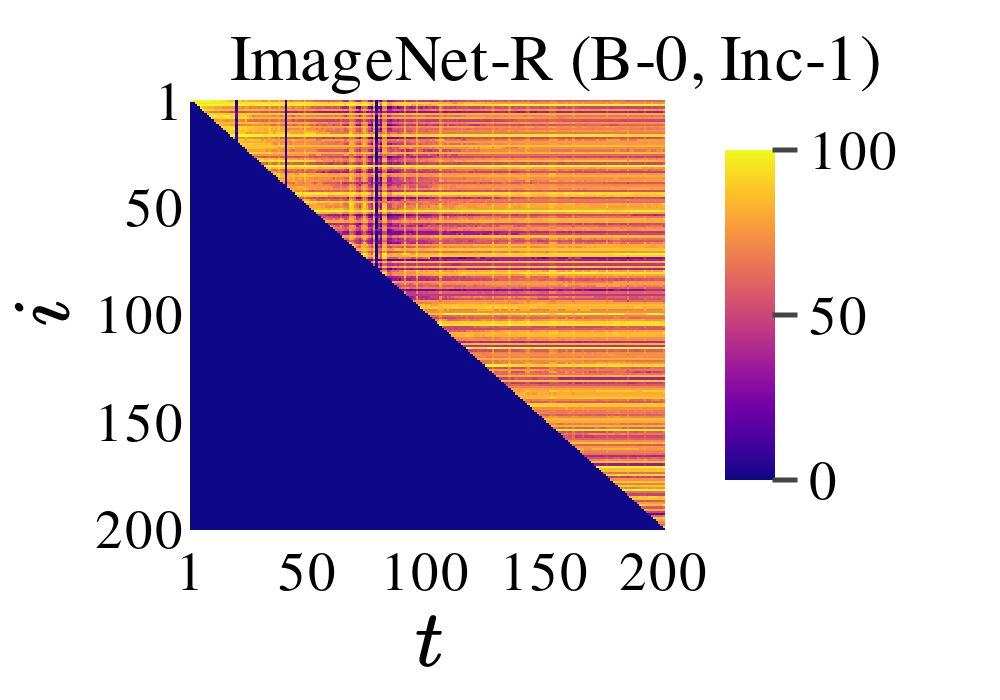}
	\includegraphics[width=0.24\textwidth]{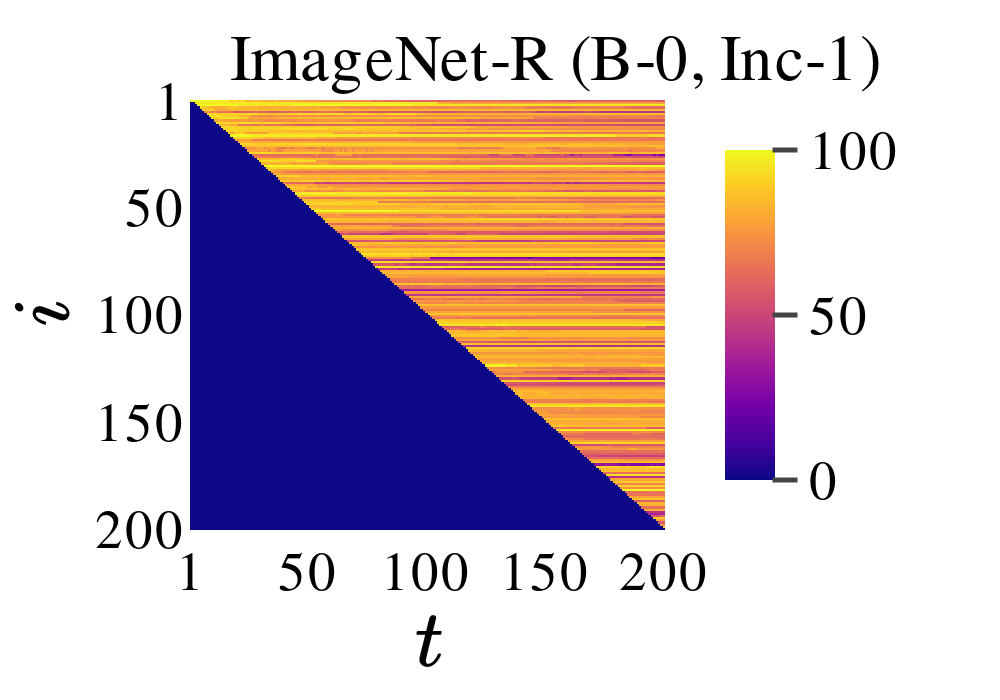}

    \includegraphics[width=0.24\textwidth]{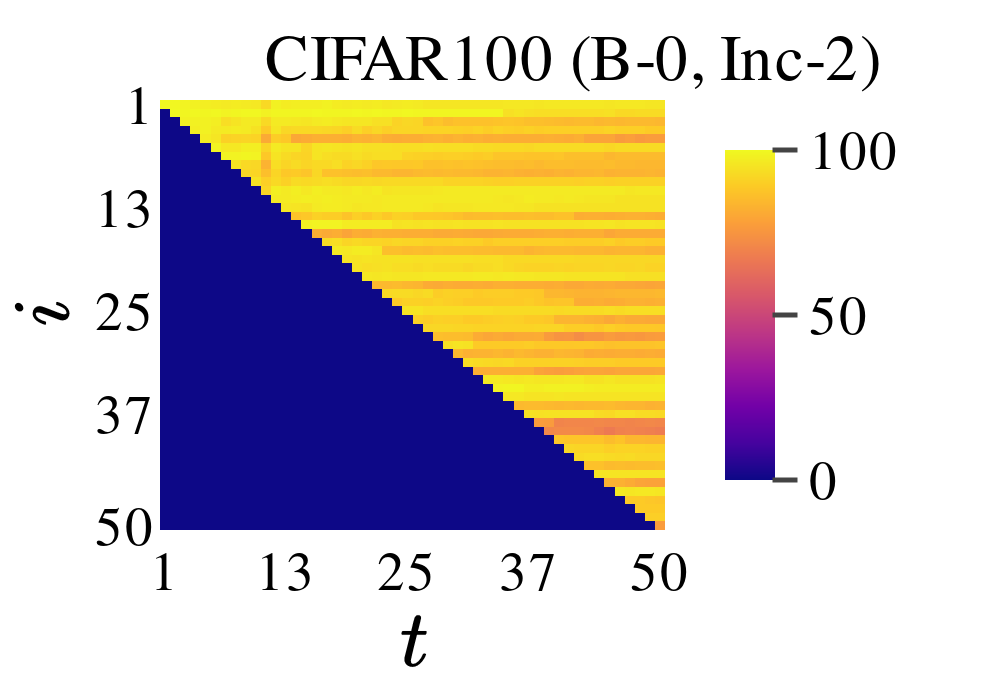} 
    \includegraphics[width=0.24\textwidth]{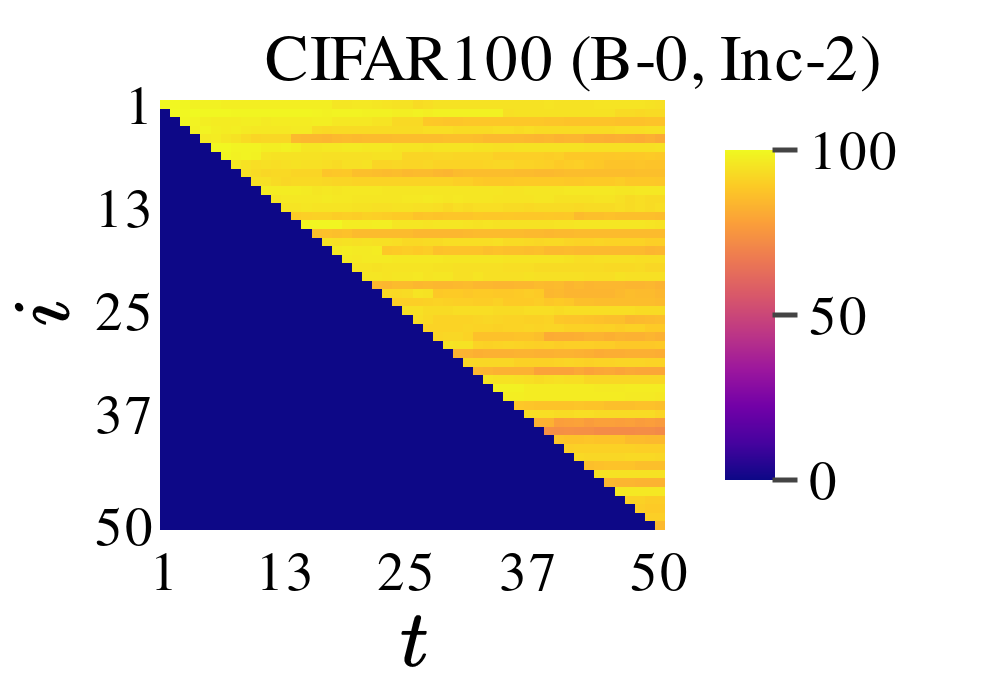}
	\includegraphics[width=0.24\textwidth]{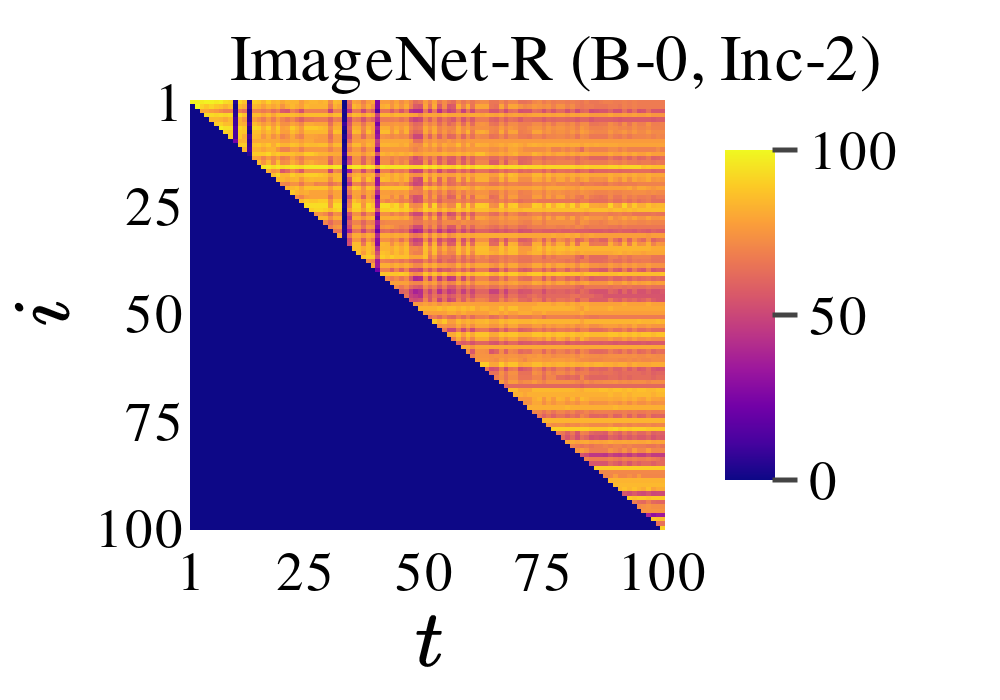}
	\includegraphics[width=0.24\textwidth]{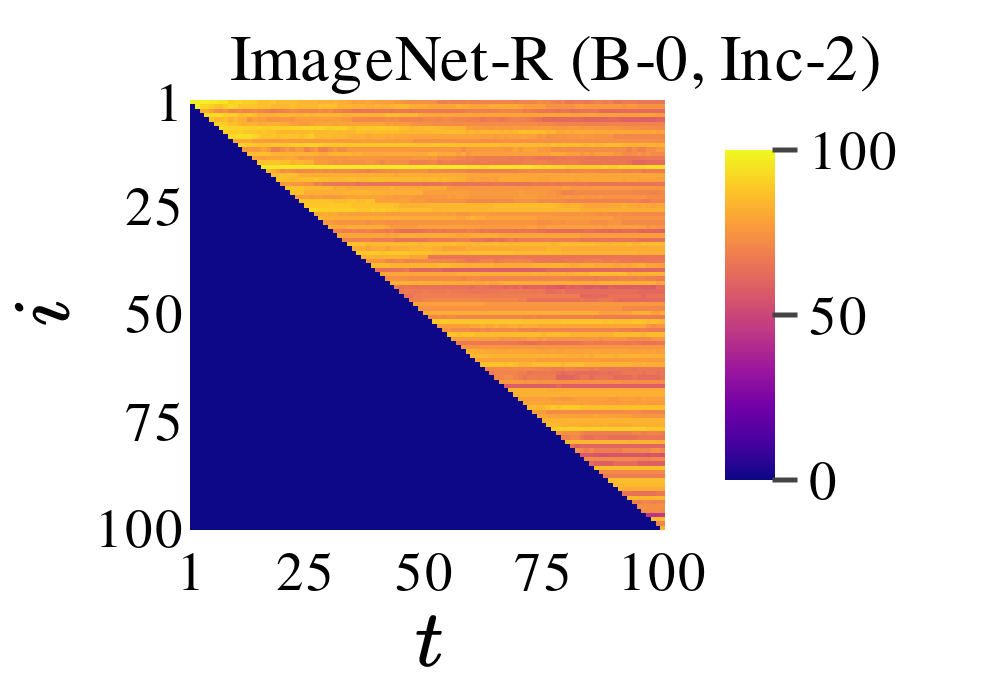}

    \includegraphics[width=0.24\textwidth]{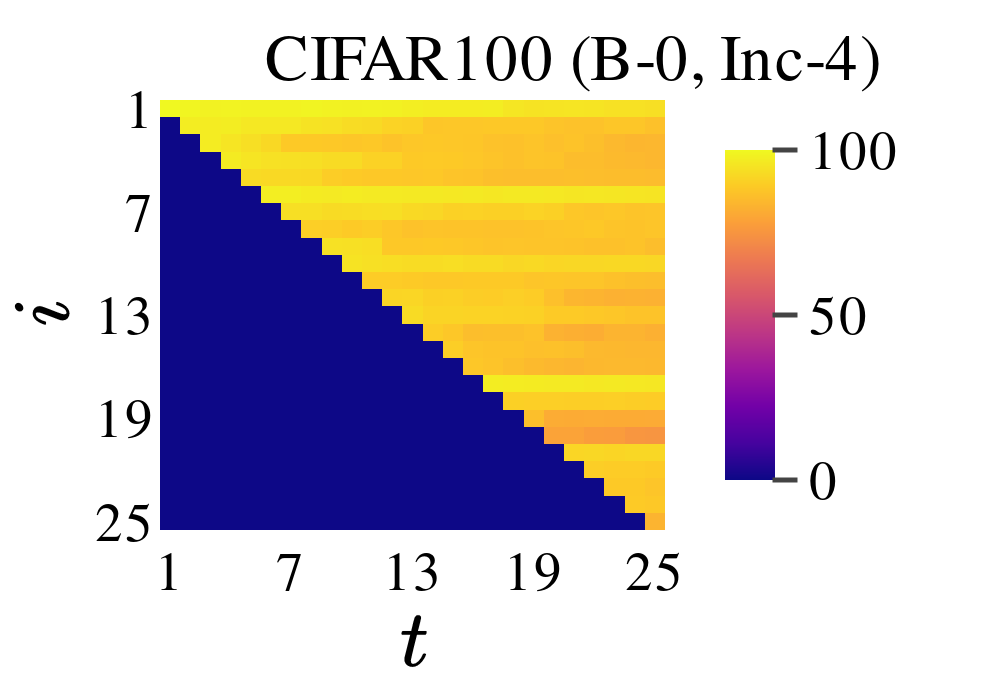} 
    \includegraphics[width=0.24\textwidth]{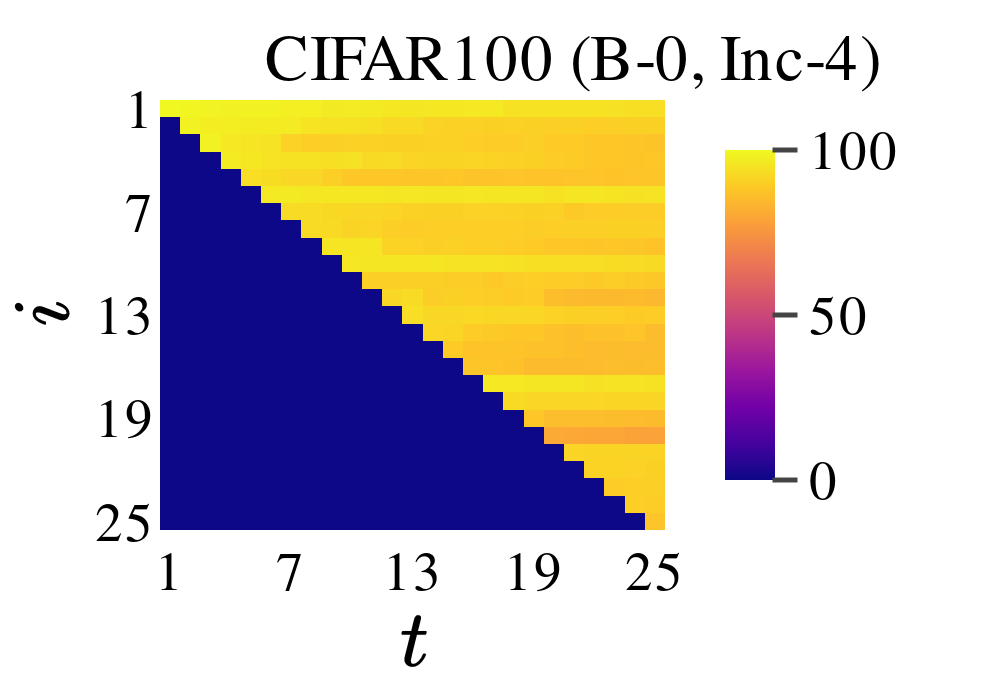}
	\includegraphics[width=0.24\textwidth]{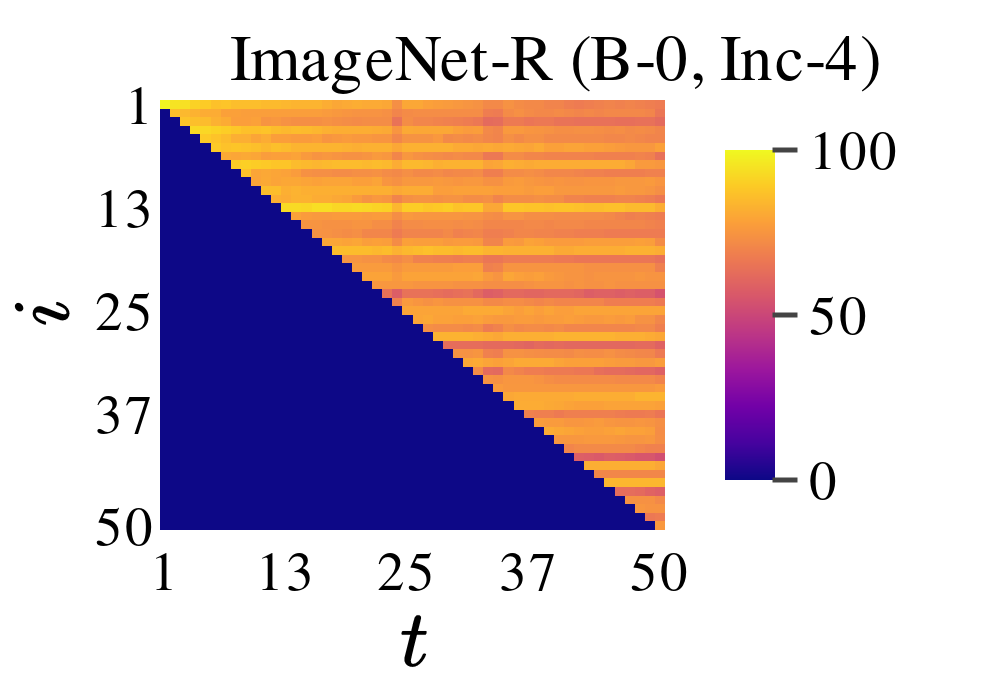}
	\includegraphics[width=0.24\textwidth]{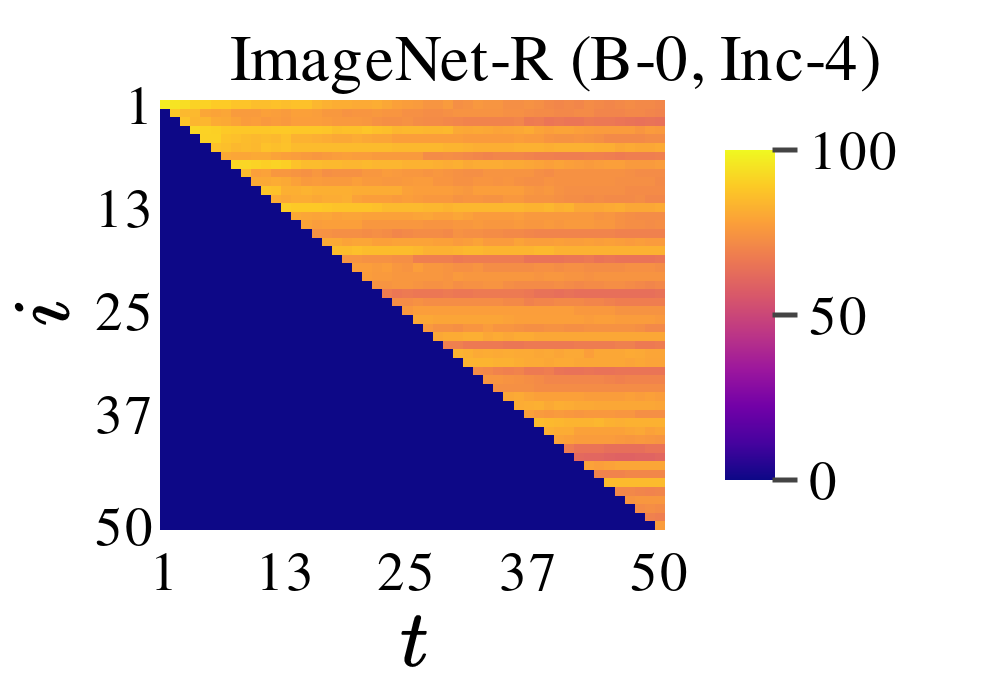}

    \includegraphics[width=0.24\textwidth]{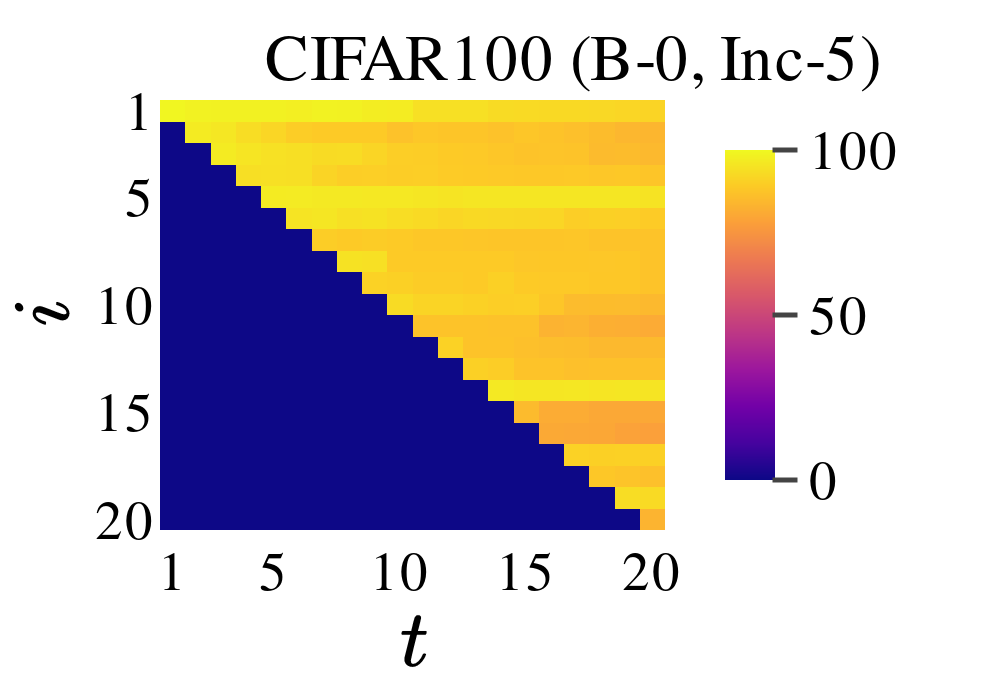} 
    \includegraphics[width=0.24\textwidth]{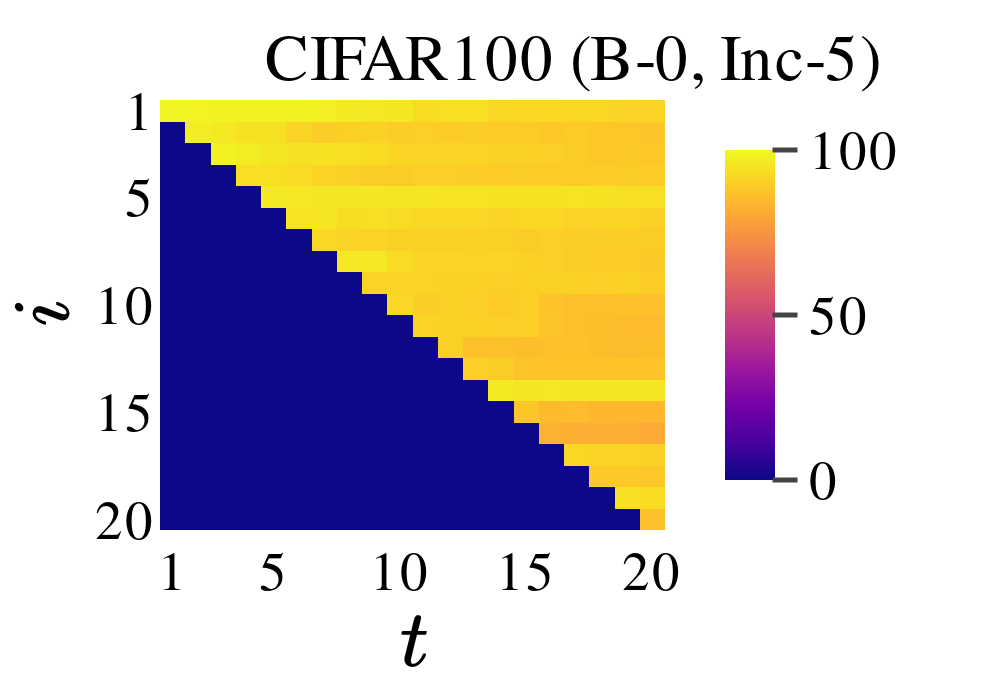}
	\includegraphics[width=0.24\textwidth]{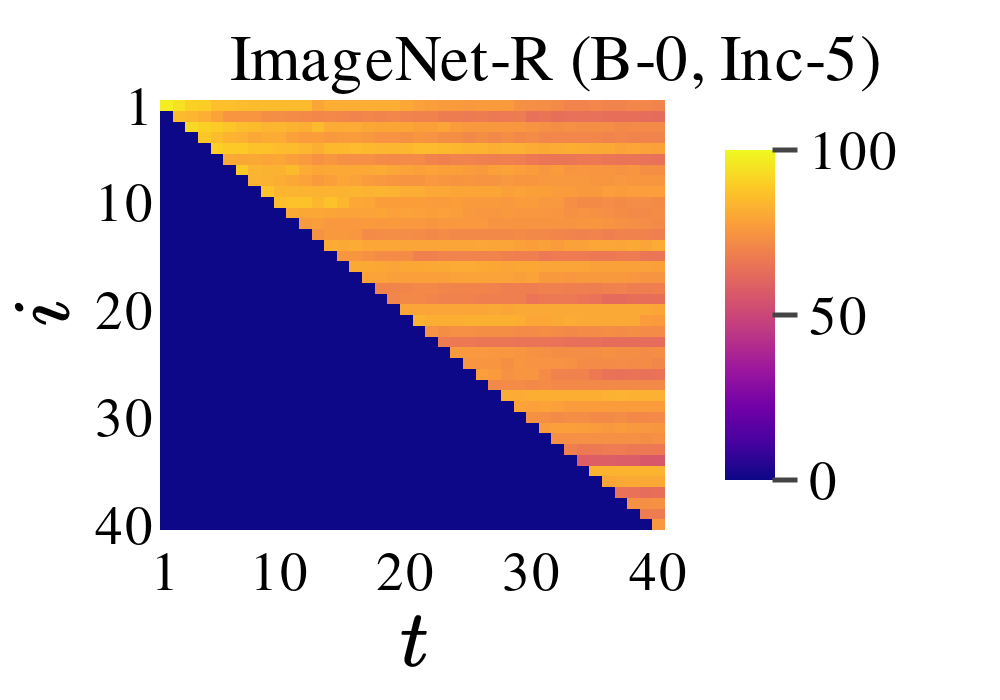}
	\includegraphics[width=0.24\textwidth]{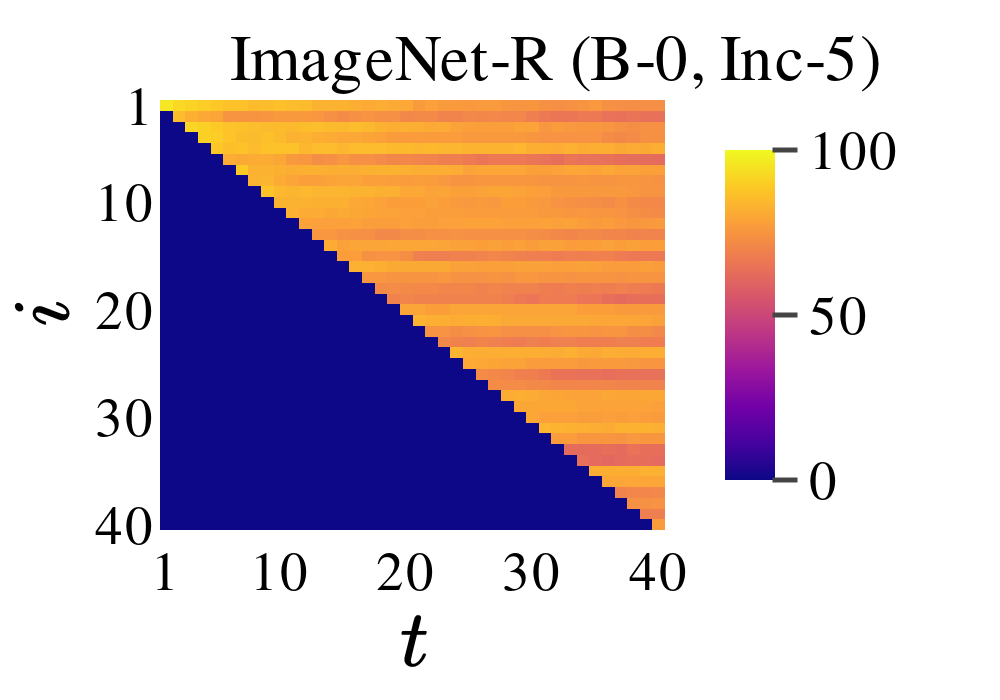}

    \makebox[0.24\textwidth]{\footnotesize \centering (\ref{fig:cross-validation-instability2}a) RanPAC  }
    \makebox[0.24\textwidth]{\footnotesize \centering (\ref{fig:cross-validation-instability2}b) LoRanPAC  }
    \makebox[0.24\textwidth]{\footnotesize \centering (\ref{fig:cross-validation-instability2}c) RanPAC  }
    \makebox[0.24\textwidth]{\footnotesize \centering (\ref{fig:cross-validation-instability2}d) LoRanPAC  }
    
    \caption{Upper triangular accuracy matrices on CIFAR100 and ImageNet-R (Inc-1, 2, 4, 5). }
    \label{fig:cross-validation-instability2}
\end{figure}

\begin{figure}[h]
    \centering
    \includegraphics[width=0.24\textwidth]{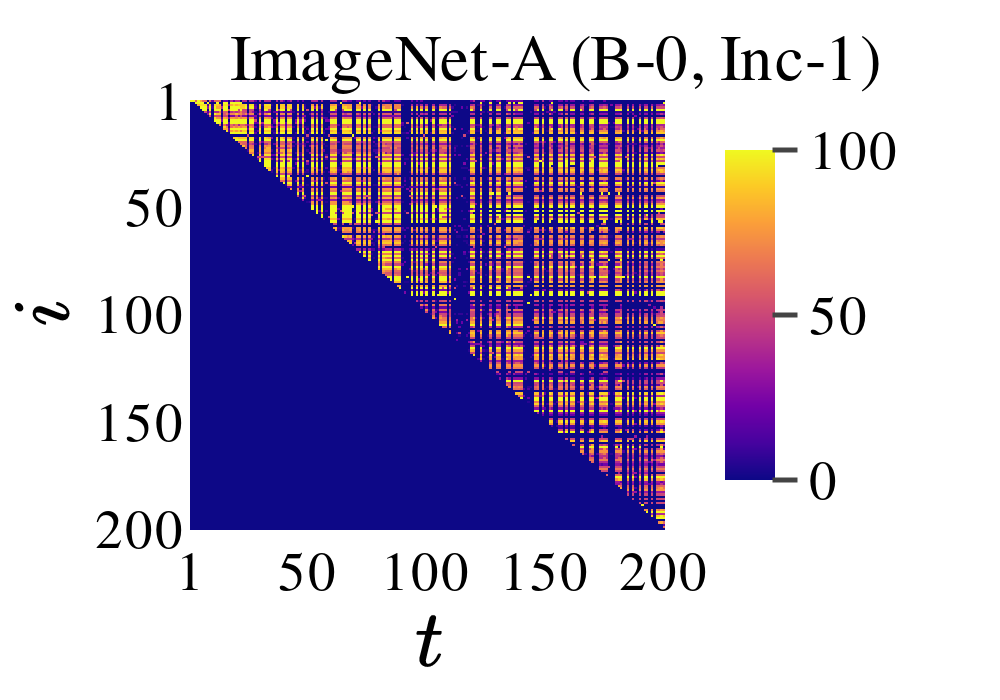}  
    \includegraphics[width=0.24\textwidth]{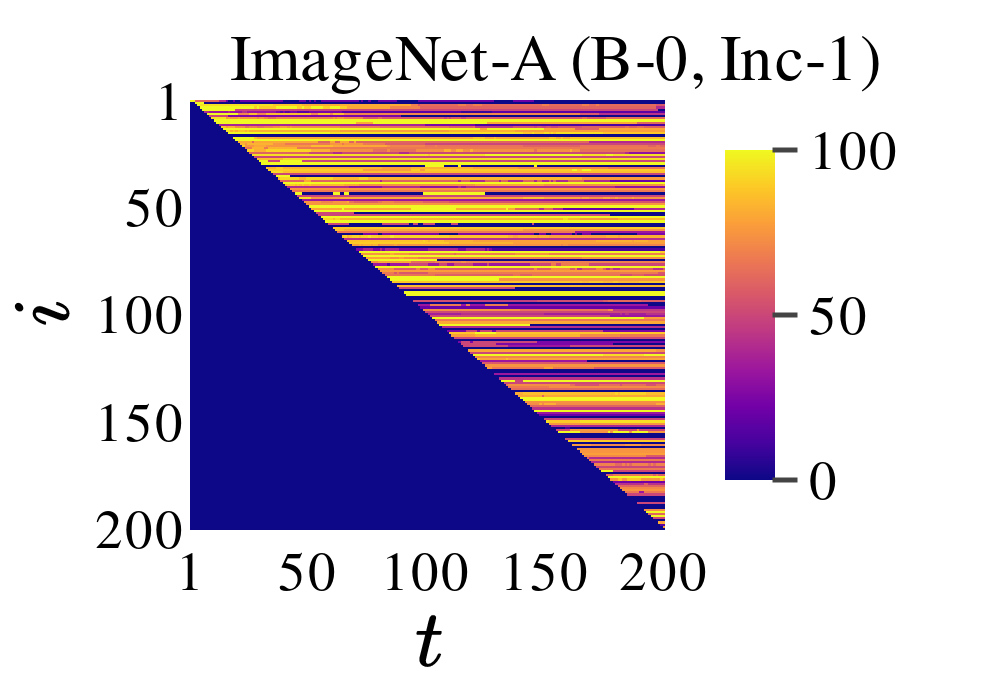}
	\includegraphics[width=0.24\textwidth]{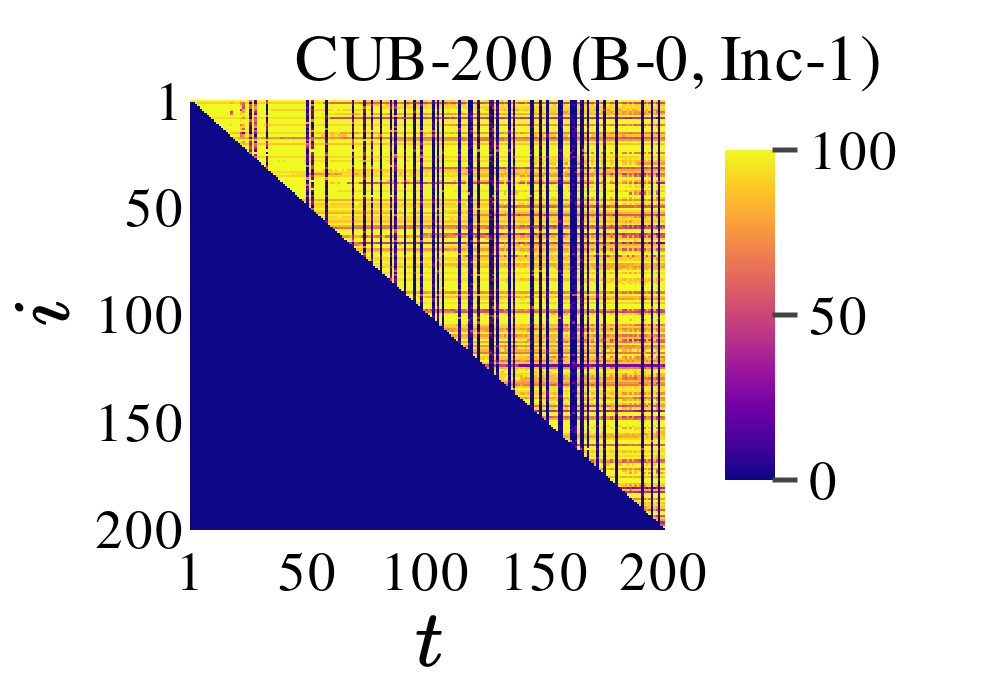}
	\includegraphics[width=0.24\textwidth]{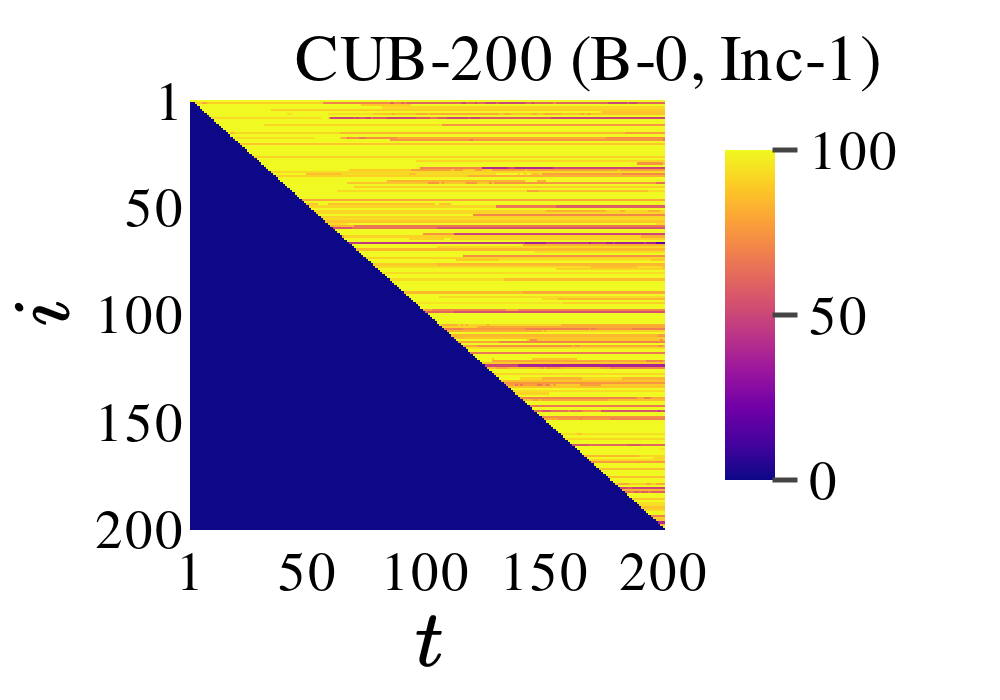}

    \includegraphics[width=0.24\textwidth]{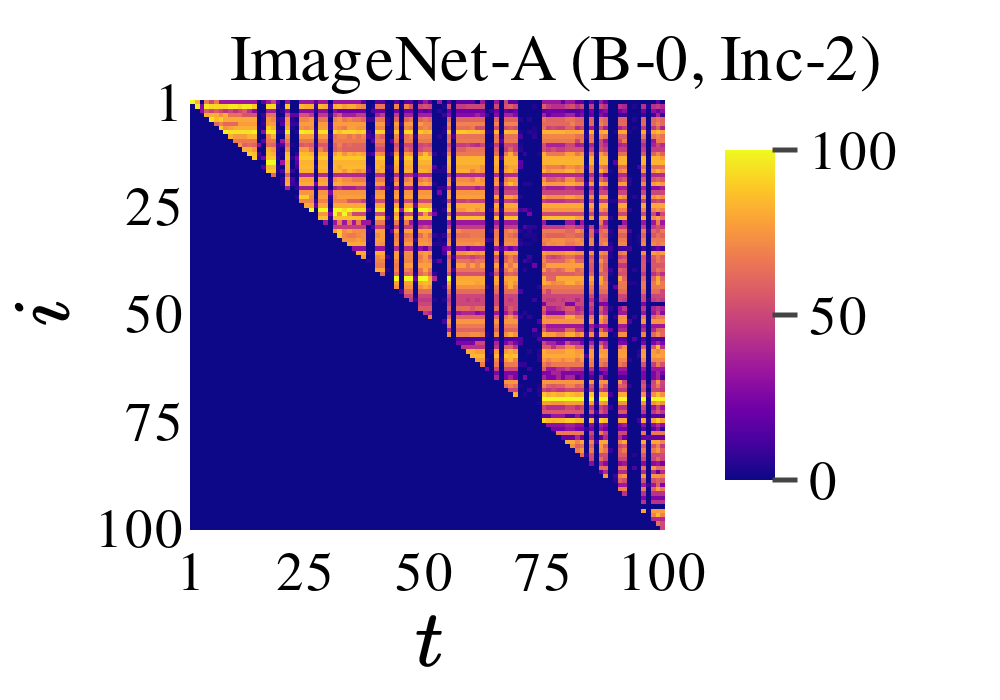}  
    \includegraphics[width=0.24\textwidth]{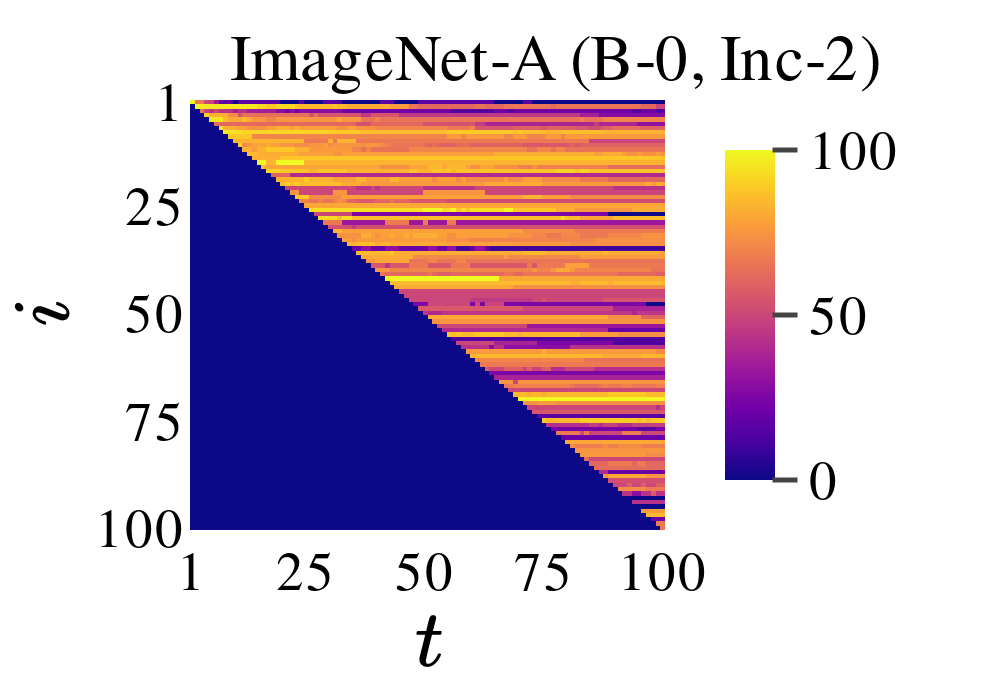}
	\includegraphics[width=0.24\textwidth]{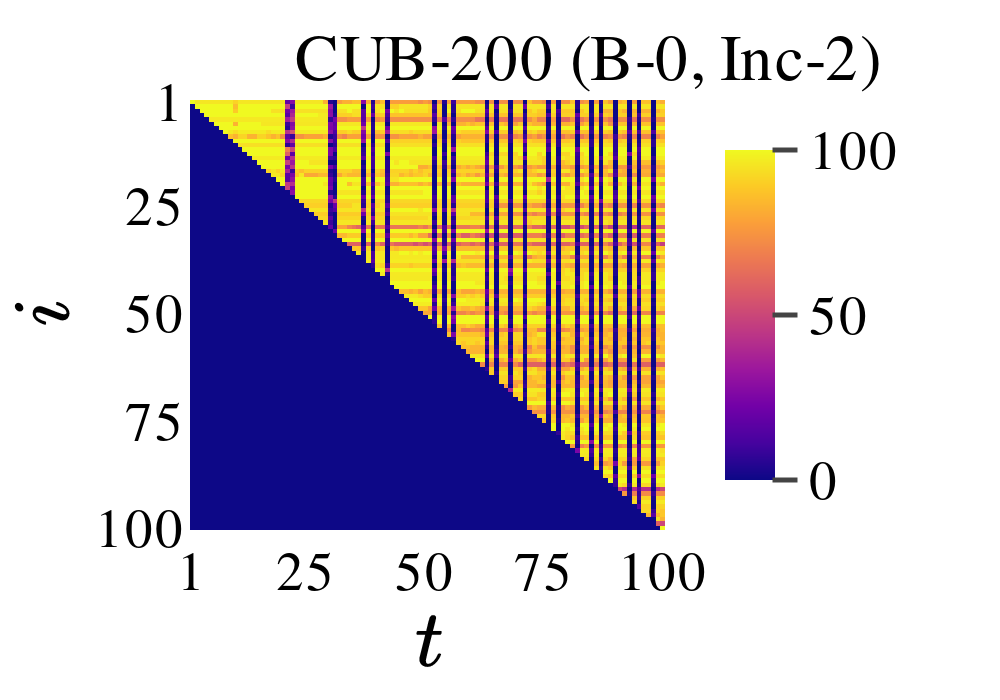}
	\includegraphics[width=0.24\textwidth]{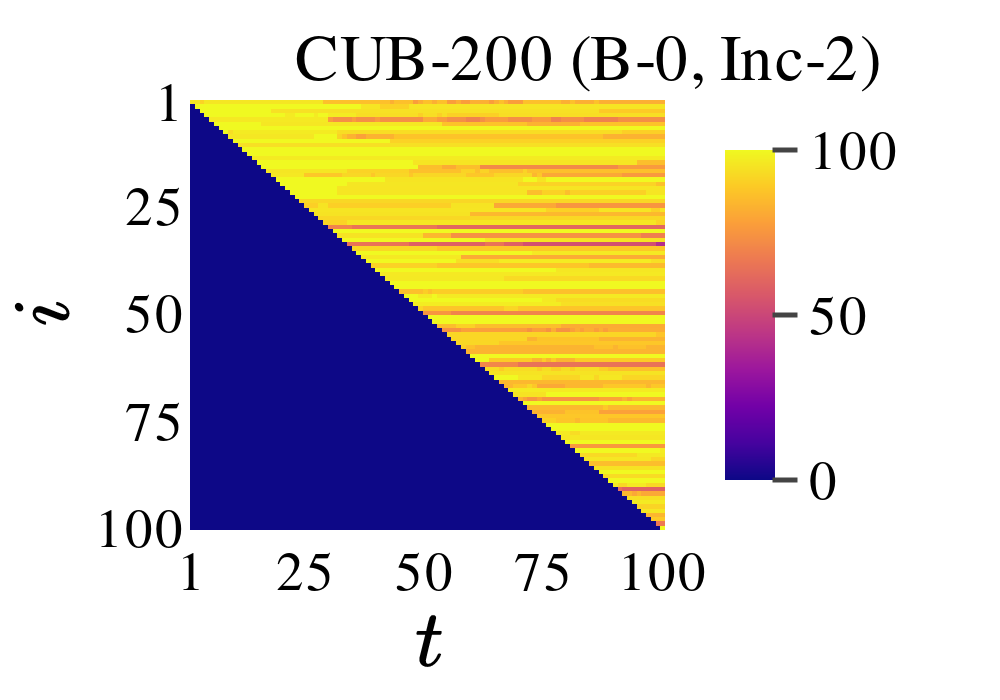}

    \includegraphics[width=0.24\textwidth]{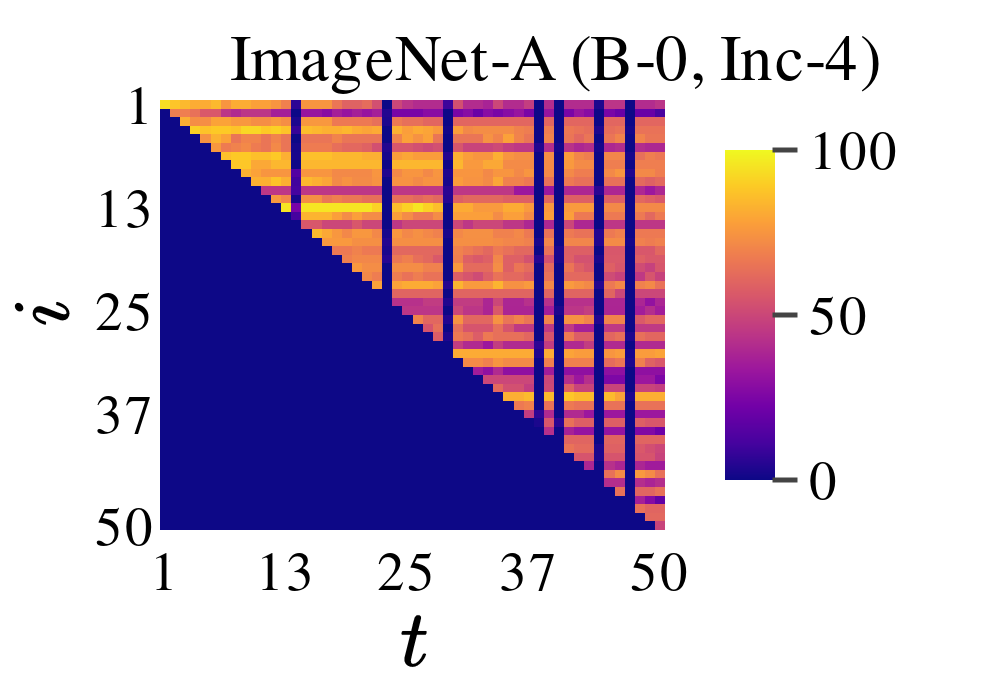}  
    \includegraphics[width=0.24\textwidth]{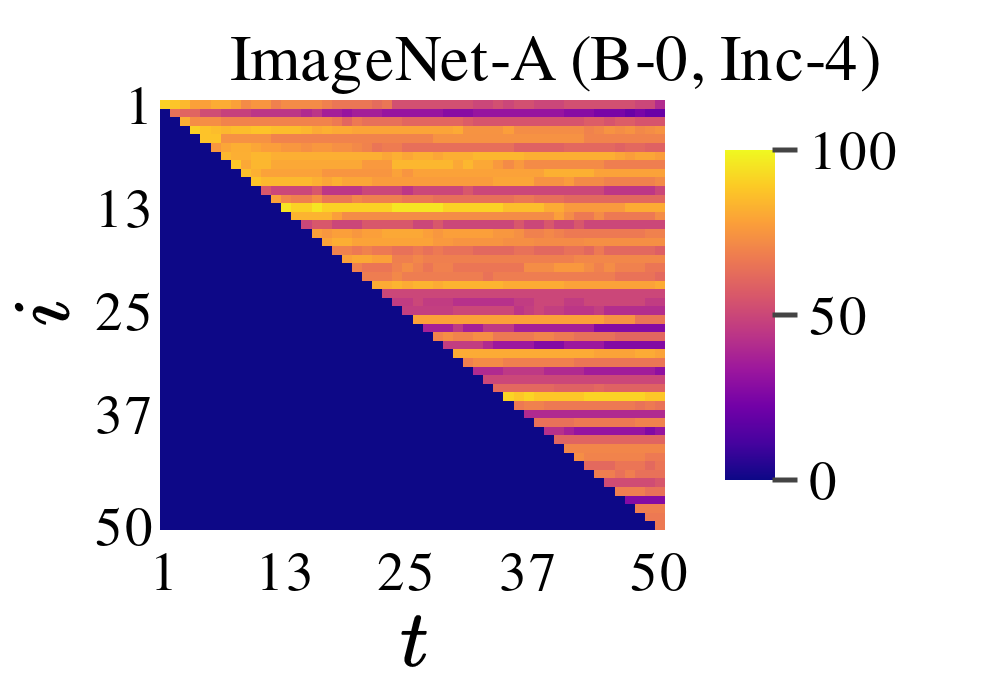}
	\includegraphics[width=0.24\textwidth]{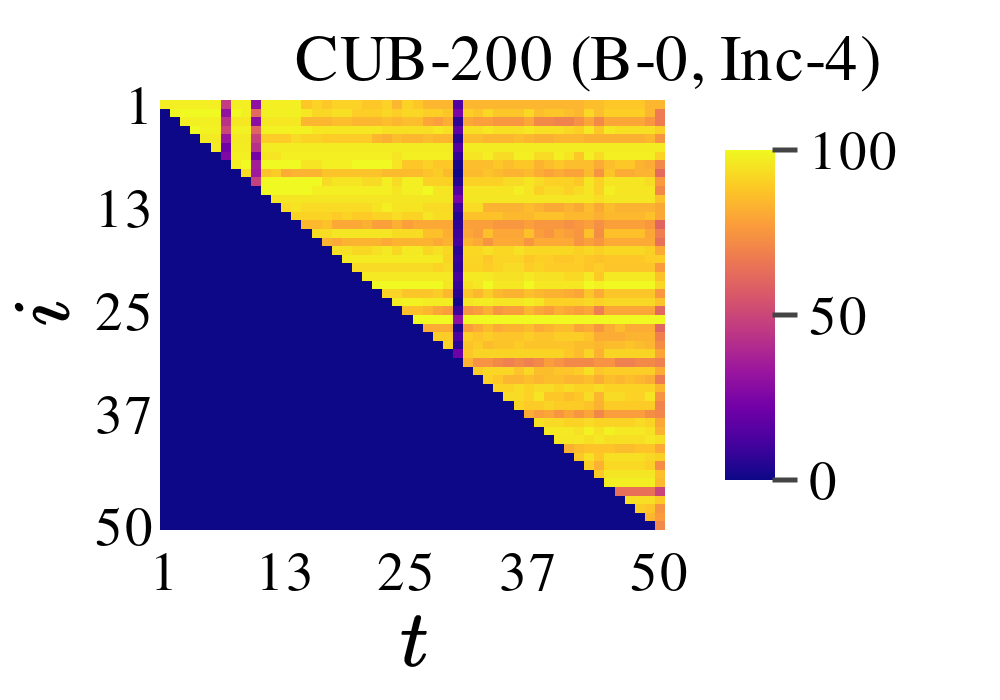}
	\includegraphics[width=0.24\textwidth]{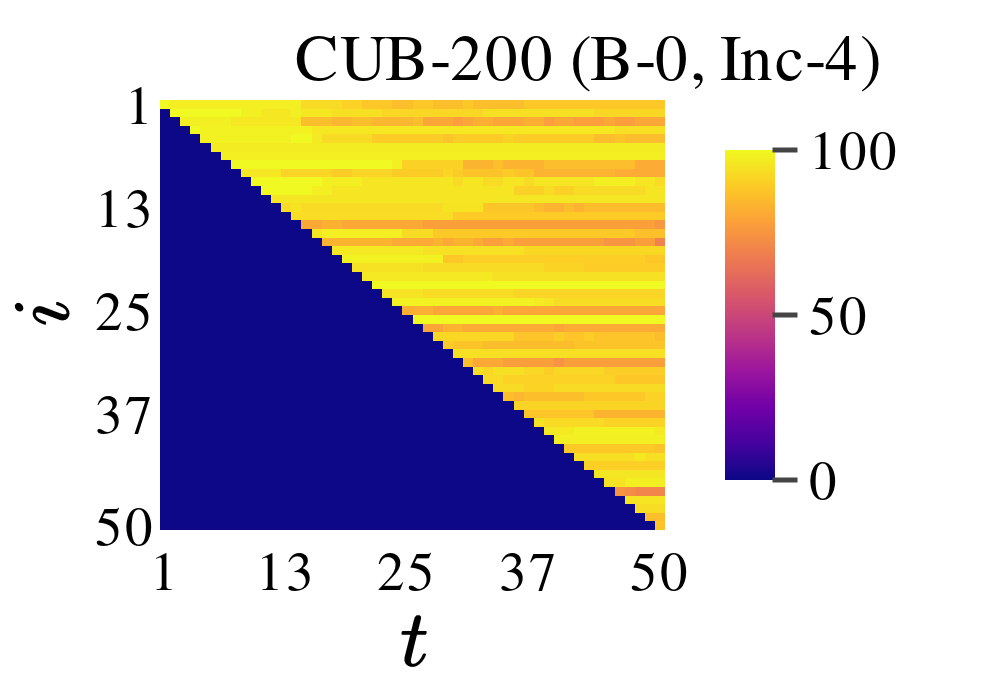}

    \includegraphics[width=0.24\textwidth]{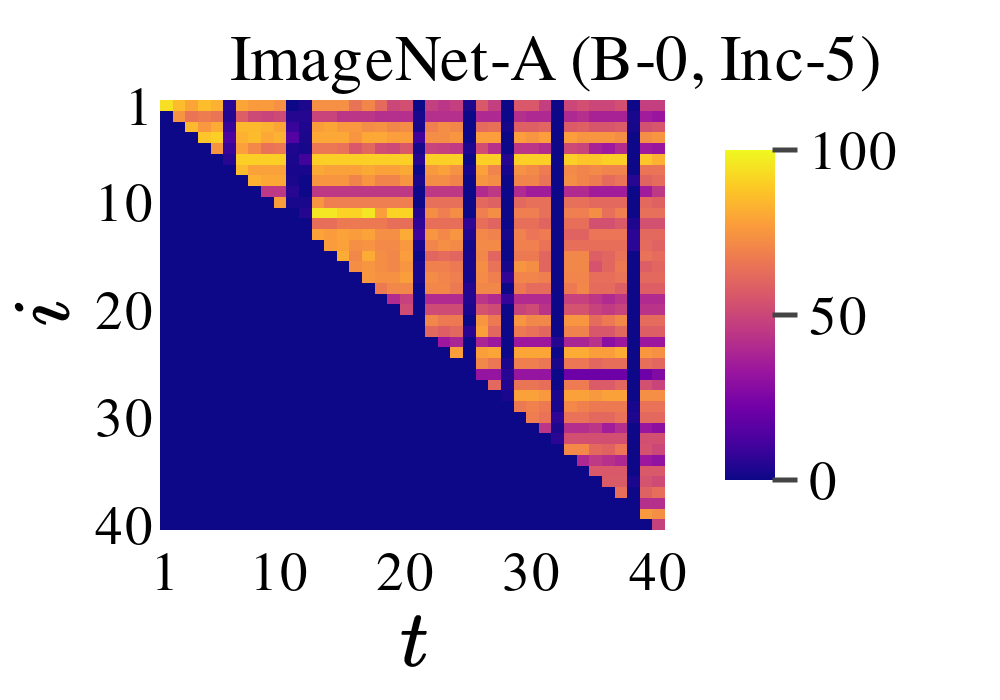}  
    \includegraphics[width=0.24\textwidth]{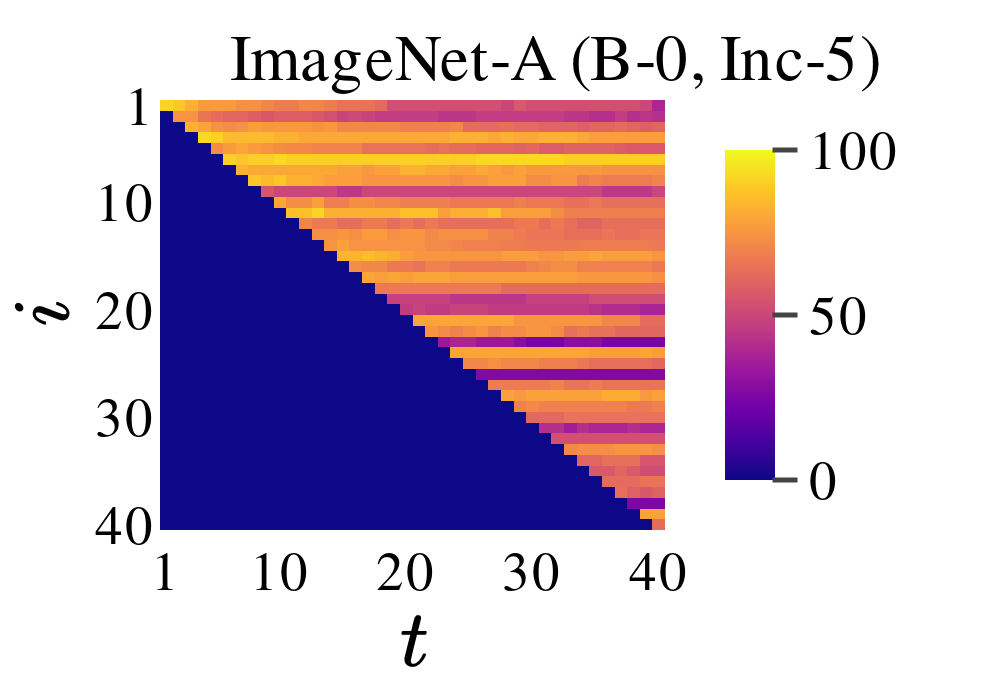}
	\includegraphics[width=0.24\textwidth]{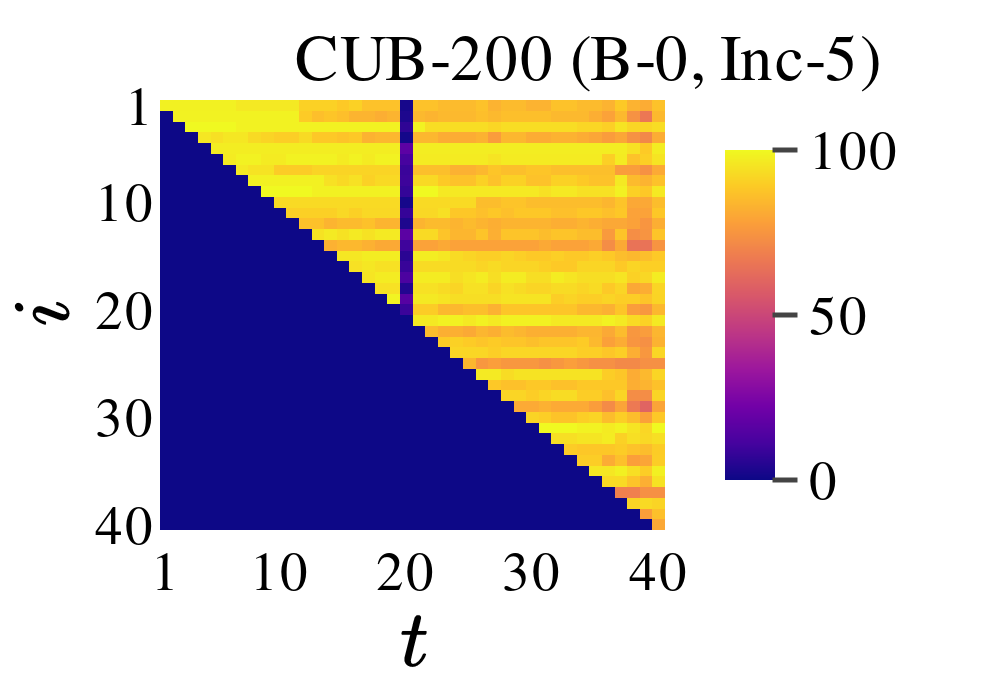}
	\includegraphics[width=0.24\textwidth]{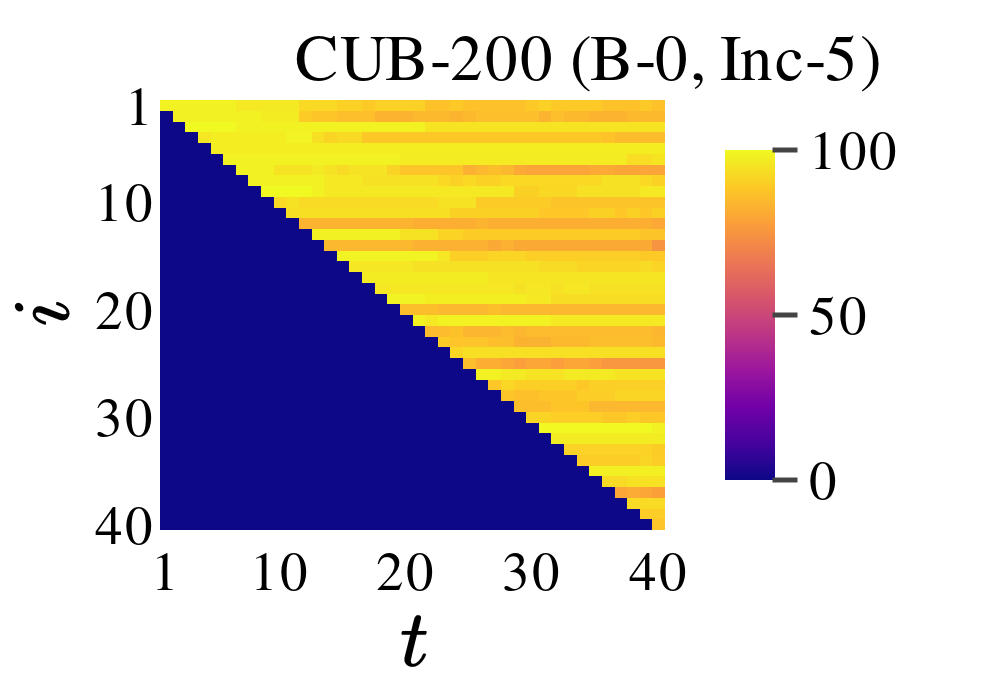}

    \makebox[0.24\textwidth]{\footnotesize \centering (\ref{fig:cross-validation-instability3}a) RanPAC  }
    \makebox[0.24\textwidth]{\footnotesize \centering (\ref{fig:cross-validation-instability3}b) LoRanPAC  }
    \makebox[0.24\textwidth]{\footnotesize \centering (\ref{fig:cross-validation-instability3}c) RanPAC  }
    \makebox[0.24\textwidth]{\footnotesize \centering (\ref{fig:cross-validation-instability3}d) LoRanPAC  }
    
    \caption{Upper triangular accuracy matrices on ImageNet-A and CUB-200 (Inc-1, 2, 4, 5). }
    \label{fig:cross-validation-instability3}
\end{figure}

\begin{figure}[h]
    \centering
    \includegraphics[width=0.24\textwidth]{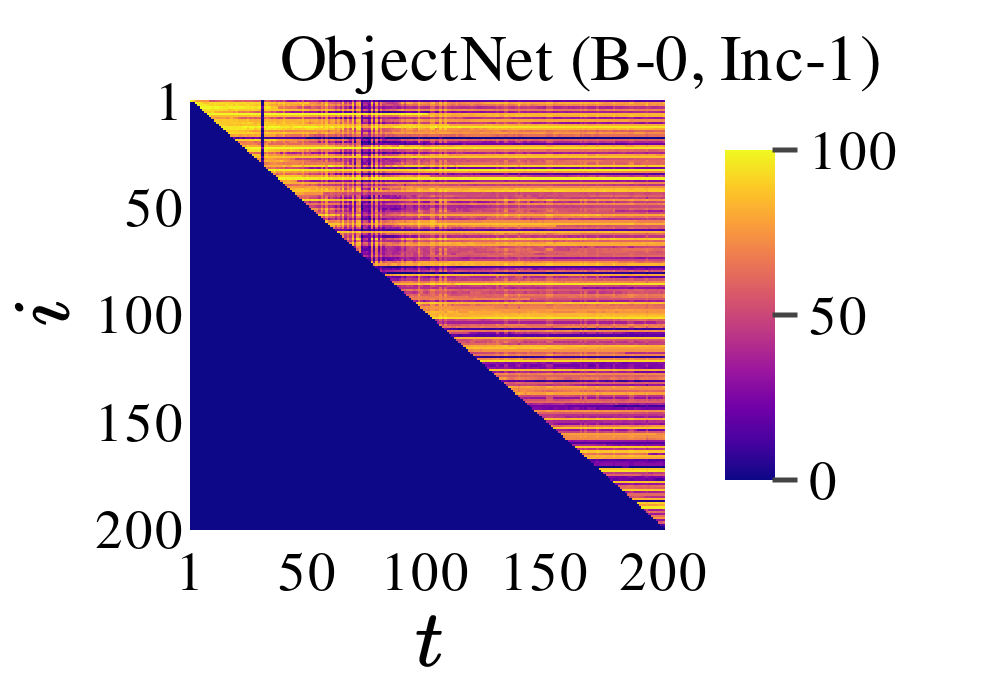}  
    \includegraphics[width=0.24\textwidth]{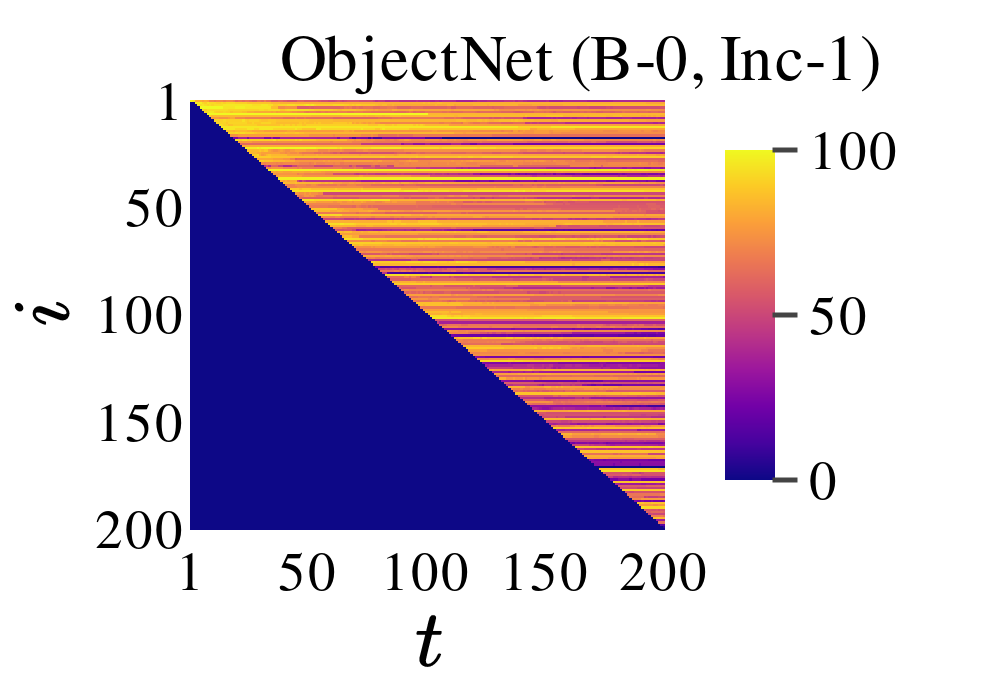}
	\includegraphics[width=0.24\textwidth]{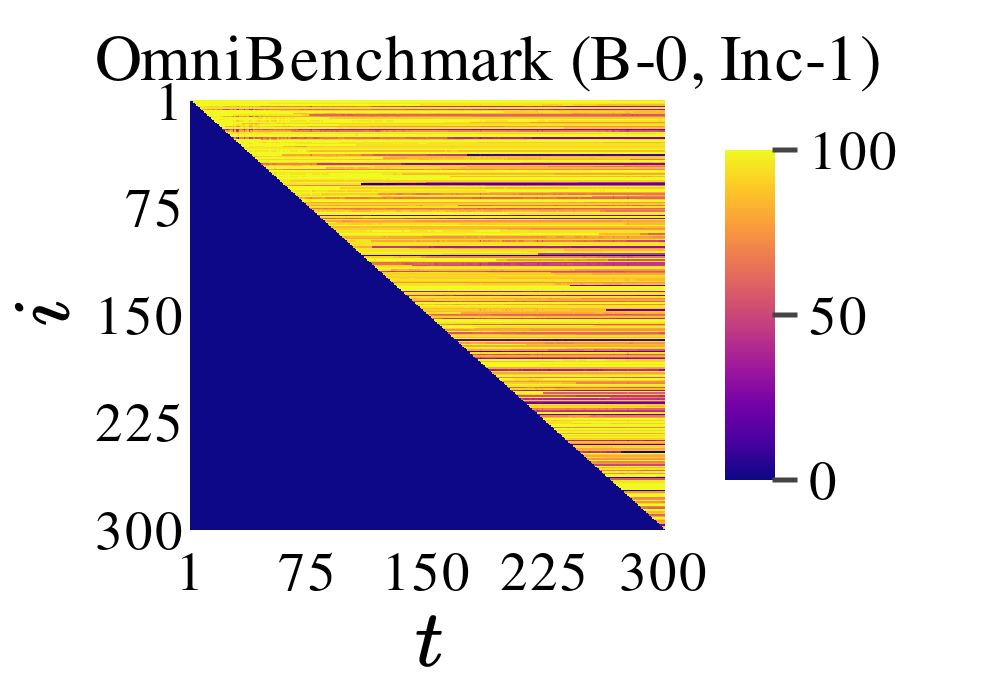}
	\includegraphics[width=0.24\textwidth]{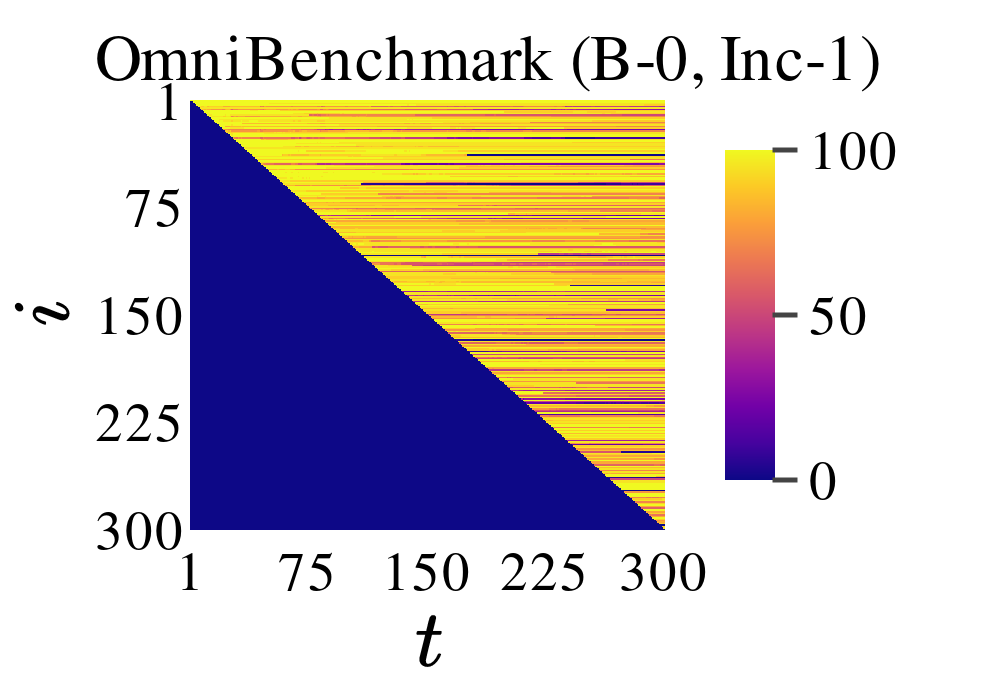}

    \includegraphics[width=0.24\textwidth]{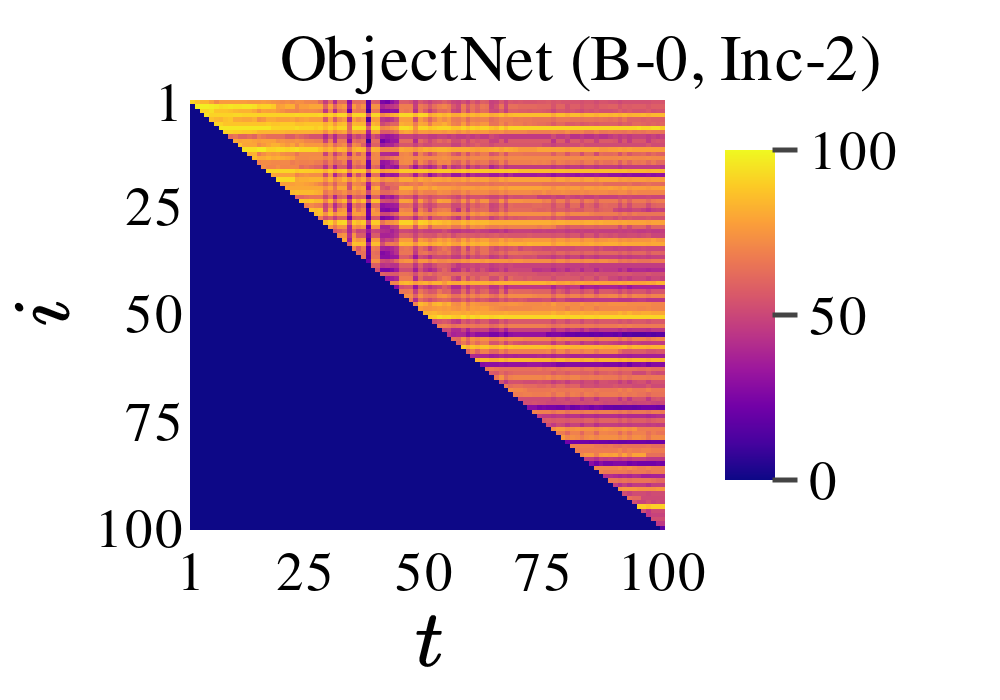}  
    \includegraphics[width=0.24\textwidth]{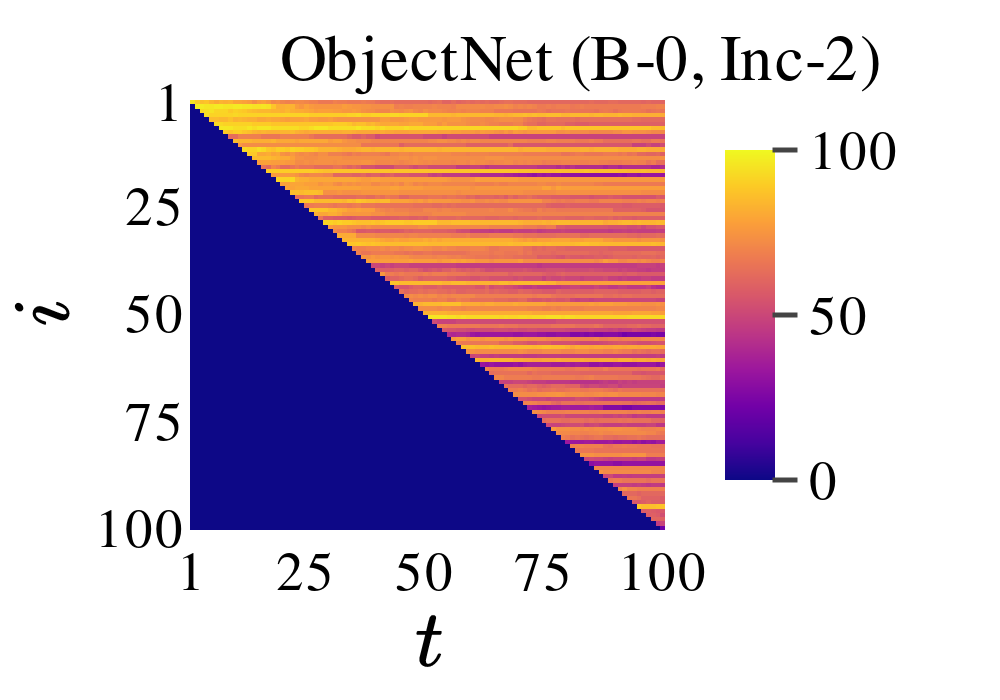}
	\includegraphics[width=0.24\textwidth]{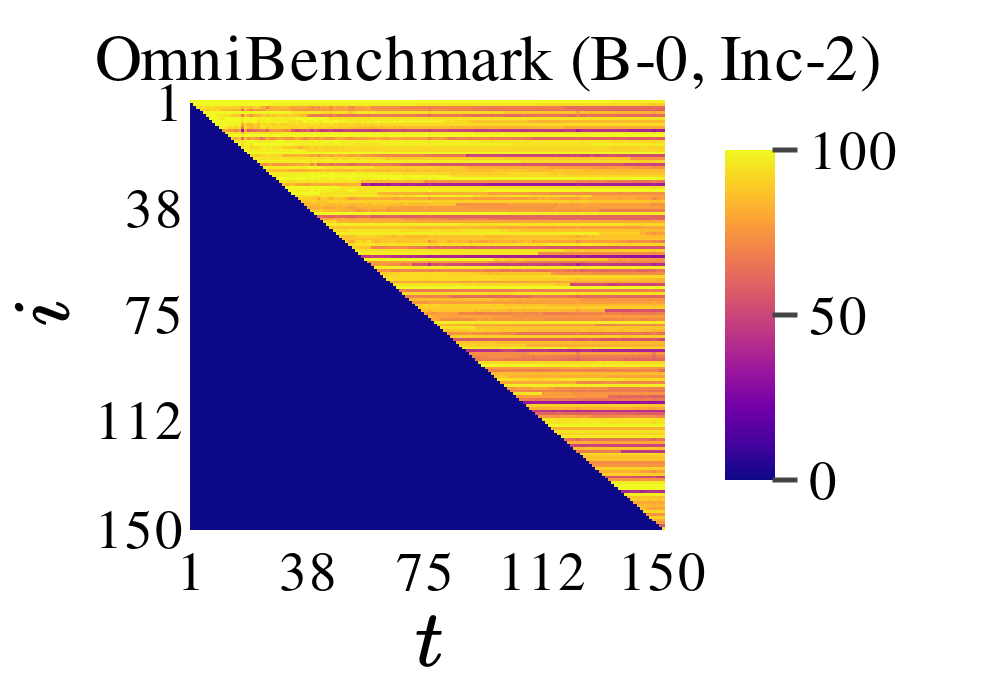}
	\includegraphics[width=0.24\textwidth]{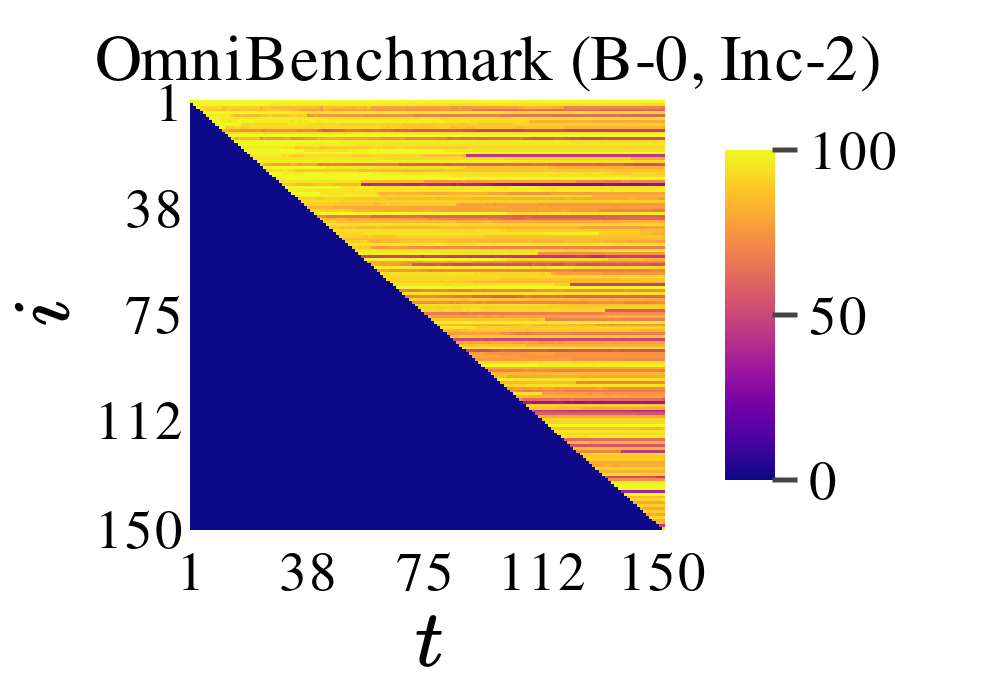}

    \includegraphics[width=0.24\textwidth]{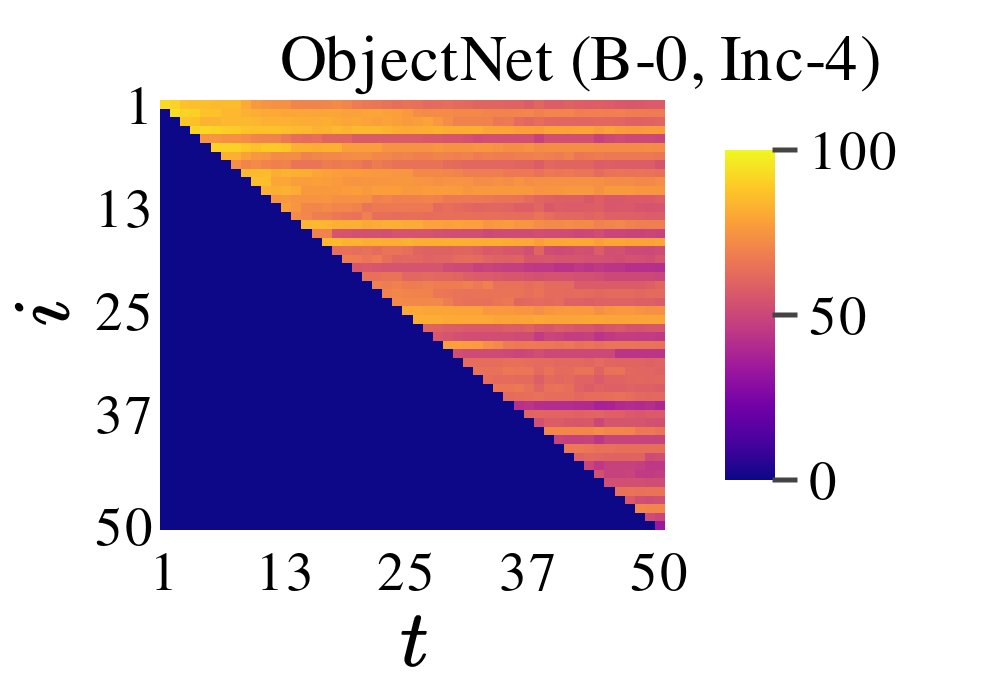}  
    \includegraphics[width=0.24\textwidth]{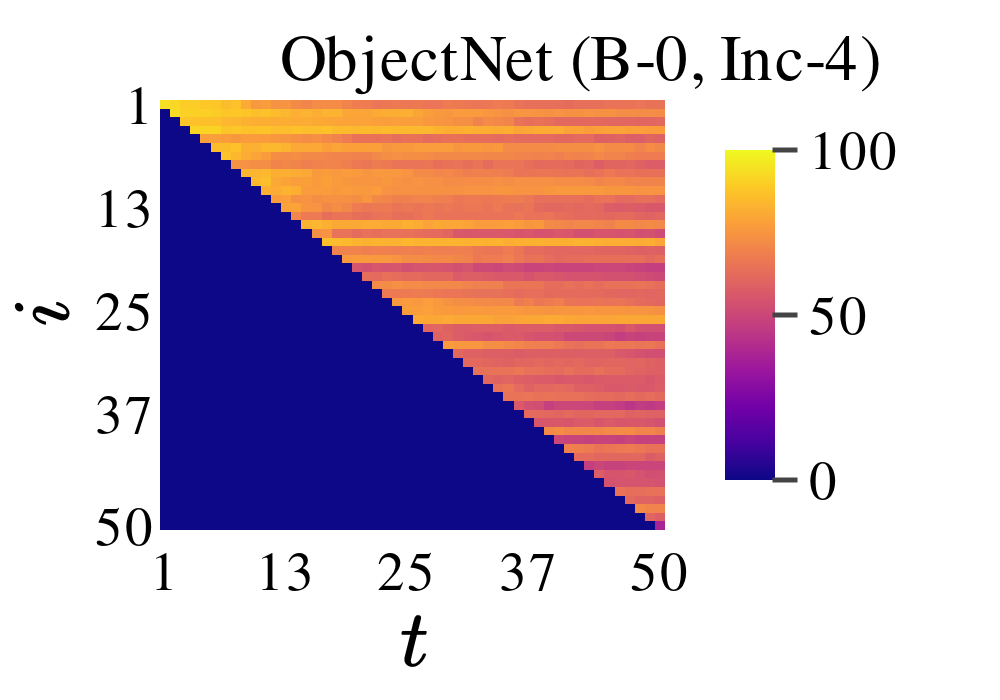}
	\includegraphics[width=0.24\textwidth]{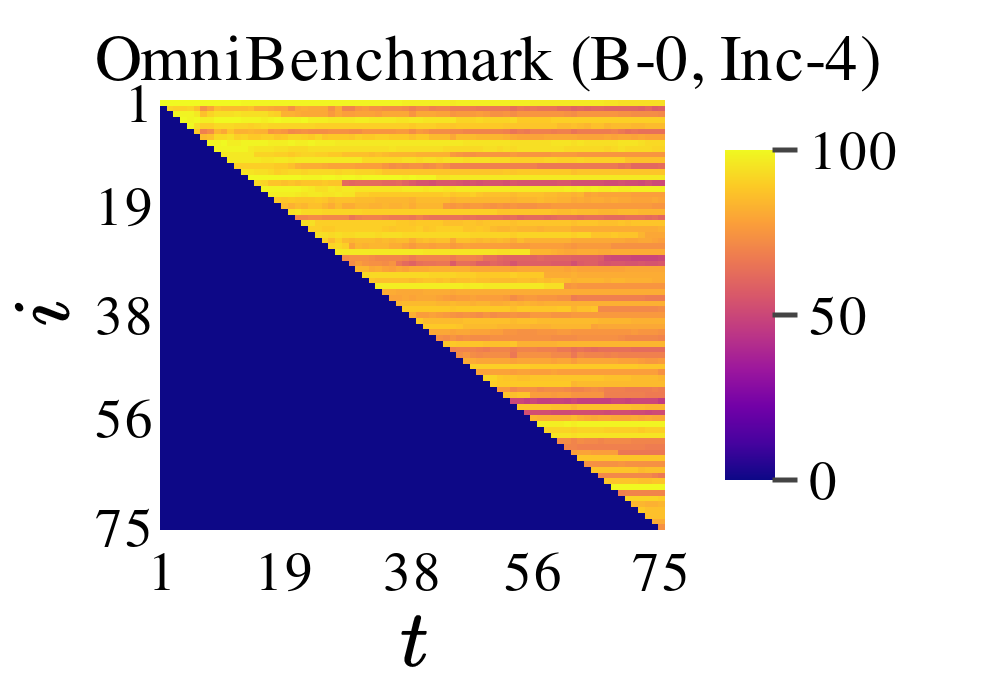}
	\includegraphics[width=0.24\textwidth]{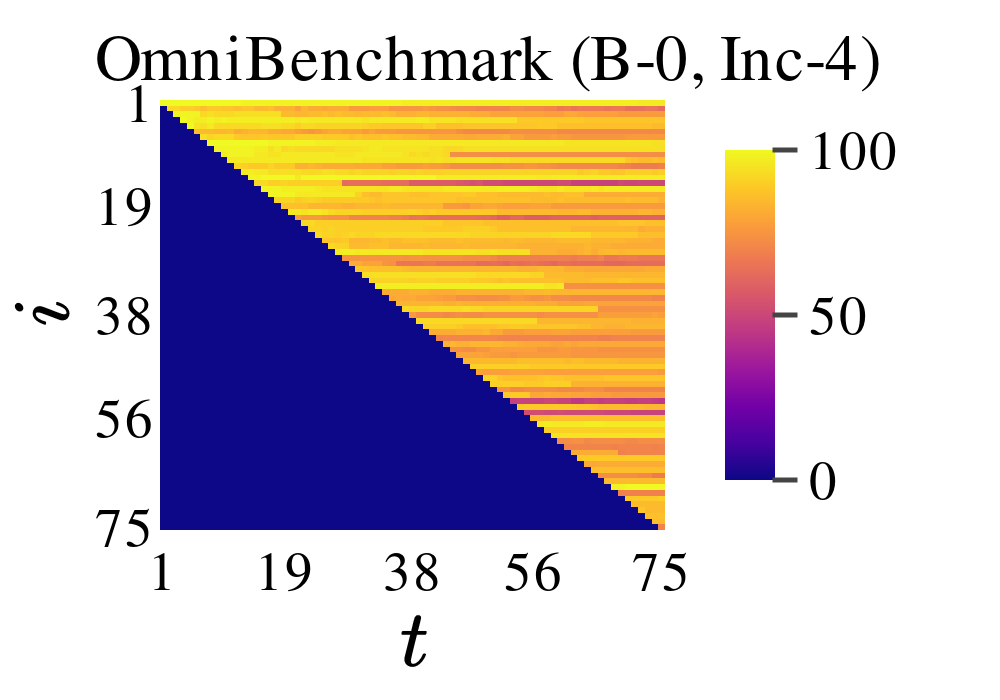}

    \includegraphics[width=0.24\textwidth]{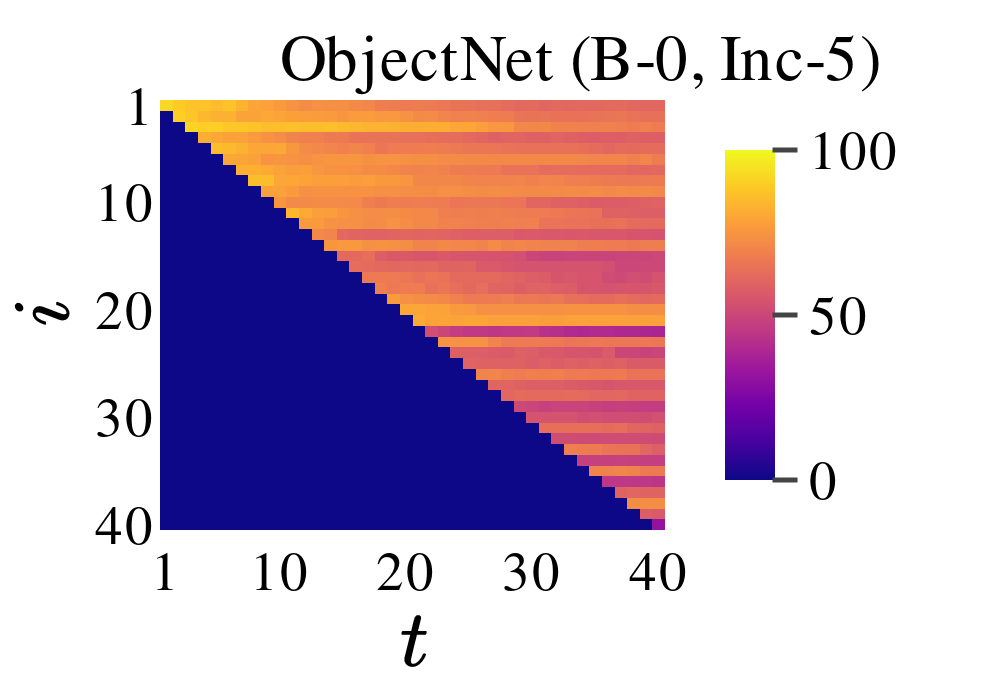}  
    \includegraphics[width=0.24\textwidth]{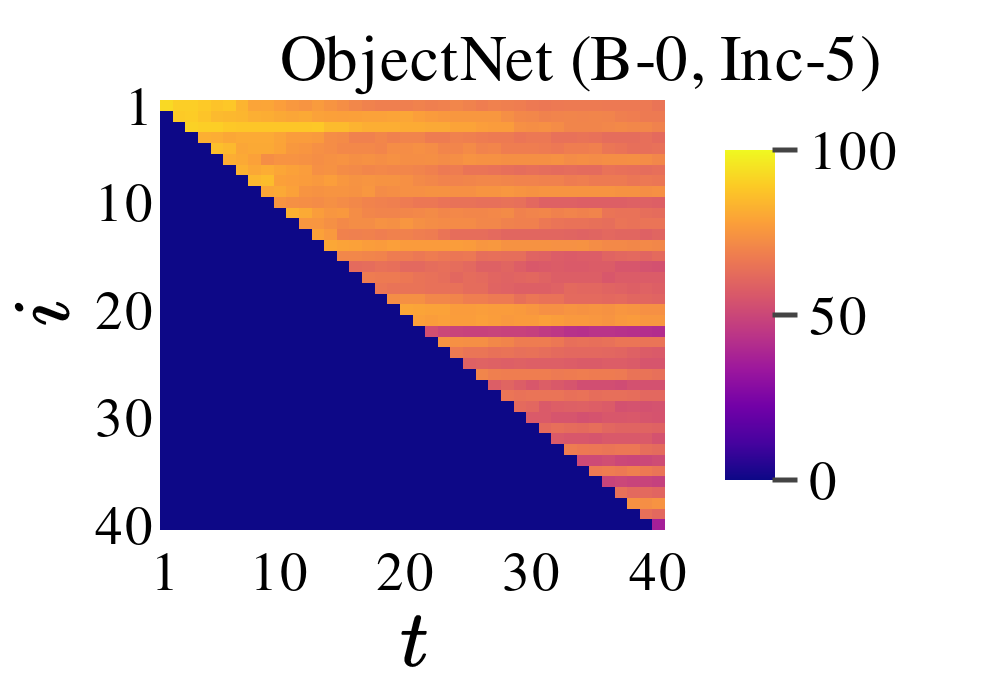}
	\includegraphics[width=0.24\textwidth]{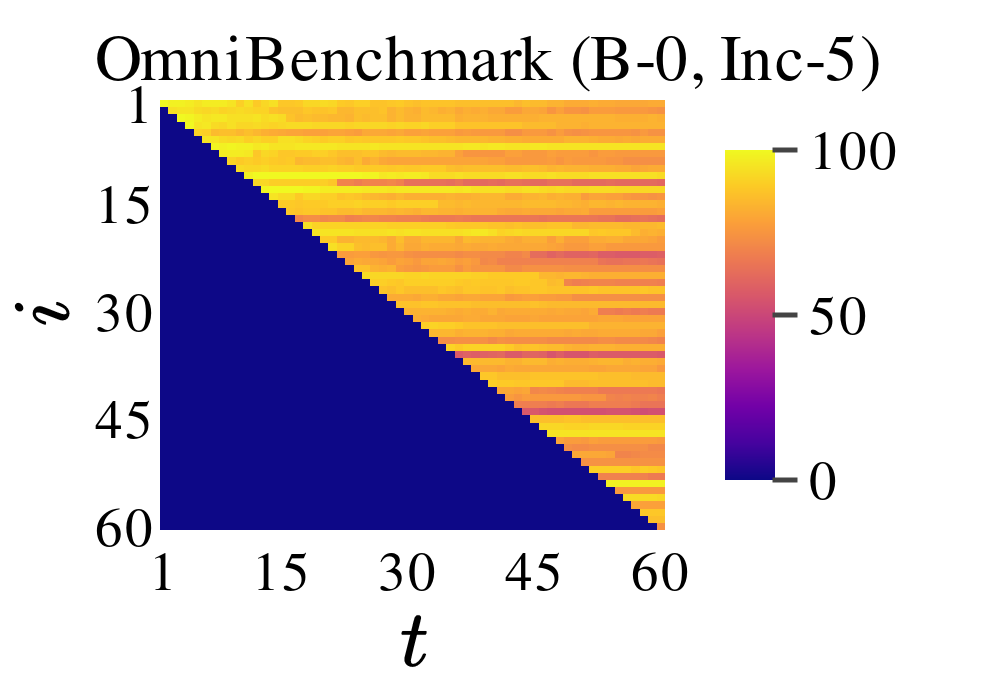}
	\includegraphics[width=0.24\textwidth]{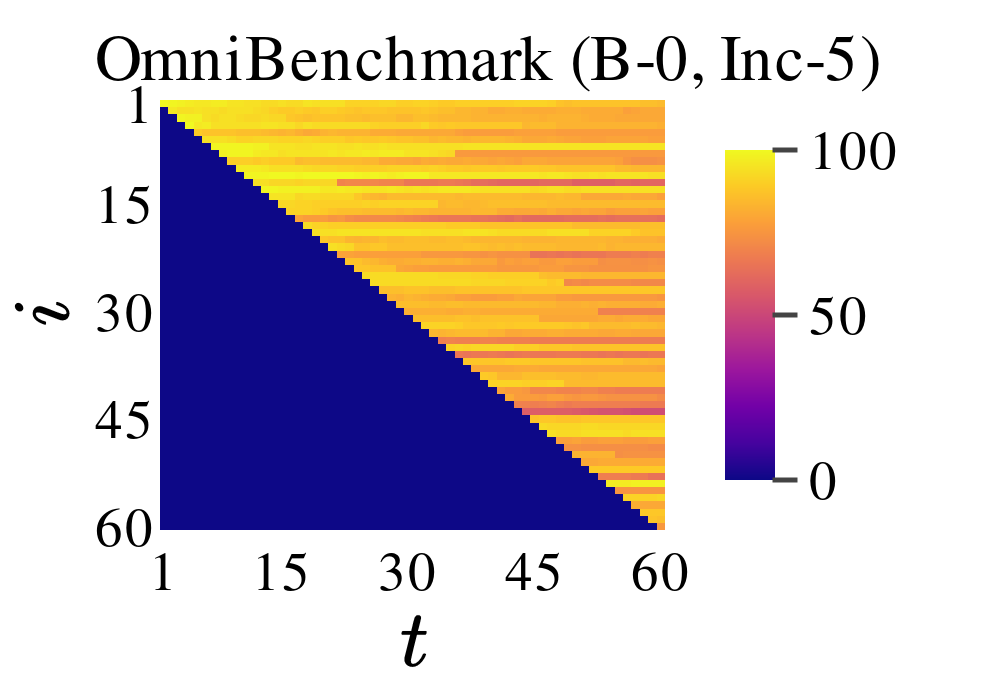}

    \makebox[0.24\textwidth]{\footnotesize \centering (\ref{fig:cross-validation-instability4}a) RanPAC  }
    \makebox[0.24\textwidth]{\footnotesize \centering (\ref{fig:cross-validation-instability4}b) LoRanPAC  }
    \makebox[0.24\textwidth]{\footnotesize \centering (\ref{fig:cross-validation-instability4}c) RanPAC  }
    \makebox[0.24\textwidth]{\footnotesize \centering (\ref{fig:cross-validation-instability4}d) LoRanPAC  }
    
    \caption{Upper triangular accuracy matrices on ObjectNet and OmniBenchmark (Inc-1, 2, 4, 5). }
    \label{fig:cross-validation-instability4}
\end{figure}

\begin{figure}[h]
    \centering
    \includegraphics[width=0.24\textwidth]{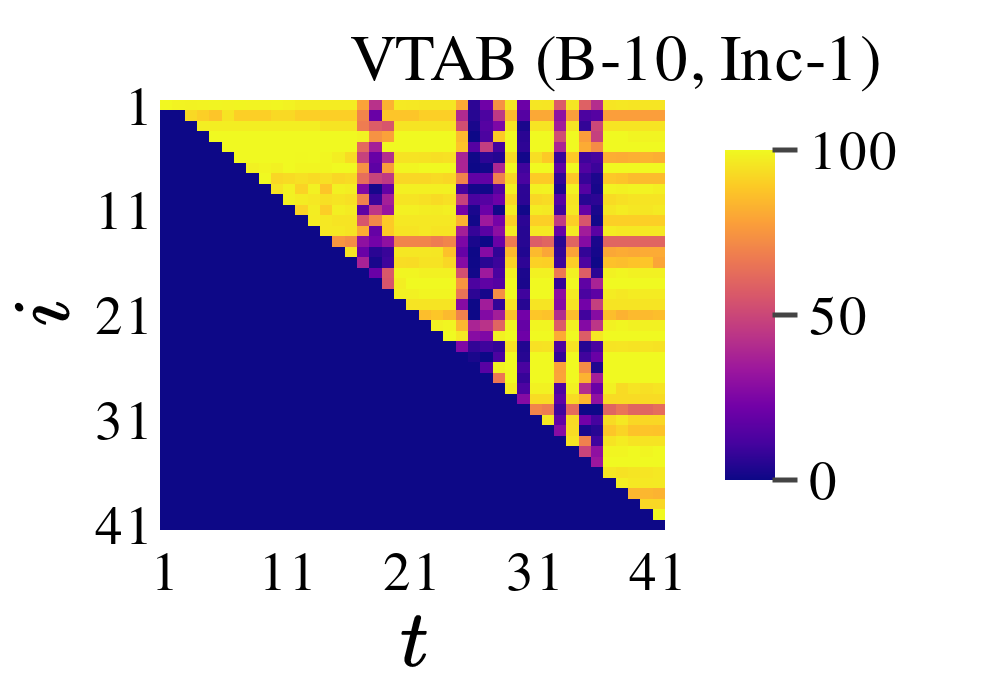}  
    \includegraphics[width=0.24\textwidth]{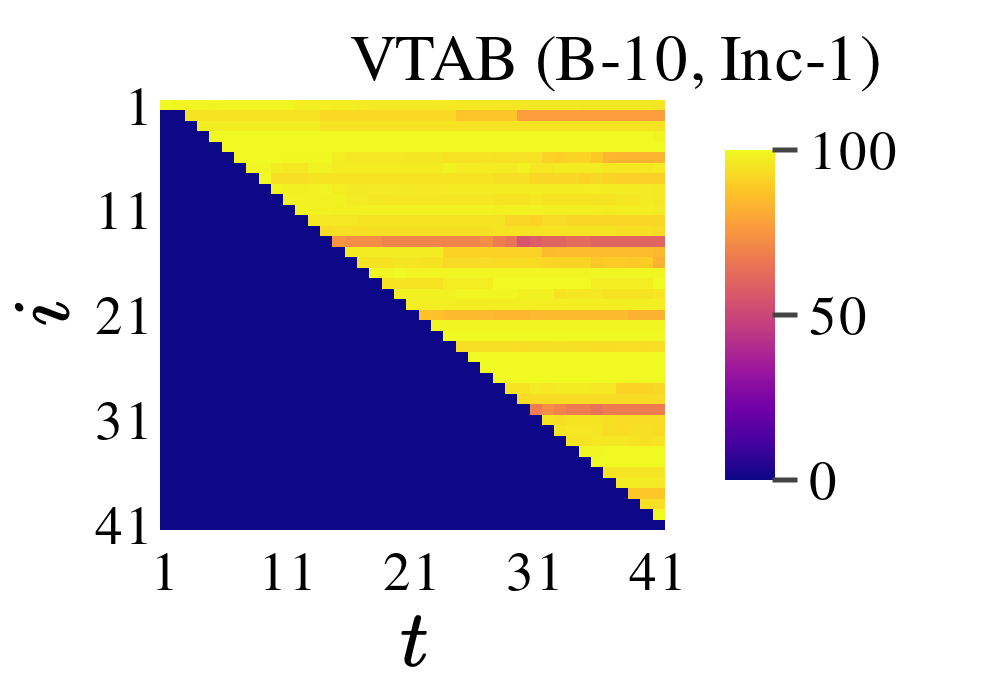}
	\includegraphics[width=0.24\textwidth]{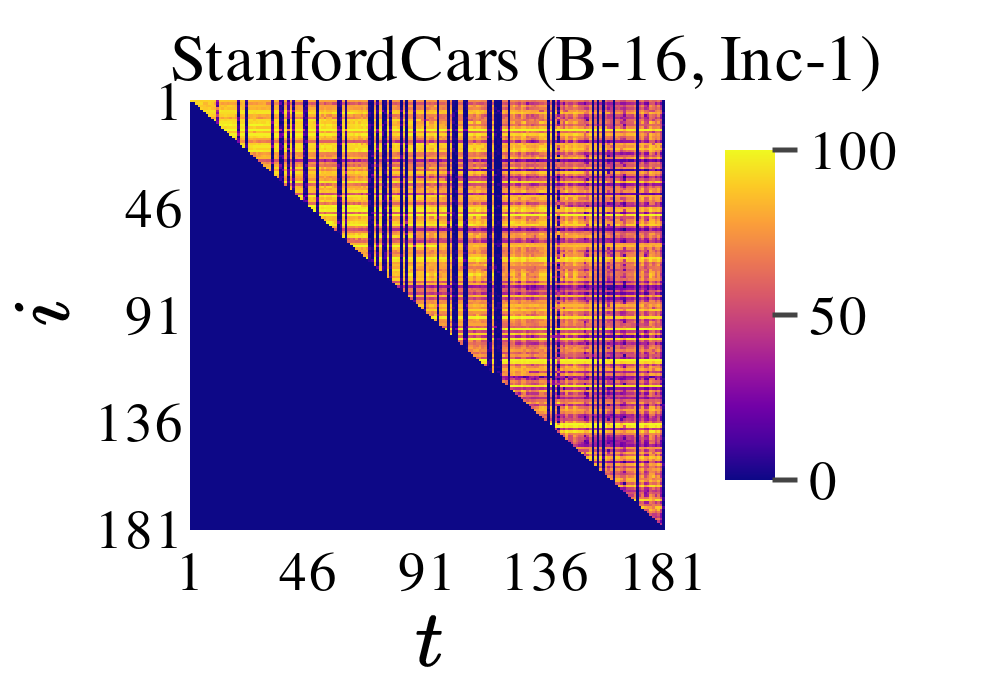}
	\includegraphics[width=0.24\textwidth]{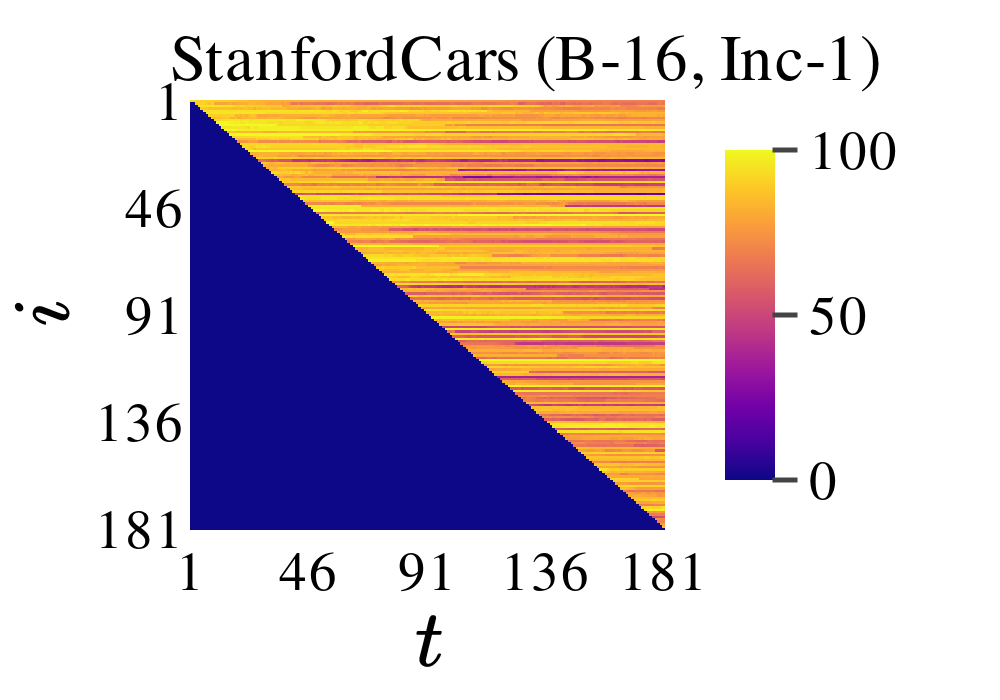}

    \includegraphics[width=0.24\textwidth]{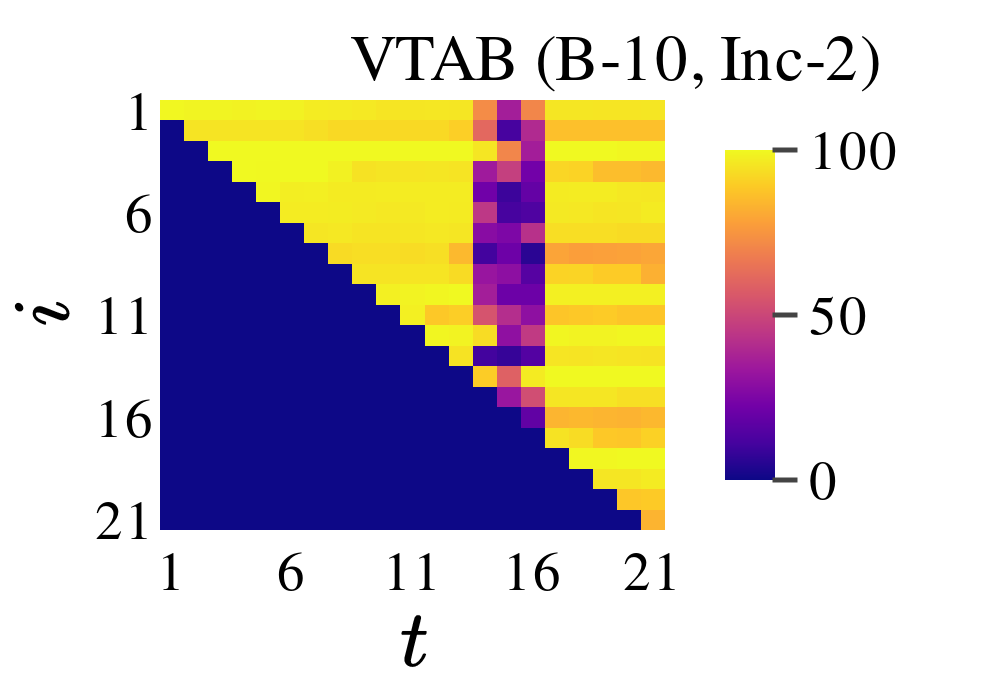}  
    \includegraphics[width=0.24\textwidth]{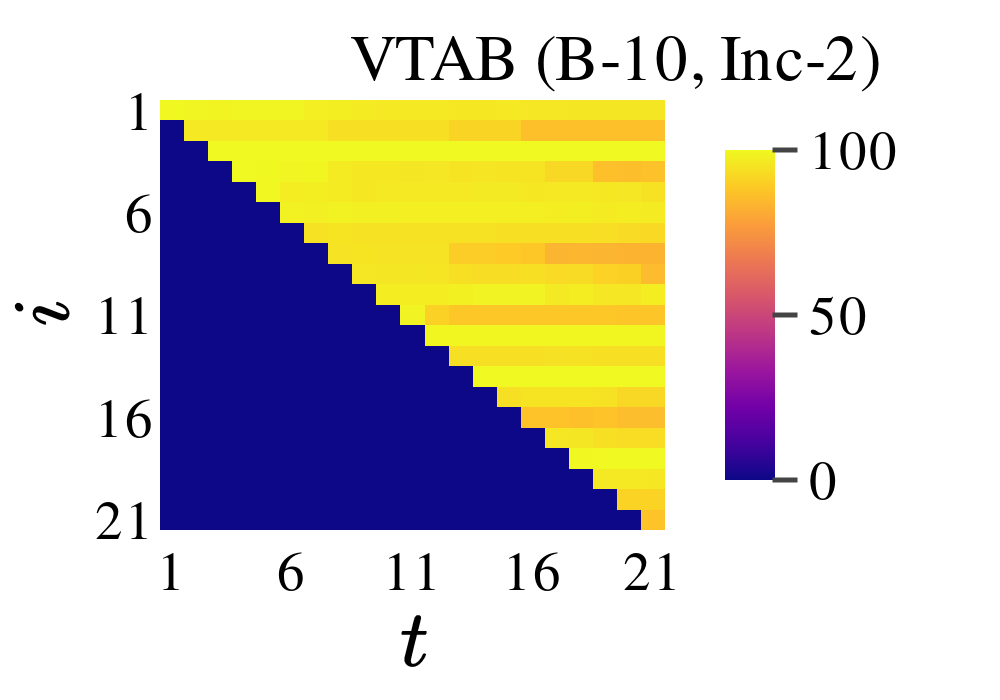}
	\includegraphics[width=0.24\textwidth]{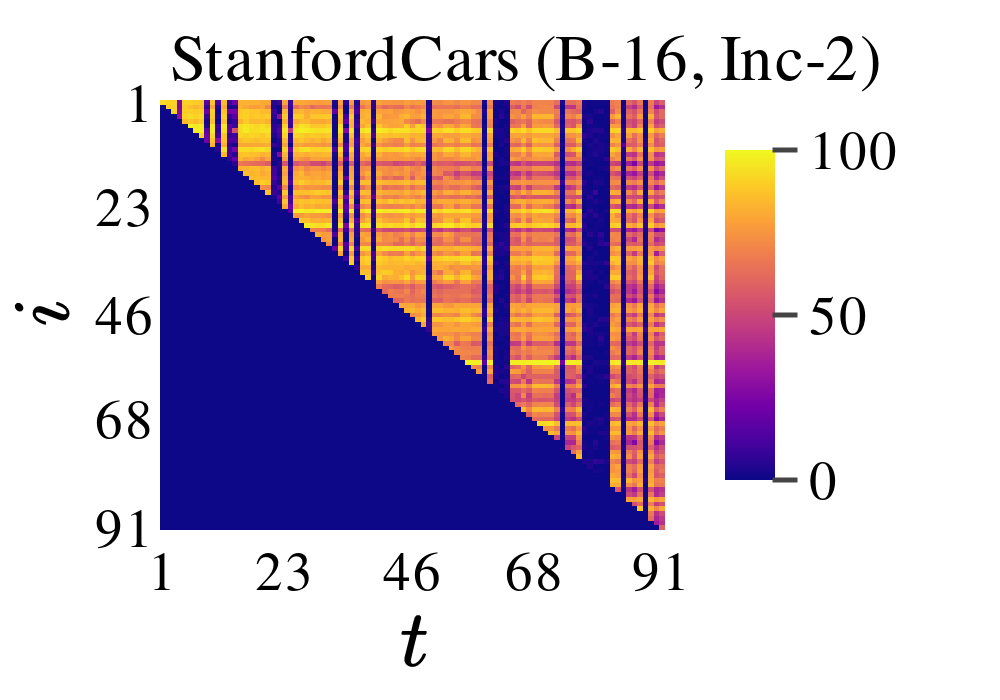}
	\includegraphics[width=0.24\textwidth]{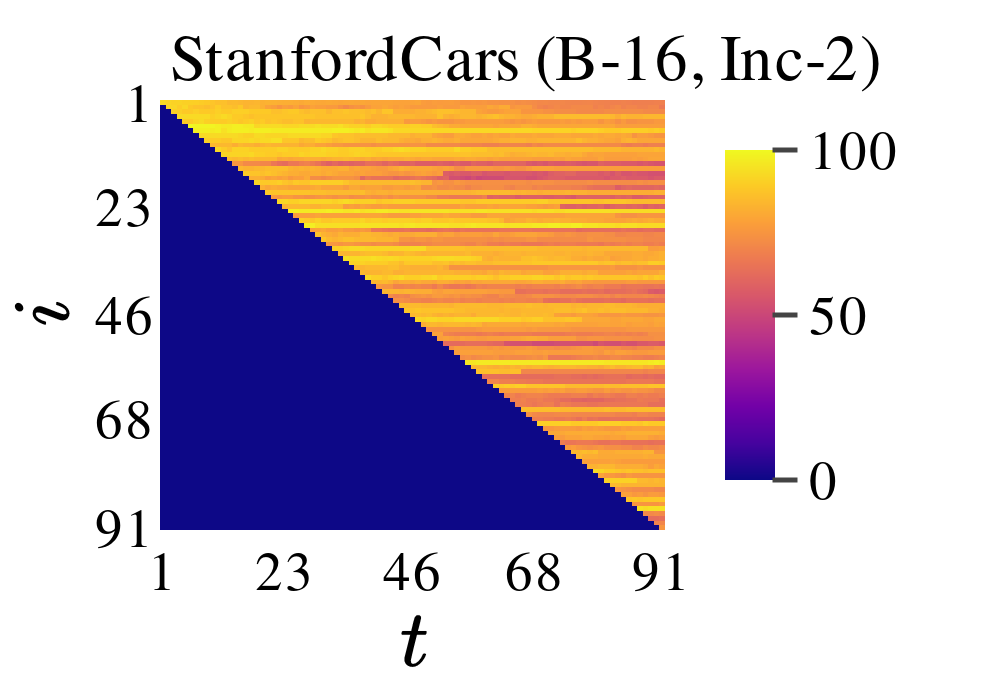}

    \includegraphics[width=0.24\textwidth]{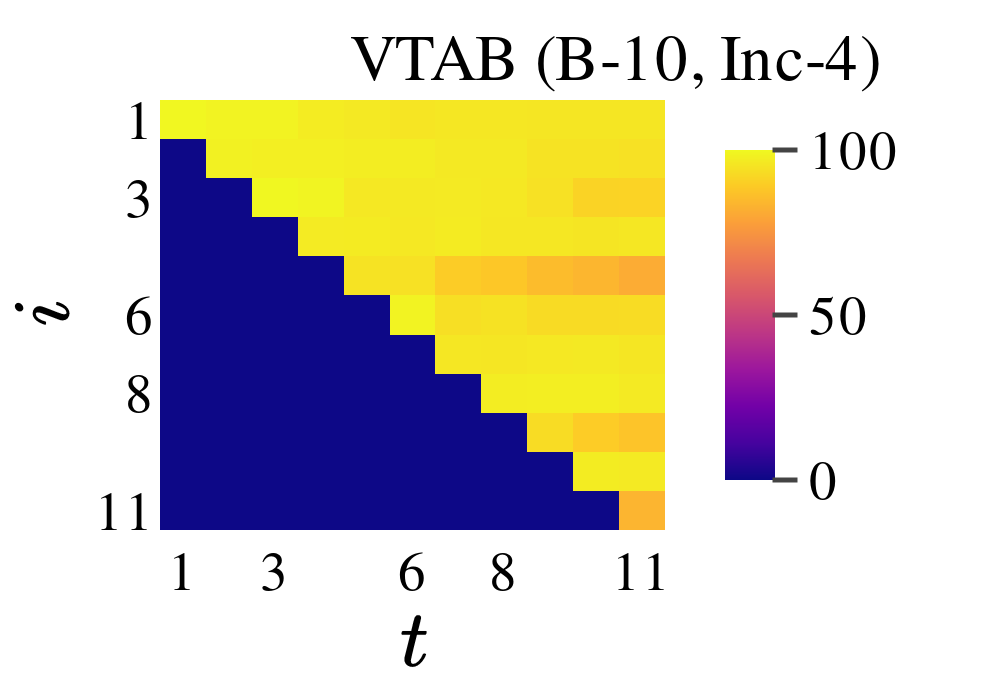}  
    \includegraphics[width=0.24\textwidth]{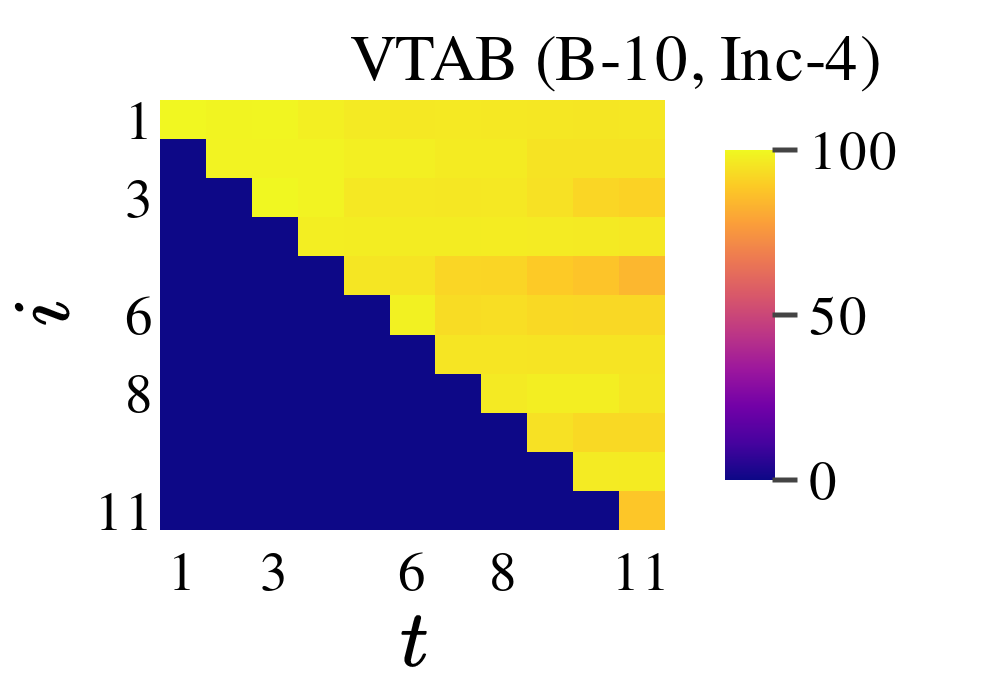}
	\includegraphics[width=0.24\textwidth]{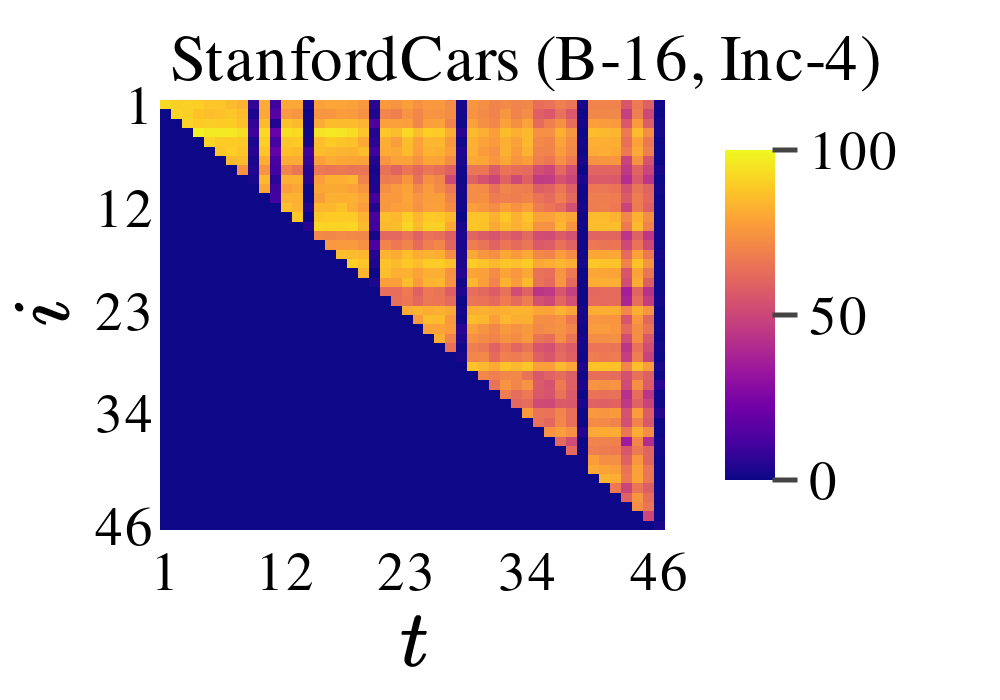}
	\includegraphics[width=0.24\textwidth]{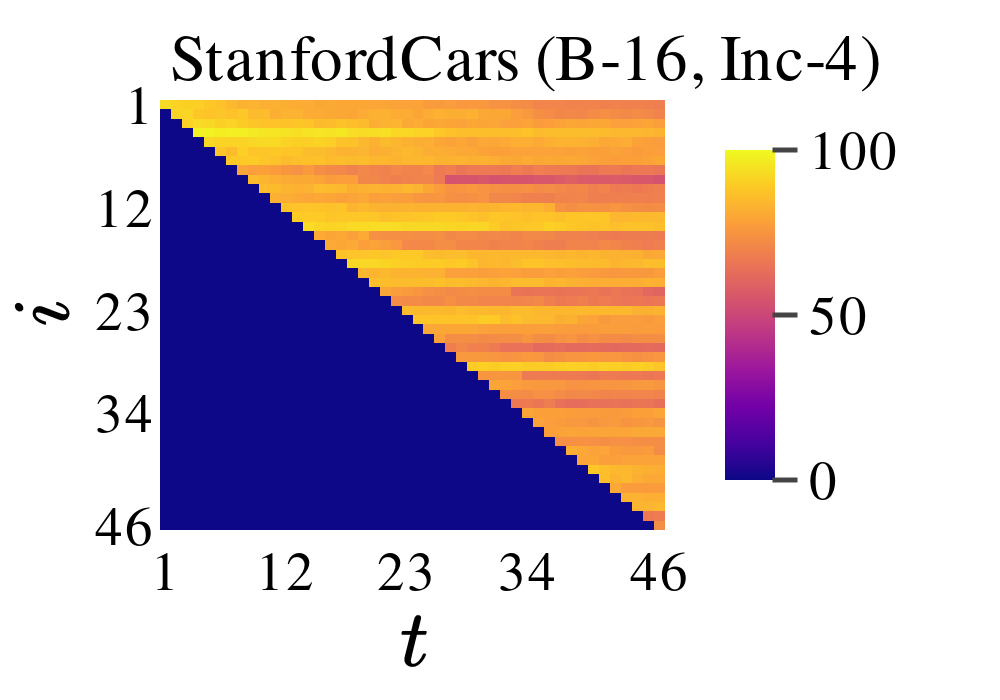}

    \includegraphics[width=0.24\textwidth]{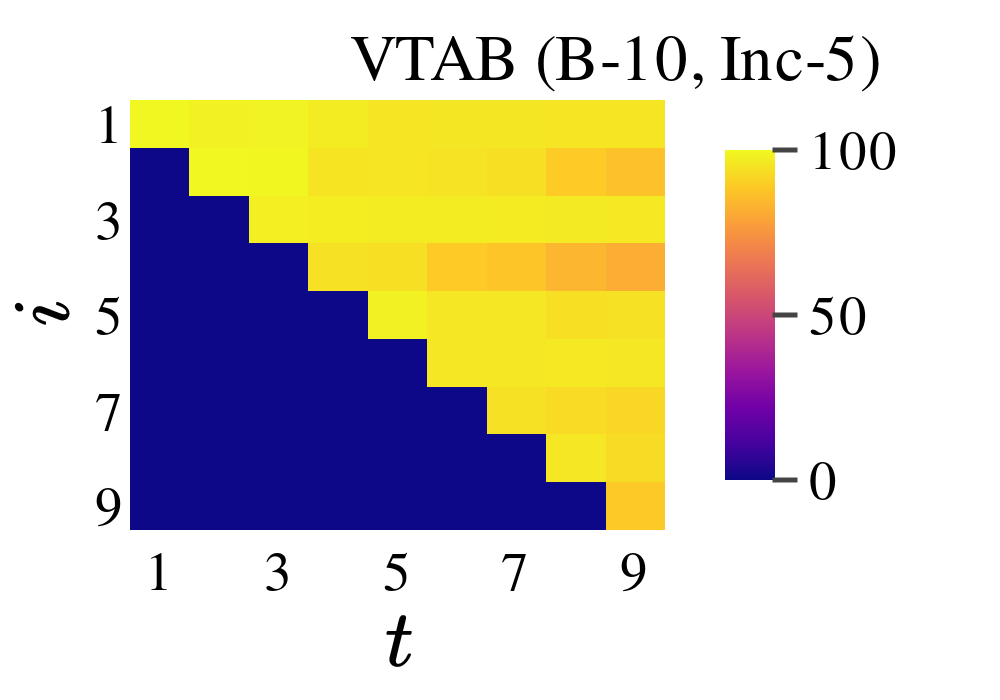}  
    \includegraphics[width=0.24\textwidth]{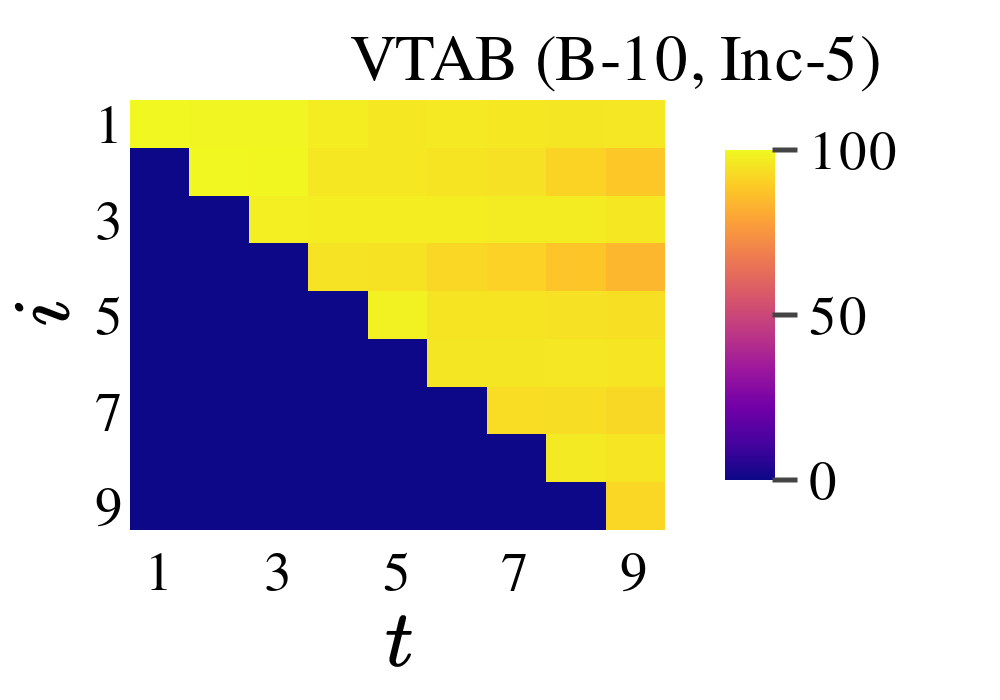}
	\includegraphics[width=0.24\textwidth]{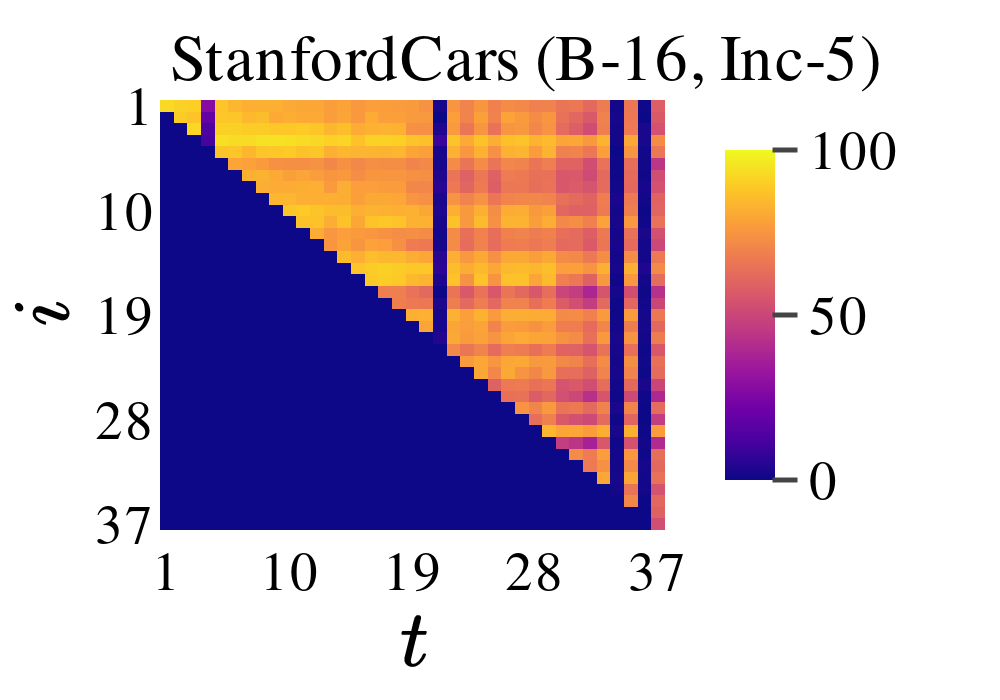}
	\includegraphics[width=0.24\textwidth]{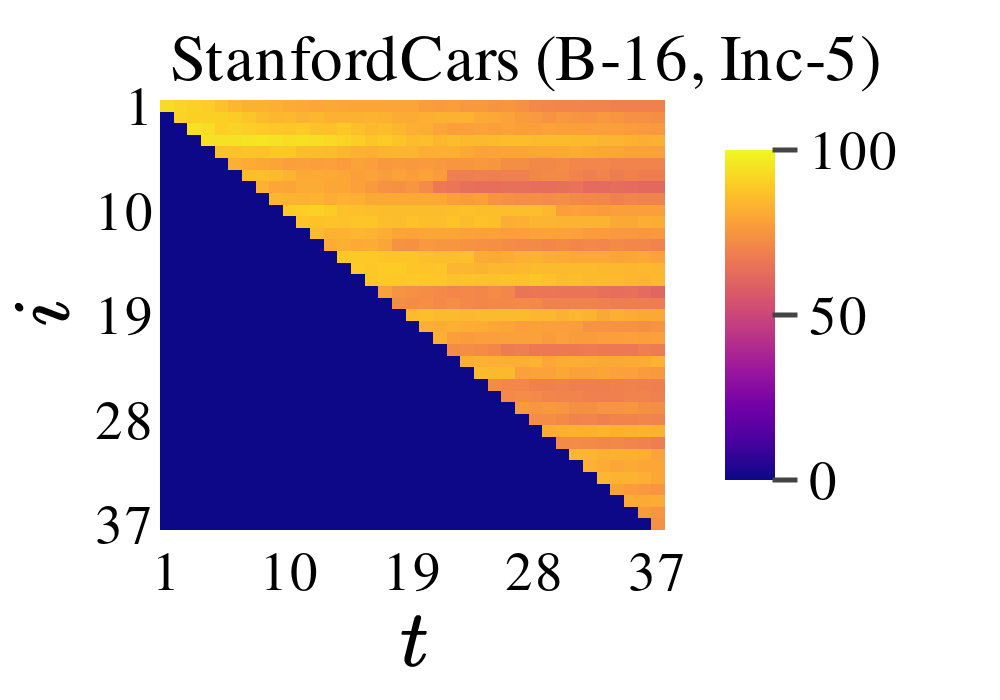}

    \makebox[0.24\textwidth]{\footnotesize \centering (\ref{fig:cross-validation-instability5}a) RanPAC  }
    \makebox[0.24\textwidth]{\footnotesize \centering (\ref{fig:cross-validation-instability5}b) LoRanPAC  }
    \makebox[0.24\textwidth]{\footnotesize \centering (\ref{fig:cross-validation-instability5}c) RanPAC  }
    \makebox[0.24\textwidth]{\footnotesize \centering (\ref{fig:cross-validation-instability5}d) LoRanPAC  }
    
    \caption{Upper triangular accuracy matrices on VTAB and StanfordCars (Inc-1, 2, 4, 5). }
    \label{fig:cross-validation-instability5}
\end{figure}

\begin{figure}[h]
    \centering
    \includegraphics[width=0.24\textwidth]{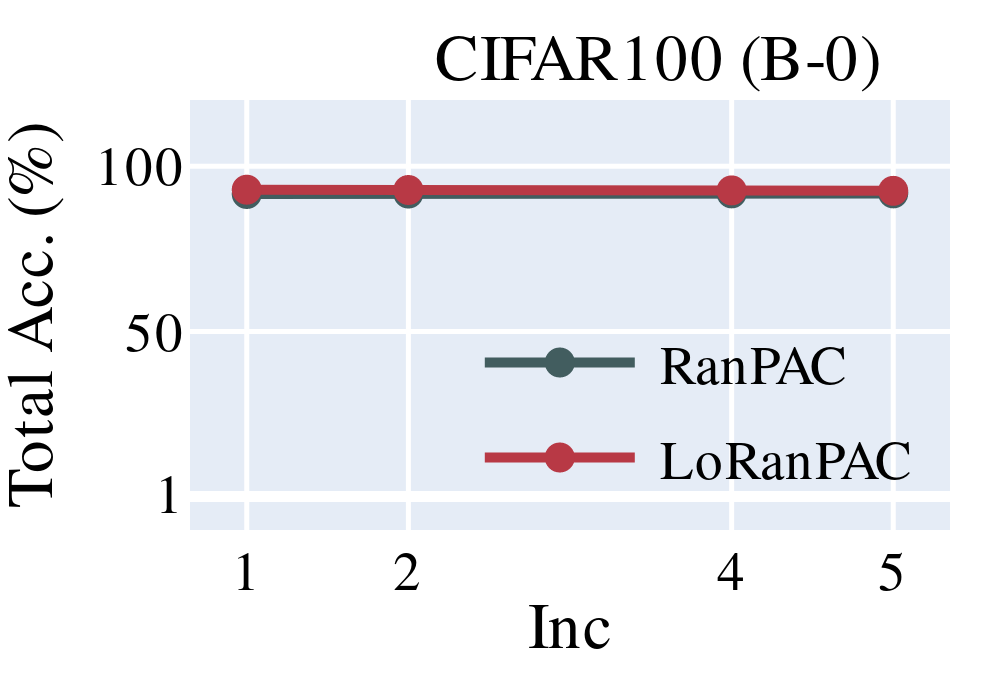}  
    \includegraphics[width=0.24\textwidth]{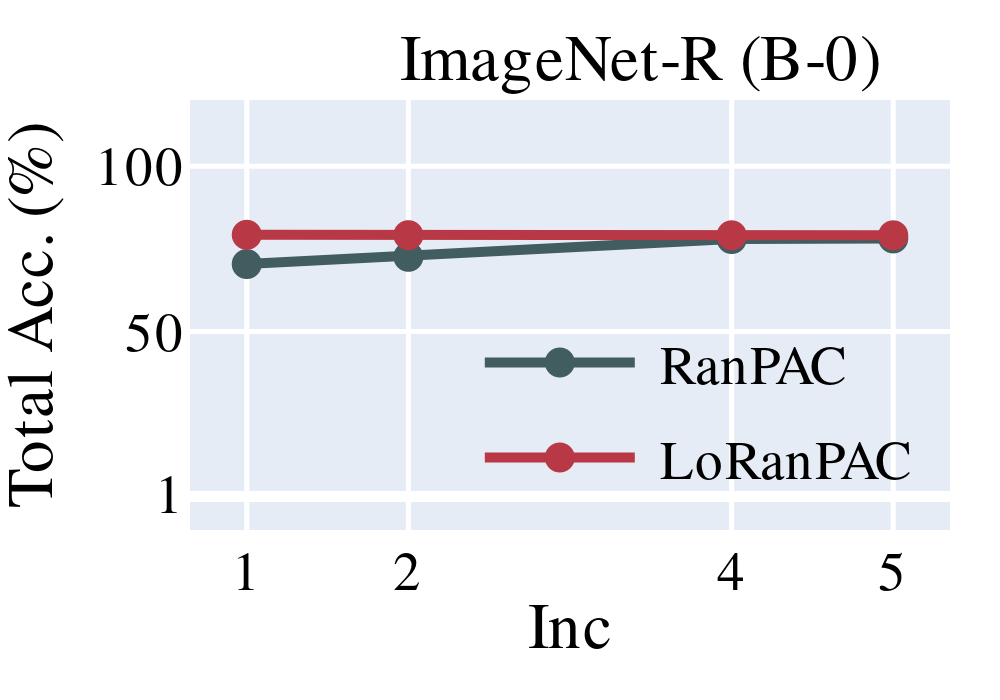}  
    \includegraphics[width=0.24\textwidth]{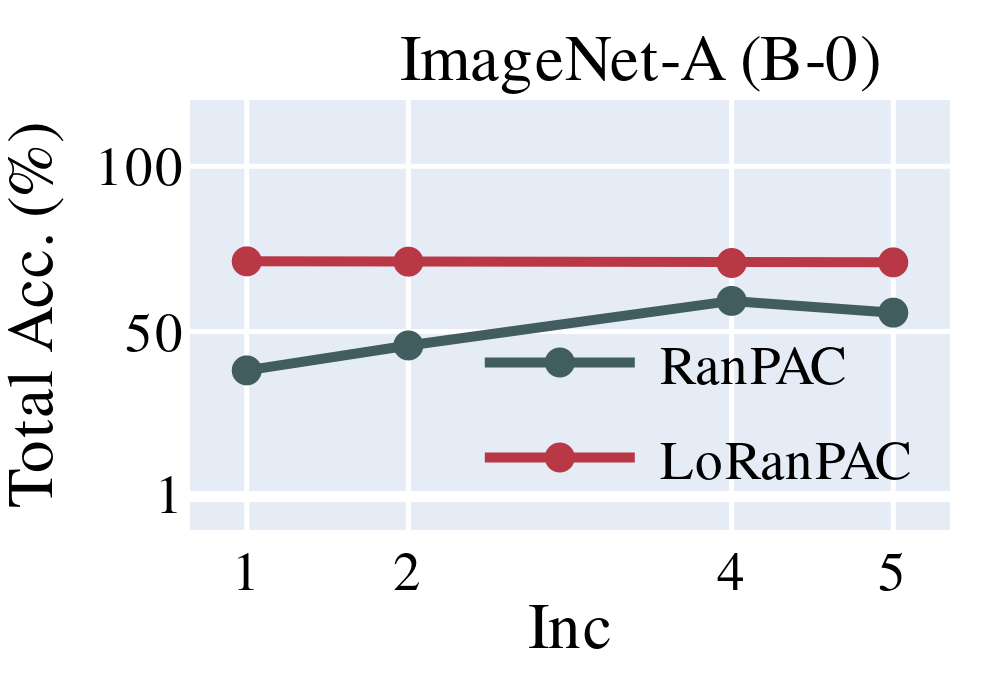}  
    \includegraphics[width=0.24\textwidth]{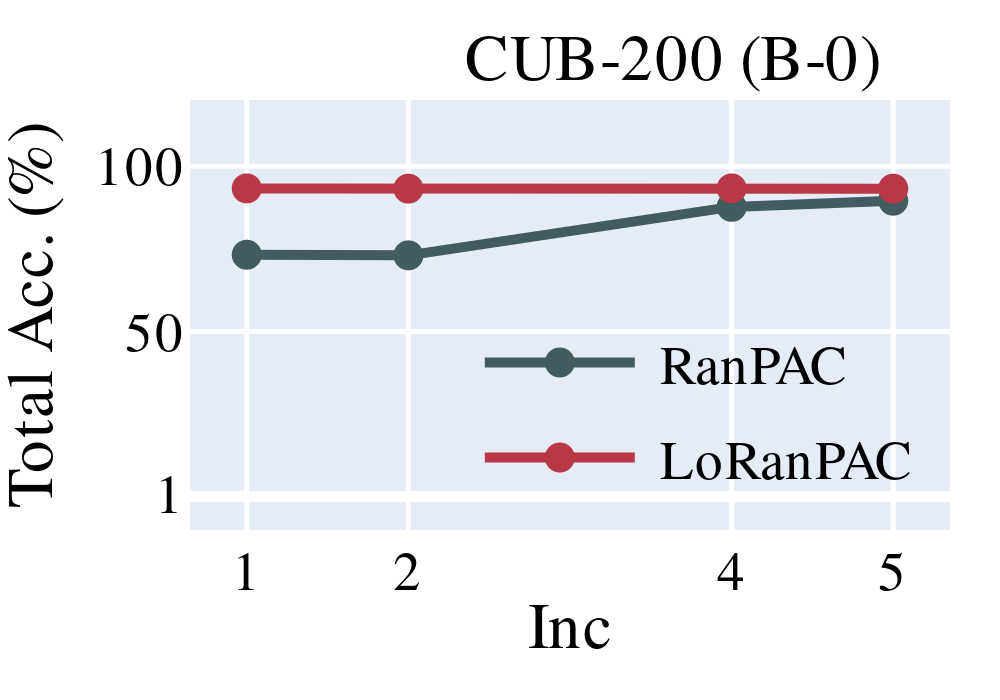}  

    \includegraphics[width=0.24\textwidth]{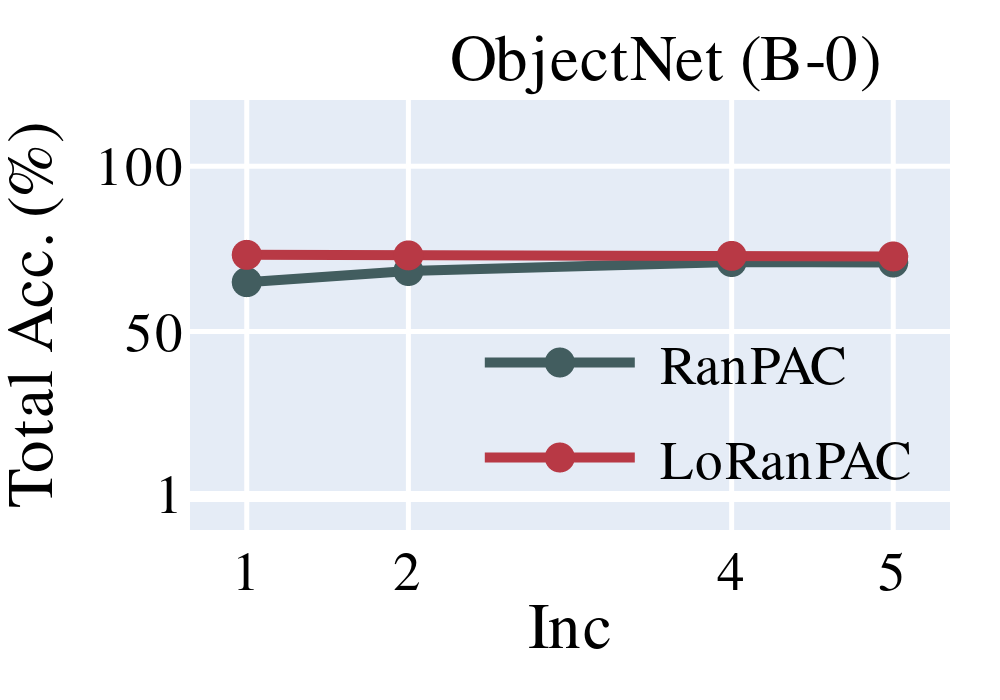}  
    \includegraphics[width=0.24\textwidth]{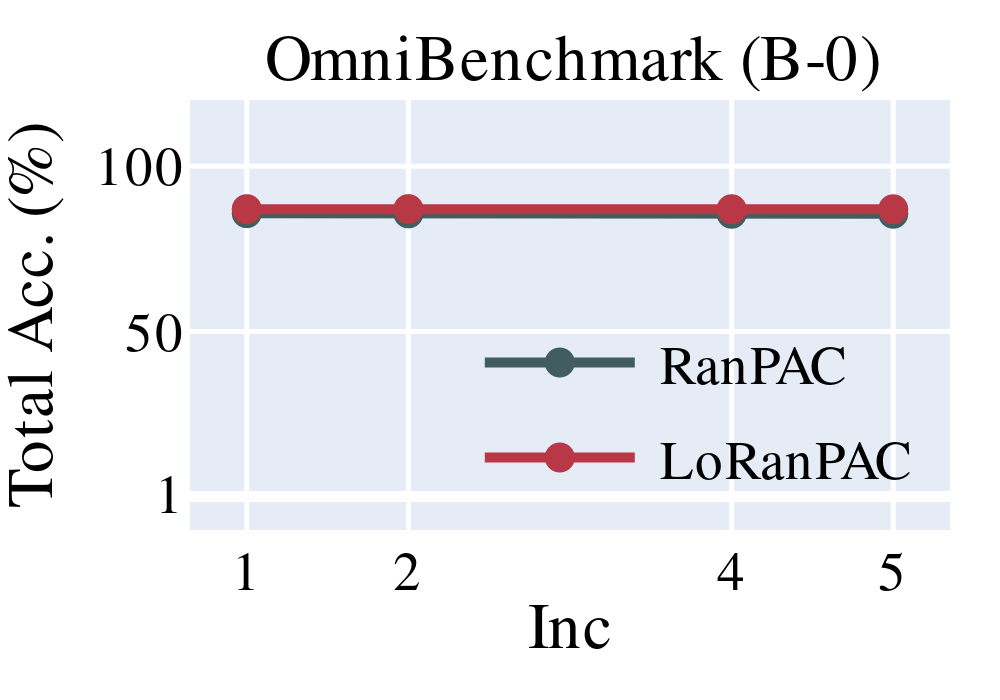}  
    \includegraphics[width=0.24\textwidth]{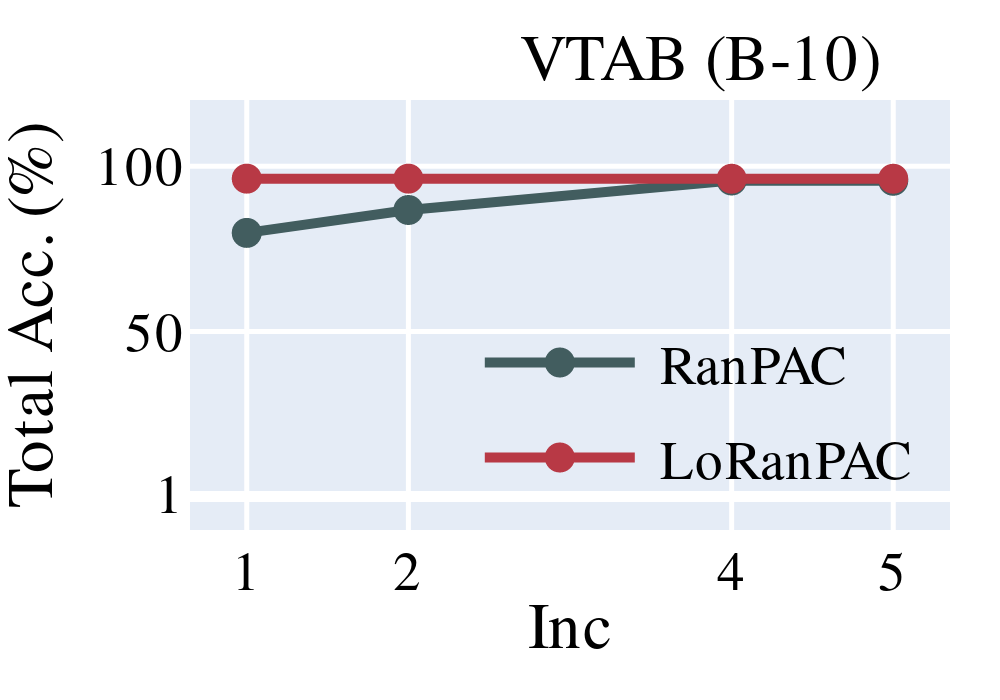}  
    \includegraphics[width=0.24\textwidth]{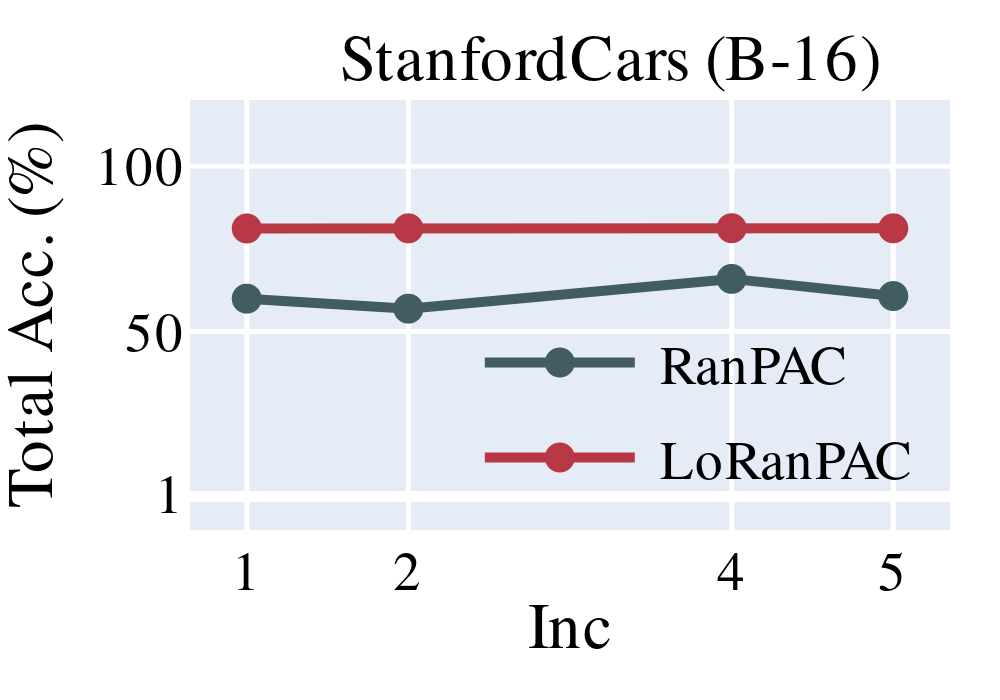} 

    \includegraphics[width=0.24\textwidth]{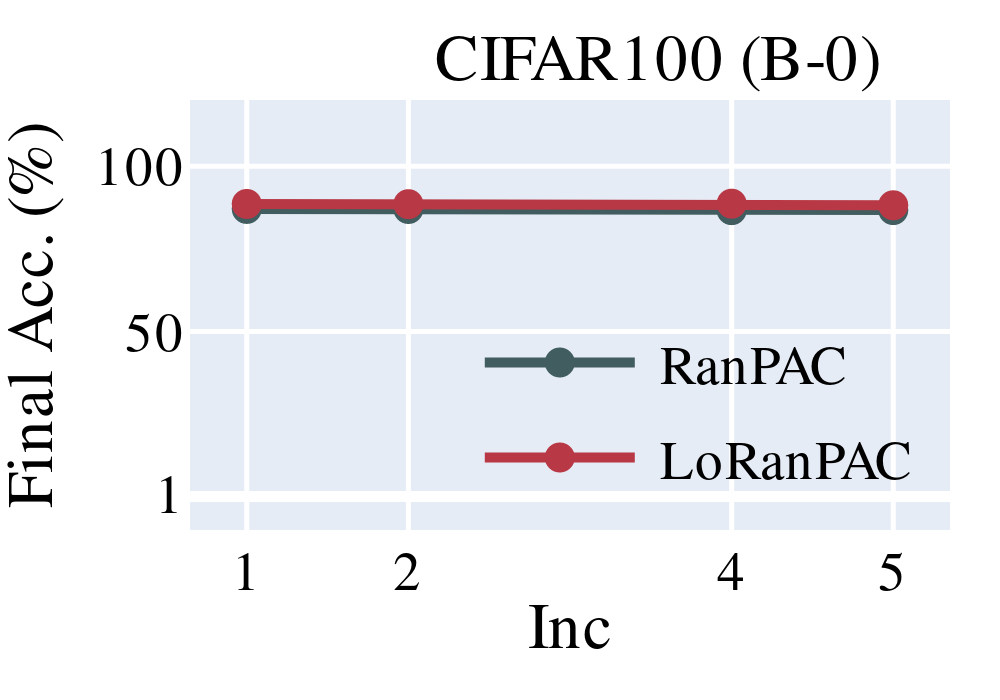}  
    \includegraphics[width=0.24\textwidth]{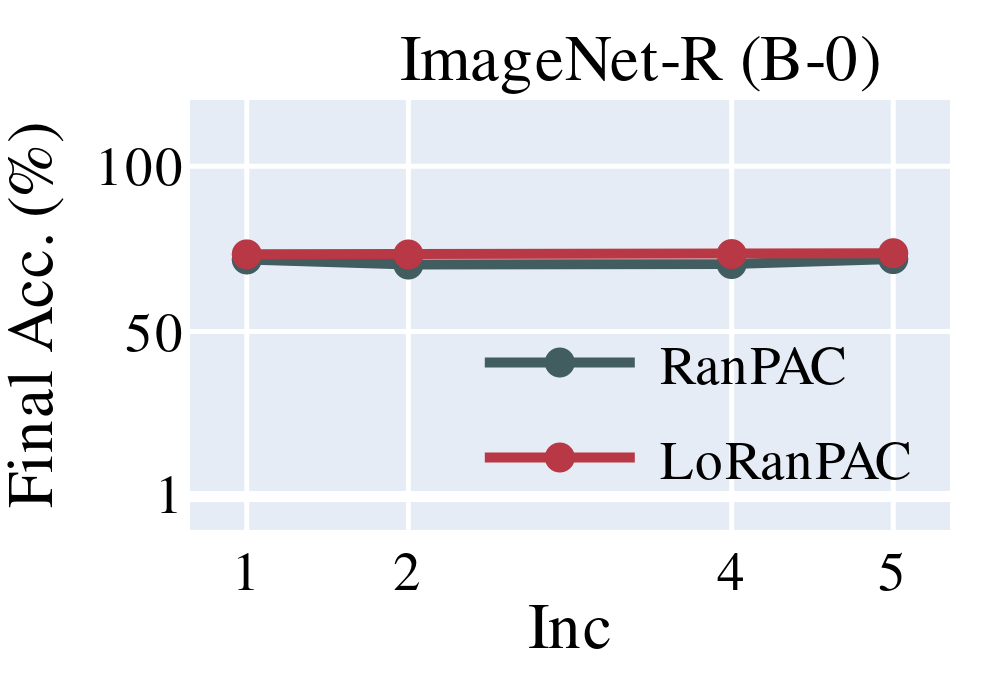}  
    \includegraphics[width=0.24\textwidth]{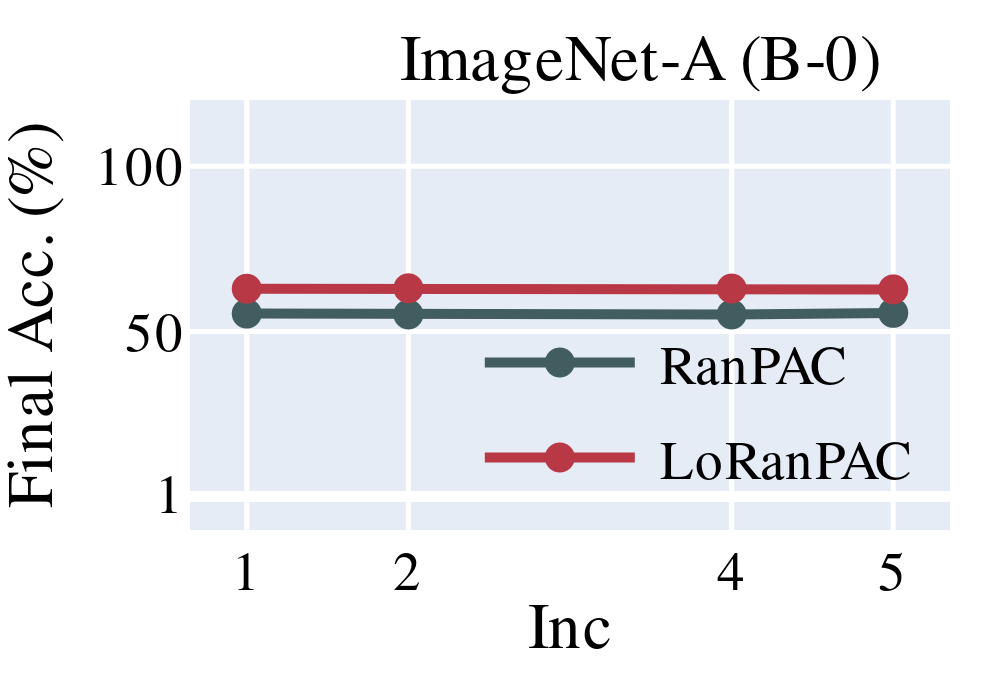}  
    \includegraphics[width=0.24\textwidth]{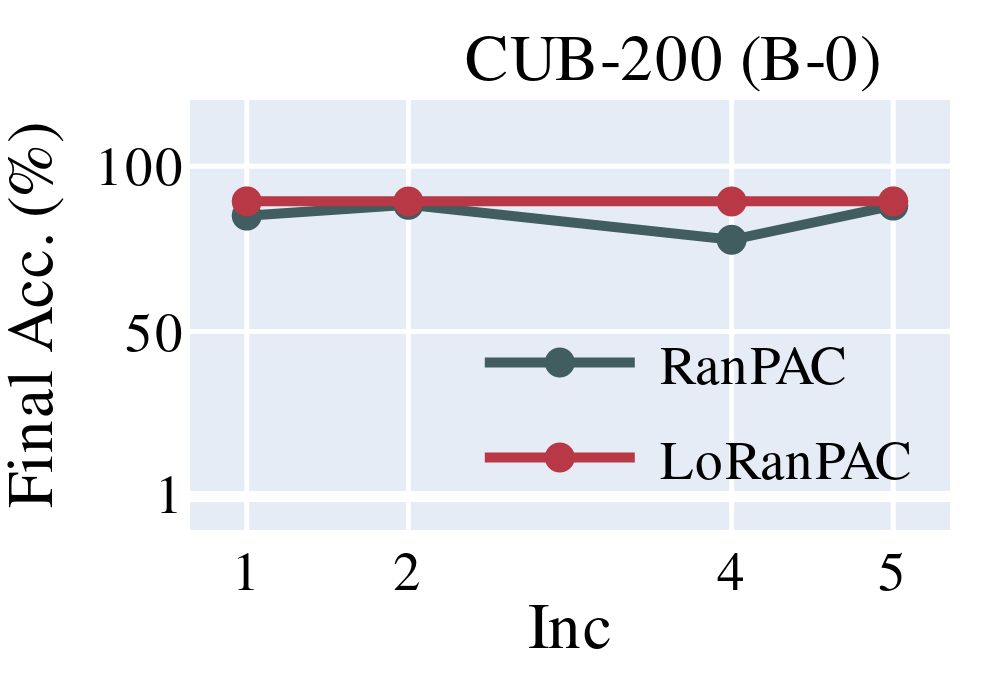}  

    \includegraphics[width=0.24\textwidth]{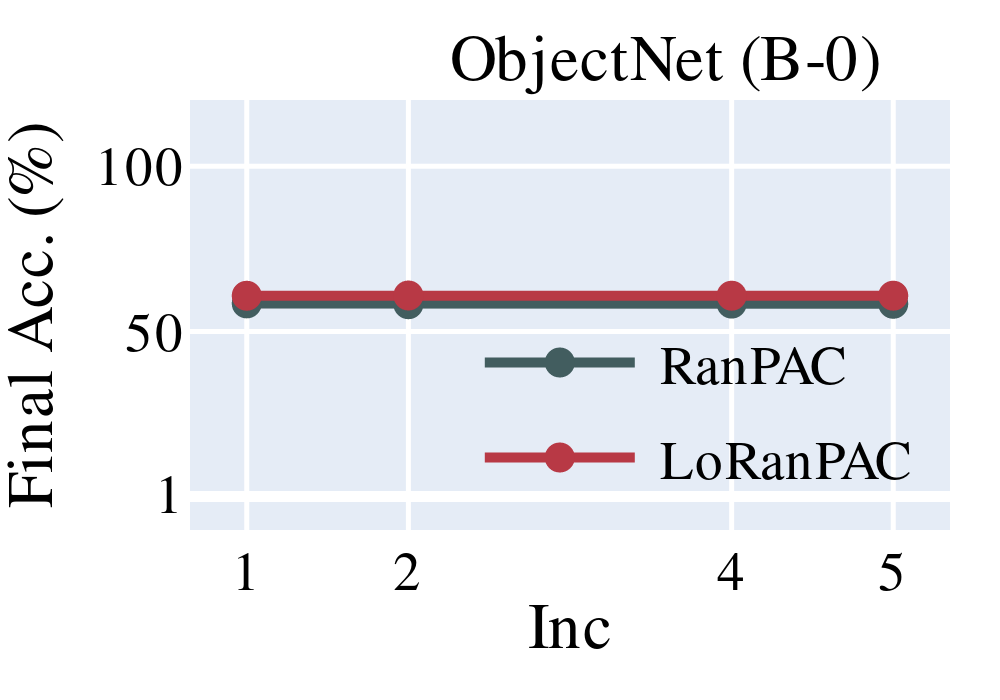}  
    \includegraphics[width=0.24\textwidth]{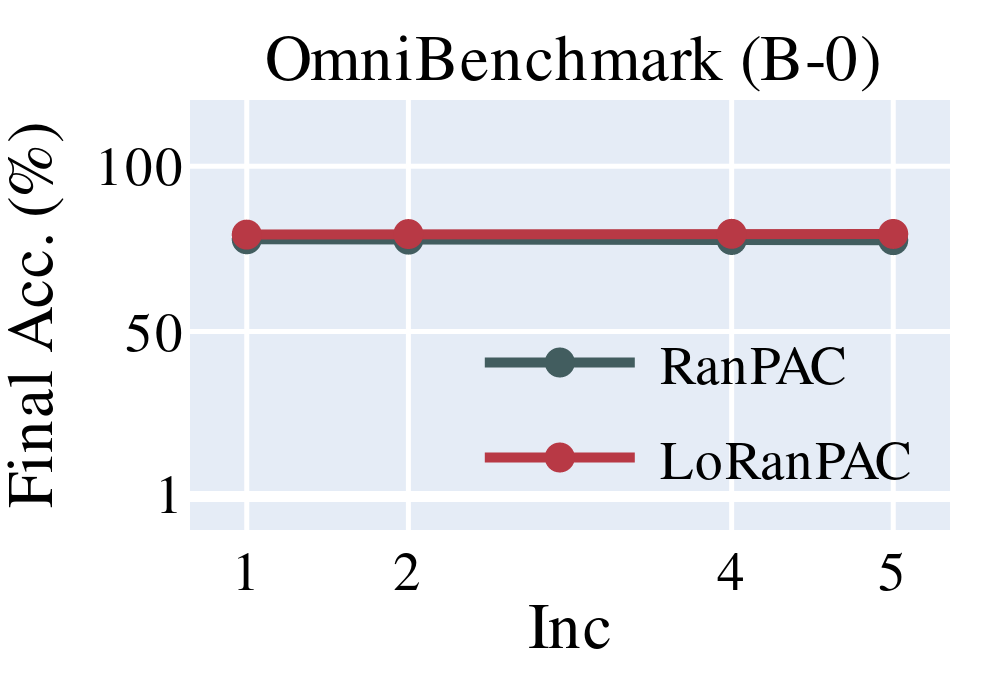}  
    \includegraphics[width=0.24\textwidth]{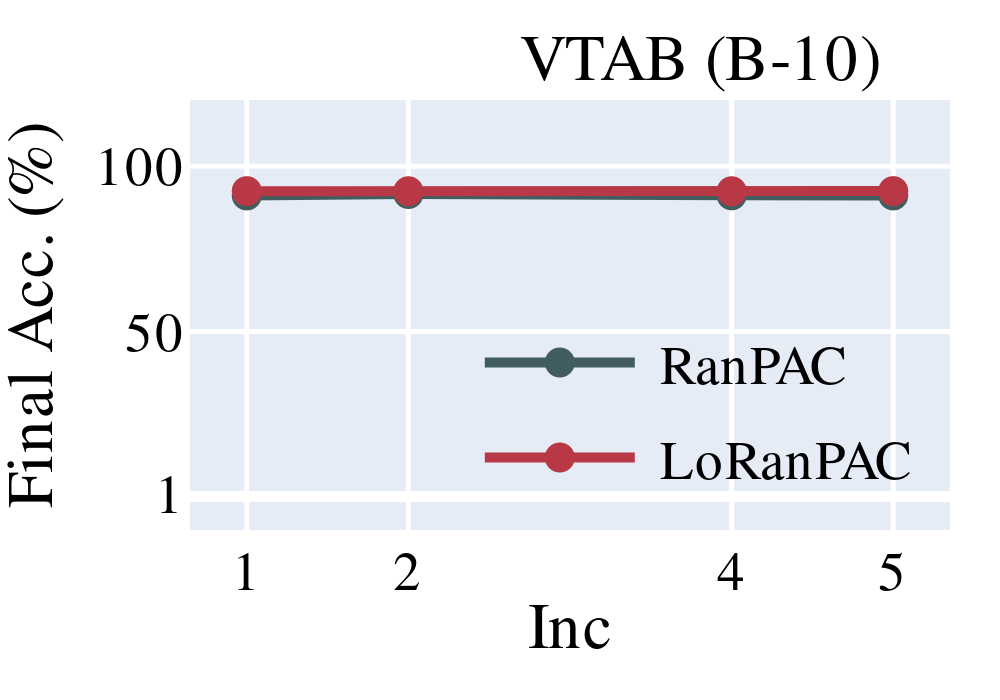}  
    \includegraphics[width=0.24\textwidth]{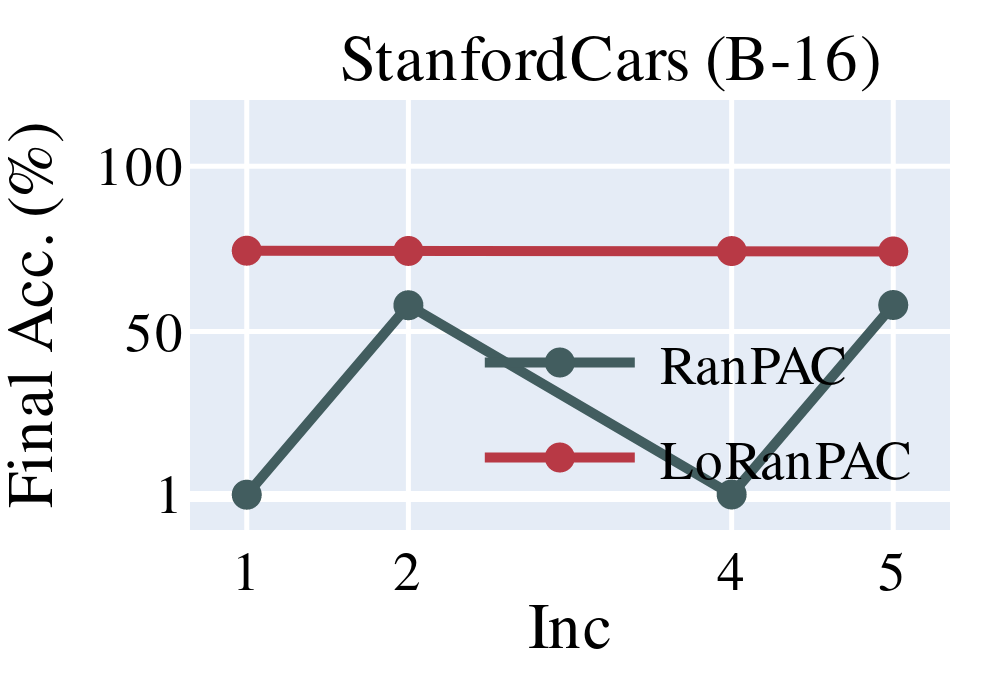} 
    
    % \makebox[0.24\textwidth]{\footnotesize \centering (\ref{fig:cross-validation-instability5}a)  }
    % \makebox[0.24\textwidth]{\footnotesize \centering (\ref{fig:cross-validation-instability5}b)  }
    % \makebox[0.24\textwidth]{\footnotesize \centering (\ref{fig:cross-validation-instability5}c)   }
    % \makebox[0.24\textwidth]{\footnotesize \centering (\ref{fig:cross-validation-instability5}d) }
    
    \caption{Our LoRanPAC method is stable with respect to class increments of each task (Inc-1, 2, 4, 5), while the accuracy of RanPAC drops for smaller increments. The first two rows plot the total accuracy. The last two rows plot the final accuracy. The figures here essentially plot the averages of the upper triangular accuracy matrices (total accuracy) or its last column (final accuracy) of \cref{fig:cross-validation-instability2,fig:cross-validation-instability3,fig:cross-validation-instability4,fig:cross-validation-instability5}. The numerical values of the total and final accuracy for Inc-1 are shown in \cref{tab:CIL-ViT-inc1}. }
    \label{fig:stability-inc}
\end{figure}

\subsubsection{RanPAC is Unstable for CIL with The Smallest Increments}\label{subsubsection:small-inc}
In the experiments of the main paper, we see that RanPAC is unstable for small increments (e.g., Inc-5). Moreover, its instability is exacerbated in the extreme case Inc-1 (\cref{tab:CIL-ViT-inc1}).  

Here, we show similar experimental results in \cref{fig:cross-validation-instability2,fig:cross-validation-instability3,fig:cross-validation-instability4,fig:cross-validation-instability5}, suggesting that RanPAC is significantly more unstable for Inc-1, Inc-2, and Inc-4. In particular, in the extreme case of Inc-1, RanPAC presents failures on all datasets except CIFAR100 (as indicated by the verticle blue line in \cref{fig:cross-validation-instability2,fig:cross-validation-instability3,fig:cross-validation-instability4,fig:cross-validation-instability5}. On the contrary, these figures, including \cref{fig:stability-inc}, show that our LoRanPAC method is stable for different small increments (1, 2, 4, 5).

\end{document}